\documentclass[letterpaper]{article}
\usepackage{aaai2026} 
\usepackage{times} 
\usepackage{helvet} 
\usepackage{courier} 
\usepackage[hyphens]{url} 
\usepackage{graphicx} 
\urlstyle{rm} 
\usepackage{graphicx} 
\usepackage{natbib} 
\usepackage{caption} 
\frenchspacing 
\setlength{\pdfpagewidth}{8.5in} 
\setlength{\pdfpageheight}{11in} 
%
\pdfinfo{
/TemplateVersion (2026.1)
}

\usepackage{algorithm}
\usepackage{algorithmic}

\usepackage{microtype}
\usepackage{graphicx}
\usepackage{subfigure}
\usepackage{booktabs} 

\usepackage{amsmath}
\usepackage{amssymb}
\usepackage{mathtools}
\usepackage{amsthm}

\usepackage[capitalize,noabbrev]{cleveref}

\theoremstyle{plain}
\newtheorem{theorem}{Theorem}[section]
\newtheorem{prop}[theorem]{Proposition}
\newtheorem{thm}[theorem]{Theorem}
\newtheorem{lem}[theorem]{Lemma}
\newtheorem{cor}[theorem]{Corollary}
\theoremstyle{definition}
\newtheorem{defn}[theorem]{Definition}
\newtheorem{assum}[theorem]{Assumption}
\theoremstyle{remark}
\newtheorem{remark}[theorem]{Remark}

\usepackage[utf8]{inputenc} 
\usepackage[T1]{fontenc}    
\usepackage{amsfonts}       
\usepackage{mathtools}
\usepackage{mathrsfs}
\usepackage{booktabs}       

\usepackage{nicefrac}       
\usepackage{microtype}      
\usepackage{xcolor}         
\usepackage{comment}

\usepackage{algorithm}
\usepackage{algorithmic}
\usepackage{enumerate}
\usepackage{times}
\usepackage{tikz}
\usepackage{dsfont}
\usepackage{color}
\usepackage{units}

\newcommand{\cA}{\mathcal{A}}
\newcommand{\cB}{\mathcal{B}}
\newcommand{\cC}{\mathcal{C}}

\newcommand{\cF}{\mathcal{F}}
\newcommand{\cG}{\mathcal{G}}

\newcommand{\cM}{\mathcal{M}}
\newcommand{\cN}{\mathcal{N}}
\newcommand{\cO}{\mathcal{O}}
\newcommand{\ctO}{\tilde{\mathcal{O}}}
\newcommand{\cP}{\mathcal{P}}

\newcommand{\cR}{\mathcal{R}}
\newcommand{\cS}{\mathcal{S}}

\newcommand{\bE}{\mathbb{E}}
\newcommand{\bN}{\mathbb{N}}
\newcommand{\bP}{\mathbb{P}}
\newcommand{\bR}{\mathbb{R}}
\newcommand{\bZ}{\mathbb{Z}}

\newcommand{\ust}{^{\star}}

\newcommand{\up}{^{\prime}}
\newcommand{\upp}{^{\prime\prime}}
\newcommand{\Te}{\Theta}
\newcommand{\te}{\theta}

\newcommand{\gm}{\gamma}
\newcommand{\eps}{\epsilon}

\newcommand{\lm}{\lambda}

\newcommand{\ts}{\tilde{s}}

\newcommand{\uc}[1]{^{(#1)}}

\newcommand{\pzrlmb}{\textit{PZRL-MB}}
\newcommand{\pzrlmf}{\textit{PZRL-MF}}

\newcommand{\deff}{d_{\text{eff.}}}
\newcommand{\idxf}{\text{Index}\uc{f}}
\newcommand{\idxb}{\text{Index}\uc{b}}

\newcommand{\norm}[1]{\left\lVert#1\right\rVert}
\newcommand{\abs}[1]{\left|#1\right|}

\newcommand{\br}[1]{\left(#1\right)}
\newcommand{\flbr}[1]{\left\{#1\right\}}
\newcommand{\sqbr}[1]{\left[#1\right]}
\newcommand{\ceil}[1]{\left\lceil#1\right\rceil}
\newcommand{\floor}[1]{\left\lfloor#1\right\rfloor}

\newcommand{\ovl}[1]{\mkern 1.5mu\overline{\mkern-1.5mu#1\mkern-1.5mu}\mkern 1.5mu}

\newcommand{\diamb}[2]{\mbox{diam}\uc{b}_{#1}\br{#2}}
\newcommand{\diamf}[2]{\mbox{diam}\uc{f}_{#1}\br{#2}}

\newcommand{\diamc}[1]{\mbox{diam}\br{#1}}
\newcommand{\spn}[1]{~\textit{sp}\br{#1}}
\newcommand{\ind}[1]{\mathbb{I}_{\flbr{#1}}}

\newcommand{\inv}{^{-1}}
\newcommand{\mycomment}[1]{}

\newcommand{\al}[1]{\begin{align}#1\end{align}}
\newcommand{\nal}[1]{\begin{align*}#1\end{align*}}

%
\usepackage{newfloat}
\usepackage{listings}
\DeclareCaptionStyle{ruled}{labelfont=normalfont,labelsep=colon,strut=off} 
\lstset{%
	basicstyle={\footnotesize\ttfamily},
	numbers=left,numberstyle=\footnotesize,xleftmargin=2em,
	aboveskip=0pt,belowskip=0pt,%
	showstringspaces=false,tabsize=2,breaklines=true}
\floatstyle{ruled}
\newfloat{listing}{tb}{lst}{}
\floatname{listing}{Listing}
%
\pdfinfo{
/TemplateVersion (2026.1)
}

\setcounter{secnumdepth}{2} 

%


\title{Policy Zooming: Adaptive Discretization-based Infinite-Horizon Average-Reward Reinforcement Learning}
\author{
	Avik Kar\textsuperscript{\rm 1}, Rahul Singh
}
\affiliations{
	\textsuperscript{\rm 1} Department of Computer Science and Automation, \\
    Indian Institute of Science\\
	avikkar@iisc.ac.in, rahulsingh0188@gmail.com
}

\begin{document}
\maketitle
\begin{abstract}
    We study the infinite-horizon average-reward reinforcement learning (RL) for continuous space Lipschitz MDPs in which an agent can play policies from a given set $\Phi$. The proposed algorithms efficiently explore the policy space by ``zooming'' into the ``promising regions'' of $\Phi$, thereby achieving adaptivity gains in the performance. We upper bound their regret as $\tilde{\mathcal{O}}\big(T^{1 - d_{\text{eff.}}^{-1}}\big)$, where $d_{\text{eff.}} = d^\Phi_z+2$ for model-free algorithm $\textit{PZRL-MF}$ and $d_{\text{eff.}} = 2d_\mathcal{S} + d^\Phi_z + 3$ for model-based algorithm $\textit{PZRL-MB}$. Here, $d_\mathcal{S}$ is the dimension of the state space, and $d^\Phi_z$ is the zooming dimension given a set of policies $\Phi$. $d^\Phi_z$ is an alternative measure of the complexity of the problem, and it depends on the underlying MDP as well as on $\Phi$. Hence, the proposed algorithms exhibit low regret in case the problem instance is benign and/or the agent competes against a low-complexity $\Phi$ (that has a small $d^\Phi_z$). When specialized to the case of finite-dimensional policy space, we obtain that $d_{\text{eff.}}$ scales as the dimension of this space under mild technical conditions; and also obtain $d_{\text{eff.}} = 2$, or equivalently $\tilde{\mathcal{O}}(\sqrt{T})$ regret for $\textit{PZRL-MF}$, under a curvature condition on the average reward function that is commonly used in the multi-armed bandit (MAB) literature.
\end{abstract}

\section{Introduction}\label{sec:intro}
Reinforcement Learning (RL)~\citep{sutton2018reinforcement} is a popular framework in which an agent repeatedly interacts with an unknown environment modeled by a Markov decision process (MDP)~\citep{puterman2014markov} and the goal is to choose actions sequentially in order to maximize the cumulative rewards earned by the agent.~We study infinite-horizon average reward MDPs in continuous state and action spaces endowed with a metric, in which the transition kernel and reward functions are Lipschitz (Assumption~\ref{assum:lip}).~The class of Lipschitz MDPs covers a broad class of problems, such as the class of linear MDPs~\citep{jin2020provably}, RKHS MDPs~\cite{chowdhury2019online}, linear mixture models, RKHS approximation, and the nonlinear function approximation framework considered in~\cite{osband2014model} and~\cite{kakade2020information}.~See~\cite{maran2024no,maran2024projection} for more details, or refer to Figure~\ref{fig:mdp_rel}.~We note that even though discrete and linear MDPs have been extensively studied in the literature, they might not be suitable for many real-world applications since it is becoming increasingly common to deploy RL and control algorithms in systems that are non-linear and continuous~\cite{nair2022r3m,kumar2021rma}.
\begin{figure}[t]
    \centering
    \includegraphics[width=0.7\linewidth]{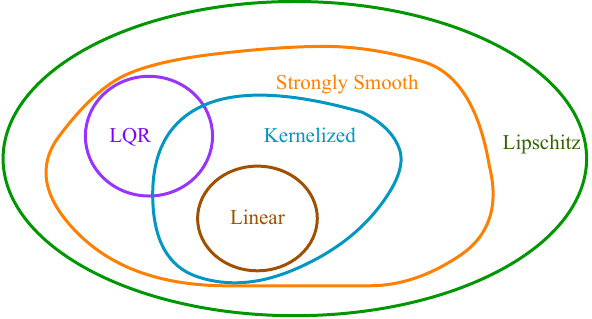}
    \caption{Relations among families of continuous space RL problems. LQR stands for Linear Quadratic Regulator~\citep{abbasi2011regret}.~Our assumptions correspond to the green set. Diagram is taken from~\citep{maran2024no}, see~\citep{maran2024no} for more details.\vspace{-20pt}}
    \label{fig:mdp_rel}
\end{figure}

Let $d_{\cS}$ and $d_{\cA}$ denote the dimensions of the state and action spaces, respectively, and define $d := d_{\cS} + d_{\cA}$. For episodic Lipschitz MDPs, the regret scales as $\ctO\big(K^{1 - \deff\inv}\big)$,\footnote{$\ctO$ suppresses poly-logarithmic dependence in $K$ or $T$.} where $K$ is the number of episodes and $\deff$ is the \emph{effective dimension}, which depends on both the underlying MDP and the algorithm. For instance, using a \textit{fixed discretization} yields $\deff = d + 2$~\citep{song2019efficient}. In contrast, adaptive algorithms can exploit MDP structure to reduce $\deff$.~Prior works~\citep{cao2020provably,sinclair2023adaptive} employ adaptive discretization and a technique called ``zooming,'' which reduces $\deff$ to $d_z + 2$, where $d_z$ is the \emph{zooming dimension}. However, this notion of $d_z$, designed for episodic settings, fails to capture adaptivity in average reward problems. Specifically, as the horizon grows, $d_z \to d$, making adaptive methods no better than fixed discretization.~To address this,~\citet{kar2024provably} introduced a new definition of zooming dimension tailored for average reward RL, and achieved $\deff = 2d_\cS + d_z + 3$, with $d_z \le d$. However, their methods assume compactness of the state-action space and focus solely on the complexity of the MDP.

In this work, we propose adaptive discretization-based algorithms that (i) handle non-compact spaces and (ii) apply to the setups efficiently where the performance is to be compared against a known class of policies. Specifically, we propose the \emph{zooming dimension} given a policy class $\Phi$, denoted by $d^\Phi_z$, that captures the joint complexity of the MDP and the comparator policy class. Thus, we refine the idea of $d_z$ to depend on a policy class as well.~If the optimal policy belongs to a ``simple'' class, this refinement enables significantly smaller $d^\Phi_z$, yielding $d^\Phi_z \ll d$. We analyze regret with respect to a given policy class $\Phi$~\eqref{def:regret}, a widely accepted approach in complex systems~\citep{hazan2022introduction,rakhlin}. Our model-free and model-based algorithms achieve regret bounds with effective dimensions $\deff = d^\Phi_z + 2$ and $\deff = 2 d_\cS + d^\Phi_z + 3$, respectively. It turns out that our algorithms activate policies from the given policy class in an efficient way as compared to an algorithm that uses a uniform grid for policy search. Figure~\ref{fig:actv_pol} depicts that \pzrlmf~activates fewer policies from suboptimal regions and more from near-optimal regions as compared to a naive uniform discretization.

\begin{figure}[!t]
    \centering
    \subfigure[\pzrlmf]{
    \label{fig:adappol}
    \includegraphics[height=1in]{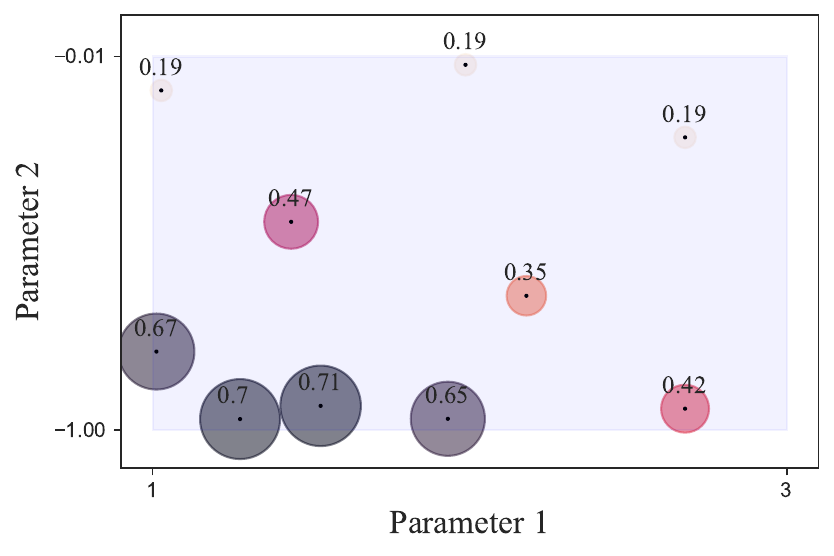}}
    \subfigure[Policy UCB~(Algo~\ref{algo:pucb})]{
    \label{fig:unifpol}
    \includegraphics[height=1in]{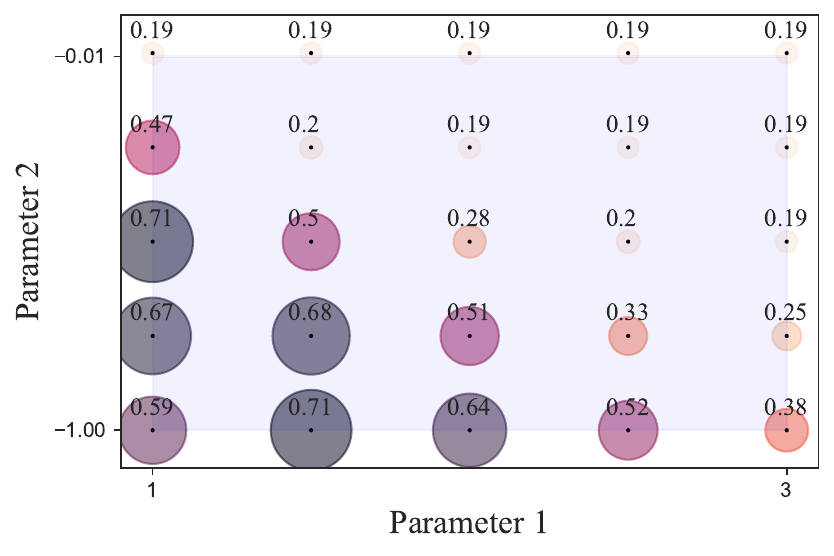}}
    \caption{We show the policies activated by different algorithms for one single trajectory of the transmission scheduling example (See Section~\ref{sec:sim}). The radius of the balls around an active policy is proportional to its average reward. Uniform discretization-based algorithms waste resources to learn a larger number of policies, whereas adaptive algorithms activate more policies from the near-optimal regions.}
    \label{fig:actv_pol}
\end{figure}

To illustrate the intuition behind zooming, we revisit its origin in the simpler setting of Lipschitz MABs~\citep{kleinberg2008multi}.

\textbf{Lipschitz MABs: The Zooming Algorithm.} The agent maintains a set of ``active arms,'' and their ``confidence balls'' whose radii are equal to the confidence radii associated with the corresponding arm's estimated reward. Thanks to Lipschitz continuity, rewards of nearby arms can be inferred from the active ones.\footnote{Let $a$ be an active arm with confidence radius $\eta$ and empirical mean $\hat{\mu}_a$. Then, for any arm $a\up$ in its confidence ball, the mean reward lies in $[\hat{\mu}_a - (1+L)\eta, \hat{\mu}_a + (1+L)\eta]$ with high probability.} New arms are activated only if not covered by existing confidence balls. The agent selects the arm with the highest upper confidence bound (UCB) index. Because the confidence radius shrinks with the number of plays, the algorithm ``zooms in'' on promising regions, those with high UCB index. This adaptive behavior yields an effective dimension of $d_z + 2$.~A similar idea has been explored in \citep{bubeck2011x}.

\subsection{Challenges}
For MABs, the zooming algorithm plays only from amongst the active arms since the UCB index of an active arm turns out to be an optimistic estimate of the mean reward of each arm lying inside its confidence ball.~For policies, however, rewards are not unbiased estimates of the long-term average unless the controlled Markov process (CMP) is at stationarity, making confidence radius design and optimism proofs more involved.~Moreover, to our knowledge, model-free UCB indices have not been explored for average reward RL, not even in tabular MDPs.~Another challenge lies in selecting an appropriate norm for measuring distances between policies~(note that since the policy space is not finite-dimension, all norms are not equivalent).

\subsection{Contributions}\label{subsec:contribution}
\quad 1.~To the best of our knowledge, this is the first work to provide finite-time regret bounds for average reward RL in general state-action space MDPs with $d > 1$. Prior works~\citep{ortner2012online, qian2019exploration, wei2021learning, he2023sample} are either limited to finite action spaces or assume $d_{\cS} = 1$.

2.~We develop two algorithms, \pzrlmf~(model-free) and \pzrlmb~(model-based), that use policy-based zooming and UCB methods~\citep{lattimore2020bandit}.~Our main novelty is a new complexity measure for average reward RL: the zooming dimension $d_z^\Phi$, defined via policy covers of $\Phi$.~We show regret bounds of $\ctO\big(T^{1-\deff\inv}\big)$ with $\deff = d^\Phi_z + 2$ for \pzrlmf, and $\deff = 2d_{\cS} + d^\Phi_z + 3$ for \pzrlmb.~Importantly, a small $d_z^\Phi$ does not imply the MDP belongs to nice class of MDPs, such as linear MDPs or tabular MDPs. When $\Phi$ is parameterized over $W \subset \bR^{d_w}$, we show $d_z^\Phi \le d_w$ under mild assumptions.~For MDPs with bi-Lipschitz average reward functions, we get $\deff = 2$ and hence an $\cO(\sqrt{T})$ regret for \pzrlmf.~Note that the bi-Lipschitz assumption is common in structured bandits~\citep{cope2009regret, yu2011unimodal, combes2020unimodal}.

3.~Along the way, we prove a novel sensitivity result (Theorem~\ref{thm:lip_avg_reward}) for Markov processes on general state spaces~\citep{meyn2012markov}.~We bound the distance between the stationary distributions of Markov chains in terms of a weighted distance measure~\eqref{def:policy_metric} between the transition kernels, improving over the existing results~\citep{mitrophanov2005sensitivity,mouhoubi2021perturbation} that bounds the same quantity in terms of the sup distance between the transition kernels.

4. Existing algorithms for general state spaces are often computationally intractable without linearity~\citep{ayoub2020model} or deterministic dynamics~\citep{wu2024computationally}. In contrast, our methods are efficient, and in fact, \pzrlmf~has the same computational complexity as zooming for MABs.

5. In Section~\ref{sec:sim}, we demonstrate the applicability of our framework via a transmission scheduling problem. This MDP is neither tabular nor linear; however, the optimal policy is known to belong to a known class of policies that can be described by finitely many parameters. Simulation results show the practical relevance of our algorithms.

\subsection{Past Works}
\textit{\underline{Lipschitz episodic MDPs}}: 
\citet{domingues2021kernel} uses smoothing kernels to estimate the transition kernel, and obtains a regret upper bound with $\deff = 2d+1$.~\citet{cao2020provably} performs adaptive discretization and zooming and achieves regret upper bound with $\deff = d_z+2$, where $d_z$ is the zooming dimension defined specifically for the episodic case.~\citet{sinclair2023adaptive} also obtains adaptivity gains with $\deff = d_z+d_\cS$ for a model-based algorithm.~The same work also shows a regret lower bound of $\Omega(K^{1 - (d_z+2)\inv})$.~This lower bound is worse than the $\cO(\sqrt{K})$ dependence that is achievable for the tabular case or the function approximation techniques. This is not surprising since Lipschitz MDPs are a broader class of MDPs~\citep{maran2024no, maran2024projection}; See Figure~\ref{fig:mdp_rel}. Even though function approximation techniques yield an $\cO(\sqrt{K})$ regret, this comes at the expense of a larger prefactor in the regret bound as compared to Lipschitz MDPs. Moreover, function approximation techniques are computationally inefficient unless the underlying MDP is linear, and the feature maps are \emph{known}. The knowledge of feature maps seems to be a restrictive assumption since learning features efficiently is an active topic in itself~\citep{modi2024model}.

\textit{\underline{Average reward RL}}:~Tabular MDPs are well-studied by now, and popular algorithms with a tight $\ctO(\sqrt{DSAT})$ regret bound exist~\citep{jaksch2010near,tossou2019near}; where $D$ is the MDP diameter.~In contrast, continuous MDPs have only recently gained attention.~\cite{wei2021learning} uses function approximation, in which the relative value function is a linear function of the features, and obtains a $\ctO(\sqrt{T})$ regret.~\citet{he2023sample} uses function approximation techniques and obtains a regret bound of $\ctO(\textit{poly}(d_E, B) \sqrt{d_F T})$, where $B$ is the span of the relative value function.~When the transition kernel of the underlying MDP is $\alpha$-H\"older continuous and infinitely often smoothly differentiable, then~\citet{ortner2012online} obtains a regret upper bound with $\deff = (2d + 2\alpha)/\alpha$.~\citet{kar2024provably} performs adaptive discretization and zooming and achieves regret upperbound with $\deff = 2d_\cS + d_z + 3$ for Lipschitz MDPs with compact state-action spaces.
\section{Problem Setup}\label{sec:prelim}
\textbf{Notation.} $\bN$ denotes the set of natural numbers.~Let $(\Omega, \cF)$ be a measurable space, and let $\mu: \cF \mapsto \bR$ be a signed measure, then we denote total variation norm~\citep{folland2013real} of $\mu$ by $\norm{\mu}_{TV}$, i.e., $\norm{\mu}_{TV} := \sup\flbr{\sum_i{|\mu(B_i)|}: \{B_i\}_i \subset \cF \mbox{ partitions } \Omega}$.~For $B \subseteq \cS$, $\diamc{B} := \sup_{s,s\up \in B}{\rho(s,s\up)}$.~We denote $a\wedge b$ the minimum, and $a \vee b$ the maximum of $a, b \in \bR$.~$\ceil{a}$ denotes the smallest integer that is larger than $a$ for $a \in \bR$.~In general, we use a superscript $(f)$ ($(b)$) to indicate that an object is associated with algorithm \pzrlmf~(\pzrlmb).

Let $\cM = (\cS, \cA, p, r)$ be an MDP, where the dimensions of the state-space $\cS$ and action-space $\cA$ are $d_\cS$ and $d_\cA$, respectively.~The spaces $\cS, \cA$ are endowed with metrics $\rho_\cS$ and $\rho_\cA$, respectively.~The space $\cS \times \cA$ is endowed with a metric $\rho$ that is sub-additive, i.e., we have, $\rho\br{(s,a),(s\up,a\up)} \leq \rho_\cS (s, s\up) + \rho_\cA (a, a\up)$, for all $(s, a), (s\up, a\up) \in \cS \times \cA$.~We let $\cS$ be endowed with Borel $\sigma$-algebra $\cB_\cS$.~The state and the action taken at time $t$ are denoted by $s_t,a_t$, respectively. The transition kernel is $p: \cS \times \cA \times \cB_\cS \to [0,1]$, i.e., $\bP\left(s_{t+1}\in B| s_t=s, a_t=a\right) = p(s,a,B)$, a.s., for all $(s,a,B) \in \cS \times \cA \times \cB_\cS, ~t \in \{0\} \cup \bN$, and is not known to the agent.~The reward function $r: \cS \times \cA \to [0,1]$ is a measurable map, and the reward earned by the agent at time $t$ is equal to $r(s_t,a_t)$.~A stationary deterministic policy is a measurable map $\phi: \cS \to \cA$ that implements the action $\phi(s)$ when the system state is $s$. Let $\Phi_{SD}$ be the set of all such policies.~The infinite horizon average reward for the MDP $\cM$ under a policy $\phi$ is denoted by $J_\cM(\phi)$, and is defined as,
\begin{align*}
    J_\cM(\phi) := \underset{t\to\infty}{\lim\inf}{\frac{1}{T} \bE\sqbr{\sum_{t=0}^{T-1}{r(s_t,\phi(s_t))}}}.
\end{align*}
The maximum average reward attainable with a set of policies $\Phi \subseteq \Phi_{SD}$ is denoted by $J\ust_{\cM,\Phi}$.~The regret of a learning algorithm $\psi$ w.r.t. a class of \textit{comparator} policies $\Phi$ until $T$ is defined as~\cite{rakhlin},
\begin{align}
    \cR_\Phi(T;\psi) &:= T J\ust_{\cM,\Phi} -  \sum_{t=0}^{T-1} r(s_t,a_t).\label{def:regret}
\end{align}
The current work derives an upper bound on $\cR_\Phi(T;\psi)$ in terms of the \emph{zooming dimension}, a joint complexity measure of class $\Phi$ and the MDP $\cM$ when the algorithms are only allowed to play policies from $\Phi$.~Note that if $\Phi$ contains an optimal policy, then $\cR_\Phi(T;\psi)$ is the usual regret.~An MDP is Lipschitz if it satisfies the following properties.
\begin{assum}[Lipschitz continuity]\label{assum:lip}
    \begin{enumerate}[(i)]
        \item \label{assum:lip_r} The reward function $r$ is $L_r$-Lipschitz, i.e., $\forall ~s, s\up \in \cS, a, a\up \in \cA$,
        \begin{align*}
            | r(s,a) - r(s\up,a\up) | \le L_r \rho\left((s,a),(s\up,a\up)\right).
        \end{align*}
        \item \label{assum:lip_p} 
        The transition kernel $p$ is $L_p$-Lipschitz, i.e., $\forall~ s, s\up \in \cS, a, a\up \in \cA$,
        \begin{align*}
            \norm{p(s,a,\cdot) - p(s\up, a\up, \cdot)}_{TV} \le L_p \rho\br{(s,a),(s\up,a\up)}.
        \end{align*}
    \end{enumerate}
\end{assum}
While studying infinite-horizon average reward MDPs, some sort of ergodicity assumption is required. In fact, uniform ergodicity is the weakest known sufficient condition that ensures efficient computation of an optimal policy~\citep{arapostathis1993discrete} even when the MDP is known.

\begin{assum}[Ergodicity]\label{assum:unif_ergodic}
    Let $\Phi \subseteq \Phi_{SD}$ be the comparator class of policies. The CMP $\{s_t\}_t$ that is induced by transition kernel $p$ under application of any $\phi \in \Phi$ is uniformly ergodic~\citep{douc2018markov}, that is, there exist two constants, $C \in (0,\infty)$ and $\alpha \in (0,1)$ and for every $\phi \in \Phi$, there exists a unique distribution $\mu\uc{\infty}_{\phi,p}$ such that
    \begin{align}
        \norm{\mu\uc{t}_{\phi,p,s} - \mu\uc{\infty}_{\phi,p}}_{TV} \le C \alpha^t, ~\forall s \in \cS, t \in \{0\} \cup \bN,
    \end{align}
    where $\mu\uc{t}_{\phi,p,s}$ denotes the distribution of $s_t$ given $s_0=s$.
\end{assum}

We call $\mu\uc{\infty}_{\phi,p}$ as the stationary distribution of the CMP induced by $p$ under the application of policy $\phi$.~We need $\Phi$ to be endowed with an appropriate metric $\rho_\Phi$ such that $\Phi$ is bounded and $J_\cM$ is a Lipschitz function on $\Phi$.~In fact, under Assumption~\ref{assum:lip} and Assumption~\ref{assum:unif_ergodic}, $J_\cM$ is Lipschitz w.r.t. the metric $\rho_\Phi$ set equal to the metric induced by $\infty$ norm. Further, $J_\cM$ is Lipschitz w.r.t. a weighted distance measure too under a mild technical condition.~We show this two results in Theorem~\ref{thm:lip_avg_reward}.~For a discussion on the metric space of policies, see Appendix~\ref{app:pol_space}.~We next define the zooming dimension $d^\Phi_z$, a joint complexity measure for the problem instance and the comparator policy class $\Phi$.

\noindent\textbf{Zooming dimension.}~Suboptimality of a policy $\phi$ w.r.t. $\Phi$ is defined as $\Delta_\Phi(\phi) := J\ust_{\cM,\Phi} - J_\cM(\phi)$.~Define the sets of policies $\Phi_\gm := \{\phi \in \Phi~|~ \Delta_\Phi(\phi) \in (\gm, 2\gm]\}$, and $\Phi_{\leq\gm} := \{\phi \in \Phi~|~ \Delta_\Phi(\phi) \leq \gm\}$.~Then, the zooming dimension of the problem given the policy space $\Phi$ is defined as
\begin{align}\label{def:zoomingdim}
    d^\Phi_z := \inf \Big\{&d\up > 0 ~|~ \cN_{\frac{\gm}{c_z}}\br{\Phi_\gm} \leq c_{z_1} \gm^{-d\up}, \mbox{ and }\notag\\
    &\cN_{\frac{\gm}{c_z}}\br{\Phi_{\leq\gm}} \leq c_{z_2} \gm^{-d\up},~\forall  \gm \geq 0 \Big\},
\end{align}
where $\cN_{\gm}\br{\Phi\up}$ denotes the $\gm$-covering number~(Definition~\ref{def:sizeofset}) of $\Phi\up \subseteq \Phi$ w.r.t. metric $\rho_\Phi$, $c_{z_1}$ and $c_{z_2}$ are problem-dependent constants, $c_z := 2(\max{\{2, C_{ub}\}} + L_J)$.~$C_{ub}$ is a problem-dependent constants that are defined in \eqref{def:C_ub}.
\begin{remark}
    We note that even if the policy class is high-dimensional, the zooming dimension could be small, as it is a measure of the size of the set of near-optimal policies.
\end{remark}

\section{Algorithm}\label{sec:algo}
We propose a model-free algorithm~\pzrlmf~and a model-based algorithm~\pzrlmb.~Both the algorithms combine policy-based zooming with the principle of optimism in the face of uncertainty~\cite{lattimore2020bandit}.~They maintain a set of active policies, compute their UCB indices, and then play an active policy with the highest UCB index in the current episode.~Its zooming component zooms in and activates only those policies from $\Phi$, for which it is not possible to generate a good estimate of its performance using the performance estimates of nearby active policies.~However, they differ in the way in which they compute UCB indices and activate new policies and hence are discussed separately.~The algorithms are summarized in Algorithm~\ref{algo:pzrl}.
\subsection{\pzrlmf}
\noindent\textbf{Policy Diameter.} Diameter at time $t$ is defined as,
\begin{align}\label{def:diamf}
    &\diamf{t}{\phi} := \notag \\
    & \frac{C}{1 - \alpha} \br{\sqrt{\frac{c\uc{f}_d \log{\br{\frac{T}{\delta}}}}{1 \vee N_t(\phi)}} + \frac{1 + K_t(\phi)}{1 \vee N_t(\phi)}}, \phi \in \Phi,
\end{align}
where $N_t(\phi)$ is the number of plays of the policy $\phi$ until time $t$, $K_t(\phi)$ is the number of episodes that began before time $t$, and in which $\phi$ was played, while $c\uc{f}_d$ is a constant that satisfies \eqref{def:cdf}.

\noindent\textbf{Active Policies.}~$\Phi^{act.}_t$ denotes the set of policies that are active at time $t$.~Define the following ball in the policy space, 
\begin{align}
    B_{\phi,t} := \flbr{\phi\up \in \Phi: \rho_\Phi(\phi\up, \phi) \leq \diamf{t}{\phi}}. \label{def:pol_ball}
\end{align}
Since the confidence ball associated with a policy shrinks when it is played, in the possible event that $\cup_{\phi \in \Phi^{act.}_t}{B_t(\phi)}$ does not cover the set $\Phi$ anymore, the proposed algorithm activates a new policy to ensure that the union of confidence balls of the active policies covers $\Phi$.~Thus, \pzrlmf~possesses the \emph{covering invariance} property, i.e., $\cup_{\phi \in \Phi^{act.}_t}{B_{\phi,t}}$ covers $\Phi$ at all times.

\noindent\textbf{Model-free UCB Index.}~Let $\phi_t$ denote the policy played at time $t$. The UCB index at time $t$ is defined as
\begin{align}
    \idxf_t(\phi) &:= \frac{1}{N_t(\phi)}\sum_{i=0}^{t-1}{\ind{\phi_i = \phi} r(s_i,\phi(s_i))} \notag\\
    &\quad+ (1 + L_J) ~\diamf{t}{\phi},~ \phi \in \phi^{act.}_t, \label{def:index_mf}
\end{align}
where $L_J$ is the Lipschitz constant associated with $J_{\cM}$.
\begin{algorithm}[t]
    \caption{Policy Zooming for RL~(\pzrlmf/\pzrlmb)}
    \label{algo:pzrl}
    \begin{algorithmic}
        \STATE {\bfseries Input} Horizon $T$, confidence parameter $\delta$, ergodicity coefficient $\alpha$ and policy class $\Phi$
        \STATE {\bfseries Initialize} $h=0$, $k=0$, $\Phi^{act.}_0 = \flbr{}$.
        \FOR{$t= 0$ to $T-1$}
            \STATE Update the set of active policies $\Phi^{act.}_t \subset \Phi$
            \IF{$h \geq H_k$}
                \STATE $k \leftarrow k+1$, $h \leftarrow 0$
                \STATE For every $\phi \in \Phi^{act.}_t$ compute $\text{Index}_t(\phi)$, where\\$\text{Index}_t(\phi) = \idxf_t(\phi)$~\eqref{def:index_mf} if \pzrlmf, \\$\text{Index}_t(\phi) = \idxb_t(\phi)$~\eqref{def:index_mb} if \pzrlmb.
                \STATE Choose $\phi\uc{k} \in \arg\max_{\phi \in \Phi^{act.}_t}{\text{Index}_t(\phi)}$.
                \STATE $H_k = 1 \vee N_t(\phi_k)$
            \ENDIF
            \STATE $h \leftarrow h+1$
            \STATE Play $a_t = \phi\uc{k}(s_t)$, observe $s_{t+1}$ and receive $r(s_t, a_t)$.
        \ENDFOR
	\end{algorithmic}
\end{algorithm}

\subsection{\pzrlmb}
We assume $\cS$ to be bounded for \pzrlmb. \pzrlmb~maintains an adaptive partition of the state space $\cS$ for each active policy. We use $\cP_{\phi,t}$ to denote the state partition corresponding to policy $\phi$ at time $t$; see Appendix~\ref{app:adap_disc} for more details on the procedure to create these partitions.~Loosely speaking, as time progresses, $\cP_{\phi,t}$ is finer in those regions of $\cS$ that have been visited relatively more number of times while playing $\phi$.~$\cP_{\phi,t}$ consists of a certain type of subsets of $\cS$ called cells~(Definition~\ref{def:cell}).~The cells comprising $\cP_{\phi,t}$ are called active cells at time $t$ corresponding to policy $\phi$.~Let $q_{\phi,t}\inv(s)$ be the active cell corresponding to $\phi$ at time $t$ that contains the state $s$.

\noindent\textbf{Policy Diameter.}~The model-based policy diameter for $\phi \in \Phi$ is defined as follows:
\begin{align}\label{def:diamb}
    \diamb{t}{\phi} := \int_{\cS}{\diamc{q_{\phi,t}\inv(s)} \mu\uc{\infty}_{\phi,p}(ds)}.
\end{align}
Policy balls for \pzrlmb~are defined similar to~\eqref{def:pol_ball}, replacing $\diamf{t}{\phi}$ with $\diamb{t}{\phi}$.~Similar to \pzrlmf, \pzrlmb~maintains a set of active policies, $\Phi^{act.}_t$ that satisfies the covering invariance property.

\textit{Approximate Diameter}: The agent cannot compute $\diamb{t}{\phi}$ since it does not know $\mu\uc{\infty}_{\phi,p}$.~However, the agent can compute a ``tight'' lower bound of $\diamb{t}{\phi}$, which can then be used in \eqref{def:pol_ball} in lieu of $\diamb{t}{\phi}$. This increases the regret upper bound only by a constant factor. This is discussed in Appendix~\ref{app:eval_diam}.

\noindent\textbf{Model-based UCB Index.}~\pzrlmb~evaluates the UCB indices of the active policies by using estimate of the transition kernel.~The algorithm constructs a set of plausible discretized transition kernels using its estimate~\eqref{def:confball}.~A curated bias term is added to the discretized reward function in order to overcome the discretization error. Then, the UCB indices are computed via iterations~\eqref{eq:epvib}, which are similar to the policy evaluation algorithm.~Computation of the UCB index involves the following three steps:

(i) Estimating the Transition Kernel: Denote $S_{\phi,t}$ to be the set of representative points of the cells in $\cP_{\phi,t}$, and denote $\bar{S}_{\phi,t}$ to be the set of representative points of all the cells of size of the smallest cell in $\cP_{\phi,t}$. At time $t$, for every active policy $\phi$, \pzrlmb~constructs the empirical transition distribution, $\hat{p}\uc{d}_{\phi,t}(s,\cdot)$~\eqref{def:p_hat_disc} with the width of the bins set equal to the diameter of $q\inv_{\phi,t}(s)$ for every $s \in S_{\phi,t}$. Then a continuous extension of $\hat{p}\uc{d}_{\phi,t}$, $\hat{p}_{\phi,t}$ is computed~\eqref{def:p_hat}. $\hat{p}_{\phi,t}$ is again discretized with the width of the bins set equal to the diameter of the smallest active cell. This discrete estimate is denoted by $\wp_{S_{\phi,t} \to \bar{S}_{\phi,t},\hat{p}_{\phi,t}}$.~For a detailed discussion on the estimation of the transition kernel, see Appendix~\ref{subapp:discker}.


(ii) Confidence Ball: For a policy $\phi$ and a representative state $s \in S_{\phi,t}$, the confidence radius associated with the estimate $\wp_{S_{\phi,t} \to \bar{S}_{\phi,t},\hat{p}_{\phi,t}}$ is defined as follows,
\begin{align}\label{def:eta_k}
    \eta_{\phi,t}(s) :=& 3 \br{\frac{c\uc{b}_d \log{\br{T\delta\inv}}}{\sum_{i=0}^{t-1}{\ind{s_i \in {q_{\phi,t}\inv(s)}_i}}}}^\frac{1}{d_\cS + 2} \notag\\
    & + \br{3 (1 + L_\phi) L_p + C_p}~\diamc{q_{\phi,t}\inv(s)},
\end{align}
where $c\uc{b}_d>0$ is a constant that is discussed in Lemma~\ref{lem:conc_ineq}, $C_p$ is as described in Assumption~\ref{assum:bdd_der}, and $L_\phi$ is the Lipschitz constant associated with $\phi$, i.e., for all $s, s\up \in \cS$, $\rho_\cA(\phi(s),\phi(s\up)) \leq L_\phi \rho_\cS(s,s\up)$.~It follows from the rule used for activating a new cell (Definition~\ref{def:activationrule}) that we have,
\begin{align}
    \eta_{\phi,t}(s) \leq C_{\eta,\phi}~ \diamc{q_{\phi,t}\inv(s)}, \label{relate:confrad_diam}
\end{align}
for every $s \in S_{\phi,t}$, where $C_{\eta,\phi}:= 3\br{1 + (1 + L_\phi) L_p}$.~Let $\Te_{\phi,t}$ denote the set of all possible discretized transition kernels that describe outgoing transition probabilities from points in $S_{\phi,t}$, with a support on the discrete state space $\bar{S}_{\phi,t}$.~We define a set of transition probability kernels associated with $\wp_{S_{\phi,t} \to \bar{S}_{\phi,t},\hat{p}_{\phi,t}}$ as follows,
\begin{align}
    \cC_{\phi,t} := &\big\{\te \in \Te_{\phi,t} \mid \norm{\te(s,\cdot) - \wp_{S_{\phi,t} \to \bar{S}_{\phi,t},\hat{p}_{\phi,t}}(s,\cdot)}_1 \notag\\
    &\leq \eta_{\phi,t}(s) \mbox{ for every } s \in S_{\phi,t}\big\},\label{def:confball}
\end{align}

(iii) Computing the UCB Indices of Active Policies: Let us fix a time $t$.~To obtain the UCB index of a policy $\phi \in \Phi$, we perform the following iterations,
\begin{align}
    \ovl{V}^{\phi,t}_0(s) &= 0,\notag\\
    \ovl{V}^{\phi,t}_{i+1}(s) &= r(s,\phi(s)) + (1+L_\phi) L_r~ \diamc{q_{\phi,t}\inv(s)} \notag\\
    &\quad + \max_{\te \in \cC_{\phi,t}}{\sum_{s\up \in \bar{S}_{\phi,t}}{\te(q(q\inv_{\phi,t}(s)),s\up) \ovl{V}^{\phi,t}_{i}(s\up)}},\label{eq:epvib}
\end{align}
$s \in \bar{S}_{\phi,t}$, $i \in \bZ_+$. The difference of two consecutive iterates of \eqref{eq:epvib} is shown to converge in Lemma~\ref{lem:conv_eqpvi}. We define the UCB indices as follows,
\begin{align}
    \idxb_t(\phi) &:= \lim_{i \to \infty}{\br{\ovl{V}^{\phi,t}_{i+1}(s) - \ovl{V}^{\phi,t}_{i}(s)}} \notag \\
    &\quad + L_J~\diamb{t}{\phi}, \label{def:index_mb}
\end{align}
for any $s \in \bar{S}_{\phi,t}$.

\begin{remark}
    Similar to the zooming algorithm for bandits~\citep{kleinberg2019bandits}, we assume access to an oracle that takes as input a finite collection of open balls, and then either declares that they cover $\Phi$, or outputs a point that is uncovered.~In general, such an oracle may not be computationally efficient. However, when $\Phi$ has a finite dimensional parameterization, we can perform a grid search.
\end{remark}
\section{Regret Analysis}\label{sec:regret}
In this section, we present our main results Theorem~\ref{thm:reg_ub_mf} and Theorem~\ref{thm:reg_ub_mb}, that yield upper bounds on the regret of \pzrlmf~and \pzrlmb, respectively.~Before presenting these, we first show that the average reward function $J_\cM(\cdot)$ is a Lipschitz function of the policies w.r.t.~the sup-norm distance. Furthermore, under a mild assumption, it is also a Lipschitz function of the policies w.r.t. a weighted distance measure.~This result provides an important insight on choosing an appropriate norm in the definition of zooming dimension. Define
\begin{align*}
    \rho_{\Phi,\infty}(\phi_1, \phi_2) := \sup_{s \in \cS}{\rho_\cA(\phi_1(s),\phi_2(s))},~\forall \phi_1, \phi_2 \in \Phi.
\end{align*}
Consider a probability measure $\nu$ on $(\cS,\cB_\cS)$. Define the metric,
\begin{align}\label{def:policy_metric}
    \rho_{\Phi,\nu}(\phi_1,\phi_2) := \int_{\cS}{\rho_\cA(\phi_1(s), \phi_2(s))~d \nu(s)},
\end{align}
\begin{thm}\label{thm:lip_avg_reward}
    Let the MDP $\cM$ satisfy Assumptions~\ref{assum:lip} and ~\ref{assum:unif_ergodic}.~(i) Then, the infinite horizon average reward is $L_{J,\infty}$-Lipschitz w.r.t. the metric $\rho_{\Phi,\infty}$, i.e., for $\phi_1, \phi_2 \in \Phi$ we have,
    \begin{align*}
        |J_\cM(\phi_1) - J_\cM(\phi_2)| \leq L_{J,\infty} \rho_{\Phi,\infty}(\phi_1, \phi_2),
    \end{align*}
    where,
    \begin{align}
        L_{J,\infty} := L_r + \frac{L_p}{2(1 - \alpha)} \br{\ceil{\log_{\frac{1}{\alpha}}{(C)}} + 1}. \label{def:LJup}
    \end{align}
    (ii) Furthermore, if $\mu^{(\infty)}_{\phi,p}(\xi) \leq \kappa \nu(\xi),~\forall \xi \in \cB_\cS, \phi \in \Phi$, for some probability measure $\nu$ and a constant $\kappa > 0$, then $J_{\cM}(\cdot)$ is $L_{J,\nu}$-Lipschitz w.r.t. the metric $\rho_{\Phi,\nu}$, i.e., for $\phi_1, \phi_2 \in \Phi$ we have,
    \begin{align*}
        |J_\cM(\phi_1) - J_\cM(\phi_2)| \leq L_{J,\nu} \rho_{\Phi,\nu}(\phi_1, \phi_2),
    \end{align*}
    where,
    \begin{align}
        L_{J,\nu}:= \kappa L_{J,\infty}. \label{def:LJ}
    \end{align}
\end{thm}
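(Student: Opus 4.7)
My plan is to use the representation $J_\cM(\phi) = \int r(s,\phi(s))\, \mu\uc{\infty}_{\phi,p}(ds)$ (valid under Assumption~\ref{assum:unif_ergodic}) and to split the effect of changing $\phi_1$ to $\phi_2$ into a reward change and an invariant-measure change.~Writing $\pi_i := \mu\uc{\infty}_{\phi_i,p}$ and $P_i(s,\cdot) := p(s,\phi_i(s),\cdot)$, I decompose
\begin{align*}
J_\cM(\phi_1) - J_\cM(\phi_2) = \underbrace{\int \br{r(s,\phi_1(s)) - r(s,\phi_2(s))}\pi_1(ds)}_{=:A}\; +\; \underbrace{\int r(s,\phi_2(s))\,(\pi_1-\pi_2)(ds)}_{=:B}.
\end{align*}
The reward-Lipschitz part of Assumption~\ref{assum:lip} gives $|A| \le L_r \int \rho_\cA(\phi_1(s),\phi_2(s))\,\pi_1(ds)$.~Since $(\pi_1-\pi_2)(\cS)=0$, I can replace $r(s,\phi_2(s))$ by $r(s,\phi_2(s))-\tfrac{1}{2}$ in $B$ without changing its value; the new integrand has sup-norm $\le 1/2$, yielding $|B| \le \tfrac{1}{2}\norm{\pi_1-\pi_2}_{TV}$.~This is where the factor of $1/2$ appearing in \eqref{def:LJup} originates.

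The heart of the argument is a sensitivity bound on $\norm{\pi_1-\pi_2}_{TV}$.~From invariance $\pi_i P_i^t = \pi_i$ and the operator identity $P_1^t - P_2^t = \sum_{k=0}^{t-1} P_1^{t-k-1}(P_1-P_2)P_2^k$, left-multiplying by $\pi_1$ and using $\pi_1 P_1^{t-k-1}=\pi_1$ yields $\pi_1 - \pi_1 P_2^t = \sum_{k=0}^{t-1}\pi_1(P_1-P_2)P_2^k$.~Assumption~\ref{assum:unif_ergodic}, applied against $\pi_1$, gives $\norm{\pi_1 P_2^t - \pi_2}_{TV}\le C\alpha^t \to 0$, so letting $t\to\infty$ produces, with convergence in $\norm{\cdot}_{TV}$,
\begin{align*}
\pi_1 - \pi_2 = \sum_{k=0}^{\infty}\sigma_k, \qquad \sigma_k := \pi_1(P_1-P_2)P_2^k.
\end{align*}
I then bound each $\sigma_k$ in two complementary ways.~By Fubini and the kernel-Lipschitz part of Assumption~\ref{assum:lip},
\begin{align*}
\norm{\sigma_0}_{TV}\le\int\norm{P_1(s,\cdot)-P_2(s,\cdot)}_{TV}\pi_1(ds)\le L_p\int\rho_\cA(\phi_1(s),\phi_2(s))\,\pi_1(ds),
\end{align*}
and non-expansiveness of $P_2$ on signed measures gives $\norm{\sigma_k}_{TV}\le\norm{\sigma_0}_{TV}$.~On the other hand, $\sigma_0(\cS)=0$, so its Jordan decomposition $\sigma_0=\sigma_0^{+}-\sigma_0^{-}$ satisfies $\sigma_0^{+}(\cS)=\sigma_0^{-}(\cS)=\norm{\sigma_0}_{TV}/2$; applying Assumption~\ref{assum:unif_ergodic} to the normalized probability measures $\sigma_0^{\pm}/\sigma_0^{\pm}(\cS)$ and noting that their $\pi_2$-limits cancel gives $\norm{\sigma_k}_{TV}\le C\alpha^k \norm{\sigma_0}_{TV}$.~Splitting the sum at $t\ust:=\ceil{\log_{1/\alpha}C}$ (so $C\alpha^{t\ust}\le 1$),
\begin{align*}
\sum_{k=0}^{\infty}\min\flbr{1,\,C\alpha^k}\;\le\; t\ust+\frac{C\alpha^{t\ust}}{1-\alpha}\;\le\; \frac{\ceil{\log_{1/\alpha}C}+1}{1-\alpha}.
\end{align*}
Combining with the estimates for $|A|$ and $|B|$ recovers $L_{J,\infty}$ exactly as in \eqref{def:LJup}, proving~(i).

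Part~(ii) then follows from the same chain: the hypothesis $\mu\uc{\infty}_{\phi,p}(\xi)\le\kappa\nu(\xi)$ for every $\xi\in\cB_\cS$ is equivalent to $d\pi_\phi/d\nu\le\kappa$ $\nu$-a.e., so every $\pi_i$-weighted integral of $\rho_\cA(\phi_1,\phi_2)$ may be upper bounded by $\kappa\rho_{\Phi,\nu}(\phi_1,\phi_2)$.~Both $|A|$ and the $\norm{\sigma_0}_{TV}$-bound then acquire a common factor of $\kappa$, producing $L_{J,\nu}=\kappa L_{J,\infty}$.~The main obstacle is the sensitivity step just sketched: justifying the infinite telescope on a general (non-tabular, non-Polish) state space, and crucially producing a \emph{weighted} bound on $\norm{\sigma_0}_{TV}$ rather than one in terms of $\sup_s \norm{P_1(s,\cdot)-P_2(s,\cdot)}_{TV}$.~This weighted form is what distinguishes the argument from classical Mitrophanov-type sensitivity results and is precisely what enables the sharper Lipschitz constant in part~(ii).
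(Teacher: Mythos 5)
Your proposal is correct and follows essentially the same route as the paper: the stationary-measure representation $J_\cM(\phi)=\int r(s,\phi(s))\,d\mu\uc{\infty}_{\phi,p}(s)$, the same two-term split into a reward perturbation and an invariant-measure perturbation (the paper weights the reward term by $\mu\uc{\infty}_{\phi_2,p}$ rather than $\mu\uc{\infty}_{\phi_1,p}$, which is immaterial), and the identical telescoping identity $\pi_1-\pi_2=\sum_{k\ge 0}\pi_1(P_1-P_2)P_2^k$ underlying the paper's Proposition~\ref{prop:sens}, together with the weighted bound $\norm{\sigma_0}_{TV}\le L_p\,\rho_{\Phi,\pi_1}(\phi_1,\phi_2)$ that yields part~(ii). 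The only substantive variation is how you sum the tail: you obtain per-term geometric decay $\norm{\sigma_k}_{TV}\le C\alpha^k\norm{\sigma_0}_{TV}$ via the Jordan decomposition of the zero-mass signed measure $\sigma_0$ and uniform ergodicity, whereas the paper groups the terms into blocks of length $m\ust$ using the $m\ust$-step contraction (Lemmas~\ref{lem:pn_contra} and~\ref{lem:sum_tv_dist}); both arguments are sound and produce the same constant $\frac{m\ust}{1-\alpha}$, hence exactly $L_{J,\infty}$ of~\eqref{def:LJup}.
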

The above theorem is proved in Appendix~\ref{app:mdp_prop}.~The next two theorems are the main results of this work, and bound the regrets of \pzrlmf~and \pzrlmb.
\begin{thm}\label{thm:reg_ub_mf}
    If the MDP $\cM$ satisfies Assumptions \ref{assum:lip} and~\ref{assum:unif_ergodic}, then with a probability at least $1 - \delta$, the regret of~\pzrlmf, i.e. $\cR_{\Phi}(T;\pzrlmf)$, is upper-bounded as $\ctO(T^{1 - \deff\inv})$ where $\deff = d^\Phi_z + 2$.
\end{thm}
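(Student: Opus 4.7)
\textbf{Proof plan for Theorem \ref{thm:reg_ub_mf}.} The argument follows the zooming-algorithm template, adapted to the average-reward RL setting. The three ingredients are: (i) a high-probability clean event under which $\idxf_t$ is an optimistic estimate of $J_\cM$, (ii) a per-policy play-count bound that is decreasing in $\Delta_\Phi(\phi)$, and (iii) a covering/packing argument that invokes the definition \eqref{def:zoomingdim} of $d_z^\Phi$ and is summed dyadically.

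\emph{Clean event and optimism.} I would first define a high-probability event on which, for every $t\le T$ and every $\phi\in\Phi^{act.}_t$,
\nal{
\abs{\frac{1}{N_t(\phi)}\sum_{i<t}{\ind{\phi_i=\phi} r(s_i,\phi(s_i))} - J_\cM(\phi)} \le \diamf{t}{\phi}.
}
The $\sqrt{c\uc{f}_d\log(T/\delta)/(1\vee N_t(\phi))}$ term in \eqref{def:diamf} comes from a Hoeffding-type inequality for sums along the uniformly ergodic chain induced by $\phi$ (Assumption~\ref{assum:unif_ergodic} and the Markov-chain concentration of \citealt{paulin2015concentration}-type); the $(1+K_t(\phi))/(1\vee N_t(\phi))$ term absorbs the transient bias at the start of each of the $K_t(\phi)$ episodes in which $\phi$ was played, using that the TV-distance to stationarity decays as $C\alpha^t$. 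A union bound over the (at most polynomially many) active policies and episodes handles adaptivity. On this event the covering invariance implies that, for every $\phi^\circ\in\Phi$, there exists $\tilde\phi\in\Phi^{act.}_t$ with $\rho_\Phi(\phi^\circ,\tilde\phi)\le\diamf{t}{\tilde\phi}$, whence
\nal{
\idxf_t(\tilde\phi) \ge J_\cM(\tilde\phi) + L_J\,\diamf{t}{\tilde\phi} \ge J_\cM(\phi^\circ)
}
by Theorem~\ref{thm:lip_avg_reward}. Maximising over $\phi^\circ$ gives $\max_{\phi\in\Phi^{act.}_t}\idxf_t(\phi)\ge J\ust_{\cM,\Phi}$.

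\emph{Per-policy regret and packing.} On the clean event, the regret in episode $k$ (length $H_k$, policy $\phi\uc{k}$) is bounded by $(2+L_J)H_k\,\diamf{t_k}{\phi\uc{k}}$ up to an $O(1/(1-\alpha))$ episode-start mixing term that is absorbed into the $(1+K_t)/N_t$ component of the diameter. The doubling rule $H_k=1\vee N_{t_k}(\phi\uc{k})$ forces $K_T(\phi)=O(\log N_T(\phi))$, so inverting \eqref{def:diamf} at the last episode in which $\phi$ was selected yields the stopping bound
\nal{
N_T(\phi) \lesssim \frac{\log(T/\delta)}{\Delta_\Phi(\phi)^2},
}
and hence $N_T(\phi)\Delta_\Phi(\phi)\lesssim\log(T/\delta)/\Delta_\Phi(\phi)$ whenever $\Delta_\Phi(\phi)>0$. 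The activation rule keeps $\Phi^{act.}_t$ a $\diamf{}{\cdot}$-packing of $\Phi$: at the last activation time of any $\phi\in\Phi_\gm$, one has $\diamf{}{\phi}=\Omega(\gm/c_z)$ (else $\phi$'s index would be smaller than $J\ust_{\cM,\Phi}$ and $\phi$ would not be activated), so the number of active policies with $\Delta_\Phi\in(\gm,2\gm]$ is at most $\cN_{\gm/c_z}(\Phi_\gm)\le c_{z_1}\gm^{-d^\Phi_z}$ by \eqref{def:zoomingdim}.

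\emph{Dyadic summation and obstacle.} Choosing a threshold $\gm_0$ and bounding the contribution from $\Phi_{\le\gm_0}$ trivially via $\gm_0 T$ (using the $c_{z_2}$-covering bound on $\Phi_{\le\gm_0}$), and summing contributions over dyadic scales $\gm_j=2^j\gm_0$ above, gives
\nal{
\cR_\Phi(T;\pzrlmf) \lesssim \gm_0 T + \log(T/\delta)\sum_{j\ge 0}(2^j\gm_0)^{-(d^\Phi_z+1)} \lesssim \gm_0 T + \frac{\log(T/\delta)}{\gm_0^{d^\Phi_z+1}}.
}
Optimising $\gm_0\asymp(\log(T/\delta)/T)^{1/(d^\Phi_z+2)}$ yields $\ctO(T^{1-1/(d^\Phi_z+2)})$. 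The hardest step is the clean event: the rewards are neither i.i.d.\ nor generated from a single stationary run of the chain, but rather from $K_t(\phi)$ separate restarts; producing exactly the two-term form of \eqref{def:diamf} requires combining a Markov-chain Hoeffding inequality (via uniform ergodicity) with a careful bookkeeping of transient biases, and a peeling-in-$N_t(\phi)$ union bound to cover the adaptively growing family $\Phi^{act.}_t$.
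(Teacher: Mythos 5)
Your high-level template matches the paper's: decompose the regret into a suboptimality term and a fluctuation term, establish optimism via covering invariance plus Theorem~\ref{thm:lip_avg_reward}, invert the diameter to get $N_T(\phi)\lesssim \log(T/\delta)\,\Delta_\Phi(\phi)^{-2}$, count active policies per dyadic shell via the zooming dimension, and optimize the dyadic sum at $\gm_0 \asymp T^{-1/(d_z^\Phi+2)}$. All of that agrees with Lemma~\ref{lem:ub_nplay}, Lemma~\ref{lem:bd_num_epi_mf} and the proof in Appendix~\ref{app:regret}. (One small imprecision: new policies are activated when the confidence balls fail to cover $\Phi$, not on the basis of their index; the packing bound comes from the fact that a suboptimal policy's diameter never shrinks below $\Delta_\Phi(\phi)/(2+L_J)$ because it stops being played, not from an activation test on the index.)

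The genuine gap is exactly the step you flag as hardest: the clean event. You propose ``a union bound over the (at most polynomially many) active policies,'' but the active policies are selected adaptively from the continuum $\Phi$ as a function of the trajectory, so the family of events you would union over is itself random; a union bound over a data-dependent, uncountably-supported family is not valid, and ``peeling in $N_t(\phi)$'' only handles the random sample size, not the random identity of $\phi$. The paper's resolution (Lemma~\ref{lem:conc_ineq_avg_rew} and Corollary~\ref{cor:con_ineq_mf}) is to fix, before any data is seen, the deterministic covering $\Phi\uc{f}_{\mbox{cover}}$ of $\Phi$ by $2^{-i}/2(2+L_J)$-balls intersected with the shells $\Phi_{2^{-i}}$ --- a family of cardinality $\ctO\big(T^{d_z^\Phi/(d_z^\Phi+2)}\big)$ controlled by the zooming dimension --- to prove concentration for the sums $\sum_t \ind{\phi_t \in \Phi\up}\big(J_\cM(\phi_t) - r(s_t,\phi_t(s_t))\big)$ over these \emph{fixed} sets $\Phi\up$, and then to show (Lemma~\ref{lem:ub_nplay}, by induction on time, on the clean event itself) that at most one active policy ever lies in each covering element, so that the fixed-set concentration transfers to every active policy. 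Without this device (or an equivalent one, e.g.\ chaining over $\Phi$), your optimism claim and the stopping bound are unsupported. Secondarily, the exact two-term form of $\diamf{t}{\phi}$ is obtained not from a generic Markov-chain Hoeffding bound but from the Poisson-equation decomposition of $J_\cM(\phi_t) - r(s_t,\phi_t(s_t))$ into a martingale difference (controlled by a self-normalized inequality with variance proxy the span of the relative value function, at most $\tfrac{C}{2(1-\alpha)}$) plus one boundary term per episode switch; that is precisely what produces the $(1+K_t(\phi))/N_t(\phi)$ term, so your ``transient bias at restarts'' intuition is right but needs this decomposition to be made quantitative.
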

The following assumptions are required for the analysis of~\pzrlmb.

\begin{assum}[Bounded Radon-Nikodym derivative]\label{assum:bdd_der}
    The probability measures $\{p_\phi(s,\cdot)\}$ are absolutely-continuous w.r.t. the Lebesgue measure on $(\cS,\cB_\cS)$, with density functions given by $\{f_{\phi,s}\}$ for every $\phi \in \Phi$.~We assume that these densities satisfy 
    \begin{align*}
        \norm{\frac{\partial f_{\phi,s}(s_+)}{\partial s_+(i)}}_\infty \leq C_p, \forall s \in \cS, i = 1, 2, \ldots,  d_\cS,
    \end{align*}
    where $s_+ = (s_+(1),s_+(2),\cdots,s_+(d_\cS))$.
\end{assum}

\begin{assum}\label{assum:stn_dist}
    There exists $\kappa\up > 0$ such that for every $\zeta \subseteq \cB_\cS$, $\mu^{(\infty)}_{\phi,p}(\zeta) \ge \kappa\up \lambda(\zeta)$, where $\lambda$ is the Lebesgue measure~\citep{billingsley2017probability} on $(\cS, \cB_\cS)$. 
\end{assum}

We use $\Phi_{\text{Lip.}}$ to denote the class of Lipschitz policies. Note that it is a broad class; for example, it contains a set of continuous policies.
\begin{thm}\label{thm:reg_ub_mb}
    Let $\Phi\subseteq \Phi_{\text{Lip.}}$.~If the MDP $\cM$ satisfies Assumptions \ref{assum:lip},~\ref{assum:unif_ergodic}, \ref{assum:bdd_der} and~\ref{assum:stn_dist}, then with a probability at least $1 - \delta$, the regret of~\pzrlmb, i.e. $\cR_{\Phi}(T;\pzrlmb)$, is upper-bounded as $\ctO(T^{1 - \deff\inv})$ where $\deff = 2 d_\cS + d^\Phi_z + 3$.
\end{thm}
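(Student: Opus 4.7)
The plan follows the zooming-style template for Lipschitz bandits, adapted to the model-based, continuous-state, average-reward setting. It rests on three pillars: (i) optimism of the model-based UCB index; (ii) a per-episode bound on instantaneous regret in terms of $\diamb{t}{\phi}$; and (iii) scale-based counting of active policies using $d_z^\Phi$. \textbf{For optimism}, I would first show that, with probability at least $1-\delta$, the true one-step transition kernel, viewed as a discrete map $S_{\phi,t}\to\bar{S}_{\phi,t}$, lies inside $\cC_{\phi,t}$ simultaneously for every active $(\phi,t)$ encountered before time $T$. This combines (a) the $L_1$-concentration estimate supplied by Lemma~\ref{lem:conc_ineq} (producing the $\bigl(c^{(b)}_d \log(T/\delta)/N\bigr)^{1/(d_\cS+2)}$ term in $\eta_{\phi,t}$), (b) the discretization-bias correction $(3(1+L_\phi)L_p + C_p)\diamc{q_{\phi,t}^{-1}(s)}$ controlled by Assumptions~\ref{assum:lip}(\ref{assum:lip_p}) and~\ref{assum:bdd_der}, and (c) a union bound over the polynomially many $(\phi,t)$ pairs. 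Conditioned on this event, iteration \eqref{eq:epvib} is an optimistic Bellman operator on the finite state set $\bar{S}_{\phi,t}$; Lemma~\ref{lem:conv_eqpvi} yields convergence of its consecutive differences to the optimal average reward of the best plausible discretized CMP. Adding the Lipschitz bias already baked into \eqref{eq:epvib}, the $L_J \diamb{t}{\phi}$ term in \eqref{def:index_mb}, and invoking Theorem~\ref{thm:lip_avg_reward}(ii) with $\nu = \lambda/\lambda(\cS)$ (valid under Assumption~\ref{assum:stn_dist}) and $\kappa = 1/(\kappa'\lambda(\cS))$, I would conclude $\idxb_t(\phi) \ge J_\cM(\phi)$ for every active $\phi$.

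\textbf{Per-episode gap.} I would next bound $\idxb_t(\phi) - J_\cM(\phi) \le C_1 \diamb{t}{\phi}$ for a problem-dependent constant $C_1$. The inputs are the span bound $C/(1-\alpha)$ on any optimistic relative value function (from Assumption~\ref{assum:unif_ergodic}), the inequality $\eta_{\phi,t}(s) \le C_{\eta,\phi}\diamc{q_{\phi,t}^{-1}(s)}$ in \eqref{relate:confrad_diam}, and the observation that $\diamb{t}{\phi}$ is exactly the $\mu^{(\infty)}_{\phi,p}$-weighted average of active cell diameters, i.e.\ the same weighting that governs each cell's contribution to $J_\cM(\phi)$. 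Greedy selection over $\Phi^{act.}_t$, covering invariance of $\Phi^{act.}_t$, and Theorem~\ref{thm:lip_avg_reward} then yield $J\ust_{\cM,\Phi} - J_\cM(\phi^{(k)}) \le C_1 \diamb{t_k}{\phi^{(k)}}$. Summing over the $H_k$-length episode, using Assumption~\ref{assum:unif_ergodic} to absorb the mixing burn-in into an $\cO(C/(1-\alpha))$ per-episode term and a martingale concentration to replace observed rewards by $J_\cM(\phi^{(k)})$, collapses the regret to $\ctO\bigl(\sum_k H_k \diamb{t_k}{\phi^{(k)}}\bigr)$ up to lower-order contributions.

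\textbf{Zooming count and balancing.} A zooming invariant proved by induction on activation times shows every active $\phi$ with $\diamb{t}{\phi} \approx \gamma$ satisfies $\Delta_\Phi(\phi) \le c_z \gamma$, so the count of such policies is $\le c_{z_2}\gamma^{-d_z^\Phi}$ by \eqref{def:zoomingdim}. Driving $\diamb{t}{\phi}$ to scale $\gamma$ requires every active cell of $\cP_{\phi,t}$ to have diameter $\lesssim \gamma$: by Assumption~\ref{assum:stn_dist} there are $\Theta(\gamma^{-d_\cS})$ such cells, each needing $\ctO(\gamma^{-(d_\cS+2)})$ visits so that the stochastic part of $\eta_{\phi,t}$ falls below the cell diameter (via the activation rule, Definition~\ref{def:activationrule}). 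Thus each scale-$\gamma$ policy is played $\ctO(\gamma^{-(2d_\cS+2)})$ times, and the regret contribution at that scale is $\ctO(\gamma^{-(2d_\cS+d_z^\Phi+1)})$. Geometric summation over dyadic scales $\gamma_i=2^{-i}$, paired with the total-play constraint against $T$ and with the per-episode burn-in, yields the claimed $\ctO\bigl(T^{1-1/(2d_\cS+d_z^\Phi+3)}\bigr)$ after absorbing log factors.

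\textbf{Main obstacle.} The most delicate point is reconciling the mixing behaviour with the stationary-distribution weighting in $\diamb{t}{\phi}$: the CMP under $\phi^{(k)}$ is typically far from $\mu^{(\infty)}_{\phi^{(k)},p}$ at the start of an episode, yet the per-episode bound is stated in terms of that very stationary distribution. Bridging this gap requires Assumption~\ref{assum:stn_dist} (bounding the stationary density below by Lebesgue) so that empirical cell-visitation counts are consistent with a $\mu^{(\infty)}_{\phi,p}$-weighted scale, combined with the $C\alpha^t$ mixing rate to control the transient bias, \emph{without} degrading the $1/(d_\cS+2)$ exponent in the concentration radius. A secondary subtlety lies in the optimism step: the two-stage discretization ($S_{\phi,t}$ then $\bar{S}_{\phi,t}$) forces careful accounting of bias so that the confidence radius in \eqref{def:eta_k} dominates the \emph{combined} sampling and re-discretization error, which is where the $C_p \diamc{q_{\phi,t}^{-1}(s)}$ contribution from Assumption~\ref{assum:bdd_der} becomes indispensable.
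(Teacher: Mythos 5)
Your overall architecture matches the paper's: optimism of $\idxb_t$ on a high-probability model event (Lemma~\ref{lem:optimism}), an upper bound on the index in terms of $\diamb{t}{\phi}$ (Lemma~\ref{lem:ub_opt_index}), a zooming count of active policies per dyadic scale via $d_z^\Phi$ (Lemma~\ref{lem:min_diam}), a bound on plays per policy (Lemma~\ref{lem:bd_num_play}), and a separate $\cO(K(T))$ switching cost for the Markovian fluctuation term (Proposition~\ref{prop:fluc_ub}). However, there is a genuine quantitative gap in your central counting step. You assert that a scale-$\gamma$ policy is played $\ctO(\gamma^{-(2d_\cS+2)})$ times by adding up the visit requirements of the $\Theta(\gamma^{-d_\cS})$ cells, each needing $\ctO(\gamma^{-(d_\cS+2)})$ visits. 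This additive accounting is not valid: the agent cannot route the state to a chosen cell, so visits accrue to each cell at rate $\approx N_t(\phi)\,\mu^{(\infty)}_{\phi,p}(\zeta)$ (which is exactly what the event $\cG^{\mbox{Visit}}$ certifies). The number of plays required is therefore governed by the \emph{hardest} cell, i.e.\ $\max_\zeta N_{\max}(\zeta)/\mu^{(\infty)}_{\phi,p}(\zeta)$. The paper shows (around~\eqref{cond:diam_cell}) that only cells with stationary mass at least $\br{\Delta_\Phi(\phi)/2(C_{ub}+L_J)}^{d_\cS+1}\sim\gamma^{d_\cS+1}$ need to be refined to diameter $\gamma$, and for those the binding requirement is $\gamma^{-(d_\cS+2)}/\gamma^{d_\cS+1}=\gamma^{-(2d_\cS+3)}$ plays, which is where the exponent $2d_\cS+d_z^\Phi+3$ comes from. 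Your count is also internally inconsistent with your conclusion: per-scale regret $\gamma^{-(2d_\cS+d_z^\Phi+1)}$ balanced against the tail $\eps T$ would give $\deff=2d_\cS+d_z^\Phi+2$, not the $+3$ you (and the theorem) claim, so either the count or the balancing must be revisited.

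A secondary issue: in the optimism step you invoke Theorem~\ref{thm:lip_avg_reward}(ii) with $\nu=\lambda/\lambda(\cS)$ and $\kappa=1/(\kappa'\lambda(\cS))$, citing Assumption~\ref{assum:stn_dist}. That assumption provides a \emph{lower} bound $\mu^{(\infty)}_{\phi,p}(\zeta)\ge\kappa'\lambda(\zeta)$, whereas Theorem~\ref{thm:lip_avg_reward}(ii) requires an \emph{upper} bound $\mu^{(\infty)}_{\phi,p}(\xi)\le\kappa\nu(\xi)$; the former does not imply the latter. The paper sidesteps this by working directly with the Lipschitz constant $L_J$ of $J_\cM$ with respect to the policy metric $\rho_\Phi$ in Lemma~\ref{lem:optimism}, so you should either assume the upper-bound condition or use the $\rho_{\Phi,\infty}$ version of the theorem.
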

The detailed proofs of the above two results are delegated to Appendix~\ref{app:regret}. Here, we provide a generic \emph{proof sketch}:

\begin{proof}[Proof sketch]
    We decompose the regret as follows,
    \begin{align*}
        \cR_\Phi(T;\psi) &= \underbrace{\sum_{k=1}^{K(T)}{\sum_{t = \tau_k}^{\tau_{k+1}-1}{J_{\cM,\Phi}\ust - J_{\cM}(\phi_k)}}}_{(a)} \\
        &+ \underbrace{\sum_{k=1}^{K(T)}{\sum_{t=\tau_k}^{\tau_{k+1}-1}{J_{\cM}(\phi_k) - r(s_t,\phi_k(s_t))}}}_{(b)},
    \end{align*}
    where $K(T)$ denotes the total number of episodes till time $T$.~We bound the terms (a) and (b) separately.
    
    \textbf{Bounding} (a): This term is further decomposed into the sum of the regrets arising due to playing policies from the sets $\Phi_\gm$, where $\gm$ assumes the values $2^{-i},~i = 1, 2, \ldots, \ceil{\log{\br{1/\eps}}}$, and $\eps = T^{-\deff\inv}$. Cumulative regret arising from playing policies not in the set $\cup_{i=1}^{\ceil{\log (1\slash \eps)} } \Phi_{2^{-i}}$ is bounded by $\eps T$. Regret due to playing policies from $\Phi_\gm$ is bounded in the following three steps:
        
        (1) First, we derive a condition under which a $\gm$-suboptimal policy is no longer played.
        
        (2) Then, we deduce an upper bound of the number of plays of a policy $\phi$ in terms of its suboptimality gap by concluding that the condition stated in (1) holds when $\phi$ has been played sufficiently many times.
        
        (3) Then, we establish an upper bound on the number of policies that are activated by the algorithms from $\Phi_\gm$.
        
    The product of two upper bounds discussed in (2) and (3), when multiplied with $2 \gm$, yields regret from playing policies in $\Phi_\gm$. We then add these regret terms corresponding to different sets $\Phi_{\gamma}$, where $\gm = 2^{-i}$ and $i=1,2,\ldots,\ceil{\log{\br{1/\eps}}}$; to this we add the regret arising due to playing policies with suboptimality less than $\eps$, which is bounded by $\eps T$.
    
    \textbf{Bounding} (b): Upper bound on the term $(b)$ is derived in Proposition~\ref{prop:fluc_ub}, and shows that we must pay a constant penalty in regret each time we switch to playing a different policy; hence this term is bounded as $\cO(K(T))$. Then we show that the total number of episodes for \pzrlmf~as well as~\pzrlmb~are bounded above by $\cO\Big(T^{d^\Phi_z/ \deff}\Big)$, where $\deff = d^\Phi_z + 2$ and $\deff = 2 d_\cS + d^\Phi_z + 3$, respectively.
        
    We obtain the desired regret bound after summing the upper bounds on (a) and (b).
\end{proof}

\begin{remark}[Discontinuous Policies]
    The regret analysis of~\pzrlmb~extends to policies with discontinuities, provided all discontinuities lie on the boundaries of active cells. This condition can be enforced by defining the cells in such a way that the policy discontinuities lie only on the cell boundaries. For simplicity, we define the cells using dyadic cubes in our exposition~(Definition~\ref{def:cell}).
\end{remark}

\begin{remark}[Regarding Assumption~\ref{assum:stn_dist}]
    We need Assumption~\ref{assum:stn_dist} in order to ensure a sufficiently large number of visits to regions of $\cS$ which have been assigned positive measure under the stationary distribution. This condition ensures that the model-based UCB bonuses diminish to $0$ when the corresponding policy is played infinitely often. Similar assumptions appear in prior works. For example,~\citet{ormoneit2002kernel} assumes the transition kernel has a strictly positive Radon-Nikodym derivative. \citet{wang2023optimal} requires the $m$-step transition kernel to be lower bounded.~In yet another work, \citet{wei2021learning} bounds the regret for average reward RL that uses linear function approximation. They assume that under every policy, the integral of cross-product of the feature vectors w.r.t. the stationary measure has all the eigenvalues bounded away from zero. This assumption ensures that upon playing any policy, the confidence ball shrinks in each direction. All serve a purpose analogous to Assumption~\ref{assum:stn_dist}.
\end{remark}
The next result quantifies upper-bound on $\deff$ for an important class of policies; its proof is delegated to Appendix~\ref{app:pol_space}.
\begin{cor}[Finite parameterization]\label{cor:param_pol}
    We now consider a set $\Phi$ that consists of policies that have been parameterized by finitely many parameters from the set $W \subset \bR^{d_W}$. For each $w \in W$, let $\phi(\cdot; w): \cS \to \cA$ be the policy parameterized by $w$.~Assume that the policies satisfy $L_W \rho_\Phi(\phi(\cdot; w),\phi(\cdot; w\up)) \ge \norm{w - w\up}_{2}$ for all $w, w\up \in W$. We have $\deff \leq d_W + 2$ for \pzrlmf~and $\deff \leq 2 d_\cS + d_W + 3$ for~\pzrlmb.
\end{cor}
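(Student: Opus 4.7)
The plan is to reduce the claim to showing $d_z^\Phi \leq d_W$ and then invoke Theorem~\ref{thm:reg_ub_mf} and Theorem~\ref{thm:reg_ub_mb}, which already express $\deff$ in terms of $d_z^\Phi$. By the definition in~\eqref{def:zoomingdim}, it suffices to bound, for every $\gamma > 0$, both $\cN_{\gamma/c_z}(\Phi_\gamma)$ and $\cN_{\gamma/c_z}(\Phi_{\leq \gamma})$ by a problem-dependent constant times $\gamma^{-d_W}$; the constants can then be absorbed into $c_{z_1}, c_{z_2}$ in the definition of $d_z^\Phi$.

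The key step is to translate $\rho_\Phi$-covers of the policy space into Euclidean covers of the parameter set $W \subset \bR^{d_W}$. First I would define $W_\gamma := \{w \in W : \phi(\cdot;w) \in \Phi_\gamma\}$ and $W_{\leq \gamma}$ analogously; these are the parameter preimages of the near-optimal policy sets. Using the bi-Lipschitz relation between $\rho_\Phi$ and $\|\cdot\|_2$, a $\delta$-cover of $W_\gamma$ in Euclidean norm lifts, via the parameterization $w \mapsto \phi(\cdot;w)$, to an $O(\delta)$-cover of $\Phi_\gamma$ in $\rho_\Phi$: whenever $\|w-w'\|_2 \leq \delta$, the policies $\phi(\cdot;w)$ and $\phi(\cdot;w')$ are within a constant multiple of $\delta$ in $\rho_\Phi$. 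Combined with the standard Euclidean volume bound $\cN_\delta(W,\|\cdot\|_2) \leq C(\diamc{W}/\delta)^{d_W}$ for bounded $W \subset \bR^{d_W}$, setting $\delta$ proportional to $\gamma$ yields $\cN_{\gamma/c_z}(\Phi_\gamma),\cN_{\gamma/c_z}(\Phi_{\leq \gamma}) = O(\gamma^{-d_W})$. This gives $d_z^\Phi \leq d_W$, and direct substitution into the two regret theorems finishes the proof.

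The main obstacle I anticipate is carefully handling both directions of the bi-Lipschitz relationship between $\rho_\Phi$ and $\|\cdot\|_2$. The stated inequality $L_W \rho_\Phi(\phi(\cdot;w),\phi(\cdot;w')) \geq \|w-w'\|_2$ only controls one side (close policies have close parameters), so to convert a Euclidean cover of $W$ into a $\rho_\Phi$-cover of $\Phi$ one also needs (or must derive) the companion upper estimate $\rho_\Phi(\phi(\cdot;w),\phi(\cdot;w')) \leq C\|w-w'\|_2$. This upper Lipschitz estimate is implicit in any reasonable finite-dimensional parameterization of Lipschitz policies: together with Assumption~\ref{assum:lip} and Theorem~\ref{thm:lip_avg_reward}, it controls both the sup-distance $\rho_{\Phi,\infty}$ and the weighted distance $\rho_{\Phi,\nu}$ used throughout the paper, via $\rho_{\Phi,\infty}(\phi(\cdot;w),\phi(\cdot;w')) \leq \sup_s \rho_\cA(\phi(s;w),\phi(s;w'))$ and Lipschitzness of $w \mapsto \phi(s;w)$. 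Once the upper estimate is pinned down, the remaining work is routine Euclidean volume counting plus substitution into Theorem~\ref{thm:reg_ub_mf} and Theorem~\ref{thm:reg_ub_mb}.
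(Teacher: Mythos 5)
Your overall route is the same as the paper's: reduce the corollary to the bound $d^\Phi_z \le d_W$ by comparing $\rho_\Phi$-covers of $\Phi_\gm$ and $\Phi_{\le\gm}$ with Euclidean covers of the corresponding parameter sets $W_\gm$ and $W_{\le\gm}$, invoke the volume bound $\cN_\delta(W)=O(\delta^{-d_W})$ for bounded $W\subset\bR^{d_W}$, and substitute into Theorems~\ref{thm:reg_ub_mf} and~\ref{thm:reg_ub_mb}. The paper's proof is exactly this argument, compressed to the single claim $\cN_\gm(\Phi_\gm)\le\cN_{\gm/L_W}(W_\gm)$.

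The obstacle you flag at the end is, however, a genuine gap, and it should not be dismissed as ``implicit in any reasonable parameterization.'' The stated hypothesis $L_W\rho_\Phi(\phi(\cdot;w),\phi(\cdot;w'))\ge\norm{w-w'}_2$ only says that the inverse map $\phi(\cdot;w)\mapsto w$ is Lipschitz: it pulls a $\rho_\Phi$-ball of radius $\gm$ back into a Euclidean ball of radius $L_W\gm$, which yields $\cN_{L_W\gm}(W_\gm)\le\cN_\gm(\Phi_\gm)$ --- the \emph{reverse} of the inequality you (and the paper) need. To push a Euclidean $\delta$-cover of $W_\gm$ forward to a $\rho_\Phi$-cover of $\Phi_\gm$ of comparable mesh you need precisely the companion estimate $\rho_\Phi(\phi(\cdot;w),\phi(\cdot;w'))\le C\norm{w-w'}_2$, and this cannot be derived from the stated hypothesis: an expanding injection of a bounded subset of $\bR^{d_W}$ into a bounded metric space can have an image with arbitrarily large, even infinite, covering numbers (e.g.\ parameters $\{1/n\}_n$ accumulating in $W$ whose images are pairwise at $\rho_\Phi$-distance one satisfy the stated inequality with $L_W=1$). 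So the forward Lipschitz bound must either be added as an explicit hypothesis or the inequality in the corollary must be read in the opposite direction; once that is done your argument closes, and the boundedness of $W$ that your volume estimate needs does follow from boundedness of $\Phi$ together with the stated inequality. For what it is worth, the paper's own proof contains the same leap --- it asserts $\cN_\gm(\Phi_\gm)\le\cN_{\gm/L_W}(W_\gm)$ directly from the one-sided hypothesis --- so your write-up is more careful about locating the missing ingredient; it just stops short of supplying it.
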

\begin{cor}[Bi-Lipschitz MDPs]\label{cor:sqrt_reg}
    Consider a bi-Lipschitz  MDP, i.e., the average reward function $J_\cM: \Phi \to \bR$ satisfies the following properties: there exist two constants $L_1 \geq L_2 > 0$ such that for every $\phi_1, \phi_2$
    \begin{align*}
        L_2 \rho_\Phi(\phi_1,\phi_2) \leq |J_\cM(\phi_1) - J_\cM(\phi_2)| &\leq L_1 \rho_\Phi(\phi_1,\phi_2).
    \end{align*}
    Then, the regret of \pzrlmf~w.r.t. the policy class scales as $\ctO(\sqrt{T})$ on a high probability set.
\end{cor}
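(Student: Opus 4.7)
The strategy is to show that the bi-Lipschitz hypothesis forces the zooming dimension $d^\Phi_z$ from \eqref{def:zoomingdim} to vanish, whereupon Theorem~\ref{thm:reg_ub_mf} immediately yields regret $\ctO(T^{1 - (d^\Phi_z + 2)^{-1}}) = \ctO(\sqrt{T})$.

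First I would use the lower Lipschitz bound to show that near-optimal policies cluster tightly around the optimum. Let $\phi^\star \in \Phi$ attain $J^\star_{\cM,\Phi}$ (if the supremum is not attained, one repeats the argument with a sequence of near-optimizers and passes to the limit). For any $\phi \in \Phi_\gm$, the definition of $\Phi_\gm$ combined with the lower Lipschitz bound gives
\begin{align*}
    L_2 \rho_\Phi(\phi, \phi^\star) \leq |J_\cM(\phi) - J_\cM(\phi^\star)| = \Delta_\Phi(\phi) \leq 2\gm,
\end{align*}
so $\rho_\Phi(\phi, \phi^\star) \leq 2\gm/L_2$ for all $\phi \in \Phi_\gm$; the identical reasoning applied to $\Phi_{\leq\gm}$ yields $\rho_\Phi(\phi,\phi^\star) \leq \gm/L_2$. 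Thus both sets sit inside a $\rho_\Phi$-ball about $\phi^\star$ whose radius is a fixed multiple of $\gm$.

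Next I would bound the $(\gm/c_z)$-covering numbers of these two sets. Since the ratio of ball radius to covering radius, namely $(2\gm/L_2)/(\gm/c_z) = 2 c_z / L_2$, is independent of $\gm$, a standard doubling argument shows that covering such a ball requires only a constant number of balls, depending on $c_z$, $L_2$ and the doubling dimension of $(\Phi,\rho_\Phi)$. In particular, $\cN_{\gm/c_z}(\Phi_\gm)$ and $\cN_{\gm/c_z}(\Phi_{\leq\gm})$ are both bounded by problem-dependent constants, i.e. they are $O(\gm^0)$. By the definition \eqref{def:zoomingdim} this forces $d^\Phi_z = 0$, and substituting into Theorem~\ref{thm:reg_ub_mf} finishes the proof.

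The main subtlety is the final covering step, which tacitly requires $(\Phi, \rho_\Phi)$ to have bounded doubling dimension so that any ball can be covered by a constant number of balls of proportional radius. This is automatic whenever $\Phi$ is finitely parameterized as in Corollary~\ref{cor:param_pol}, and is consistent with how the zooming dimension is used elsewhere in the paper; beyond this mild structural requirement, the remainder of the argument is purely a consequence of the bi-Lipschitz equivalence between $\rho_\Phi$ and the suboptimality gap.
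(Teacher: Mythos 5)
Your proof is correct and follows essentially the same route as the paper's: reduce the claim to showing $d^\Phi_z = 0$ by bounding the $(\gm/c_z)$-covering numbers of $\Phi_\gm$ and $\Phi_{\leq\gm}$ via a doubling argument, and then invoke Theorem~\ref{thm:reg_ub_mf}. You even correctly identify the tacit finite-doubling-constant requirement, which the paper's appendix version of this corollary makes an explicit hypothesis (together with a bounded covering number of the sublevel sets, which you instead derive from the lower Lipschitz bound by locating $\Phi_{\leq\gm}$ inside a ball of radius $\gm/L_2$ around the optimizer).
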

We note that the assumption made in Corollary~\ref{cor:sqrt_reg} commonly made in continuum bandits literature such as~\citet{cope2009regret,yu2011unimodal,combes2020unimodal}.




\section{Simulations}\label{sec:sim}
We evaluate the proposed algorithms on the following two systems.

\textbf{Transmission scheduling for remote estimation of a stochastic dynamic process.} Consider a process $\{x_t\}$ that evolves as $x_{t+1} = \beta x_t + w_t$, where $|\beta| < 1$ and $\{w_t\}$ is i.i.d., $w_t \sim \cN(0,1)$ for all $t$. A sensor observes $\{x_t\}$, encodes it into data packets and transmits them to a remote estimator across an unreliable wireless channel. $c_t \in \{0,1\}$ denotes the channel state at time $t$. $c_t = 1$ ($0$) denotes that the channel state is good (bad). $\{c_t\}$ is a Markov process with transition probabilities $p_{ij} := \bP(c_{t + 1} = j \mid c_t = i)$, $i,j \in \{0,1\}$, 
where $p_{01}, p_{11} > 0$. $a_t \in \{0,1\}$ denotes the decision made by the sensor regarding whether or not a packet transmission is attempted at time $t$; $a_t=1$ denotes that transmission is attempted. The estimator state $\hat{x}_t$ evolves as $\hat{x}_{t+1} = x_{t+1} c_t a_t + (1 - c_t a_t) \beta \hat{x}_t$.~The estimation error $\{e_t\}$ evolves as $e_{t+1} = (\beta e_t + w_t) - \beta c_t a_t e_t$.~The agent's estimate of $c_t$, denoted by $b_t$ can be updated recursively.~The actions $\{a_t\}$ are to be chosen so as to minimize the error with a minimal amount of transmission power. The agent earns a reward $r_t := -e_t^2 - \lambda a_t$, where $\lambda>0$ is the number of units of resource required for transmission.~\cite{dutta2023optimal} shows that a threshold policy is optimal in this setup, one which transmits only when $b_t$ exceeds a certain threshold (that is allowed to depend upon $e_t$). Hence, the optimal policy can be described by specifying the threshold curve, which in turn can be approximated by a curve with finitely many parameters. This problem does not fit into the class of Linear MDPs or Tabular MDPs. However, it can be shown that the average reward function is Lipschitz when the comparator policy class consists only of stable policies, and hence fits within our framework. We compare the empirical performance of the proposed algorithms, \pzrlmf~and~\pzrlmb~(Algorithm~\ref{algo:pzrl}) with that of a heuristic algorithm Policy UCB (Algorithm~\ref{algo:pucb}), which discretizes the policy space uniformly at time $t=0$ and plays the policy with the highest model-free UCB index from the set of finite set of policies in every episode.~For both \pzrlmf~and \pzrlmb, we use the following parameterization: $\phi(s;w) = \ind{w(1) + w(2)e_t < b_t}$, $w = (w(1),w(2)) \in [1,3]\times[-1,-0.01]$. We plot the cumulative reward minus the average performance of the policy that suggests transmission irrespective of system state, averaged over $50$ runs in Figure~\ref{fig:ts_lr}, and observe that \pzrlmf~and \pzrlmb~outperform the fixed discretization-based algorithm, Policy UCB. 
    
\textbf{Continuous RiverSwim.}~We modify the RiverSwim MDP~\citep{strehl2008analysis} to obtain its continuous version.~The state $s_t$ describes the location of the agent in the river and evolves as follows upon the application of action $a_t$ at time $t$:

\begin{align*}
    s_{t+1} =
    \begin{cases}
        (0 \vee (s_t - \frac{1}{2}(1 + \frac{w_t}{2}))) \wedge 6 &\mbox{w.p. } \frac{2(1-a_t)}{5}\\
        s_t &\mbox{w.p. } 0.2\\
        (0 \vee (s_t + \frac{1}{2}(1 + \frac{w_t}{2}))) \wedge 6 &\mbox{w.p. } \frac{2(1+a_t)}{5},
    \end{cases}
\end{align*}

where $\{w_t\}$ is i.i.d. and $w_t \sim \cN(0,0.5)$, and $t \in \{0\} \cup \bN$. Here, $\cS = [0,6]$ and $\cA = [0,1]$. The reward function is given by $r(s,a) = 0.005(((s-6)/6)^4 + ((a-1)/2)^4) + 0.5((s/6)^4 + ((a+1)/2)^4)$.~Note that the policy that chooses the action $1$ at all times, irrespective of the current state, is optimal.~For both \pzrlmf~and \pzrlmb, we use the following parameterizations:
\begin{enumerate}
    \item $1$ parameter: $\phi(s;w) = w$, $w \in [-1,1]$.
    \item $2$ parameters: $\phi(s;w) = w(1) + w(2)s$, $w = (w(1),w(2)) \in [-1,1]\times[-0.5,0.5]$.
    \item $3$ parameters: $\phi(s;w) = w(1) + w(2) s + w(3) s^2$, $w = (w(1),w(2),w(3)) \in [-1,1]\times[-0.5,0.5]^2$.
\end{enumerate}
We compare the empirical performance of \pzrlmf~and~\pzrlmb~(Algorithm~\ref{algo:pzrl}) with that of ZoRL~\cite{kar2024provably}, UCRL2~\citep{jaksch2010near}~and TSDE~\citep{ouyang2017learning}.~Since these competitor policies are designed for finite state-action spaces, we apply them on a uniform discretization of $\cS \times \cA$.~We plot the logarithm of the cumulative regret averaged over $50$ runs for the Continuous RiverSwim environment in Figure~\ref{fig:crs_lr}, and observe that \pzrlmf~and \pzrlmb~outperforms every other algorithm, and amongst \pzrlmf~and \pzrlmb,~\pzrlmb~has the edge over \pzrlmf.~Policy classes with $2$ and $3$ parameterizations outperform the single parameter policy classes for both the proposed algorithms.

\begin{figure}[!t]
    \centering
    \subfigure[Transmission Scheduling]{
    \label{fig:ts_lr}
    \includegraphics[height=1.1in]{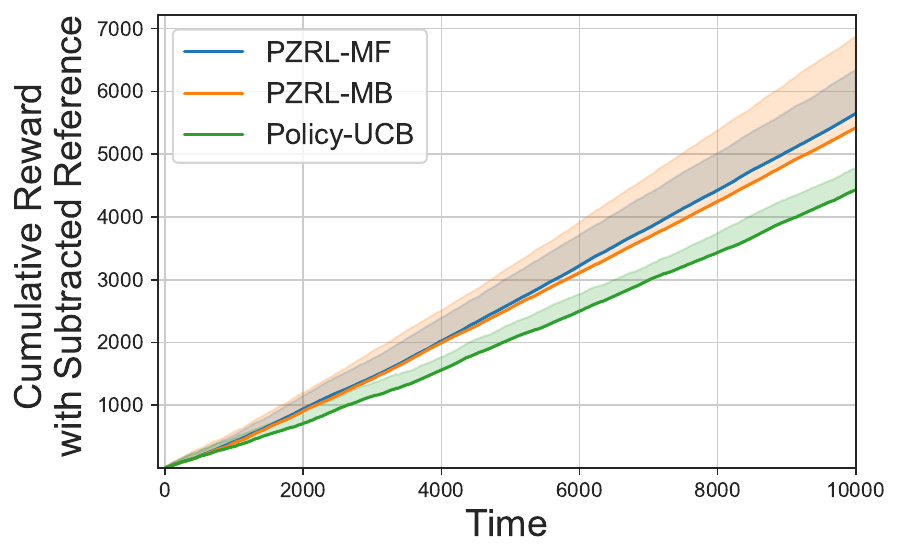}}
    \subfigure[Continuous~RiverSwim]{
    \label{fig:crs_lr}
    \includegraphics[height=1.1in]{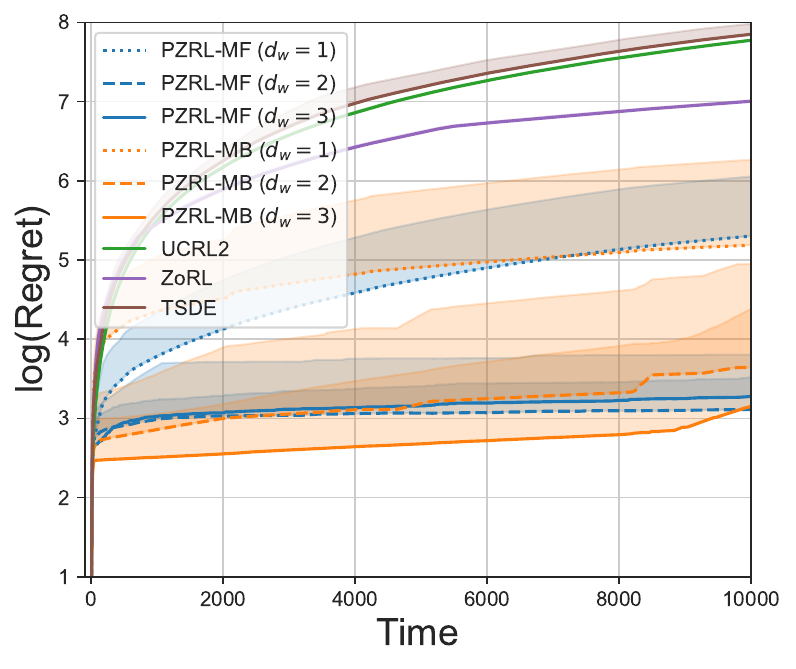}}
\end{figure}
\section{Conclusion}\label{sec:conc}
The central idea of zooming-based algorithms is to capitalize on the adaptive nature of the algorithm.~We identify an important problem encountered while employing zooming to infinite horizon average reward RL setup.~To rectify this, we define the zooming dimension for this setup in terms of coverings of the policy space, and also allow it to depend upon the class of policies $\Phi$ used by the agent.~We propose zooming-based algorithms \pzrlmf~and \pzrlmb, and prove that they exhibit adaptivity gains. We show that their regret can be bounded as $\ctO(T^{1-\deff\inv})$ where $\deff = d_z^\Phi + 2$ and $\deff = 2d_\cS + d_z^\Phi + 3$, respectively.~Simulation results confirm our theoretical findings.

\bibliography{refs}


\appendix
\onecolumn
\section{Detailed Discussion on~\pzrlmb}
\label{app:pzrlmb}
In the first section of this Appendix, we provide definitions of some useful objects used in \pzrlmb~and discuss the adaptive discretization procedure.~In the second section, the estimate of the transition kernel is introduced.~We provide an algorithm to evaluate an approximation of the diameter of policies~\eqref{def:diamb} in the third section.~The next section contains a proof of the existence of the model-based UCB index of policies~\eqref{def:index_mb}.~Then, in the fifth section we prove a few properties of the model-based UCB index which is used in the regret analysis of \pzrlmb~later.~Lastly, in the sixth section, we show that the approximate diameter evaluated by the algorithm proposed in the second section is a tight lower bound of the true diameter.

\subsection{Adaptive Discretization}
\label{app:adap_disc}
We decompose the state space into ``cells'' to enable discretization.
\begin{defn}[Cells]\label{def:cell}
    A cell is intersection of the state space and a dyadic cube with vertices from the set $\{(N_1, N_2, \ldots, N_{d_\cS}) + 2^{-\ell}(v_1, v_2, \ldots, v_{d_\cS}): v_j \in \{0,1,\ldots,2^\ell\}, N_j \in \bN, j=1,2,\ldots,d_\cS\}$ with sides of length $2^{-\ell}$, where $\ell\in\bN$. The quantity $\ell$ is called the level of the cell. We also denote the collection of all the level $\ell$ cells by $\cP^{(\ell)}$.~For a cell $\zeta$, we let $q(\zeta)$ to be a point from $\zeta$ that is its unique representative point. $q\inv$ maps a representative point to the cell that the point is representing, i.e., $q\inv(z) = \zeta$ such that $q(\zeta) = z$.
\end{defn}
The cells are interrelated through a structure called the ``partition tree.''
\begin{defn}[Partition tree]\label{def:part_tree}
    A partition tree of depth $\ell$ is a tree in which
    (i) Each node at depth $m$, where $m \leq \ell$, of the tree is a cell of level $m$. (ii) If $\zeta$ is a cell of level $m$, where $m<\ell$, then, a) all the cells of level $m+1$ that collectively generate a partition of $\zeta$, are the child nodes of $\zeta$. The corresponding cells are called child cells, and we use $\textit{Child}(\zeta)$ to denote the child cells of $\zeta$. b) $\zeta$ is called the parent cell of these child nodes.~Any ancestor node of cell $\zeta$ is called an ancestor of $\zeta$.
\end{defn}
\pzrlmb~maintains a set of ``active cells'' for every active policy.~The following rule is used for activating and deactivating cells corresponding to an active policy $\phi$.
\begin{defn}[Activation rule]\label{def:activationrule}
    For a cell $\zeta$ denote, 
    \begin{align}
        N_{\max}(\zeta) := \frac{c\uc{b}_d 2^{d_\cS+2} \log{\br{\frac{T}{\delta}}}}{\diamc{\zeta}^{d_\cS+2}}, \mbox{ and }
        N_{\min}(\zeta) := \begin{cases}
            ~~0 &\mbox{ if } \ell(\zeta) = 0\\
            \frac{c\uc{b}_d \log{\br{\frac{T}{\delta}}}}{\diamc{\zeta}^{d_\cS+2}}, &\mbox{otherwise,}\label{Nmin}
        \end{cases}
    \end{align}
    where $c\uc{b}_d>0$ is a constant that is discussed in Lemma~\ref{lem:conc_ineq}, and $\delta \in (0,1)$ is the confidence parameter.~The number of visits to $\zeta$ while playing policy $\phi$ is denoted $N_{\phi,t}(\zeta)$ and is computed using the following rule:
    \begin{enumerate}
        \item The initial active partition for policy $\phi$, denoted by $\cP_{\phi,0}$ is a partition of $\cS$ made of cells of diameter $1$.
        \item $N_{\phi,t}(\zeta) := 0$ for every cell $\zeta$ if $\phi$ not active at time $t$.
        \item A cell $\zeta$ is said to be active if $N_{\min}(\zeta) \leq N_{\phi,t}(\zeta) < N_{\max}(\zeta)$.
        \item $N_{\phi,t}(\zeta)$ is defined as the number of times either $\zeta$, or any of its ancestors, has been visited while being active until time $t$ by the policy $\phi$, i.e.,
        \begin{align}\label{def:visitcounter}
            N_{\phi,t}(\zeta) &:= \sum_{i=0}^{t-1}{\ind{s_i \in \zeta_i, a_i = \phi(s_i)}},
        \end{align}
        where $\zeta_i$ is the unique cell that is active at time $i$ and satisfies $\zeta \subseteq \zeta_i$.
    \end{enumerate}
    Denote the set of active cells at time $t$ corresponding to the policy $\phi$ by $\cP_{\phi,t}$.
\end{defn}
Denote the set of the representative points of the cells in $\cP_{\phi,t}$ by $S_{\phi,t}$. $q\inv_{\phi,t}(s)$ denotes the cell in $\cP_{\phi,t}$ that contains $s$. Denote $\ell^{\max}_{\phi,t}$ to be the level of the smallest cell in $\cP_{\phi,t}$ and denote $\bar{S}_{\phi,t}$ to be the set of representative points of the cells in $\cP\uc{\ell^{\max}_{\phi,t}}$. 

\subsection{Discretized Transition Kernel Estimate}
\label{subapp:discker}
Let $\bar{S}$ be the set of representative points of a set of cells that partitions the state space. Then, for a continuous transition kernel $\tilde{p}: \cS \times \cS \to [0,1]$ and for any $S \subseteq \cS$, we define $\wp_{S \to \bar{S},\tilde{p}}(s,\cdot): S \mapsto [0,1]^{\bar{S}}$ as follows,
\begin{align}\label{def:disc_p}
    \wp_{S \to \bar{S},\tilde{p}}(s,s\up) := \tilde{p}(s,q\inv(s\up)),~\forall s \in S, s \in \bar{S}.
\end{align}
The kernel $\wp_{S \to \bar{S},\tilde{p}}$ can be viewed as a discretization of $\tilde{p}$. Denote the transition kernel induced by the policy $\phi$ by $p_\phi$. The discretized transition kernel $\wp_{\cS \to \bar{S}_{\phi,t},p_\phi}$ will be of interest to us.

Let $\tilde{S}_{\phi,t}(s)$ denote the set of representative states of the cells of level $\ell(q\inv_t(s))$, i.e., $\tilde{S}_{\phi,t}(s) := \{q(\xi) : \xi \in \cP\uc{\ell(q\inv_t(s))}\}$. Now, we are all set to introduce the discrete estimate of the transition probability kernel of the CMP induced under the application of policy $\phi$.
\begin{align}\label{def:p_hat_disc}
    \hat{p}\uc{d}_{\phi,t}(s,s\up) := \frac{\sum_{i=0}^{t-1}{\ind{(s_i, s_{i+1}) \in {q\inv_{\phi,t}(s)}_i \times q\inv(s\up), \phi_i = \phi}}}{\sum_{i=0}^{t-1}{\ind{s_i \in {q\inv_{\phi,t}(s)}_i, \phi_i = \phi}}}, ~
    \forall s \in \cS, s\up \in \tilde{S}_{\phi,t}(s).
\end{align}
Recall that ${q\inv_{\phi,t}(s)}_i$ is the cell that was active at $i$-th time step ($i \leq t$) and that contains $q\inv_{\phi,t}(s)$. We denote a continuous extension of $\hat{p}\uc{d}_{\phi,t}$ by $\hat{p}_{\phi,t}$ and it is defined as follows:
\begin{align}\label{def:p_hat}
    \hat{p}_{\phi,t}(s,B) := &\sum_{s\up \in \tilde{S}_{\phi,t}(s)}{\frac{\lambda(B \cap q\inv(s\up))}{\lambda(q\inv(s\up))} \hat{p}\uc{d}_t(s,s\up)},~\forall s \in \cS, B \in \cB_\cS.
\end{align}
Here $\lambda(\cdot)$ is the Lebesgue measure on $(\cS,\cB_\cS)$. In Lemma~\ref{lem:conc_ineq}, we bound the total variation distance between $\wp_{\cS \to \bar{S}_{\phi,t}, p_\phi}(s,\cdot)$ and $\wp_{S_{\phi,t} \to \bar{S}_{\phi,t},\hat{p}_{\phi,t}}(q(\zeta),\cdot)$ where $s \in \zeta$ and $\zeta \in \cP_{\phi,t}$.

\subsection{Approximating the Diameter~\eqref{def:diamb} of a Policy}
\label{app:eval_diam}
We propose the following algorithm for the evaluation of the diameter of a policy. To estimate the diameter of a policy $\phi \in \Phi$, we perform the following iterations,
\begin{align}
    \ovl{D}^{\phi,t}_0(s) &= 0,\notag\\
    \ovl{D}^{\phi,t}_{i+1}(s) &= \diamc{q_{\phi,t}\inv(s)} + \max_{\te \in \cC_{\phi,t}}{\sum_{s\up \in S_{\phi,t}}{\te(s, s\up) \ovl{D}^{\phi,t}_{i},(s\up)}}\label{eq:epdi}
\end{align}
$s \in S_{\phi,t}$, $i \in \bN$.~Note its similarity to~\eqref{eq:epvib}.~The approximate diameter is defined as, 
\begin{align}\label{def:diam_est_mb}
    \underbar{\mbox{diam}}\uc{b}_t(\phi) := \frac{1}{c_{\mbox{diam}}}\lim_{i \to \infty}{\br{\ovl{D}^{\phi,t}_{i+1}(s) - \ovl{D}^{\phi,t}_{i}(s)}},
\end{align}
where $c_{\mbox{diam}}$ is defined in \eqref{def:c_diam}.~The existence of the limit in \eqref{def:diam_est_mb} is shown in Corollary~\ref{cor:conv_diamit}. We show in Appendix~\ref{app:approx_diam_prop} that $\underbar{\mbox{diam}}\uc{b}_t(\phi)$ is a tight lower bound of $\diamb{t}{\phi}$. If $\phi$ is never played $\diamb{t}{\phi}$ is to be set at $1$.
 
\subsection{Existence of Model-based UCB Index}
\label{app:index_exist}
We recall that for~\pzrlmb, the UCB index of an active policy is defined in terms of a limit~\eqref{def:index_mb}. In this section, we show that this limit exists. We begin by introducing a certain Extended MDP~\citep{jaksch2010near}.

\noindent\textbf{Extended MDP.}~Recall that $\bar{S}_{\phi,t}$ is the discrete state space at time $t$ corresponding to policy $\phi$.~Define the discrete action space at time $t$ is given by 
\al{
    A_{\phi,t} := \{\{\phi(s)\} : s \in \bar{S}_{\phi,t}\}.
}
Consider the following modified reward function $\tilde{r}_{\phi,t}: \bar{S}_{\phi,t} \to [0,1]$,
\begin{align*}
    \tilde{r}_{\phi,t}(s) &= r(s,\phi(s)) + (1 + L_\phi) L_r~\diamc{q_{\phi,t}\inv(s)},
\end{align*}
where $L_\phi$ is the Lipschitz constant associated with $\phi$. Consider the following collection of MDPs $\cM^{+}_{\phi,t} = \{(\bar{S}_{\phi,t}, A_{\phi,t}, \tilde{p}, \tilde{r}_{\phi,t}) : \tilde{p} \in \cC_{\phi,t}\}$.~Such a collection of MDPs are often called as an extended MDP in literature~\citep{jaksch2010near} as one may view $\cM^{+}_{\phi,t}$ as an MDP with the finite state space $\bar{S}_{\phi,t}$ and an extended action space, where an element from the extended action space has two components, a control input from $A_{\phi,t}$ and a transition kernel from $\cC_{\phi,t}$.~Observe that the value iteration algorithm on this MDP is simply the index evaluation algorithm~\eqref{eq:epvib} for the policy $\tilde{\phi} : \bar{S}_{\phi,t} \to \cA$ with $\tilde{\phi} = \phi$ on $\bar{S}_{\phi,t}$.~The next lemma shows that the limit in \eqref{def:index_mb} exists and is equal to the value of the extended MDP $\cM^{+}_{\phi,t}$ with the restricted action space that we just discussed.

\begin{lem}\label{lem:conv_eqpvi}
    Let $\phi \in \Phi$, and let $\tilde{\phi} : \bar{S}_{\phi,t} \to \cA$ be such that $\tilde{\phi} = \phi$ on $\bar{S}_{\phi,t}$.~Define
    \begin{align*}
        J_{\cM^{+}_{\phi,t}}(\tilde{\phi}) := \max_{\tilde{p} \in \cC_{\phi,t}}{\flbr{J_{\tilde{M}}(\tilde{\phi}) : \tilde{M} = (\bar{S}_{\phi,t}, A_{\phi,t}, \tilde{p}, \tilde{r}_{\phi,t})}}.
    \end{align*}
    Then,
    \begin{align*}
        \lim_{i \to \infty}{\br{\ovl{V}^{\phi,t}_{i+1}(s) - \ovl{V}^{\phi,t}_{i}(s)}} = J_{\cM^{+}_{\phi,t}}(\tilde{\phi}).
    \end{align*}
\end{lem}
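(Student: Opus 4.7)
I recognise the iteration \eqref{eq:epvib} as undiscounted relative value iteration on the extended MDP $\cM^{+}_{\phi,t}$, where the ``policy'' $\tilde{\phi}$ is held fixed on the control side and the only decision left at each state $s\in\bar{S}_{\phi,t}$ is the choice of a transition vector $\te(q(q\inv_{\phi,t}(s)),\cdot)\in\cC_{\phi,t}$. Because $\cC_{\phi,t}$ is defined by a finite collection of $\ell_1$--ball constraints over a finite state space $\bar{S}_{\phi,t}$, it is a \emph{compact, convex} subset of the simplex, and the per-stage reward $\tilde{r}_{\phi,t}$ is bounded on the finite set $\bar{S}_{\phi,t}$. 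Hence $\cM^{+}_{\phi,t}$ is a finite-state, compact-action, bounded-reward average-reward MDP to which the standard convergence theory of value iteration applies.

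The first step is to establish that $\cM^{+}_{\phi,t}$ is \emph{communicating}. Here I will use Assumption \ref{assum:unif_ergodic}: under the true kernel $p_{\phi}$ every state is reached from every other with positive probability, and the discretised kernel $\wp_{S_{\phi,t}\to\bar{S}_{\phi,t},\hat{p}_{\phi,t}}$ inherits this strict positivity on a common recurrent class (this is the content underlying the concentration result that I intend to invoke from the appendix, typically Lemma~\ref{lem:conc_ineq}); the confidence radii $\eta_{\phi,t}(s)$ are $\le 2$ but the ball $\cC_{\phi,t}$ contains the positive kernel itself, so the upper extended kernel can always be perturbed to a communicating one, and since we are taking a supremum, the optimal gain equals that of some communicating $\te^{\star}\in\cC_{\phi,t}$.

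Given communicating-ness, I invoke Bertsekas' (or Puterman's Thm.~8.5.2) theorem on undiscounted value iteration for communicating MDPs with compact action sets: the optimality equation
\[
h^{\star}(s)+g^{\star}=\tilde{r}_{\phi,t}(s)+\max_{\te\in\cC_{\phi,t}}\sum_{s'\in\bar{S}_{\phi,t}}\te\bigl(q(q\inv_{\phi,t}(s)),s'\bigr)h^{\star}(s')
\]
admits a solution $(g^{\star},h^{\star})$ with $g^{\star}$ constant, and the iterates satisfy $\mathrm{span}(\ovl{V}^{\phi,t}_{i+1}-\ovl{V}^{\phi,t}_{i})\to 0$ with the common limit $\ovl{V}^{\phi,t}_{i+1}(s)-\ovl{V}^{\phi,t}_{i}(s)\to g^{\star}$ for every $s$. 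To upgrade span-convergence to pointwise convergence I will rely on the aperiodicity inherited from uniform ergodicity (if needed, I can replace the Bellman operator by the standard aperiodic transformation $T_{\alpha}=\alpha I+(1-\alpha)T$ which does not change the gain or the set of optimal stationary policies). Finally, $g^{\star}=\max_{\te\in\cC_{\phi,t}}J_{\tilde{M}}(\tilde{\phi})=J_{\cM^{+}_{\phi,t}}(\tilde{\phi})$ by the definition given in the lemma, with the maximum attained by compactness of $\cC_{\phi,t}$ and continuity of the average-reward functional in $\te$ over communicating kernels.

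\textbf{Main obstacle.} The delicate step is verifying communicating-ness (equivalently, ruling out periodicity and transience) uniformly over $\cC_{\phi,t}$. A priori, an adversarially chosen $\te\in\cC_{\phi,t}$ could concentrate mass in a way that creates a proper absorbing set on $\bar{S}_{\phi,t}$. My workaround is to argue that the \emph{supremum} is achieved at a kernel close to the empirical $\wp_{S_{\phi,t}\to\bar{S}_{\phi,t},\hat{p}_{\phi,t}}$ that is shown to be close to $p_{\phi}$, and since $p_{\phi}$ induces a uniformly ergodic chain, the relevant $\te^{\star}$ inherits enough communication for the average-reward to be independent of starting state. Once this structural point is in place, the convergence of the value-iteration residuals is an immediate consequence of classical theory.
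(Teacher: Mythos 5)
There is a genuine gap at exactly the step you flag as the ``main obstacle,'' and neither of your two workarounds closes it. Your first workaround asserts that the supremum over $\cC_{\phi,t}$ is achieved at a kernel close to the empirical kernel $\wp_{S_{\phi,t}\to\bar{S}_{\phi,t},\hat{p}_{\phi,t}}$ and hence close to $p_\phi$. This is false in general: the maximizing $\te_i$ in \eqref{eq:epvib} is an \emph{optimistic} kernel that sits on the boundary of the $\ell_1$ confidence ball, deliberately shifting the full budget $\eta_{\phi,t}(s)$ of mass toward high-value states, so it need not inherit any ergodicity from $p_\phi$ or from the empirical estimate. Your second workaround, the aperiodicity transformation $T_\alpha = \alpha I + (1-\alpha)T$, preserves the gain but proves convergence of the residuals of a \emph{different} iteration; the lemma asserts convergence of $\ovl{V}^{\phi,t}_{i+1}-\ovl{V}^{\phi,t}_{i}$ for the specific recursion \eqref{eq:epvib}, and for periodic or multichain selections these residuals can oscillate, so you cannot substitute the transformed operator without changing the statement.

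The paper resolves this with a short deterministic observation that your proposal misses: at every iteration $i$, letting $s\ust$ be the maximizer of $\ovl{V}^{\phi,t}_{i}$ over $\bar{S}_{\phi,t}$, any $\arg\max$ kernel $\te_i$ must assign probability at least $\min\{1,\tfrac{1}{2}\eta_{\phi,t}(s)\}$ to $s\ust$ from \emph{every} row, because within an $\ell_1$ ball of radius $\eta_{\phi,t}(s)$ around a probability vector one can always move $\eta_{\phi,t}(s)/2$ of mass onto the single most valuable coordinate, and the optimizer will do so. This gives a common state reachable in one step with uniformly positive probability under the selected kernel --- a Doeblin-type condition that yields aperiodicity (indeed a unichain structure) for the chain actually generated by the iteration, after which \citet[Theorem 9.4.4]{puterman2014markov} applies directly. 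This argument is deterministic (it uses only the $\ell_1$-ball geometry of $\cC_{\phi,t}$, not the high-probability event that $\cC_{\phi,t}$ contains $p_\phi$), whereas your route conditions on concentration and still does not control the structure of the optimistic kernels. Your overall framing (value iteration on the extended MDP, compactness of $\cC_{\phi,t}$, identification of the limit with the optimal gain) matches the paper, but the decisive structural step needs to be replaced by the argument above.
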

\begin{proof}
    Consider the $i$-th step of the iteration in \eqref{eq:epvib},
    \nal{
        \te_i \in \underset{\te \in \cC_{\phi,t}}{\arg\max} \flbr{\tilde{r}_{\phi,t}(s) + \sum_{s\up \in \bar{S}_{\phi,t}}{\te(s, s\up) \ovl{V}^{\phi,t}_{i}(s\up)}}.
    }
    Let $s\ust := \arg\max_{s \in \bar{S}_{\phi,t}}{\ovl{V}^{\phi,t}_{i}}(s)$. Then since $\te_i(s,\cdot)$ should assign maximum probability to $s\ust$ that is permissible in the set $\cC_{\phi,t}$, we must have $\te_i(s, s\ust) \geq \min{\flbr{1, \frac{1}{2}\eta_{\phi,t}(\zeta)}}$ for all $q_{\phi,t}\inv(s)$. Now, construct the stochastic matrix $\Te = \{\Te_{s,s\up}\}_{s,s\up\in \bar{S}_{\phi,t}}$ by letting $\Te_{s,s\up} = \te_i(s, s\up)$. It is evident that the Markov chain associated with $\Te$ is aperiodic. The proof then follows from \citet[Theorem $9.4.4$]{puterman2014markov}.
\end{proof}

From the previous lemma, it follows that the limit in \eqref{def:diam_est_mb} also exists, which is stated next.
\begin{cor}\label{cor:conv_diamit}
    Define the following extended MDP,
    \begin{align*}
        \tilde{\cM}^{+}_{\phi,t} = \{(\bar{S}_{\phi,t}, A_{\phi,t}, \tilde{p}, d_{\phi,t}) : \tilde{p} \in \cC_{\phi,t}\},
    \end{align*}
    where $d_{\phi,t}(s) = \diamc{q\inv_{\phi,t}(s)}$.~Then,
    \begin{align*}
        \lim_{i \to \infty}{\br{\ovl{D}^{\phi,t}_{i+1}(s) - \ovl{D}^{\phi,t}_{i}(s)}} = J_{\tilde{\cM}^{+}_{\phi,t}}(\tilde{\phi}),
    \end{align*}
    where the discrete policy $\tilde{\phi} : \bar{S}_{\phi,t} \to A_{\phi,t}$ be such that $\tilde{\phi} = \phi$ on $\bar{S}_{\phi,t}$.
\end{cor}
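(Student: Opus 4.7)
The plan is to reduce Corollary~\ref{cor:conv_diamit} directly to Lemma~\ref{lem:conv_eqpvi} by observing that the iteration \eqref{eq:epdi} has exactly the same structural form as \eqref{eq:epvib}. Concretely, if we replace the augmented reward $\tilde{r}_{\phi,t}(s) = r(s,\phi(s)) + (1+L_\phi) L_r \diamc{q_{\phi,t}\inv(s)}$ by the ``reward'' $d_{\phi,t}(s) = \diamc{q_{\phi,t}\inv(s)}$ while keeping the discrete state space $\bar{S}_{\phi,t}$, the restricted action space $A_{\phi,t}$, and the confidence set of kernels $\cC_{\phi,t}$ unchanged, then \eqref{eq:epdi} is precisely the extended value iteration for the extended MDP family $\tilde{\cM}^{+}_{\phi,t}$ under the deterministic policy $\tilde{\phi}$. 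The claim therefore reduces to asserting that the conclusion of Lemma~\ref{lem:conv_eqpvi} is insensitive to replacing one bounded, measurable reward function by another.

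First, I would verify that $d_{\phi,t}$ is a bounded nonnegative function on $\bar{S}_{\phi,t}$: since all active cells live inside a bounded state space $\cS$, $\diamc{q_{\phi,t}\inv(s)} \le \diamc{\cS}$, so $d_{\phi,t}$ is uniformly bounded. Next, I would unfold the proof of Lemma~\ref{lem:conv_eqpvi} and inspect where the specific form of $\tilde{r}_{\phi,t}$ is used. The only place the reward enters is through the one-step Bellman operator $T \mapsto \tilde{r}_{\phi,t}(s) + \max_{\te \in \cC_{\phi,t}} \sum_{s\up} \te(s,s\up) V(s\up)$; the argument that the greedy kernel $\te_i$ places mass at least $\min\{1, \eta_{\phi,t}(\zeta)/2\}$ on the maximizer $s\ust$ of the current iterate uses only the structure of $\cC_{\phi,t}$ and is independent of the reward function. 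Consequently, the induced stochastic matrix $\Theta$ is again aperiodic, and Theorem~$9.4.4$ of \citet{puterman2014markov} applies verbatim to give convergence of $\ovl{D}^{\phi,t}_{i+1}(s) - \ovl{D}^{\phi,t}_{i}(s)$ to the optimal average reward of the extended MDP $\tilde{\cM}^{+}_{\phi,t}$ with reward $d_{\phi,t}$ under policy $\tilde{\phi}$. By definition this optimal average reward is $J_{\tilde{\cM}^{+}_{\phi,t}}(\tilde{\phi}) := \max_{\tilde{p} \in \cC_{\phi,t}} J_{(\bar{S}_{\phi,t}, A_{\phi,t}, \tilde{p}, d_{\phi,t})}(\tilde{\phi})$, which yields the claimed equality.

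There is no genuinely hard step here; the lemma on which it rests does all the work. The only minor subtlety to check carefully is that the maximization over $\cC_{\phi,t}$ in \eqref{eq:epdi} is a well-defined maximum rather than a supremum, which holds because $\cC_{\phi,t}$ is defined by finitely many $\ell_1$-ball constraints \eqref{def:confball} on a finite-dimensional simplex and is therefore compact, and the objective is linear in $\te$. With that observation the argument is a direct transcription of the proof of Lemma~\ref{lem:conv_eqpvi}, and I would simply cite that lemma with the substitution $\tilde{r}_{\phi,t} \leftrightarrow d_{\phi,t}$ rather than repeat the computation.
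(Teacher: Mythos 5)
Your proposal is correct and takes essentially the same route as the paper: the paper's own proof is a one-line reduction to Lemma~\ref{lem:conv_eqpvi}, and you carry out exactly that reduction, additionally spelling out why the argument is insensitive to swapping the reward $\tilde{r}_{\phi,t}$ for $d_{\phi,t}$. The extra checks you include (boundedness of $d_{\phi,t}$, compactness of $\cC_{\phi,t}$) are sensible but not part of the paper's argument.
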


\subsection{Properties of \pzrlmb}
\label{subapp:prop_pzrlmb}
In this section, we show that the model-based UCB index~\eqref{def:index_mb} of an active policy is an optimistic estimate of its average reward; we derive an upper bound of the span of the iterates in~\eqref{eq:epvib}, and lastly, we derive an upper bound on the index of the active policies in terms of their average reward and the diameter.

\begin{lem}[Optimism of Model-based UCB Index]\label{lem:optimism}
    On the set $\cG\uc{\mbox{Model}}$, 
    \begin{align*}
        J_\cM(\phi\up) \leq \idxb_t(\phi),~t \in \{0,1, \ldots, T-1\}, \mbox{ where } \phi \in \Phi^{act.}_t, \phi\up \in B_{\phi,t},
    \end{align*}
    and where $\idxb_t(\phi)$ is defined in \eqref{def:index_mb}.
\end{lem}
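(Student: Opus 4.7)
The plan is to show $J_\cM(\phi') \le \idxb_t(\phi)$ by a chain of three inequalities that absorb, in turn, the policy perturbation from $\phi'$ to $\phi$, the state-space discretization error, and the optimism gained by maximizing over the confidence set $\cC_{\phi,t}$. Throughout, I fix a representative point $s \in \bar{S}_{\phi,t}$ and use Lemma~\ref{lem:conv_eqpvi} to identify $\lim_{i \to \infty}\br{\ovl{V}^{\phi,t}_{i+1}(s) - \ovl{V}^{\phi,t}_{i}(s)}$ with the extended-MDP value $J_{\cM^{+}_{\phi,t}}(\tilde\phi)$, so that $\idxb_t(\phi) = J_{\cM^{+}_{\phi,t}}(\tilde\phi) + L_J \diamb{t}{\phi}$.

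First, I would peel off the $L_J \diamb{t}{\phi}$ term using the weighted-metric part of Theorem~\ref{thm:lip_avg_reward} applied with $\nu = \mu^{(\infty)}_{\phi,p}$ (this is where Assumption~\ref{assum:stn_dist} enters to supply a valid $\kappa$). Because $\phi' \in B_{\phi,t}$, we have $\rho_\Phi(\phi', \phi) \le \diamb{t}{\phi} = \int_{\cS} \diamc{q_{\phi,t}\inv(s)}\,\mu^{(\infty)}_{\phi,p}(ds)$, so the Lipschitz bound gives $J_\cM(\phi') - J_\cM(\phi) \le L_J \diamb{t}{\phi}$. This reduces the claim to proving $J_\cM(\phi) \le J_{\cM^{+}_{\phi,t}}(\tilde\phi)$.

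Second, I would introduce an auxiliary finite MDP $\hat{\cM}_{\phi,t}$ with state space $\bar{S}_{\phi,t}$, policy $\tilde\phi$, reward $\tilde{r}_{\phi,t}$, and transition kernel equal to the true discretized kernel $\wp_{S_{\phi,t} \to \bar{S}_{\phi,t}, p_\phi}$ obtained by restricting $p_\phi$ via \eqref{def:disc_p}. The key calculation is to bound the one-step aggregation error: by Assumption~\ref{assum:lip}\eqref{assum:lip_r} and the Lipschitz constant $L_\phi$ of $\phi$, the pointwise reward gap is at most $(1+L_\phi) L_r \diamc{q_{\phi,t}\inv(s)}$, which is exactly the bias baked into $\tilde{r}_{\phi,t}$; by Assumption~\ref{assum:lip}\eqref{assum:lip_p} and the aggregation, the transition kernel error is controlled by $(1+L_\phi) L_p \diamc{q_{\phi,t}\inv(s)}$. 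A sensitivity argument for the average reward of the uniformly ergodic CMP (invoking Assumption~\ref{assum:unif_ergodic}, in the same spirit as the proof of Theorem~\ref{thm:lip_avg_reward}) then yields $J_\cM(\phi) \le J_{\hat{\cM}_{\phi,t}}(\tilde\phi)$.

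Third, on the high-probability event $\cG^{(\mathrm{Model})}$ the concentration inequality ensures that $\wp_{S_{\phi,t} \to \bar{S}_{\phi,t}, p_\phi}$ lies in $\cC_{\phi,t}$ per \eqref{def:confball}; hence $J_{\hat{\cM}_{\phi,t}}(\tilde\phi) \le \max_{\theta \in \cC_{\phi,t}} J_{(\bar{S}_{\phi,t}, A_{\phi,t}, \theta, \tilde{r}_{\phi,t})}(\tilde\phi) = J_{\cM^{+}_{\phi,t}}(\tilde\phi)$. Chaining the three inequalities and using Lemma~\ref{lem:conv_eqpvi} gives $J_\cM(\phi') \le J_{\cM^{+}_{\phi,t}}(\tilde\phi) + L_J \diamb{t}{\phi} = \idxb_t(\phi)$. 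The main obstacle I anticipate is the sensitivity step: rigorously converting the pointwise discretization gaps into an inequality on the long-run average reward requires a careful Poisson-equation or coupling argument between the continuous CMP under $\phi$ and its cell-aggregated counterpart on $\bar{S}_{\phi,t}$, where Assumption~\ref{assum:bdd_der} is used to control the error from aggregating the transition density over cells and Assumption~\ref{assum:unif_ergodic} supplies the necessary mixing rate.
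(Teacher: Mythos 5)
Your outer structure (peel off $L_J\,\diamb{t}{\phi}$ via Theorem~\ref{thm:lip_avg_reward}, then argue the remaining quantity is dominated by the extended-MDP value $J_{\cM^{+}_{\phi,t}}(\tilde\phi)$ identified in Lemma~\ref{lem:conv_eqpvi}) matches the paper's first and last moves. The gap is in your middle step, and you have correctly located it yourself: the inequality $J_\cM(\phi) \le J_{\hat{\cM}_{\phi,t}}(\tilde\phi)$, where $\hat{\cM}_{\phi,t}$ carries the \emph{single fixed} discretized kernel $\wp_{S_{\phi,t}\to\bar S_{\phi,t},p_\phi}$, does not follow from the ingredients you list. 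Any average-reward sensitivity bound for a uniformly ergodic chain (Poisson equation or coupling) pays a price of order $\mathrm{sp}(h_\cM)\cdot\epsilon = \cO\br{\epsilon/(1-\alpha)}$ for a kernel perturbation of TV size $\epsilon$, and here $\epsilon$ is the cell-aggregation error of order $(1+L_\phi)L_p\,\diamc{q\inv_{\phi,t}(\cdot)}$. The only slack available on the right-hand side is the reward bias $(1+L_\phi)L_r\,\diamc{q\inv_{\phi,t}(\cdot)}$, which carries no $1/(1-\alpha)$ factor and is calibrated to absorb only the one-step \emph{reward} discretization, not the shift in the stationary distribution induced by discretizing the \emph{kernel}. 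So the chain of inequalities does not close; the sign of $J_\cM(\phi) - J_{\hat{\cM}_{\phi,t}}(\tilde\phi)$ is genuinely uncontrolled.

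The paper sidesteps this entirely: it never compares average rewards of two fixed chains. Instead it proves by induction that the finite-horizon iterates dominate pointwise, $V^\phi_i(s\up) \le \ovl{V}^{\phi,t}_i(s)$ for every $s\up \in q\inv_{\phi,t}(s)$ and every $i$, and then passes to Ces\`aro limits. The induction closes because the maximization over $\te \in \cC_{\phi,t}$ in \eqref{eq:epvib} is performed \emph{afresh at every step}, and on $\cG\uc{\mbox{Model}}$ the confidence ball contains the discretization of $p(s\up,\phi(s\up),\cdot)$ for \emph{every} point $s\up$ of the cell, not just the representative point; so at each step the adversarial kernel can match the true continuous transition from whichever $s\up$ the induction is tracking, and the only per-step price is the reward gap, which the bias $(1+L_\phi)L_r\,\diamc{\cdot}$ covers exactly. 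Your factorization through one fixed kernel for the whole horizon discards precisely this state-dependent, per-step freedom, which is where the argument's slack lives. To repair your proof you would have to replace $\hat{\cM}_{\phi,t}$ by the extended MDP itself and run the comparison at the level of finite-horizon values --- at which point you have reproduced the paper's induction.
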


\begin{proof}
    Consider the policy evaluation algorithm for the true MDP, i.e.,
    \begin{align*}
        V^\phi_0(s) &= 0,\notag\\
        V^\phi_{i+1}(s) &= r(s,\phi(s)) + \int_{\cS}{p(s,\phi(s),s\up) V^\phi_i(s\up) ds\up}.
    \end{align*}
    Note that the value of the policy $\phi$ evaluated on the MDP $\cM$ is
    \begin{align*}
        J_\cM(\phi) = \lim_{i \to \infty}{\br{V^\phi_{i+1}(s) -  V^\phi_i(s)}},~\forall s \in \cS.
    \end{align*}
    As the limit of a sequence is equal to the limit of its Ces\`aro mean, we can write,
     \begin{align*}
        J_\cM(\phi) = \lim_{i \to \infty}{\frac{1}{i} V^\phi_i(s)},~\forall s \in \cS.
    \end{align*}
    Similarly, we have,
    \begin{align*}
        \lim_{i \to \infty}{\br{\ovl{V}^{\phi,t}_{i+1}(s) - \ovl{V}^{\phi,t}_i(s)}} = \lim_{i \to \infty}{\frac{1}{i} \ovl{V}^{\phi,t}_i(s)},~\forall s \in \bar{S}_{\phi,t},
    \end{align*}
    where $\{\ovl{V}^{\phi,t}_i\}$ is the sequence defined in~\eqref{eq:epvib}.~We will first show that
    \begin{align}
        V^\phi_i(s\up) \leq \ovl{V}^{\phi,t}_i(s),~\forall (s,s\up) \in \bar{S}_{\phi,t} \times \cS \mbox{ such that } s\up \in q\inv_{\phi,t}(s),~\forall i \in \bN, \label{lem:opt;ineq:1}
    \end{align}
    using mathematical induction.~The base case is trivially true as $V^\phi_i(s) = \ovl{V}^{\phi,t}_i(s) = 0$ for all $s \in \cS$.~Let us assume that
    \begin{align}
        V^\phi_j(s\up) \leq \ovl{V}^{\phi,t}_j(s),~\forall (s,s\up) \in \bar{S}_{\phi,t} \times \cS \mbox{ such that } s\up \in q\inv_{\phi,t}(s), \forall , j = 0, 1, \ldots, i.\label{lem:opt;ineq:2}
    \end{align}
    Then we have,
    \begin{align*}
        \ovl{V}^{\phi,t}_{i+1}(s) &= r(s,\phi(s)) + (1+L_\phi) L_r~ \diamc{q_{\phi,t}\inv(s)} + \max_{\te \in \cC_{\phi,t}}{\sum_{s\upp \in \bar{S}_{\phi,t}}{\te(q(q\inv_{\phi,t}(s)), s\upp) \ovl{V}^{\phi,t}_{i}(s\upp)}} \\
        &\geq r(s\up,\phi(s\up)) + \sum_{s\upp \in \bar{S}_{\phi,t}}{p(s\up,\phi(s\up),q\inv_t(s\upp)) \ovl{V}^{\phi,t}_i(s\upp)}\\
        &\geq r(s\up,\phi(s\up)) + \sum_{s\upp \in \bar{S}_{\phi,t}}{p(s\up,\phi(s\up),q\inv_t(s\upp)) \max_{\bar{s}\upp \in q\inv_{\phi,t}(s\upp)}{V^\phi_i(\bar{s}\upp)}}\\
        &\geq r(s\up,\phi(s\up)) + \int_{\cS}{p(s,\phi(s),s\up) V^\phi_i(s\up) ds\up}\\
        &= V^\phi_{i+1}(s),
    \end{align*}
    where the first inequality follows from Assumption~\ref{assum:lip}\eqref{assum:lip_r} and from Lemma~\ref{lem:conc_ineq}, while the second inequality follows from \eqref{lem:opt;ineq:2}.~Upon dividing both sides of \eqref{lem:opt;ineq:1} by $i$, and then taking limit $i \to \infty$, we obtain,
    \begin{align*}
        J_\cM(\phi) \leq \lim_{i \to \infty}{\br{\ovl{V}^{\phi,t}_{i+1}(s) - \ovl{V}^{\phi,t}_i(s)}}, s \in \bar{S}_{\phi,t}
    \end{align*}
    Upon invoking Theorem~\ref{thm:lip_avg_reward} we obtain the following for each $\phi\up \in B_{\phi,t}$,
    \begin{align*}
        J_\cM(\phi\up) &\leq J_\cM(\phi) + L_J \diamb{t}{\phi} \\
        &\leq \lim_{i \to \infty}{\br{\ovl{V}^{\phi,t}_{i+1}(s) - \ovl{V}^{\phi,t}_i(s)}} + L_J \diamb{t}{\phi}\\
        &= \idxb_t(\phi).
    \end{align*}
    This concludes the proof.
\end{proof}

\begin{lem}\label{lem:bd_span_epvib}
    Consider the iterates in \eqref{eq:epvib}.~On the set $\cG\uc{\mbox{Model}}$, we have 
    \al{
    \spn{\ovl{V}^{\phi,t}_i} \leq C_{\ovl{V}},~\forall i \in \bN, t \in \bN,
    }
where,
    \begin{align}
        C_{\ovl{V}} &:= \max{\flbr{\frac{\bar{m} (\bar{m} + 5)}{2} + \frac{6}{\kappa\up} \br{\frac{C_{\eta,\phi} \sqrt{d_\cS} (m\ust+1) }{1 - \alpha}}^{d_\cS} + \frac{2 (m\ust+1)}{1 - \alpha}, \frac{1 + \frac{L_r (1 - \alpha)}{(m\ust+1) C_{\eta,\phi}}}{(1 - \alpha) \br{1 - \alpha^{\frac{1}{m\ust}}}}}},\label{def:C_V}\\
        \bar{m} &:= \ceil{\log_{\frac{1}{\alpha}}{\br{\frac{2C}{\kappa\up} \br{\frac{C_{\eta,\phi} (m\ust+1) \sqrt{d_\cS}}{1-\alpha}}^{d_\cS}}}},\label{def:m_bar} \mbox{ and } \\
        m\ust &:= \ceil{\log_{\frac{1}{\alpha}}{(C)}}+1. \label{def:m_ust}
    \end{align}
    and
    \begin{align*}
        \spn{\ovl{V}^{\phi,t}_i} := \max_{s \in \bar{S}_{\phi,t}}{\ovl{V}^{\phi,t}_i(s)} - \min_{s \in \bar{S}_{\phi,t}}{\ovl{V}^{\phi,t}_i(s)}.
    \end{align*}
\end{lem}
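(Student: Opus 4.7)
The plan is to regard the iteration \eqref{eq:epvib} as extended value iteration on the MDP $\cM^{+}_{\phi,t}$ of Lemma~\ref{lem:conv_eqpvi}, and to bound $\spn{\ovl V\uc{\phi,t}_i}$ by establishing a uniform Doeblin-type minorization that holds for \emph{every} composition of transition kernels drawn from the confidence set $\cC_{\phi,t}$. Once such a minorization $P(s,\cdot)\geq \kappa_0 \nu(\cdot)$ is available, the classical span-shrinkage identity $\spn{Pu}\leq (1-\kappa_0)\spn{u}$ contracts the span geometrically on a mixing-time scale, and summing the accumulated per-step reward and diameter-bonus contributions as a geometric series produces the constant $C_{\ovl V}$ of \eqref{def:C_V}.

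To build the minorization I would first combine the definition \eqref{def:confball} of $\cC_{\phi,t}$ with Lemma~\ref{lem:conc_ineq} and \eqref{relate:confrad_diam} to obtain, on the event $\cG\uc{\mbox{Model}}$, the uniform bound $\norm{\te(s,\cdot)-\wp_{\cS\to\bar{S}_{\phi,t},p_\phi}(s,\cdot)}_1\leq 2 C_{\eta,\phi}\diamc{q\inv_{\phi,t}(s)}$ for every $\te\in\cC_{\phi,t}$ and every $s\in\bar{S}_{\phi,t}$. Uniform ergodicity of $p_\phi$ (Assumption~\ref{assum:unif_ergodic}) with the choice $m\ust=\ceil{\log_{1/\alpha}C}+1$ from \eqref{def:m_ust} forces $C\alpha^{m\ust}<1$, so one $m\ust$-step block assembled from arbitrary kernels of $\cC_{\phi,t}$ stays within total variation $C\alpha^{m\ust}+2(m\ust+1) C_{\eta,\phi}\max_s\diamc{q\inv_{\phi,t}(s)}$ of $\mu\uc{\infty}_{\phi,p}$. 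Restricting to the subset of $\bar{S}_{\phi,t}$ whose active cells are below the threshold $(1-\alpha)/\br{C_{\eta,\phi}(m\ust+1)\sqrt{d_\cS}}$ makes the perturbation term bounded away from $1$; the choice \eqref{def:m_bar} of $\bar m$ is then exactly what is needed so that after $\bar m$ additional mixing units the exponential decay $C\alpha^{\bar m}$ dominates the count $(1/\kappa\up)(C_{\eta,\phi}(m\ust+1)\sqrt{d_\cS}/(1-\alpha))^{d_\cS}$ of small cells, and Assumption~\ref{assum:stn_dist} transfers the resulting Lebesgue lower bound on this union of small cells into a lower bound on $\mu\uc{\infty}_{\phi,p}$, delivering a uniform minorization $\prod_j \te\uc{j}(s,\cdot)\geq \kappa_0\,\nu(\cdot)$ with $\kappa_0>0$ independent of the sequence of kernels.

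Plugging this minorization into the span-shrinkage identity gives, writing $u_i:=\ovl V\uc{\phi,t}_i$, an inequality of the form $\spn{u_{i+M}}\leq (1-\kappa_0)\spn{u_i}+M\,\norm{\tilde r_{\phi,t}}_\infty$ on the mixing-time scale $M$. Iterating in $i$ and summing the geometric tail recovers the first argument of the maximum in \eqref{def:C_V}: the term $\bar m(\bar m+5)/2$ arises from the transient reward accumulation inside one mixing stretch, the term $(6/\kappa\up)(C_{\eta,\phi}(m\ust+1)\sqrt{d_\cS}/(1-\alpha))^{d_\cS}$ quantifies the small cells supporting the minorization measure $\nu$, and $2(m\ust+1)/(1-\alpha)$ captures the steady-state bonus accumulation, the denominator $1-\alpha$ coming from the geometric series in the kernel deviation. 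The second argument of the maximum, $(1+L_r(1-\alpha)/((m\ust+1)C_{\eta,\phi}))/((1-\alpha)(1-\alpha^{1/m\ust}))$, handles the complementary regime where the sharper per-step contraction rate $\alpha^{1/m\ust}$ is tighter than the coarser one-block rate $1-\kappa_0$; taking the larger of the two regimes yields $C_{\ovl V}$.

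The principal obstacle is that the greedy kernel $\te_i^\star$ selected at step $i$ of \eqref{eq:epvib} depends on $u_i$, so the chain induced by the iteration is time-inhomogeneous and one cannot simply invoke the uniform ergodicity of any fixed $\te\in\cC_{\phi,t}$. The minorization must be established uniformly over \emph{arbitrary} sequences drawn from $\cC_{\phi,t}$, which forces the cell-size threshold and the block length to be calibrated against the worst-case per-step perturbation $C_{\eta,\phi}\diamc{\cdot}$. It is precisely this robustness requirement that generates the dimension-dependent prefactor $(C_{\eta,\phi}(m\ust+1)\sqrt{d_\cS}/(1-\alpha))^{d_\cS}$ appearing in both \eqref{def:m_bar} and \eqref{def:C_V}, and that justifies the use of Assumption~\ref{assum:stn_dist} to guarantee enough mass of $\mu\uc{\infty}_{\phi,p}$ on the small-cell subset where the minorization is valid.
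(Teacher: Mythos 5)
Your overall framing (view \eqref{eq:epvib} as value iteration on the extended MDP $\cM^{+}_{\phi,t}$ and split into a regime where all active cells are small and a complementary regime) matches the paper, and your treatment of the small-cell regime is essentially the paper's: there every $\te\in\cC_{\phi,t}$ is within $(1-\alpha)/(m\ust+1)$ of $p_\phi$ in $\ell_1$, so by Lemma~\ref{lem:diff_kern_comp} and Lemma~\ref{lem:pn_contra} the $m\ust$-step composition of arbitrary kernels from $\cC_{\phi,t}$ retains a Dobrushin contraction, and Lemma~\ref{lem:bdd_pval_spn} gives the second argument of the maximum in \eqref{def:C_V}. That part is sound.

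The gap is in the other regime. When some active cell $\zeta$ has $\diamc{\zeta}\geq (1-\alpha)/(C_{\eta,\phi}(m\ust+1))$, the confidence radii $\eta_{\phi,t}$ are $\Theta(1)$, so kernels in $\cC_{\phi,t}$ can be macroscopically far from $p_\phi$ and a composition of them inherits no mixing from Assumption~\ref{assum:unif_ergodic}; your proposed uniform Doeblin minorization $\prod_j\te\uc{j}(s,\cdot)\geq\kappa_0\nu(\cdot)$ over \emph{arbitrary} sequences from $\cC_{\phi,t}$ cannot be established here, and ``restricting to the subset of $\bar{S}_{\phi,t}$ whose active cells are below the threshold'' does not repair it, since an adversarial kernel choice is free to route the chain through large cells where your perturbation bound $C\alpha^{m\ust}+2(m\ust+1)C_{\eta,\phi}\max_s\diamc{q\inv_{\phi,t}(s)}$ is vacuous. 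The paper's argument in this case is not a contraction argument at all but a reachability (diameter) bound on the extended MDP: the large perturbation is turned into an asset. Because $\zeta$ is large, Assumption~\ref{assum:stn_dist} gives $\mu\uc{\infty}_{\phi,p}(\zeta)\geq\kappa'(\diamc{\zeta}/\sqrt{d_\cS})^{d_\cS}$, so the \emph{true} dynamics hit $\zeta$ within $\frac{\bar m(\bar m+5)}{2}+6/\mu\uc{\infty}_{\phi,p}(\zeta)$ expected steps ($\bar m$ is the burn-in after which $\mu\uc{i}_{\phi,p,s}(\zeta)\geq\tfrac12\mu\uc{\infty}_{\phi,p}(\zeta)$, which is where the quadratic term comes from); and because the confidence ball at $\zeta$ is wide, some $\te\in\cC_{\phi,t}$ places at least $\tfrac{C_{\eta,\phi}}{2}\diamc{\zeta}$ mass on any target state, giving the extra $2(m\ust+1)/(1-\alpha)$ travel-time term. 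The span bound then follows from bounding the optimal travel time between any two states of $\cM^{+}_{\phi,t}$, not from a geometric span-shrinkage series; correspondingly, your attributions of the three summands in the first argument of \eqref{def:C_V} (transient reward accumulation, small-cell count, steady-state bonus) do not match their actual origin, and your recursion $\spn{u_{i+M}}\leq(1-\kappa_0)\spn{u_i}+M\norm{\tilde r_{\phi,t}}_\infty$ would in any case require an explicit $\kappa_0$ that your construction does not supply.
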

\begin{proof}
    We first note that $\ovl{V}^{\phi,t}_i(s)$ is the optimal value of the expected reward for the extended MDP $\cM_t^+$ that is accumulated during the first $i$ steps when the process starts in state $s$. The first component of the extended action of the extended MDP (refer to Appendix~\ref{app:index_exist}) is chosen according to policy $\phi$, while the second component is chosen so as to maximize the rewards.~We consider the following two cases separately. The first occurs when 
    \begin{align}
        \max{\{\diamc{q_{\phi,t}\inv(s)} : s \in \bar{S}_{\phi,t}\}} \geq \frac{1 - \alpha}{C_{\eta,\phi} \br{m\ust+1}}, \label{cond:1}
    \end{align}
    while the second occurs when \eqref{cond:1} does not hold.

    Case 1:~Let $\zeta$ be the cell with the largest diameter from the set $\cP_{\phi,t}$.~We first show that $\{s_i\}_{i=0}^{\infty}$, the CMP induced by the transition kernel $p$ under the application of policy $\phi$, hits $\zeta$ in 
    \begin{align*}
        \frac{\bar{m} (\bar{m} + 5)}{2} + \frac{6}{\kappa\up} \br{\frac{C_{\eta,\phi} \sqrt{d_\cS} (m\ust+1) }{1 - \alpha}}^{d_\cS}
    \end{align*}
    steps in expectation, where $\bar{m}$ is as defined in~\eqref{def:m_bar}.~From Assumption~\ref{assum:unif_ergodic}, Assumption~\ref{assum:stn_dist} and~\eqref{cond:1}, we have that for any $s\up \in \cS$,
    \begin{align*}
        \mu\uc{i}_{\phi,p,s\up}(\zeta) \geq \frac{1}{2}\mu\uc{\infty}_{\phi,p}(\zeta) \mbox{ and } \mu\uc{i}_{\phi,p,s\up}(\zeta) \leq \frac{3}{2}\mu\uc{\infty}_{\phi,p}(\zeta)~\forall i \geq \bar{m}.
    \end{align*}
    Now, consider the process $\{x_i\}_{i=0}^{\infty}$ that is independent across time; $x_i$ assumes the value $1$ with a probability $\mu\uc{i}_{\phi,p,s\up}(\zeta)$, and $0$ with a probability $1 - \mu\uc{i}_{\phi,p,s\up}(\zeta)$.~Define the following random variables $T\uc{x}_{\{1\}}$ and $T\uc{s}_{\zeta,s\up}$,
    \begin{align*}
        T\uc{x}_{\{1\}} &:= \inf{\{i\geq 0 \mid x_i = 1\}}, \mbox{ and}\\
        T\uc{s}_{\zeta,s\up} &:= \inf{\{i\geq 0 \mid s_i \in \zeta, s_0 = s\up\}}.
    \end{align*}
    We note that the distributions of $T\uc{x}_{\{1\}}$ and $T\uc{s}_{\zeta,s\up}$ are identical, and hence, $\bE\sqbr{T\uc{x}_{\{1\}}} = \bE\sqbr{T\uc{s}_{\zeta,s\up}}$.~We derive an upper bound on $\bE\sqbr{T\uc{x}_{\{1\}}}$, and this would also serve as the upper bound on $\bE\sqbr{T\uc{s}_{\zeta,s\up}}$.~We have,
    \begin{align*}
        \bE\sqbr{T\uc{x}_{\{1\}}} &= \sum_{i=0}^{\infty}{i \cdot \mu\uc{i}_{\phi,p}(\zeta) \prod_{j=0}^{i-1}{\br{1 - \mu\uc{j}_{\phi,p,s}(\zeta)}}} \\
        &\leq \frac{\bar{m}(\bar{m} -1)}{2} + \sum_{i=\bar{m}}^{\infty}{\frac{3i}{2} \mu\uc{\infty}_{\phi,p}(\zeta) \prod_{j=\bar{m}}^{i-1}{\br{1 - \frac{1}{2}\mu\uc{\infty}_{\phi,p}(\zeta)}}} \\
        &\leq \frac{\bar{m}(\bar{m} -1)}{2} + \frac{3}{2}\mu\uc{\infty}_{\phi,p}(\zeta)\sum_{i=0}^{\infty}{i \br{1 - \frac{1}{2}\mu\uc{\infty}_{\phi,p}(\zeta)}}^i + \frac{3 \bar{m}}{2}\mu\uc{\infty}_{\phi,p}(\zeta)\sum_{i=0}^{\infty}{\br{1 - \frac{1}{2}\mu\uc{\infty}_{\phi,p}(\zeta)}^i} \\
        &\leq \frac{\bar{m} (\bar{m} + 5)}{2} + \frac{6}{\mu\uc{\infty}_{\phi,p}(\zeta)}.
    \end{align*}
    Furthermore, from Assumption~\ref{assum:stn_dist}, and since $\bE\sqbr{T\uc{x}_{\{1\}}}=\bE\sqbr{T\uc{s}_{\zeta,s\up}}$, we get, 
    \begin{align*}
        \bE\sqbr{T\uc{s}_{\zeta,s\up}} \leq \frac{\bar{m} (\bar{m} + 5)}{2} + \frac{6}{\kappa\up} \br{\frac{\sqrt{d_\cS}}{\diamc{\zeta}}}^{d_\cS}.
    \end{align*}
    Now, consider two states $\bar{s} \in S_{\phi,t}$, and $\tilde{s} \in q\inv(s)$.~We note that on the set $\cG\uc{\mbox{Model}}$, for the extended MDP $\cM_t^+$ whenever the state is $\bar{s}$, there is an extended action such that the next state transition distribution is $p(\tilde{s},\phi(\tilde{s}),\cdot)$.~Hence, on the set $\cG\uc{\mbox{Model}}$, there is a sequence of extended actions such that starting from any state, in expectation, within $\frac{\bar{m} (\bar{m} + 5)}{2} + \frac{6}{\kappa\up} \br{\frac{\sqrt{d_\cS}}{\diamc{\zeta}}}^{d_\cS}$ steps the process hits $S_{\phi,t} \cap \zeta$.
    
    Now, consider the state process of the extended MDP starting from state $s \in S_{\phi,t}$.~We claim that for any state $s\up$, there exists a sequence of extended actions where the first components of the extended actions are chosen by $\phi$ such that $s\up$ can be reached in $\frac{2}{C_{\eta,\phi} \diamc{q_{\phi,t}\inv(s)}}$ steps in expectation.~This is true because there is a transition kernel in $\cC_{\phi,t}$ that assigns at least $\frac{C_{\eta,\phi}}{2} \diamc{q_{\phi,t}\inv(s)}$ transition probability to $s\up$ when the current state is from $s$.~To summarize, starting from any state using a sequence of actions the state process can reach $q(\zeta)$ in $\frac{\bar{m} (\bar{m} + 5)}{2} + \frac{6}{\kappa\up} \br{\frac{\sqrt{d_\cS}}{\diamc{\zeta}}}^{d_\cS}$ steps in expectation, and from $q(\zeta)$, again it can reach any other state using a sequence of actions in $\frac{2}{C_{\eta,\phi} \diamc{q_{\phi,t}\inv(s)}}$.~Therefore, there cannot be state $s\up$ such that $\max_{s \in S_{\phi,t}}{\ovl{V}^{\phi,t}_i(s)} > \ovl{V}^{\phi,t}_i(s\up) + \frac{\bar{m} (\bar{m} + 5)}{2} + \frac{6}{\kappa\up} \br{\frac{\sqrt{d_\cS}}{\diamc{\zeta}}}^{d_\cS} + \frac{2}{C_{\eta,\phi} \diamc{\zeta}}$.~Now, from the lower bound on $\diamc{\zeta}$~\eqref{cond:1}, we obtain that
    \begin{align}
        \spn{\ovl{V}^{\phi,t}_i} \leq \frac{\bar{m} (\bar{m} + 5)}{2} + \frac{6}{\kappa\up} \br{\frac{C_{\eta,\phi} \sqrt{d_\cS} (m\ust+1) }{1 - \alpha}}^{d_\cS} + \frac{2 (m\ust+1)}{1 - \alpha}. \label{ub:case_1}
    \end{align}

    Case 2:~In this case, we have that
    \begin{align}
        \max{\{\diamc{q_{\phi,t}\inv(s)} : s \in S_{\phi,t}\}} < \frac{1 - \alpha}{C_{\eta,\phi} \br{m\ust+1}}. \label{cond:2}
    \end{align}
    Hence, for any $\te \in \cC_{\phi,t}$,
    \begin{align*}
        \max_{s\in S_{\phi,t}}{\norm{\te(s,\cdot) - p_{t}(s,\cdot;\phi)}_1} \leq \frac{1 - \alpha}{m\ust+1},~\forall s \in S_{\phi,t}.
    \end{align*}
    Let $p\uc{m}_t(\cdot,\cdot;\phi)$ and $\te\uc{m}$ denotes the $m$-step transition kernel of the CMP induced by $p_t(\cdot,\cdot;\phi)$ and $\te$ where $m \in \bN$.~From Lemma~\ref{lem:diff_kern_comp}, we have that, for $m\ust = \ceil{\log_{\frac{1}{\alpha}}(C)} + 1$,
    \begin{align*}
        \max_{s\in S_{\phi,t}}{\norm{\te\uc{m\ust}(s,\cdot) - p\uc{m\ust}_t(s,\cdot;\phi)}_1} \leq \frac{1 - \alpha}{2}.
    \end{align*}
    Also, from Lemma~\ref{lem:pn_contra}, we have that
    \begin{align*}
        \max_{s,s\up\in S_{\phi,t}}{\norm{p\uc{m\ust}_t(s,\cdot;\phi) - p\uc{m\ust}_t(s\up,\cdot;\phi)}_1} \leq 2\alpha.
    \end{align*}
    Hence, for any $\te \in \cC_{\phi,t}$,
    \begin{align*}
        \max_{s,s\up\in S_{\phi,t}}{\norm{\te\uc{m\ust}(s,\cdot) - \te\uc{m\ust}(s\up,\cdot)}_1} &\leq \max_{s,s\up\in S_{\phi,t}}\bigg\{\norm{\te\uc{m\ust}(s,\cdot) - p\uc{m\ust}_t(s,\cdot;\phi)}_1 + \norm{p\uc{m\ust}_t(s,\cdot;\phi) - p\uc{m\ust}_t(s\up,\cdot;\phi)}_1 \\
        &\quad + \norm{p\uc{m\ust}_t(s\up,\cdot;\phi) - \te\uc{m\ust}(s\up,\cdot)}_1 \bigg\}\\
        &\leq \frac{1 - \alpha}{2}+ 2\alpha + \frac{1 - \alpha}{2} \\
        &= 1+\alpha.
    \end{align*}
    Again, from Lemma~\ref{lem:pn_contra} and from Lemma~\ref{lem:bdd_pval_spn}, we have that
    \begin{align}
        \spn{\ovl{V}^{\phi,t}_i} \leq \frac{1 + \frac{L_r (1 - \alpha)}{(m\ust+1) C_{\eta,\phi}}}{(1 - \alpha) \br{1 - \alpha^{\frac{1}{m\ust}}}}. \label{ub:case_2}
    \end{align}
    Combining the upper bounds from \eqref{ub:case_1} and \eqref{ub:case_2}, we obtain the desired upper bound.
\end{proof}

\begin{lem}[Upper-bound on Model-based UCB Index]\label{lem:ub_opt_index}
    On the set $\cG\uc{\mbox{Model}}$, 
    \begin{align*}
        \idxb_t(\phi) \leq J_\cM(\phi) + C_{ub} \int_{\cS}{\diamc{q_{\phi,t}\inv(s)} \mu\uc{\infty}_{\phi,p}(s) ds} + L_J \diamb{t}{\phi}, ~\forall t,\phi \mbox{ such that } \phi \in A_t, t \in \{0,1, \ldots, T-1\}.
    \end{align*}
    where 
    \begin{align}
        C_{ub} := \frac{C_{\eta,\phi} C_{\ovl{V}}}{2} + 2 (1 + L_\phi) L_r, \label{def:C_ub}
    \end{align}
    where $C_{\eta,\phi} = 3(1 + (1 + L_\phi) L_p)$ and $C_{\ovl{V}}$ is as defined in~\eqref{def:C_V}.
\end{lem}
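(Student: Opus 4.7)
The plan is to apply the average-reward Poisson equation on the extended finite MDP $\cM^{+}_{\phi,t}$, transfer the identity to the continuous state space by piecewise-constant extension of the bias function, and integrate against $\mu^{(\infty)}_{\phi,p}$ so that the transient terms cancel.

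First, by Lemma~\ref{lem:conv_eqpvi}, $\idxb_t(\phi) - L_J \diamb{t}{\phi} = g^{\ast} := J_{\cM^{+}_{\phi,t}}(\tilde{\phi})$, attained by some maximizing $\te^{\ast} \in \cC_{\phi,t}$. Combining the span bound $\spn{\ovl{V}^{\phi,t}_i} \le C_{\ovl{V}}$ from Lemma~\ref{lem:bd_span_epvib} with the classical finite-state aperiodic average-reward theory yields a bias function $h : \bar{S}_{\phi,t} \to \bR$ with $\spn{h} \le C_{\ovl{V}}$ satisfying
\[
g^{\ast} + h(s) = \tilde{r}_{\phi,t}(s) + \sum_{s' \in \bar{S}_{\phi,t}} \te^{\ast}(q(q^{-1}_{\phi,t}(s)), s')\, h(s'), \qquad s \in \bar{S}_{\phi,t}.
\]

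Next I would extend $h$ to $\cS$ by letting $h(\tilde{s})$ be the value of $h$ at the representative of the (smallest-level) cell of $\cP^{(\ell^{\max}_{\phi,t})}$ containing $\tilde{s}$. Fix $\tilde{s} \in \cS$ and let $s \in \bar{S}_{\phi,t}$ denote that representative; because $\tilde{s}$ and $s$ lie in the same active cell, $q^{-1}_{\phi,t}(s) = q^{-1}_{\phi,t}(\tilde{s})$, and Assumption~\ref{assum:lip}(i), Lipschitzness of $\phi$, and $\rho_\cS(s,\tilde{s}) \le \diamc{q^{-1}_{\phi,t}(\tilde{s})}$ give
\[
\tilde{r}_{\phi,t}(s) \le r(\tilde{s}, \phi(\tilde{s})) + 2(1 + L_\phi) L_r\, \diamc{q^{-1}_{\phi,t}(\tilde{s})}.
\]
For the transition term, the triangle inequality through $\wp_{S_{\phi,t} \to \bar{S}_{\phi,t}, \hat{p}_{\phi,t}}$, together with the confidence-set definition~\eqref{def:confball}, Lemma~\ref{lem:conc_ineq}, and~\eqref{relate:confrad_diam}, gives on $\cG^{(\text{Model})}$
\[
\bigl\| \te^{\ast}(q(q^{-1}_{\phi,t}(\tilde{s})), \cdot) - \wp_{\cS \to \bar{S}_{\phi,t}, p_\phi}(\tilde{s}, \cdot) \bigr\|_1 \le C_{\eta,\phi}\, \diamc{q^{-1}_{\phi,t}(\tilde{s})}.
\]
Since $h$ is piecewise constant on cells of $\cP^{(\ell^{\max}_{\phi,t})}$, one has $\int_\cS h(s')\, p(\tilde{s}, \phi(\tilde{s}), ds') = \sum_{s'} \wp_{\cS \to \bar{S}_{\phi,t}, p_\phi}(\tilde{s}, s')\, h(s')$, and the usual $\spn{h}/2$ bound on the pairing of a signed measure with a bounded function yields
\[
\sum_{s'} \te^{\ast}(q(q^{-1}_{\phi,t}(\tilde{s})), s')\, h(s') \le \int_\cS h(s')\, p(\tilde{s}, \phi(\tilde{s}), ds') + \tfrac{1}{2} C_{\eta,\phi} C_{\ovl{V}}\, \diamc{q^{-1}_{\phi,t}(\tilde{s})}.
\]

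Substituting both estimates into the Poisson equation at $s$ gives, for every $\tilde{s} \in \cS$,
\[
g^{\ast} \le r(\tilde{s}, \phi(\tilde{s})) + C_{ub}\, \diamc{q^{-1}_{\phi,t}(\tilde{s})} + \int_\cS h(s')\, p(\tilde{s}, \phi(\tilde{s}), ds') - h(\tilde{s}),
\]
with $C_{ub}$ exactly as in~\eqref{def:C_ub}. Integrating both sides against $\mu^{(\infty)}_{\phi,p}$: the first term contributes $J_\cM(\phi)$, the diameter term becomes $C_{ub}\, \diamb{t}{\phi}$, and the bracketed difference vanishes by invariance of $\mu^{(\infty)}_{\phi,p}$ under $p_\phi$. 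Adding $L_J\, \diamb{t}{\phi}$ to both sides recovers $\idxb_t(\phi)$ on the left and yields the claim. The main obstacle is justifying the Poisson equation with the uniform span bound $C_{\ovl{V}}$; this is precisely what Lemma~\ref{lem:bd_span_epvib} delivers, and without such a bound the perturbation estimate on the transition term would not close.
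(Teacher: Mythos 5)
Your proof is correct, and it takes a genuinely different route from the paper's. The paper proves the bound by a finite-horizon induction: it shows $\ovl{V}^{\phi,t}_i(s) \leq V^\phi_i(s\up) + C_{ub}\, \bE_{p,\phi}\big[\sum_{j=0}^{i-1}\diamc{q\inv_{\phi,t}(s_j)} \mid s_0 = s\up\big]$ for the true-MDP policy-evaluation iterates $V^\phi_i$, then divides by $i$ and lets $i \to \infty$, so that the Ces\`aro average of the expected diameters converges (by ergodicity) to the stationary integral $\int \diamc{q\inv_{\phi,t}(s)}\,\mu\uc{\infty}_{\phi,p}(ds)$. You instead work directly at stationarity: you invoke the average-reward optimality (Poisson) equation for the extended MDP at the maximizing kernel $\te\ust \in \cC_{\phi,t}$, extend the bias $h$ piecewise-constantly to $\cS$, derive a pointwise inequality using exactly the same two perturbation estimates as the paper (the $2(1+L_\phi)L_r\,\diamc{\cdot}$ reward-discretization error and the $\tfrac{1}{2}\norm{\cdot}_1\spn{h}$ transition perturbation controlled by $C_{\ovl{V}}$ from Lemma~\ref{lem:bd_span_epvib}), and integrate against $\mu\uc{\infty}_{\phi,p}$ so the bias terms cancel by invariance. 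Your route buys a cleaner endgame — the stationary integral appears immediately rather than as a Ces\`aro limit of finite-horizon expectations — at the price of invoking existence of a bias function with $\spn{h} \leq C_{\ovl{V}}$ for the extended MDP; that existence is standard given the convergence in Lemma~\ref{lem:conv_eqpvi} and the uniform span bound, so the trade is legitimate. One bookkeeping remark: on $\cG\uc{\mbox{Model}}$ the triangle inequality through $\wp_{S_{\phi,t}\to\bar{S}_{\phi,t},\hat{p}_{\phi,t}}$ gives $\norm{\te\ust(q(q\inv_{\phi,t}(\tilde{s})),\cdot) - \wp_{\cS\to\bar{S}_{\phi,t},p_\phi}(\tilde{s},\cdot)}_1 \leq 2\eta_{\phi,t} \leq 2C_{\eta,\phi}\,\diamc{\cdot}$ rather than $C_{\eta,\phi}\,\diamc{\cdot}$, which would double your transition term to $C_{\eta,\phi}C_{\ovl{V}}\,\diamc{\cdot}$; the paper's own induction step carries the identical factor-of-two slack when it replaces $\eta_{\phi,t}\spn{\ovl{V}^{\phi,t}_i}$ by $\tfrac{C_{\eta,\phi}C_{\ovl{V}}}{2}\diamc{\cdot}$, so this is a shared constant-level imprecision and not a defect of your argument relative to theirs.
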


\begin{proof}
    In order to prove this result, we will show that on the set $\cG\uc{\mbox{Model}}$, for every $n \in \bN$, for every $s \in S_{\phi,t}$ and for every $s\up \in q\inv(s)$, the following holds,
    \begin{align}\label{eq:lb_Vindex}
        \bar{V}^{\phi,t}_i(s) \leq V^\phi_i(s\up) + C_{ub}~ \bE_{p,\phi}\sqbr{\sum_{j=0}^{i-1}{\diamc{q_t\inv(s_j, \phi(s_j))}} \middle| s_0 = s\up},
    \end{align}
    where $\bE_{p,\phi}$ denotes that the expectation is taken with respect to the measure induced by $\phi$ when it is applied to MDP with transition kernel $p$.~We prove this using induction. The base case~$(n=0)$ is seen to hold trivially. Next, we assume that the following holds for $i \in [n]$, where $i\up \in \{0,1,\ldots,i\}$,
    \begin{align}\label{indhyp}
        \ovl{V}^{\phi,t}_{i\up}(s) \leq V^\phi_{i\up}(s\up) + C_{ub}~ \bE_{p,\phi}\sqbr{\sum_{j=0}^{{i\up}-1}{\diamc{q_t\inv(s_j, \phi(s_j))}} \middle| s_0 = s\up},
    \end{align}
    for every $s \in S_{\phi,t}$ and for every $s\up \in q\inv(s)$.~Let us fix $s \in S_{\phi,t}$ and $s\up \in q\inv(s)$ arbitrarily, then from~\eqref{eq:epvib} we obtain the following,
    \begingroup
    \allowdisplaybreaks
    \begin{align*}
        \ovl{V}^{\phi,t}_{i+1}(s) &= r(s,\phi(s)) + \max_{\te \in \cC_{\phi,t}}{\sum_{\ts \in S_{\phi,t}}{\te(s, \ts) \ovl{V}^{\phi,t}_i(\ts)}} + (1+L_\phi) L_r~ \diamc{q_{\phi,t}\inv(s)}\\
        &= r(s,\phi(s)) + \sum_{\ts \in S_{\phi,t}}{\te_i(s, \ts) \ovl{V}^{\phi,t}_i(\ts)} + (1+L_\phi) L_r~ \diamc{q_{\phi,t}\inv(s)}\\
        &\leq r(s\up, \phi(s\up)) + \sum_{\ts \in S_{\phi,t}}{p_t(s\up, \ts;\phi) ~\ovl{V}^{\phi,t}_i(\ts)} + \eta_t(q_{\phi,t}\inv(s)) \spn{\ovl{V}^{\phi,t}_i} + 2 (1 + L_\phi) L_r~ \diamc{q_{\phi,t}\inv(s)} \\
        &\leq r(s\up, \phi(s\up)) + \int_\cS{p(s\up, \phi(s\up), \ts) V^\phi_i(\ts) d\ts} + C_{ub}~ \bE_{p,\phi}\sqbr{\sum_{j=1}^{i}{\diamc{q_t\inv(s_j, \phi(s_j))}} \middle| s_0 = s\up} \\
        &\quad + \br{\frac{C_{\eta,\phi} C_{\ovl{V}}}{2} + 2 (1 + L_\phi) L_r} \diamc{q_{\phi,t}\inv(s)} \\
        &= V^\phi_{i+1}(s\up) + C_{ub}~ \bE_{p,\phi}\sqbr{\sum_{j=0}^{i}{\diamc{q_t\inv(s_j, \phi(s_j)))}} \middle| s_0 = s},
    \end{align*}
    \endgroup
    where $\te_i$ is a transition kernel belonging to the set $\cC_{\phi,t}$ that maximizes the expression in the r.h.s. of the first equality.~The first inequality follows from Lipschitz continuity of the reward function, the definition of event $\cG\uc{\mbox{Model}}$ and from Lemma~\ref{lem:bdd_dotdifLv}.~The second inequality is obtained by invoking the induction hypothesis~\eqref{indhyp}, and by using the upper bound on $\spn{\ovl{V}^{\phi,t}_i}$.~This concludes the induction argument, and proves~\eqref{eq:lb_Vindex}.~The proof of the claim follows by dividing both side of~\eqref{eq:lb_Vindex} by $i$ and taking limit $i \to \infty$.
\end{proof}
\subsection{Properties of Approximate Diameter}\label{app:approx_diam_prop}
In this section, we use results of Appendix~\ref{subapp:prop_pzrlmb} and show that $\underbar{\mbox{diam}}\uc{b}_t(\phi)$~\eqref{def:diam_est_mb} serves as a tight lower bound on $\diamb{t}{\phi}$.~The next result follows from Lemma~\ref{lem:optimism}.

\begin{cor}\label{cor:diam_opt}
    On the set $\cG\uc{\mbox{Model}}$, 
    \begin{align*}
        \diamb{t}{\phi} \leq \lim_{i \to \infty}{\br{\ovl{D}^{\phi,t}_{i+1}(s) - \ovl{D}^{\phi,t}_{i}(s)}}, ~\forall t,\phi \mbox{ such that } t \in \{0,1, \ldots, T-1\}, \phi \in \Phi^{act.}_t,
    \end{align*}
    where $\diamb{t}{\phi}$ is defined in \eqref{def:diamb}, and $\ovl{D}^{\phi,t}_i$ are the iterates defined in~\eqref{eq:epdi}.
\end{cor}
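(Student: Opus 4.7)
The plan is to view $\diamb{t}{\phi}$ as the infinite-horizon average reward of policy $\phi$ on an auxiliary MDP $\cM\uc{D} := (\cS, \cA, p, r\uc{D})$ whose reward function is the cell-constant diameter $r\uc{D}(s,a) := \diamc{q\inv_{\phi,t}(s)}$. By~\eqref{def:diamb} we have $\diamb{t}{\phi} = \int_\cS r\uc{D}(s,\phi(s))\, \mu\uc{\infty}_{\phi,p}(ds) = J_{\cM\uc{D}}(\phi)$, and the iteration~\eqref{eq:epdi} is precisely the optimistic value iteration for the extended MDP $\tilde{\cM}^+_{\phi,t}$ of Corollary~\ref{cor:conv_diamit} with reward $r\uc{D}$. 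The important simplification is that $r\uc{D}$ is \emph{constant on each active cell} of $\cP_{\phi,t}$, so the Lipschitz-in-reward slack $(1+L_\phi)L_r\,\diamc{q\inv_{\phi,t}(s)}$ appearing in the corresponding iteration~\eqref{eq:epvib} for $\ovl V^{\phi,t}_i$ is not needed here.

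I would next introduce the standard continuous policy-evaluation iterates for $\phi$ on $\cM\uc{D}$,
\begin{align*}
D^\phi_0(s)=0,\qquad D^\phi_{i+1}(s)=\diamc{q\inv_{\phi,t}(s)} + \int_\cS p(s,\phi(s),ds')\,D^\phi_i(s'),
\end{align*}
so that, by the Ces\`aro identity used in the proof of Lemma~\ref{lem:optimism}, $\diamb{t}{\phi} = \lim_{i\to\infty} i^{-1} D^\phi_i(s)$ for every $s \in \cS$, while the existence of $\lim_{i\to\infty}\br{\ovl{D}^{\phi,t}_{i+1}(s) - \ovl{D}^{\phi,t}_i(s)}$ and its equality with $\lim_{i\to\infty} i^{-1}\ovl{D}^{\phi,t}_i(s)$ is already established by Corollary~\ref{cor:conv_diamit}.

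The heart of the argument is then a direct induction on $i$ showing that on $\cG\uc{\mbox{Model}}$,
\begin{align*}
D^\phi_i(s\up) \le \ovl{D}^{\phi,t}_i(s), \qquad \forall\, s \in S_{\phi,t},\ s\up \in q\inv_{\phi,t}(s),
\end{align*}
exactly paralleling the corresponding bound $V^\phi_i(s\up) \le \ovl V^{\phi,t}_i(s)$ inside Lemma~\ref{lem:optimism}. The base case is immediate. In the inductive step I would combine three facts: (i) cell-constancy of $r\uc{D}$ makes the reward contributions on both sides identical (no Lipschitz correction is required); (ii) on $\cG\uc{\mbox{Model}}$ a suitable discretization of the true transition kernel lies in $\cC_{\phi,t}$ (this is precisely where Lemma~\ref{lem:conc_ineq} enters), so the maximization over $\te \in \cC_{\phi,t}$ in~\eqref{eq:epdi} dominates the expectation under $p(s\up,\phi(s\up),\cdot)$; (iii) bounding the integrand $D^\phi_i(\cdot)$ by its cellwise maximum and applying the induction hypothesis replaces this cellwise maximum by $\ovl{D}^{\phi,t}_i(\cdot)$ evaluated at the corresponding representative state in $S_{\phi,t}$.

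Dividing the inductive conclusion by $i$ and letting $i \to \infty$ gives $\diamb{t}{\phi} \le \lim_{i\to\infty}\br{\ovl{D}^{\phi,t}_{i+1}(s) - \ovl{D}^{\phi,t}_i(s)}$, which is the claim. The only genuine subtlety is in step (ii), verifying that the discretized true kernel falls within the $\eta_{\phi,t}(s)$-ball around $\wp_{S_{\phi,t}\to\bar S_{\phi,t},\hat p_{\phi,t}}$ on the good event; but this is exactly the concentration estimate of Lemma~\ref{lem:conc_ineq}, invoked identically to how it is used inside the proof of Lemma~\ref{lem:optimism}. Hence the corollary follows from the optimism lemma essentially by substituting the reward $r$ with the cell-constant diameter reward $r\uc{D}$.
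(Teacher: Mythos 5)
Your proposal is correct and follows essentially the same route as the paper, whose entire proof of this corollary is the single line ``the proof follows using the same arguments as the proof of Lemma~\ref{lem:optimism}.'' You have simply made explicit what that one-liner leaves implicit --- recasting $\diamb{t}{\phi}$ as the average reward of $\phi$ under the cell-constant reward $\diamc{q\inv_{\phi,t}(\cdot)}$, rerunning the induction from Lemma~\ref{lem:optimism} with the (correct) observation that cell-constancy removes the Lipschitz correction term, and invoking $\cG\uc{\mbox{Model}}$ to place the discretized true kernel inside $\cC_{\phi,t}$.
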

\begin{proof}
    The proof follows using the same arguments as the proof of Lemma~\ref{lem:optimism}.
\end{proof}

The next result follows from Lemma~\ref{lem:bd_span_epvib}.
\begin{cor}\label{cor:bd_span_epdi}
    Consider the iterates in \eqref{eq:epdi}.~On the set $\cG\uc{\mbox{Model}}$, we have,
    \nal{
    \spn{\ovl{D}^{\phi,t}_i} \leq C_{\ovl{V}},~\forall i \in \bN, t \in \bN,
    }
    where $C_{\ovl{V}}$ is as defined in~\eqref{def:C_V}, and $m\ust = \ceil{\log_{\frac{1}{\alpha}}{(C)}}+1$.
\end{cor}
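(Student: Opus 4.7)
The plan is to mirror the proof of Lemma~\ref{lem:bd_span_epvib} almost line-for-line, since the iterates $\{\ovl{D}^{\phi,t}_i\}$ in~\eqref{eq:epdi} have exactly the same structure as $\{\ovl{V}^{\phi,t}_i\}$ in~\eqref{eq:epvib}, the only change being that the bounded per-step ``reward'' $r(s,\phi(s)) + (1+L_\phi) L_r \diamc{q_{\phi,t}\inv(s)}$ is replaced by the per-step ``reward'' $\diamc{q_{\phi,t}\inv(s)}$, which, because the level-$0$ cells have diameter $1$, also lies in $[0,1]$. In the language of Corollary~\ref{cor:conv_diamit}, $\ovl{D}^{\phi,t}_i(s)$ is the $i$-step optimistic value function of the extended MDP $\tilde{\cM}^{+}_{\phi,t}$ restricted to the extended actions that pick the first component according to $\phi$. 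Thus the question reduces to bounding the span of the $i$-step value function of an extended MDP whose per-step payoffs are uniformly bounded by $1$.

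I would then split exactly as in Lemma~\ref{lem:bd_span_epvib} into the two regimes determined by the largest active cell:
\begin{align*}
\max\{\diamc{q_{\phi,t}\inv(s)} : s \in \bar{S}_{\phi,t}\} ~\gtrless~ \frac{1-\alpha}{C_{\eta,\phi}(m\ust+1)}.
\end{align*}
In Case~1, I would re-run the hitting-time argument: using Assumption~\ref{assum:unif_ergodic}, Assumption~\ref{assum:stn_dist}, and the fact that the confidence set $\cC_{\phi,t}$ contains kernels which, when desired, place mass at least $\tfrac{1}{2}C_{\eta,\phi}\diamc{q_{\phi,t}\inv(s)}$ on any target representative, any state of $\tilde{\cM}^{+}_{\phi,t}$ can be reached from any other in expected time at most $\tfrac{\bar m(\bar m +5)}{2} + \tfrac{6}{\kappa'}\!\bigl(\tfrac{C_{\eta,\phi}\sqrt{d_\cS}(m\ust+1)}{1-\alpha}\bigr)^{d_\cS} + \tfrac{2(m\ust+1)}{1-\alpha}$. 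Since the per-step diameter payoff is at most $1$, the span of $\ovl{D}^{\phi,t}_i$ is bounded by the same quantity as~\eqref{ub:case_1}.

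For Case~2, the same three ingredients used for the value iteration go through unchanged: Lemma~\ref{lem:diff_kern_comp} gives $\max_{s}\|\theta^{(m\ust)}(s,\cdot)-p_t^{(m\ust)}(s,\cdot;\phi)\|_1 \le \tfrac{1-\alpha}{2}$ for every $\theta\in\cC_{\phi,t}$, Lemma~\ref{lem:pn_contra} yields a $2\alpha$ span contraction for $p_t^{(m\ust)}$, and Lemma~\ref{lem:bdd_pval_spn} converts these into a span bound for value iteration driven by the bounded payoff $\diamc{q_{\phi,t}\inv(\cdot)}\in[0,1]$. Because this payoff is pointwise no larger than the payoff $r+(1+L_\phi)L_r\diamc{\cdot}$ used in Lemma~\ref{lem:bd_span_epvib}, the resulting span bound is dominated by~\eqref{ub:case_2}. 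Taking the maximum of the two cases gives $\spn{\ovl{D}^{\phi,t}_i}\le C_{\ovl V}$ on $\cG^{(\text{Model})}$.

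The only subtlety is Case~2: one must confirm that the constants absorbed by $C_{\ovl V}$ (which were calibrated to the reward $r+(1+L_\phi)L_r\diamc{\cdot}$ with Lipschitz constant $L_r$) remain valid when the payoff becomes $\diamc{q_{\phi,t}\inv(\cdot)}$. Since $\diamc{q_{\phi,t}\inv(\cdot)}$ is constant on each active cell and bounded by $1$, its effective ``reward Lipschitz constant'' is $0$ on $\bar S_{\phi,t}$ and its magnitude is no larger than the original payoff, so the bound~\eqref{ub:case_2} holds a fortiori. This is the step I expect to require the most careful bookkeeping, but no new ideas beyond those already used in Lemma~\ref{lem:bd_span_epvib}.
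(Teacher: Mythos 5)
Your proposal is correct and matches the paper's approach: the paper's proof of Corollary~\ref{cor:bd_span_epdi} is simply the one-line observation that it follows from Lemma~\ref{lem:bd_span_epvib}, and your sketch fills in exactly the reasoning behind that reduction (the iterates differ only in the per-step payoff, which is still bounded by $1$ since level-$0$ cells have diameter $1$, so both cases of the span argument go through unchanged).
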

\begin{proof}
    The proof follows from Lemma~\ref{lem:bd_span_epvib}.
\end{proof}
The next result is a corollary of Lemma~\ref{lem:ub_opt_index}.
\begin{cor}\label{cor:ub_diam_est}
    On the set $\cG\uc{\mbox{Model}}$, 
    \begin{align*}
        \lim_{i \to \infty}{\br{\ovl{D}^{\phi,t}_{i+1}(s) - \ovl{D}^{\phi,t}_{i}(s)}} \leq c_{\mbox{diam}} \cdot \diamb{t}{\phi},~\forall t \in \{0,1, \ldots, T-1\},\phi \mbox{ such that } \phi \in A_t.
    \end{align*}
    where 
    \begin{align}
        c_{\mbox{diam}} := \frac{C_{\eta,\phi} C_{\ovl{V}}}{2} + 2 (1 + L_\phi) L_r, \label{def:c_diam}
    \end{align}
    where $C_{\eta,\phi} = 3(1 + (1 + L_\phi)L_p)$ and $C_{\ovl{V}}$ is as defined in~\eqref{def:C_V}.
\end{cor}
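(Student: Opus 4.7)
The claim is the direct analogue of Lemma~\ref{lem:ub_opt_index} in which the extended-MDP reward $\tilde r_{\phi,t}$ is replaced by the diameter reward $d_{\phi,t}(s) = \diamc{q\inv_{\phi,t}(s)}$ that drives the iterates~\eqref{eq:epdi}, while $J_\cM(\phi)$ on the right-hand side is replaced by the long-run average of $d_{\phi,t}$, which is precisely $\diamb{t}{\phi}$. The plan is therefore to mirror the induction-plus-Ces\`aro argument used in the proof of Lemma~\ref{lem:ub_opt_index}, exploiting two structural simplifications that are specific to the diameter reward.

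The first step is to introduce the auxiliary policy-evaluation iterates under the true kernel with the diameter as stage cost,
\nal{
    V^{\phi,d}_0(s\up) &= 0,\\
    V^{\phi,d}_{i+1}(s\up) &= d_{\phi,t}(s\up) + \int_{\cS}{p(s\up,\phi(s\up),ds\upp)~ V^{\phi,d}_i(s\upp)},
}
and to prove by induction on $i$ the pointwise bound
\nal{
    \ovl{D}^{\phi,t}_i(s) \le V^{\phi,d}_i(s\up) + c_{\mbox{diam}}~ \bE_{p,\phi}\sqbr{\sum_{j=0}^{i-1}{\diamc{q_t\inv(s_j,\phi(s_j))}} \middle| s_0=s\up},
}
for every $s \in S_{\phi,t}$ and every $s\up \in q\inv_{\phi,t}(s)$. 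The induction step expands $\ovl{D}^{\phi,t}_{i+1}(s)$ through~\eqref{eq:epdi}, swaps the maximizing $\te_i \in \cC_{\phi,t}$ for the (discretized) true transition kernel at the cost of $\eta_{\phi,t}(q\inv_{\phi,t}(s))~\spn{\ovl{D}^{\phi,t}_i}$, bounds this quantity by $\tfrac{1}{2}C_{\eta,\phi}C_{\ovl V}\,\diamc{q\inv_{\phi,t}(s)}$ using~\eqref{relate:confrad_diam} together with the span bound from Corollary~\ref{cor:bd_span_epdi}, and finally applies the induction hypothesis cell by cell. The crucial simplification over Lemma~\ref{lem:ub_opt_index} is that $d_{\phi,t}$ is itself piecewise constant on $\cP_{\phi,t}$, so $d_{\phi,t}(s)=d_{\phi,t}(s\up)$ whenever $s\up \in q\inv_{\phi,t}(s)$ and no Lipschitz-discretization bias is incurred when substituting $s\up$ for $s$ in the reward term. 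The $2(1+L_\phi)L_r$ summand inside $c_{\mbox{diam}}$ is therefore slack, kept only so that the constant matches the form of $C_{ub}$ from Lemma~\ref{lem:ub_opt_index}.

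Dividing the induction inequality by $i$ and sending $i\to\infty$ concludes the proof exactly as in Lemma~\ref{lem:ub_opt_index} and Lemma~\ref{lem:optimism}. By the Ces\`aro-mean identity, $\lim_{i}(\ovl D^{\phi,t}_{i+1}(s)-\ovl D^{\phi,t}_i(s)) = \lim_i i\inv \ovl D^{\phi,t}_i(s)$; uniform ergodicity (Assumption~\ref{assum:unif_ergodic}) and boundedness of $d_{\phi,t}$ give $\lim_i i\inv V^{\phi,d}_i(s\up) = \int d_{\phi,t}\, d\mu\uc{\infty}_{\phi,p} = \diamb{t}{\phi}$; and the same ergodic identity yields $\lim_i i\inv \bE_{p,\phi}[\sum_{j=0}^{i-1}\diamc{q\inv_t(s_j,\phi(s_j))}] = \diamb{t}{\phi}$. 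Combining these three limits with the inductive inequality produces a bound of the form $(1+c_{\mbox{diam}})\diamb{t}{\phi}$ on the left-hand side, after which the leading $1$ is absorbed into $c_{\mbox{diam}}$ using the slack in its definition.

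The main obstacle I anticipate is carrying out rigorously the step that transfers the discrete sum $\sum_{\ts \in S_{\phi,t}} p_t(s\up,\ts;\phi)~\ovl D^{\phi,t}_i(\ts)$ into the continuous integral $\int p(s\up,\phi(s\up),ds\upp)~V^{\phi,d}_i(s\upp)$ while controlling the error within the induction hypothesis. This is the counterpart of the appeal to Lemma~\ref{lem:bdd_dotdifLv} in the proof of Lemma~\ref{lem:ub_opt_index}, but here it is more delicate because the iterates~\eqref{eq:epdi} are indexed on $S_{\phi,t}$ (rather than on the finer grid $\bar S_{\phi,t}$ used in~\eqref{eq:epvib}). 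Once this discrete-to-continuous step is set up — which is where the Lipschitz continuity of $\phi$ and $p$ (Assumption~\ref{assum:lip}\eqref{assum:lip_p}) and the piecewise-constancy of $\ovl D^{\phi,t}_i$ on $\cP_{\phi,t}$ are used — the remainder of the argument is a routine translation of Lemma~\ref{lem:ub_opt_index}.
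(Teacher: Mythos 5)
Your proposal is correct and takes essentially the same route as the paper, whose entire proof of this corollary is the single sentence ``The proof follows from Lemma~\ref{lem:ub_opt_index}''; you have simply spelled out the substitution of the stage cost $d_{\phi,t}(s)=\diamc{q\inv_{\phi,t}(s)}$ for $\tilde r_{\phi,t}$ and of $\diamb{t}{\phi}$ for $J_\cM(\phi)$ that the authors leave implicit. The only point worth flagging is that absorbing the leading $1$ (from $\lim_i i\inv V^{\phi,d}_i = \diamb{t}{\phi}$) into the slack term $2(1+L_\phi)L_r$ of $c_{\mbox{diam}}$ requires $2(1+L_\phi)L_r \geq 1$, which is not guaranteed --- but this constant-level looseness is inherited verbatim from the paper's own reuse of the formula for $C_{ub}$ in \eqref{def:c_diam}, so it is not a defect of your argument relative to theirs.
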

\begin{proof}
    The proof follows from Lemma~\ref{lem:ub_opt_index}.
\end{proof}

Upon combining the results of Corollary~\ref{cor:diam_opt} and \ref{cor:ub_diam_est} we get,
\begin{align*}
    \frac{\diamb{t}{\phi}}{c_{\mbox{diam}}} \leq \frac{1}{c_{\mbox{diam}}}\lim_{i \to \infty}{\br{\ovl{D}^{\phi,t}_{i+1}(s) - \ovl{D}^{\phi,t}_{i}(s)}} \leq \diamb{t}{\phi},
\end{align*}
which shows that $\underbar{\mbox{diam}}\uc{b}_t(\phi)$ is indeed a tight lower bound on $\diamb{t}{\phi}$.
\section{Properties of Lipschitz and Ergodic MDPs}
\label{app:mdp_prop}
Consider an MDP $\cM = (\cS, \cA, p, r)$ and a policy $\phi \in \Phi$.~Assume that $\cM$ satisfies Assumption~\ref{assum:unif_ergodic}.~Hence, there exists a unique invariant distribution $\mu\uc{\infty}_{\phi,p}$ for the CMP induced by the transition kernel $p$ under the application of policy $\phi$.~One can show that there exists a solution to the following Poisson equation~\citep{hernandez2012further},
\begin{align}
    J + h(s) = r(s,\phi(s)) + \int_{\cS}{h(s\up) ~p(s,\phi(s),ds\up)},~\forall s \in \cS.\label{eq:pois}
\end{align}
Specifically, $(J_\cM(\phi), h_\cM(\cdot; \phi)) \in \bR \times \bR^\cS$ satisfy \eqref{eq:pois}, where
\begin{align}
    J_\cM(\phi) &= \underset{T\to\infty}{\lim\inf}{\frac{1}{T} \bE\sqbr{\sum_{t=0}^{T-1}{r(s_t,\phi(s_t))} \mid s_0=s}} = \int_{\cS}{r(s,\phi(s)) ~d\mu\uc{\infty}_{\phi,p}(s)}, \label{def:Jphi}\\
  \mbox{ and }  h_\cM(s\up; \phi) &= \sum_{t=0}^{\infty}{\int_{\cS}{r(s,\phi(s)) ~d(\mu\uc{\infty}_{\phi,p} - \mu\uc{t}_{\phi,p,s\up})(s)}},~\forall s\up \in \cS. \label{def:hphi}
\end{align}
Recall that $\mu\uc{t}_{\phi,p,s}$ denotes the distribution of $s_t$ given $s_0 = s$ where $\{s_t\}$ is the CMP induced by the transition kernel $p$ under the application of policy $\phi$.~$h_\cM(\cdot; \phi)$ is the relative value function of the policy $\phi$~\citep{puterman2014markov}.~Later, in Lemma~\ref{lem:bdd_rvf_spn}, we derive a bound on the span of the relative value function of any policy from the set $\Phi$ if the MDP satisfies Assumption~\ref{assum:unif_ergodic}. 

We now establish two results related to the CMP $\{s_t\}$, which will be used subsequently to derive a novel sensitivity result~(Proposition~\ref{prop:sens}) for Markov chains on general state space. We will then end this section with the proof of Theorem~\ref{thm:lip_avg_reward}, which shows the Lipschitz property of $J_\cM(\phi)$.~Let us denote the $t$-stage transition kernel under the application of policy $\phi$ by $p\uc{t}_\phi$, i.e., for every $t \in \bN$,
\begin{align}
    p\uc{t}_\phi(s,B) = \bP(s_{\tau+t} \in B \mid s_\tau = s, a_{t\up} = \phi(s_{t\up}), t\up = \tau, \tau + 1, \ldots, \tau+t-1), \label{def:p_tstage}
\end{align}
for all $s \in \cS, B \in \cB_\cS$ and $\tau \in \bN$.~Our next result shows when $t$ is sufficiently large, then Assumption~\ref{assum:unif_ergodic} is equivalent to saying that $p\uc{t}_\phi$ has the ``contractive property,'' ~\eqref{def:contractive}. 

\begin{lem}\label{lem:pn_contra}
    Consider an MDP $\cM = (\cS, \cA, p, r)$ that satisfies Assumption~\ref{assum:unif_ergodic}, and consider a policy $\phi \in \Phi$.
    Then,
    \begin{align}
        \norm{p\uc{i}_\phi(s, \cdot) - p\uc{i}_\phi(s\up,\cdot)}_{TV} \leq 2 \alpha,~\forall s, s\up \in \cS, i \geq m\ust, \label{def:contractive}
    \end{align}
    where $p\uc{i}_\phi$ is the $i$-stage transition probability of the CMP induced by the transition kernel $p$ under the application of policy $\phi$ as defined in~\eqref{def:p_tstage}, and $m\ust = \ceil{\log_\frac{1}{\alpha}{(C)}} + 1$. Conversely, if for every $\phi \in \Phi$,
    \begin{align*}
        \norm{p\uc{m}_\phi(s, \cdot) - p\uc{m}_\phi(s\up,\cdot)}_{TV} \leq 2 \alpha\up,~\forall s, s\up \in \cS,
    \end{align*}
    for some $m \in \bN$, then Assumption~\ref{assum:unif_ergodic} holds with $C = \frac{2}{\alpha\up}$ and $\alpha = {\alpha\up}^{\frac{1}{m}}$.
\end{lem}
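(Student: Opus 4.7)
The plan is to prove the two directions separately, both of which reduce to identifying $p\uc{i}_\phi(s,\cdot)$ with $\mu\uc{i}_{\phi,p,s}$ (notation of Assumption~\ref{assum:unif_ergodic}) and then pushing either the stationary distribution or the Dobrushin contraction through a triangle inequality.

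For the forward direction, I would insert $\mu\uc{\infty}_{\phi,p}$ between $p\uc{i}_\phi(s,\cdot)$ and $p\uc{i}_\phi(s\up,\cdot)$ and apply the triangle inequality for $\norm{\cdot}_{TV}$, so that
\nal{
\norm{p\uc{i}_\phi(s,\cdot) - p\uc{i}_\phi(s\up,\cdot)}_{TV} \leq 2 C \alpha^i, \qquad \forall i \in \bN.
}
It then remains to check that $C \alpha^i \leq \alpha$ for every $i \geq m\ust$. Since $\alpha \in (0,1)$, the function $i \mapsto \alpha^i$ is decreasing, so it suffices to verify the bound at $i = m\ust$: by the definition $m\ust = \ceil{\log_{1/\alpha}(C)} + 1$ we have $\alpha^{m\ust - 1} \leq \alpha^{\log_{1/\alpha}(C)} = 1/C$, whence $C \alpha^{m\ust} \leq \alpha$. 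Chaining gives the claimed $2\alpha$ bound.

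For the converse, I would use the Dobrushin coefficient $\beta(P) := \tfrac{1}{2}\sup_{s,s\up}\norm{P(s,\cdot) - P(s\up,\cdot)}_{TV}$ of a Markov kernel $P$, together with its defining contraction property: for any probability measures $\mu,\nu$ on $(\cS,\cB_\cS)$,
\nal{
\norm{\mu P - \nu P}_{TV} \leq \beta(P)\, \norm{\mu - \nu}_{TV}.
}
The hypothesis gives $\beta(p\uc{m}_\phi) \leq \alpha\up$, so $p\uc{m}_\phi$ is a contraction on the complete metric space of probability measures endowed with $\norm{\cdot}_{TV}$. Banach's fixed-point theorem then yields a unique invariant probability measure $\mu\uc{\infty}_{\phi,p}$ for $p\uc{m}_\phi$. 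To see that $\mu\uc{\infty}_{\phi,p}$ is also $p_\phi$-invariant, I would observe that $p_\phi$ and $p\uc{m}_\phi$ commute, hence $\mu\uc{\infty}_{\phi,p} p_\phi$ is itself a fixed point of $p\uc{m}_\phi$; uniqueness forces $\mu\uc{\infty}_{\phi,p} p_\phi = \mu\uc{\infty}_{\phi,p}$.

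To obtain the exponential TV rate, write $t = km + r$ with $0 \leq r < m$, apply the contraction $k$ times after factoring out $p\uc{r}_\phi$, and use that the TV distance between two probability measures is at most $2$:
\nal{
\norm{p\uc{t}_\phi(s,\cdot) - \mu\uc{\infty}_{\phi,p}}_{TV} = \norm{\br{\delta_s p\uc{r}_\phi - \mu\uc{\infty}_{\phi,p}}(p\uc{m}_\phi)^k}_{TV} \leq 2 (\alpha\up)^k.
}
Since $k \geq t/m - 1$, one has $(\alpha\up)^k \leq (\alpha\up)^{-1}\,({\alpha\up}^{1/m})^t$, which yields Assumption~\ref{assum:unif_ergodic} with $C = 2/\alpha\up$ and $\alpha = (\alpha\up)^{1/m}$, as claimed. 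The main obstacle, if any, is the Dobrushin contraction inequality itself on a general (non-discrete) state space; this is a classical measure-theoretic result (see, e.g., Meyn--Tweedie) that I would invoke rather than reprove, so the rest of the argument is bookkeeping to pin down the constants $C$ and $\alpha$.
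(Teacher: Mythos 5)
Your proof is correct and follows essentially the same route as the paper: the forward direction inserts $\mu^{(\infty)}_{\phi,p}$ and applies the triangle inequality together with $C\alpha^{m^{\star}}\le\alpha$, and the converse iterates the TV-contraction of the $m$-step kernel (which the paper derives inline via a $\sup_A$ argument and Lemma~\ref{lem:bdd_dotdifLv} rather than citing the Dobrushin coefficient) to obtain $\alpha'^{\lfloor t/m\rfloor}\le(1/\alpha')\,(\alpha'^{1/m})^{t}$, with the factor $2$ coming from $\norm{\delta_s-\mu^{(\infty)}_{\phi,p}}_{TV}\le 2$. The one place you go beyond the paper is the existence step in the converse: the paper simply plugs in $\mu^{(\infty)}_{\phi,p}$ without arguing that the contraction hypothesis alone guarantees a unique stationary distribution, whereas your Banach fixed-point argument for $p^{(m)}_\phi$ plus the commutation trick supplies exactly this, making your treatment the more complete of the two.
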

\begin{proof}
    We first note that $p\uc{i}_\phi(s, \cdot) = \mu\uc{i}_{\phi,p,s}$ for every $s \in \cS$.~Hence, for any $s, s\up \in \cS$,
    \begin{align*}
        \norm{p\uc{i}_\phi(s, \cdot) - p\uc{i}_\phi(s\up, \cdot)}_{TV} \leq \norm{\mu\uc{i}_{\phi,p,s} - \mu\uc{\infty}_{\phi,p}}_{TV} + \norm{\mu\uc{i}_{\phi,p,s\up} - \mu\uc{\infty}_{\phi,p}}_{TV}.
    \end{align*}
    Also, $C \alpha^i \leq \alpha$ for $i \geq \log_\frac{1}{\alpha}{(C)} + 1$.~Now, using Assumption~\ref{assum:unif_ergodic}, we have that when $i \geq m\ust$, then the following holds,
    \begin{align*}
        \norm{p\uc{i}_\phi(s, \cdot) - p\uc{i}_\phi(s\up, \cdot)}_{TV} &\leq \norm{\mu\uc{i}_{\phi,p,s} - \mu\uc{\infty}_{\phi,p}}_{TV} + \norm{\mu\uc{i}_{\phi,p,s\up} - \mu\uc{\infty}_{\phi,p}}_{TV}\\
        &\leq 2 \alpha.
    \end{align*}
    This concludes the proof of the first claim.
    
    Now, we prove the second claim. Consider the CMP that is described by the transition kernel $p$ and evolves under the application of the policy $\phi$. Consider two copies of this CMP, where these copies differ in the distribution of the initial state. Denote these distributions by $\mu\uc{0}_1$ and $\mu\uc{0}_2$. Denote the distributions of $s_i$ in the corresponding processes by $\mu\uc{i}_1$ and $\mu\uc{i}_2$, respectively. We show the following:
    \begin{align}\label{ineq:geom_close}
        \norm{\mu\uc{i}_{1} - \mu\uc{i}_{2}}_{TV} \leq \tilde{C} \cdot \tilde{\alpha}^t \norm{\mu\uc{0}_{1} - \mu\uc{0}_{2}}_{TV},~\forall i \in \bN,
    \end{align}
    where $\tilde{C} = \frac{1}{\alpha\up}$ and $\tilde{\alpha} = {\alpha\up}^{\frac{1}{m}}$.~The claim then follows by letting $\mu\uc{0}_{1} = \delta_s$ and $\mu\uc{0}_{2} = \mu\uc{\infty}_{\phi,p}$.~Note that,
    \begin{align}
        \norm{\mu\uc{m}_{1} - \mu\uc{m}_{2}}_{TV} &= 2~ \sup_{A \subseteq \cS}{\flbr{(\mu\uc{m}_{1} - \mu\uc{m}_{2})(A)}} \notag\\
        &= 2~\sup_{A \subseteq \cS}{\flbr{\int_{\cS}{p\uc{m}_\phi(s,A) ~d(\mu\uc{0}_{1} - \mu\uc{0}_{2})(s)}}} \notag\\
        &\leq \sup_{\substack{A \subseteq \cS \\ s,s\up \in \cS}}{\flbr{p\uc{m}_\phi(s,A) - p\uc{m}_\phi(s\up,A)}} \norm{\mu\uc{0}_{1} - \mu\uc{0}_{2}}_{TV} \notag\\
        &\leq \alpha\up \norm{\mu\uc{0}_{1} - \mu\uc{0}_{2}}_{TV}. \label{bdd:mu1m-mu2m}
    \end{align}
    Also, note that for any $i \in \bN$,
    \begin{align}
        \norm{\mu\uc{i}_{1} - \mu\uc{i}_{2}}_{TV} &= 2~ \sup_{A \subseteq \cS}{\flbr{(\mu\uc{i}_{1} - \mu\uc{i}_{2})(A)}} \notag\\
        &= 2~\sup_{A \subseteq \cS}{\flbr{\int_{\cS}{p(s,\phi(s),A) ~d(\mu\uc{i-1}_{1} - \mu\uc{i-1}_{2})(s)}}} \notag\\
        &\leq \sup_{\substack{A \subseteq \cS \\ s,s\up \in \cS}}{\flbr{p(s,\phi(s),A) - p(s\up,\phi(s\up),A)}} \norm{\mu\uc{i-1}_{1} - \mu\uc{i-1}_{2}}_{TV} \notag\\
        &\leq \norm{\mu\uc{i-1}_{1} - \mu\uc{i-1}_{2}}_{TV}, \label{bdd:mu1t-mu2t}
    \end{align}
    where the first step follows from the definition of the total variation norm, while the third step follows from Lemma~\ref{lem:bdd_dotdifLv}.~Combining \eqref{bdd:mu1m-mu2m} and \eqref{bdd:mu1t-mu2t}, we can write
    \begin{align*}
        \norm{\mu\uc{i}_{1} - \mu\uc{i}_{2}}_{TV} &\leq {\alpha\up}^{\floor{\frac{i}{m}}} \norm{\mu\uc{0}_{1} - \mu\uc{0}_{2}}_{TV}\\
        &\leq \frac{1}{\alpha\up}\br{{\alpha\up}^{\frac{1}{m}}}^i \norm{\mu\uc{0}_{1} - \mu\uc{0}_{2}}_{TV},~\forall i \in \bN.
    \end{align*}
    This concludes the proof of the lemma.
\end{proof}

Consider two CMPs $\{s_{1,i}\}$ and $\{s_{2,i}\}$, both of which are induced by $\phi$ operating on the MDP $\cM$ that has transition kernel $p$. Their initial state distributions are $\mu\uc{0}_1$ and $\mu\uc{0}_2$ respectively. Next, we upper-bound on the cumulative sum of distances of the distributions of $s_{1,i}$ and $s_{2,i}$.
\begin{lem}\label{lem:sum_tv_dist}
    Consider an MDP $\cM = (\cS, \cA, p, r)$ that satisfies Assumption~\ref{assum:unif_ergodic}, and a policy $\phi \in \Phi$.~Let $\{s_{1,i}\}$ and $\{s_{2,i}\}$ be two CMPs induced by $\phi$ when it is applied to $\cM$. Let $\mu\uc{i}_1$ and $\mu\uc{i}_2$ denote the distributions of $s_{1,i}$ and $s_{2,i}$, respectively.~Then,
    \begin{align*}
        \sum_{i=0}^{\infty}{\norm{\mu\uc{i}_1 - \mu\uc{i}_2}_{TV}} \leq \frac{m\ust}{1 - \alpha} \norm{\mu\uc{0}_{1} - \mu\uc{0}_{2}}_{TV},
    \end{align*}
    where $m\ust = \ceil{\log_\frac{1}{\alpha}{(C)}} + 1$.
\end{lem}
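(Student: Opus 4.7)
The plan is to combine two estimates already established in the proof of Lemma~\ref{lem:pn_contra}. From the computation displayed in~(bdd:mu1t-mu2t), the one-step bound $\norm{\mu\uc{i}_1 - \mu\uc{i}_2}_{TV} \leq \norm{\mu\uc{i-1}_1 - \mu\uc{i-1}_2}_{TV}$ holds (the Markov kernel is non-expansive in TV distance).~Applying the same kind of manipulation as in~(bdd:mu1m-mu2m), but with $m$ replaced by $m\ust$, and using the forward-direction conclusion $\sup_{s,s\up,A}\flbr{p\uc{m\ust}_\phi(s,A) - p\uc{m\ust}_\phi(s\up,A)} \leq \alpha$ of Lemma~\ref{lem:pn_contra}, one obtains the per-block contraction
\begin{align*}
    \norm{\mu\uc{m\ust}_1 - \mu\uc{m\ust}_2}_{TV} \leq \alpha \norm{\mu\uc{0}_1 - \mu\uc{0}_2}_{TV}.
\end{align*}

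Iterating this per-block bound gives $\norm{\mu\uc{km\ust}_1 - \mu\uc{km\ust}_2}_{TV} \leq \alpha^k \norm{\mu\uc{0}_1 - \mu\uc{0}_2}_{TV}$ for every $k \in \{0\} \cup \bN$.~The non-expansive property then lets us dominate every term inside a block of length $m\ust$ by the leading term of that block, i.e., $\norm{\mu\uc{km\ust + j}_1 - \mu\uc{km\ust + j}_2}_{TV} \leq \norm{\mu\uc{km\ust}_1 - \mu\uc{km\ust}_2}_{TV}$ for $j = 0, 1, \ldots, m\ust - 1$.~Decomposing the infinite sum into blocks of length $m\ust$ and summing the resulting geometric series yields
\begin{align*}
    \sum_{i=0}^{\infty}{\norm{\mu\uc{i}_1 - \mu\uc{i}_2}_{TV}} &= \sum_{k=0}^{\infty}{\sum_{j=0}^{m\ust-1}{\norm{\mu\uc{km\ust + j}_1 - \mu\uc{km\ust + j}_2}_{TV}}} \\
    &\leq m\ust \sum_{k=0}^{\infty}{\alpha^k \norm{\mu\uc{0}_1 - \mu\uc{0}_2}_{TV}} = \frac{m\ust}{1 - \alpha} \norm{\mu\uc{0}_1 - \mu\uc{0}_2}_{TV},
\end{align*}
which is the desired estimate.

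The only subtlety worth flagging is that the contraction furnished by Lemma~\ref{lem:pn_contra} acts once per $m\ust$ steps rather than once per step, so one cannot hope to sum $\alpha^i$ directly in $i$.~The correct move is to partition the time axis into blocks of length $m\ust$, apply geometric decay across blocks, and use the non-expansive one-step property within each block.~The factor $m\ust$ appearing in the statement of the lemma is precisely this block length, and beyond this structural observation I do not anticipate any genuine obstacle.
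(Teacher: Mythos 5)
Your proposal is correct and follows essentially the same route as the paper: the paper likewise combines the $m\ust$-step contraction from Lemma~\ref{lem:pn_contra} with the one-step non-expansiveness of the kernel, and decomposes the infinite sum over the residue classes modulo $m\ust$ (which is the same block decomposition you use, with the order of summation swapped) before summing the geometric series. No gaps.
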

\begin{proof}
    From Lemma~\ref{lem:pn_contra}, we have that,
    \begin{align}\label{ineq:pn_contra}
        \norm{\mu\uc{i}_{1} - \mu\uc{i}_{2}}_{TV} \leq \alpha \norm{\mu\uc{0}_{1} - \mu\uc{0}_{2}}_{TV}, \mbox{ for } i \geq m\ust.
    \end{align}
    Also, for $i \in \bN$ we have,
    \begin{align*}
        \norm{\mu\uc{i}_{1} - \mu\uc{i}_{2}}_{TV} &= 2~ \sup_{A \subseteq \cS}{\flbr{(\mu\uc{i}_{1} - \mu\uc{i}_{2})(A)}} \\
        &= 2~\sup_{A \subseteq \cS}{\flbr{\int_{\cS}{p(s,\phi(s),A) ~d(\mu\uc{i-1}_{1} - \mu\uc{i-1}_{2})(s)}}}\\
        &\leq \sup_{\substack{A \subseteq \cS \\ s,s\up \in \cS}}{\flbr{p(s,\phi(s),A) - p(s\up,\phi(s\up),A)}} \norm{\mu\uc{i-1}_{1} - \mu\uc{i-1}_{2}}_{TV} \\
        &\leq \norm{\mu\uc{i-1}_{1} - \mu\uc{i-1}_{2}}_{TV},
    \end{align*}
    where the first step follows from the definition of the total variation norm, and the third step follows from Lemma~\ref{lem:bdd_dotdifLv}.~Hence, 
    \begin{align}\label{ineq:non_expan}
        \norm{\mu\uc{i}_{1} - \mu\uc{i}_{2}}_{TV} \leq \norm{\mu\uc{0}_{1} - \mu\uc{0}_{2}}_{TV},~\forall i \in \bN
    \end{align}
    Using \eqref{ineq:pn_contra} iteratively, and using \eqref{ineq:non_expan}, we can write,
    \begin{align*}
        \sum_{t=0}^{\infty}{\norm{\mu\uc{i}_1 - \mu\uc{i}_2}_{TV}} &= \sum_{m=0}^{m\ust-1}{\sum_{i=0}^{\infty}{\norm{\mu\uc{m+i\cdot m\ust}_{1} - \mu\uc{m+i\cdot m\ust}_{2}}_{TV}}}\\
        &\leq \frac{m\ust}{1 - \alpha} \norm{\mu\uc{0}_{1} - \mu\uc{0}_{2}}_{TV}.
    \end{align*}
    where $m\ust : = \ceil{\log_\frac{1}{\alpha}{(C)}} + 1$.~This concludes the proof.
\end{proof}
We shall now derive a novel sensitivity result for the CMPs induced by applying different policies to the MDP $\cM$.
\begin{prop}[Sensitivity]\label{prop:sens}
    Consider a MDP $\cM = (\cS, \cA, p, r)$ that satisfies Assumption~\ref{assum:lip}\eqref{assum:lip_p} and~\ref{assum:unif_ergodic}. Let $\phi_1, \phi_2 \in \Phi$ be two stationary deterministic policies. Then,
    \begin{align*}
        \norm{\mu^{(\infty)}_{\phi_1,p} - \mu^{(\infty)}_{\phi_2,p}}_{TV} \leq \frac{m\ust L_p}{1 - \alpha} \rho_{\Phi,\mu^{(\infty)}_{\phi_1,p}}(\phi_1, \phi_2) \leq \frac{m\ust L_p}{1 - \alpha} \rho_{\Phi,\infty}(\phi_1, \phi_2).
    \end{align*}
\end{prop}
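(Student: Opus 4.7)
The plan is to derive a telescoping identity that expresses $\mu\uc{\infty}_{\phi_1,p} - \mu\uc{\infty}_{\phi_2,p}$ as a convergent series of pushforwards of a single signed measure under powers of the transition operator of $\phi_2$, and then to bound that series using Lemma~\ref{lem:sum_tv_dist} together with Assumption~\ref{assum:lip}\eqref{assum:lip_p}. Write $\mu_i := \mu\uc{\infty}_{\phi_i,p}$, let $p\uc{t}_\phi$ denote the $t$-step kernel of~\eqref{def:p_tstage}, and for a (signed) measure $\nu$ and kernel $K$ adopt the shorthand $\nu K(B) := \int_{\cS} K(s,B)\, d\nu(s)$.

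First I would set up the telescope. Since $\mu_1 p_{\phi_1} = \mu_1$, we have $\mu_1 p\uc{T}_{\phi_1} = \mu_1$ for every $T$, and hence
$$
\mu_1 - \mu_1 p\uc{T}_{\phi_2} = \mu_1 p\uc{T}_{\phi_1} - \mu_1 p\uc{T}_{\phi_2} = \sum_{i=0}^{T-1} \mu_1\, p\uc{T-i-1}_{\phi_1}\br{p_{\phi_1} - p_{\phi_2}} p\uc{i}_{\phi_2} = \sum_{i=0}^{T-1} \sigma\, p\uc{i}_{\phi_2},
$$
where $\sigma := \mu_1\br{p_{\phi_1} - p_{\phi_2}}$ and the last equality again uses $\mu_1 p\uc{T-i-1}_{\phi_1} = \mu_1$. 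Assumption~\ref{assum:unif_ergodic} applied to $\phi_2$ gives $\mu_1 p\uc{T}_{\phi_2} \to \mu_2$ in total variation, so passing to $T \to \infty$ yields the key identity $\mu_1 - \mu_2 = \sum_{i=0}^{\infty} \sigma\, p\uc{i}_{\phi_2}$, with convergence of the right-hand side justified by the next step.

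Next I would bound the series. Since $\sigma(\cS) = 0$, its Hahn--Jordan decomposition $\sigma = \sigma_+ - \sigma_-$ satisfies $\sigma_+(\cS) = \sigma_-(\cS) = \norm{\sigma}_{TV}/2$. Discarding the trivial case $\sigma = 0$ and normalizing $\tilde{\sigma}_\pm := 2\sigma_\pm/\norm{\sigma}_{TV}$ to probability measures, Lemma~\ref{lem:sum_tv_dist} applied to two CMPs under $\phi_2$ with initial laws $\tilde{\sigma}_+$ and $\tilde{\sigma}_-$ gives
$$
\sum_{i=0}^{\infty} \norm{\sigma\, p\uc{i}_{\phi_2}}_{TV} = \frac{\norm{\sigma}_{TV}}{2} \sum_{i=0}^{\infty} \norm{\tilde{\sigma}_+\, p\uc{i}_{\phi_2} - \tilde{\sigma}_-\, p\uc{i}_{\phi_2}}_{TV} \leq \frac{m\ust}{1-\alpha}\, \norm{\sigma}_{TV}.
$$

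Finally I would bound $\norm{\sigma}_{TV}$ using the Lipschitz kernel. By the standard estimate $\norm{\nu K}_{TV} \leq \int \norm{K(s,\cdot)}_{TV}\, d\nu(s)$ for a non-negative measure $\nu$ and a signed kernel $K$, followed by Assumption~\ref{assum:lip}\eqref{assum:lip_p} and sub-additivity of $\rho$, one gets
$$
\norm{\sigma}_{TV} \leq \int_{\cS} \norm{p(s,\phi_1(s),\cdot) - p(s,\phi_2(s),\cdot)}_{TV}\, d\mu_1(s) \leq L_p \int_{\cS} \rho_\cA\br{\phi_1(s),\phi_2(s)}\, d\mu_1(s) = L_p\, \rho_{\Phi,\mu_1}(\phi_1,\phi_2).
$$
Chaining the three bounds gives the first inequality of the claim, and the second follows from $\rho_{\Phi,\mu_1}(\phi_1,\phi_2) \leq \rho_{\Phi,\infty}(\phi_1,\phi_2)$, which is immediate from~\eqref{def:policy_metric}. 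The only genuinely delicate point is justifying the telescoping identity and the passage to the $T \to \infty$ limit; the feature that keeps everything clean is that invariance of $\mu_1$ under $p_{\phi_1}$ collapses every $\phi_1$-factor in the telescope, leaving each summand with the common source $\mu_1\br{p_{\phi_1}-p_{\phi_2}}$.
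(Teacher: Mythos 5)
Your proof is correct and follows essentially the same route as the paper's: both reduce to the identity $\mu_1 - \mu_2 = \sum_{i=0}^{\infty}\bigl(\mu_1(p_{\phi_1}-p_{\phi_2})\bigr)p\uc{i}_{\phi_2}$ (the paper gets it by unrolling a recursion for $\mu\uc{m}_1-\mu\uc{m}_2$, you by telescoping with the invariance of $\mu_1$), then control the series via Lemma~\ref{lem:sum_tv_dist} and finish with Assumption~\ref{assum:lip}\eqref{assum:lip_p}. Your Hahn--Jordan normalization step makes explicit how Lemma~\ref{lem:sum_tv_dist}, stated for probability measures, applies to the signed measure $\sigma$ --- a detail the paper leaves implicit --- but the argument is otherwise the same.
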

\begin{proof}
    First note that $\rho_{\Phi,\mu^{(\infty)}_{\phi_1,p}}(\cdot, \cdot) \leq \rho_{\Phi,\infty}(\cdot, \cdot)$.~It remains to prove the first inequality.~We introduce a few shorthand notation to simplify the presentation. Denote $\mu\uc{m}_{\phi_i,p}$ by $\mu\uc{m}_i$, $\mu^{(\infty)}_{\phi_i,p}$ by $\mu_i$ for $i = 1,2$. Let $p_i$ denote the Markov transition kernel induced by policy $\phi_i$, $i = 1, 2$. Denote $\Delta p := p_1 - p_2$ and $\delta\uc{m} := \mu\uc{m}_1 - \mu\uc{m}_2$.~Note that $\int{p_i ~d\mu\uc{m}_i}$ are the distributions at $m+1$. We have the following,
    \begin{align}\label{eq:recur_delta}
        \delta\uc{m} &= \int{p_1 ~d\mu\uc{m-1}_1} - \int{p_2 ~d\mu\uc{m-1}_2} \notag\\
        &= \int{\Delta p ~d\mu\uc{m-1}_1} + \int{p_2 ~d \delta\uc{m-1}} \notag\\
        &= \int{\Delta p ~d\mu\uc{m-1}_1} + \int{p_2 ~d \br{\int{\Delta p ~d\mu\uc{m-2}_1}}} + \int{p_2 ~d \br{\int{p_2 ~d \delta\uc{m-2}}}} \notag\\
        &= \int{\Delta p ~d\mu\uc{m-1}_1} + \int{p_2 ~d \br{\int{\Delta p ~d\mu\uc{m-2}_1}}} + \int{p_2 ~d \br{\int{p_2 ~d \br{\int{\Delta p ~d\mu\uc{m-3}_1}}}}} \notag\\
        &\quad + \int{p_2 ~d \br{\int{p_2 ~d  \br{\int{p_2 ~d \delta\uc{m-3}}}}}}.
    \end{align}
    Note that if the difference between the distributions of states at time $m-3$ is $\delta\uc{m-3}$, then the last term is the difference between distributions of states at time $m$ of two CMPs whose evolutions are governed by transition kernel $p_2$ from time $m-3$ to $m$, i.e.,
    \begin{align}
        &\int{p_2 ~d \br{\int{p_2 ~d  \br{\int{p_2 ~d \delta\uc{m-3}}}}}} = \bP_{\phi_2}(s_m \mid s_{m-3} \sim \mu\uc{m-3}_1) - \bP_{\phi_2}(s_m \mid s_{m-3} \sim \mu\uc{m-3}_2),
    \end{align}
    where $\bP_{\phi_2}$ denotes the probability measure associated with the CMP with transition kernel $p_2$.~Letting $m \to \infty$ in \eqref{eq:recur_delta} and unrolling the terms up to $k$ terms we have,
    \begin{align}\label{eq:recur_delta_1}
        \mu_1 - \mu_2 &= \int{\Delta p ~d\mu_1} + \int{p_2 ~d \br{\int{\Delta p ~d\mu}_1}} + \int{p_2 ~d \br{\int{p_2 ~d \br{\int{\Delta p ~d\mu_1}}}}} + \ldots \notag\\
        &\quad + \br{\bP_{\phi_2}(s_m \mid s_{m-k} \sim \mu_1) - \bP_{\phi_2}(s_m \mid s_{m-k} \sim \mu_2)}.
    \end{align}
    Now, letting $k \to \infty$, we have the following,
    \begin{align}\label{eq:recur_delta_2}
        \mu_1 - \mu_2 = \int{\Delta p ~d\mu_1} + \int{p_2 ~d \br{\int{\Delta p ~d\mu}_1}} + \int{p_2 ~d \br{\int{p_2 ~d \br{\int{\Delta p ~d\mu_1}}}}} + \ldots.
    \end{align}
    Note that $\int{\Delta p ~d\mu_1}=\int{p_1 ~d\mu_1}-\int{p_2 ~d\mu_1}$. Taking total variation norm of the measures present on both the sides of \eqref{eq:recur_delta_2}, then invoking the triangle inequality and Lemma~\ref{lem:sum_tv_dist}, we obtain,
    \begin{align}\label{eq:recur_delta_3}
        \norm{\mu_1 - \mu_2}_{TV} &\leq \frac{m\ust}{1-\alpha}\norm{\int{\Delta p ~d\mu_1}}_{TV} \notag\\
        &= \frac{m\ust}{1-\alpha}\sup_{A \subseteq \cS}{\flbr{\int_\cS{\Delta p(s,A) ~d\mu_1(s)}}} \notag\\
        &\leq \frac{m\ust}{1-\alpha}\int_\cS{\sup_{A \subseteq \cS}{\{\Delta p(s,A)\}} ~d\mu_1(s)} \notag\\
        &= \frac{m\ust}{1-\alpha}\int{\norm{\Delta p}_{TV} ~d\mu_1}.
    \end{align}
    Also,
    \begin{align*}
        \norm{\mu^{(\infty)}_{\phi_1,p} - \mu^{(\infty)}_{\phi_2,p}}_{TV} &\leq \frac{m\ust}{1-\alpha}\int_\cS{\norm{p(s,\phi_1(s),\cdot) - p(s,\phi_2(s),\cdot)}_{TV} ~d\mu_1(s)}\\
        &\leq \frac{m\ust L_p}{1-\alpha} \int_\cS{\rho_\cA(\phi_1(s),\phi_2(s)) ~d\mu_1(s)} \\
        &= \frac{m\ust L_p}{1 - \alpha} \rho_{\Phi,\mu^{(\infty)}_{\phi_1,p}}(\phi_1, \phi_2),
    \end{align*}
    where the second inequality follows from Assumption~\ref{assum:lip}\eqref{assum:lip_p}. This concludes the proof.
\end{proof}

\noindent\textbf{Proof of Theorem~\ref{thm:lip_avg_reward}}.~For $\phi \in \Phi_{SD}$ we have,
    \begin{align*}
        J_\cM(\phi) = \int_\cS{r(s,\phi(s))~ d\mu^{(\infty)}_{\phi,p}(s)}.
    \end{align*}
    Hence,
    \begin{align*}
        |J_\cM(\phi_1) - J_\cM(\phi_2)| &= \abs{\int_\cS{r(s,\phi_1(s))~ d\mu^{(\infty)}_{\phi_1,p}(s)} - \int_\cS{r(s,\phi_2(s))~ d\mu^{(\infty)}_{\phi_2,p}(s)}}\\
        &\leq \abs{\int_\cS{r(s,\phi_1(s))~ d\br{\mu^{(\infty)}_{\phi_1,p} - \mu^{(\infty)}_{\phi_2,p}}(s)}} \\
        & \quad + \abs{\int_\cS{\br{r(s,\phi_1(s)) - r(s,\phi_2(s))}~ d\mu^{(\infty)}_{\phi_2,p}(s)}} \\
        &\leq \frac{1}{2} \spn{r} \norm{\mu^{(\infty)}_{\phi_1,p} - \mu^{(\infty)}_{\phi_2,p}}_{TV} + L_r \rho_{\Phi,\mu^{(\infty)}_{\phi_2,p}}(\phi_1, \phi_2)\\
        &\leq \br{L_r + \frac{m\ust L_p}{2 (1 - \alpha)}} \rho_{\Phi,\mu^{(\infty)}_{\phi_2,p}}(\phi_1, \phi_2),
    \end{align*}
    where the second inequality follows from Lemma~\ref{lem:bdd_dotdifLv}, Assumption~\ref{assum:lip}\eqref{assum:lip_r}, \eqref{def:policy_metric} and the sub-additivity property of the metric $\rho$. The third inequality follows from Proposition~\ref{prop:sens}. Since, $\rho_{\Phi,\mu^{(\infty)}_{\phi_2,p}}(\cdot, \cdot) \leq \rho_{\Phi,\infty}(\cdot, \cdot)$,
    \begin{align*}
        |J_\cM(\phi_1) - J_\cM(\phi_2)|\leq \br{L_r + \frac{m\ust L_p}{2 (1 - \alpha)}} \rho_{\Phi,\infty}(\phi_1, \phi_2).
    \end{align*}
    If $\mu^{(\infty)}_{\phi_2,p}(\xi) \leq \kappa \nu(\xi)$, for all $\xi \in \cB_\cS, \phi \in \Phi$ for some probability measure $\nu$ and a constant $\kappa > 0$, then $\rho_{\Phi,\mu^{(\infty)}_{\phi_2,p}}(\cdot, \cdot) \leq \kappa \rho_{\Phi,\nu}(\cdot, \cdot)$. Hence,
    \begin{align*}
        |J_\cM(\phi_1) - J_\cM(\phi_2)| \leq \kappa \br{L_r + \frac{m\ust L_p}{2 (1 - \alpha)}} \rho_{\Phi,\nu}(\phi_1, \phi_2).
    \end{align*}
    \hfill$\square$

\subsection{Span-related Lemmata}
In this section, for MDPs that satisfy Assumption~\ref{assum:unif_ergodic}, we derive an upper bound on the span of the relative value function and also derive an upper bound on the span of the iterates of the policy evaluation algorithm for a stationary deterministic policy.

\begin{lem}[Bound on the span of relative value function]\label{lem:bdd_rvf_spn}
    Consider an MDP $\cM = (\cS, \cA, p, r)$ such that $p$ satisfies Assumption~\ref{assum:unif_ergodic}. For any policy $\phi \in \Phi$, the span of the corresponding relative value function $h_\cM(\cdot;\phi)$ is at most $\frac{C \spn{r}}{2 (1 - \alpha)}$.
\end{lem}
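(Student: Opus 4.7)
My strategy is to plug the closed-form expression in~(27) for the relative value function into the definition of the span, and then bound term-by-term using the geometric decay in Assumption~\ref{assum:unif_ergodic}. Recall that~(27) gives
\[
    h_\cM(s';\phi) \;=\; \sum_{t=0}^{\infty} \int_\cS r(s,\phi(s))\, d\bigl(\mu^{(\infty)}_{\phi,p} - \mu^{(t)}_{\phi,p,s'}\bigr)(s).
\]
The key observation is that the integrand is being paired with a signed measure of total mass zero, so $r$ may be shifted by any constant without changing the integral. Picking $c := \tfrac{1}{2}(\sup r + \inf r)$ gives $\|r-c\|_\infty = \tfrac{1}{2}\spn{r}$, and this shifted reward is what I would actually integrate.

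Next, for two arbitrary states $s_1',s_2' \in \cS$, subtracting the two representations yields
\[
    h_\cM(s_1';\phi) - h_\cM(s_2';\phi) \;=\; \sum_{t=0}^{\infty} \int_\cS \bigl(r(s,\phi(s)) - c\bigr)\, d\bigl(\mu^{(t)}_{\phi,p,s_2'} - \mu^{(t)}_{\phi,p,s_1'}\bigr)(s).
\]
I would apply the elementary inequality $|\int g\, d\nu| \le \|g\|_\infty \|\nu\|_{TV}$, combined with the triangle inequality $\|\mu^{(t)}_{\phi,p,s_2'} - \mu^{(t)}_{\phi,p,s_1'}\|_{TV} \le \|\mu^{(t)}_{\phi,p,s_2'} - \mu^{(\infty)}_{\phi,p}\|_{TV} + \|\mu^{(\infty)}_{\phi,p} - \mu^{(t)}_{\phi,p,s_1'}\|_{TV}$, and use Assumption~\ref{assum:unif_ergodic} to conclude that each term is at most $C\alpha^t$. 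Summing the geometric series yields a bound on $|h_\cM(s_1';\phi)-h_\cM(s_2';\phi)|$ of order $C\spn{r}/(1-\alpha)$, and taking the supremum over $s_1',s_2'$ upper-bounds the span.

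A sharper route that targets the factor of $1/2$ in the stated bound is to apply the same shift-and-bound argument to each $|h_\cM(s';\phi)|$ directly (not differences), giving $|h_\cM(s';\phi)| \le \tfrac{1}{2}\spn{r}\sum_{t=0}^\infty \|\mu^{(\infty)}_{\phi,p} - \mu^{(t)}_{\phi,p,s'}\|_{TV} \le \frac{C\spn{r}}{2(1-\alpha)}$, and then noting that the specific representation~(27) satisfies $\int h_\cM(s';\phi)\, d\mu^{(\infty)}_{\phi,p}(s') = 0$ (by invariance of $\mu^{(\infty)}_{\phi,p}$ together with a Fubini swap), so that $h_\cM$ attains both nonnegative and nonpositive values and the span can be read off from a single-sided sup. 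The main subtlety is keeping the $\tfrac{1}{2}\spn{r}$ factor in place: this requires carefully exploiting the zero-mean property of $\mu^{(\infty)}_{\phi,p}-\mu^{(t)}_{\phi,p,s'}$ (so the worst-case alignment of $r$ is against its span, not its $L^\infty$ norm), rather than passing through a naive $\|r\|_\infty$ bound.
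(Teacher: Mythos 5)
Your first route is essentially the paper's argument: expand $h_\cM(\cdot;\phi)$ via the series \eqref{def:hphi}, exploit that each $\mu\uc{\infty}_{\phi,p}-\mu\uc{t}_{\phi,p,s\up}$ has total mass zero so that $r$ enters only through $\spn{r}$ (this is exactly Lemma~\ref{lem:bdd_dotdifLv}), and sum the geometric series supplied by Assumption~\ref{assum:unif_ergodic}. Carried out honestly it yields, as you concede, $\spn{h_\cM(\cdot;\phi)}\le C\spn{r}/(1-\alpha)$ --- twice the stated constant --- because the difference $h_\cM(s_1\up;\phi)-h_\cM(s_2\up;\phi)$ picks up $\norm{\mu\uc{t}_{\phi,p,s_1\up}-\mu\uc{t}_{\phi,p,s_2\up}}_{TV}\le 2C\alpha^t$ from the triangle inequality through $\mu\uc{\infty}_{\phi,p}$.

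The genuine gap is in your ``sharper route.'' From $\abs{h_\cM(s\up;\phi)}\le \frac{C\spn{r}}{2(1-\alpha)}$ for all $s\up$, together with the observation that $h_\cM(\cdot;\phi)$ integrates to zero against $\mu\uc{\infty}_{\phi,p}$ and hence takes both signs, you may only conclude $\spn{h_\cM(\cdot;\phi)}=\sup h-\inf h\le 2\sup\abs{h}\le \frac{C\spn{r}}{1-\alpha}$: the span is the \emph{sum} of the two one-sided suprema, not the larger of them, so it cannot be ``read off from a single-sided sup'' (indeed, when $h$ crosses zero one has $\spn{h}\ge\sup\abs{h}$). The refinement therefore does not recover the factor $\tfrac12$. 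For what it is worth, the paper's own proof obtains that factor by asserting $\spn{\int r\,d\br{\mu\uc{\infty}_{\phi,p}-\mu\uc{t}_{\phi,p,\cdot}}}\le\frac12\max_s\norm{\mu\uc{\infty}_{\phi,p}-\mu\uc{t}_{\phi,p,s}}_{TV}\spn{r}$, a step which, if one applies Lemma~\ref{lem:bdd_dotdifLv} pointwise and then passes to the span, likewise only delivers the bound without the $\tfrac12$; the discrepancy is a constant factor with no effect on any downstream rate. To make your write-up airtight, either prove the lemma with the constant $\frac{C\spn{r}}{1-\alpha}$ via your first route, or strengthen the hypothesis to a direct bound on $\sup_{s,s\up}\norm{\mu\uc{t}_{\phi,p,s}-\mu\uc{t}_{\phi,p,s\up}}_{TV}$ that avoids the triangle-inequality doubling.
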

\begin{proof}
    Consider a policy $\phi \in \Phi$.~Using the definition of $h_\cM(\cdot;\phi)$ from \eqref{def:hphi}, we obtain,
    \begin{align*}
        \spn{h_\cM(\cdot;\phi)} &= \spn{\sum_{t=0}^{\infty}{\int_{\cS}{r(s,\phi(s)) ~d\br{\mu\uc{\infty}_{\phi,p} - \mu\uc{t}_{\phi,p,\cdot}}(s)}}} \\
        & \leq \sum_{t=0}^{\infty}{\spn{\int_{\cS}{r(s,\phi(s)) ~d\br{\mu\uc{\infty}_{\phi,p} - \mu\uc{t}_{\phi,p,\cdot}}(s)}}} \\
        & \leq \frac{1}{2} \sum_{t=0}^{\infty}{\max_{s}\norm{\mu\uc{\infty}_{\phi,p} - \mu\uc{t}_{\phi,p,s}}_{TV}} \spn{r}\\
        & \leq \frac{C \spn{r}}{2} \sum_{t=0}^{\infty}{\alpha^t} \\
        & = \frac{C \spn{r}}{2(1-\alpha)},
    \end{align*}
    where the third and the fourth step follows from Lemma~\ref{lem:bdd_dotdifLv} and from Assumption~\ref{assum:unif_ergodic}, respectively.
\end{proof}

\begin{remark}
    Observe that if $(\bar{J}, \bar{h})$ satisfies the Poisson equation~\eqref{eq:pois}, then $(\bar{J}, \bar{h}+c)$ is also a solution to the Poisson equation where $c \in \bR$.~In what follows, without loss of generality, we will always assume that $h_\cM(\cdot;\phi)$ belongs to $\sqbr{0, \frac{C \spn{r}}{2(1-\alpha)}}$.
\end{remark}

\begin{lem}[Bound on the span of policy evaluation iterates]\label{lem:bdd_pval_spn}
    Consider an MDP $\cM = (\cS, \cA, p, r)$ such that $p$ satisfies Assumption~\ref{assum:unif_ergodic}, and consider the policy evaluation algorithm for policy $\phi \in \Phi$ on $\cM$ i.e.,
    \begin{align*}
        V^\phi_0(s) &= 0,\notag\\
        V^\phi_{i+1}(s) &= r(s,\phi(s)) + \int_{\cS}{p(s,\phi(s),s\up) V^\phi_i(s\up) ds\up}.
    \end{align*}
    Then $\spn{V^\phi_i} \leq \spn{r} \br{m\ust +1} \frac{2 - \alpha}{1 - \alpha}$, where $m\ust = \ceil{\log_{\frac{1}{\alpha}}(C)} + 1$.
\end{lem}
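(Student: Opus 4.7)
The plan is to unroll the policy-evaluation recursion into an expected partial-sum of rewards, write the difference $V_i^\phi(s)-V_i^\phi(s')$ as a sum of integrals of $r(\cdot,\phi(\cdot))$ against signed measures $p_\phi^t(s,\cdot)-p_\phi^t(s',\cdot)$, and then control those signed measures with the $m^\star$-step contractive property of Lemma~\ref{lem:pn_contra}.

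First I would observe that $V_0^\phi \equiv 0$ and induction on $i$ give the closed form
\[
V_i^\phi(s) \;=\; \sum_{t=0}^{i-1} \int_{\cS} r(s'',\phi(s''))\, p_\phi^t(s,ds''),
\]
where $p_\phi^t$ is the $t$-step transition kernel defined in~\eqref{def:p_tstage}. For any $s,s'\in\cS$, subtracting gives
\[
V_i^\phi(s) - V_i^\phi(s') \;=\; \sum_{t=0}^{i-1} \int_{\cS} r(s'',\phi(s''))\, d\!\left(p_\phi^t(s,\cdot)-p_\phi^t(s',\cdot)\right)(s''),
\]
and applying Lemma~\ref{lem:bdd_dotdifLv} (integrating a function against a signed measure is bounded by $\tfrac12 \spn{r}\|\cdot\|_{TV}$) yields
\[
|V_i^\phi(s)-V_i^\phi(s')| \;\le\; \tfrac{1}{2}\spn{r}\sum_{t=0}^{i-1} \norm{p_\phi^t(s,\cdot)-p_\phi^t(s',\cdot)}_{TV}.
\]

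Next I would bound the TV sum block by block. For $0\le t<m^\star$ the trivial bound $\|p_\phi^t(s,\cdot)-p_\phi^t(s',\cdot)\|_{TV}\le 2$ is used. For $t\ge m^\star$ I invoke Lemma~\ref{lem:pn_contra}, which gives $\|p_\phi^{m^\star}(s,\cdot)-p_\phi^{m^\star}(s',\cdot)\|_{TV}\le 2\alpha$; iterating this over successive blocks exactly as in the proof of Lemma~\ref{lem:sum_tv_dist} (i.e.\ writing $p_\phi^{(k+1)m^\star}-p_\phi^{(k+1)m^\star}$ as an integral of $p_\phi^{m^\star}(\cdot,A)$ against $p_\phi^{km^\star}(s,\cdot)-p_\phi^{km^\star}(s',\cdot)$, and using Lemma~\ref{lem:bdd_dotdifLv} together with the fact that $\spn{p_\phi^{m^\star}(\cdot,A)}\le\alpha$) yields $\|p_\phi^{km^\star}(s,\cdot)-p_\phi^{km^\star}(s',\cdot)\|_{TV}\le 2\alpha^k$, and the non-expansiveness of transition kernels extends this to $\|p_\phi^{km^\star+j}(s,\cdot)-p_\phi^{km^\star+j}(s',\cdot)\|_{TV}\le 2\alpha^k$ for $0\le j<m^\star$. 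Summing the resulting geometric series in blocks of length $m^\star$ gives
\[
\sum_{t=0}^{\infty}\norm{p_\phi^t(s,\cdot)-p_\phi^t(s',\cdot)}_{TV} \;\le\; 2 + \sum_{k=0}^{\infty} 2 m^\star \alpha^k \;\le\; \frac{2(m^\star+1)(2-\alpha)}{1-\alpha},
\]
where any slack in the block accounting (e.g.\ keeping the first-block contribution plus the geometric tail separately) is enough to produce the stated constant. Combining this with the estimate on $|V_i^\phi(s)-V_i^\phi(s')|$ and taking the supremum over $s,s'\in\cS$ delivers $\spn{V_i^\phi}\le \spn{r}(m^\star+1)(2-\alpha)/(1-\alpha)$.

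There is no serious obstacle here: the proof is essentially a rearrangement of the machinery already built up in Lemma~\ref{lem:pn_contra} and Lemma~\ref{lem:sum_tv_dist}. The only points requiring a little care are (i) that the telescoping/iteration of the $m^\star$-step contractivity must be done uniformly in $s,s'$ so that the final bound becomes a span bound, and (ii) that the bound in the lemma statement is deliberately loose---one can either sum the first $m^\star$ terms using the crude bound $2$ and the remaining terms using geometric decay, or absorb everything into a single geometric sum; either route gives a constant of the claimed order.
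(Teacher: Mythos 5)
Your proof is correct, but it takes a different route from the paper's. You unroll the recursion into the closed form $V^\phi_i(s)=\sum_{t=0}^{i-1}\int_\cS r(s'',\phi(s''))\,p^t_\phi(s,ds'')$, convert the span into a sum of integrals against the signed measures $p^t_\phi(s,\cdot)-p^t_\phi(s',\cdot)$, and control $\sum_t\|p^t_\phi(s,\cdot)-p^t_\phi(s',\cdot)\|_{TV}$ by blockwise geometric decay --- which is exactly the content of Lemma~\ref{lem:sum_tv_dist} applied with $\mu^{(0)}_1=\delta_s$, $\mu^{(0)}_2=\delta_{s'}$, so you could have cited that lemma directly and obtained $\spn{V^\phi_i}\le \tfrac12\spn{r}\cdot\tfrac{2m\ust}{1-\alpha}=\tfrac{m\ust\spn{r}}{1-\alpha}$, which is in fact slightly sharper than the stated bound and certainly implies it. The paper instead argues recursively on spans: it writes $V^\phi_{i+m\ust}$ as an $m\ust$-step partial reward plus $\int p^{m\ust}_\phi(s,\cdot)V^\phi_i$, applies Lemma~\ref{lem:bdd_dotdifLv} and the contraction of Lemma~\ref{lem:pn_contra} to get $\spn{V^\phi_{i+m\ust}}\le \spn{r}(m\ust+1)+\alpha\,\spn{V^\phi_i}$, and iterates, which is where the $(m\ust+1)\tfrac{2-\alpha}{1-\alpha}$ constant comes from. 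The trade-off is that your closed-form summation is cleaner and tighter here, while the paper's contraction-of-spans recursion does not rely on having an explicit formula for the iterates and therefore transfers more readily to the optimistic iteration~\eqref{eq:epvib}, where a max over kernels precludes such a closed form (cf.\ the use of Lemma~\ref{lem:bdd_pval_spn} in Case~2 of Lemma~\ref{lem:bd_span_epvib}). The only cosmetic blemish is the ``$2+$'' in your displayed bound on the TV sum, which is redundant given that the $k=0$ block already contributes $2m\ust$; as you note, the slack is absorbed by the loose target constant, so this is not a gap.
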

\begin{proof}
    From Lemma~\ref{lem:pn_contra}, we have that under Assumption~\ref{assum:unif_ergodic},
    \begin{align*}
        \norm{p\uc{m\ust}_\phi(s, \cdot) - p\uc{m\ust}_\phi(s\up,\cdot)}_{TV} \leq 2 \alpha,~\forall s, s\up \in \cS,
    \end{align*}
    where $p\uc{m}_\phi$ is the $m$-step transition kernel of the CMP induced by the transtion kernel $p$ under the application of policy $\phi$.~Also, note that
    \begin{align*}
        V^\phi_{i+m\ust}(s) = \bE\sqbr{\sum_{j=0}^{m\ust}{r(s_{i+j}, \phi(s_{i+j}))} \mid S_i = s} + \int_{\cS}{p\uc{m\ust}_\phi(s,s\up) V^\phi_i(s\up) ds\up}.
    \end{align*}
    Hence,
    \begin{align*}
        \spn{V^\phi_{i+m\ust}} &\leq \spn{\bE\sqbr{\sum_{j=0}^{m\ust}{r(s_{i+j}, \phi(s_{i+j}))} \mid S_i = s}} + \spn{\int_{\cS}{p\uc{m\ust}_\phi(s,s\up) V^\phi_i(s\up) ds\up}}\\
        &\leq \spn{r} \br{m\ust +1} + \frac{1}{2} \spn{V^\phi_i}  \norm{p\uc{m\ust}_\phi(s, \cdot) - p\uc{m\ust}_\phi(s\up,\cdot)}_{TV} \\
        &\leq \spn{r} \br{m\ust +1} + \alpha \spn{V^\phi_i},
    \end{align*}
    where the second inequality follows from Lemma~\ref{lem:bdd_dotdifLv}.~Using the above inequality we can write that, for every $k \leq m\ust$
    \begin{align*}
        \spn{V^\phi_{i\cdot m\ust + k}} &\leq \spn{r} \br{m\ust +1} \sum_{j=0}^{i-1}{\alpha^j} + \alpha^i \spn{V^\phi_{k}}\\
        &\leq \spn{r} \br{m\ust +1} \frac{2 - \alpha}{1 - \alpha}.
    \end{align*}
    This concludes the proof of the lemma.
\end{proof}

\section{Metric Space of Policies}
\label{app:pol_space}
The following definitions are some widely used notions of the size of sets in a metric space, and closely follow Definition $1$ of \citet{cao2020provably}.
\begin{defn}\label{def:sizeofset}
    Let $(X, \rho_X)$ be a metric space.
    \begin{itemize}
        \item An $\eps$-covering of $X$ is a collection of subsets of $X$ such that each element of the collection has a diameter at most $\eps$.~The cardinality of the smallest $\eps$-covering is called the $\eps$-covering number, denoted by $\cN_\eps(X)$.
        \item The covering dimension of $X$ is defined as $d_X := \inf\{d \geq 0: \cN_\eps(X) \leq c~ \eps^{-d}~\forall \eps >0, \mbox{ for some constant } c > 0\}$.
        \item A $\eps$-packing is a collection of points $\bar{X} \subset X$ such that $\min_{x \neq x\up\in \bar{X}}{\rho_X(x,x\up)} \geq \eps$. The cardinality of the largest $\eps$-packing is called the $\eps$-packing number, denoted by $\cN^{\mbox{pack}}_\eps(X)$.
        \item An $\eps$-net of $X$ is an $\eps$-packing $X_\eps \subset X$ for which $\{\{x\up \in X: \rho_X(x,x\up) \leq \eps\} ; x \in X_\eps\}$ covers $X$.
        \item The doubling constant of $(X,\rho_X)$ is defined as $\Lambda(X):= \max_{\eps>0, x \in X}\cN_{\eps/2}(\{x\up \in X: \rho_X(x,x\up) \leq \eps\})$, which is the maximum $\eps/2$-covering number of balls of radius $\eps$ in $X$.
    \end{itemize}
\end{defn}

We note that 
\nal{
\cN^{\mbox{pack}}_{2\eps}(X) \leq \cN_\eps(X) \leq \cN^{\mbox{pack}}_{\eps}(X).
}
Next, we shall show that the metric $\rho_{\Phi_{SD},\nu}(\cdot,\cdot)$~\eqref{def:policy_metric} induces a metric space on the set of stationary deterministic policies, $\Phi_{SD}$.
\begin{lem}\label{lem:policy_metric_space}
    $(\Phi_{SD}, \rho_{\Phi_{SD},\nu})$ is a metric space, where two policies $\phi$ and $\phi\up$ are equivalent if $\rho_{\Phi_{SD},\nu}(\phi,\phi\up) = 0$, i.e.,
    \begin{enumerate}
        \item Non-negativity: For any $\phi, \phi\up \in \Phi_{SD}$, $\rho_{\Phi_{SD},\nu}(\phi,\phi\up) \geq 0$,\label{prop:non-neg}
        \item Symmetry: For any $\phi, \phi\up \in \Phi_{SD}$, $\rho_{\Phi_{SD},\nu}(\phi,\phi\up) = \rho_{\Phi_{SD},\nu}(\phi\up,\phi)$, and\label{prop:symmetry}
        \item Triangle inequality: For any $\phi, \phi\up, \phi\upp \in \Phi_{SD}$, $\rho_{\Phi_{SD},\nu}(\phi,\phi\upp) \leq \rho_{\Phi_{SD},\nu}(\phi,\phi\up) + \rho_{\Phi_{SD},\nu}(\phi\up,\phi\upp)$.\label{prop:tri_ineq}
    \end{enumerate}
\end{lem}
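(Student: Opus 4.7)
The plan is to verify the three metric axioms directly, using that $\rho_\cA$ is a metric on $\cA$ and that $\nu$ is a (nonnegative) probability measure, so that $\int \cdot\, d\nu$ is monotone and linear. Before checking the axioms, I would first observe that for any two measurable policies $\phi_1,\phi_2:\cS\to\cA$, the integrand $s\mapsto \rho_\cA(\phi_1(s),\phi_2(s))$ is measurable: this follows because $\rho_\cA:\cA\times\cA\to\bR_+$ is continuous (hence Borel measurable) and the map $s\mapsto (\phi_1(s),\phi_2(s))$ is measurable as a product of measurable maps, so the integral in \eqref{def:policy_metric} is well-defined in $[0,\infty]$ (and finite whenever $\rho_\cA$ is bounded on $\cA$ or $\nu$ integrates $\rho_\cA(\phi_1,\phi_2)$).

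For axiom~\ref{prop:non-neg}, since $\rho_\cA(\phi(s),\phi\up(s))\ge 0$ pointwise and $\nu$ is nonnegative, monotonicity of the Lebesgue integral gives $\rho_{\Phi_{SD},\nu}(\phi,\phi\up)\ge 0$. For axiom~\ref{prop:symmetry}, the pointwise symmetry $\rho_\cA(\phi(s),\phi\up(s))=\rho_\cA(\phi\up(s),\phi(s))$ implies the integrals are equal. For axiom~\ref{prop:tri_ineq}, apply the triangle inequality for $\rho_\cA$ pointwise:
\[
\rho_\cA(\phi(s),\phi\upp(s))\;\le\;\rho_\cA(\phi(s),\phi\up(s))+\rho_\cA(\phi\up(s),\phi\upp(s)),\qquad\forall s\in\cS,
\]
then integrate against $\nu$ and use linearity of the integral to get the claim.

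Finally I would address the identification of policies. Because $\rho_{\Phi_{SD},\nu}(\phi,\phi\up)=0$ only implies $\rho_\cA(\phi(s),\phi\up(s))=0$ for $\nu$-almost every $s$, not everywhere, the map is only a pseudometric on $\Phi_{SD}$. One obtains a genuine metric space by passing to equivalence classes under the relation $\phi\sim\phi\up\iff \phi=\phi\up~\nu\text{-a.e.}$, exactly as stated in the lemma. One checks in one line that $\sim$ is indeed an equivalence relation and that $\rho_{\Phi_{SD},\nu}$ is well-defined and separating on the quotient.

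The only part requiring any thought is this last identification step together with the measurability remark; the three axioms themselves are essentially inherited from $\rho_\cA$ via monotonicity and linearity of the integral, and so there is no real obstacle. No new machinery beyond basic measure theory and the metric axioms for $\rho_\cA$ is needed.
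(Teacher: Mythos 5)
Your proof is correct and follows essentially the same route as the paper's: each axiom is inherited pointwise from the metric axioms of $\rho_\cA$ via monotonicity and linearity of the integral against the probability measure $\nu$. Your added remarks on measurability of the integrand and on passing to the quotient (since $\rho_{\Phi_{SD},\nu}$ is a priori only a pseudometric) make explicit two points the paper leaves implicit, but they do not change the argument.
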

\begin{proof}
    Non-negativity~\eqref{prop:non-neg} follows from the fact that $\rho_\cA$ is a metric and $\nu$ is a probability measure. Symmetry~\eqref{prop:symmetry} follows from the symmetry property of the metric $\rho_\cA$, i.e., $\rho_\cA(\phi(s),\phi\up(s)) = \rho_\cA(\phi\up(s),\phi(s))$. Similarly, \eqref{prop:tri_ineq} follows from triangle inequality of the metric $\rho_\cA$, i.e., $\rho_\cA(\phi(s),\phi\upp(s)) \leq \rho_\cA(\phi(s),\phi\up(s)) + \rho_\cA(\phi\up(s),\phi\upp(s))$.
\end{proof}
Consider a class of policies, $\Phi \subseteq \Phi_{SD}$.~Note that $(\Phi,\rho_{\Phi,\nu})$ is the induced metric space from $(\Phi_{SD},\rho_{\Phi_{SD},\nu})$.~In this paper we deal with the metric space $(\Phi, \rho_{\Phi,\nu})$, where $\nu$ satisfies Assumption~\ref{assum:stn_dist}.

\section{Concentration Inequalities}
\label{app:conc_ineq}
We will derive few concentration results for the estimates of the average reward, and for the model estimates.

\noindent\textbf{Concentration inequality for average reward estimate.}~Here we obtain concentration results which quantify the deviations of the empirical estimate of the average reward from the average reward $J_\cM(\phi)$, when a policy $\phi$ is played. This is then used later in order to construct the model-free UCB indices of the active policies.~This concentration result is also used in the analysis of the model-based algorithm since it yields us a desired lower bound on the number of visits to certain ``desired'' regions of the state-action space, in the event that a certain policy has been played at least so many times.~Let $\cF_t$ be the $\sigma$-algebra generated by the random variables $s_0,a_0,s_1,a_1,\ldots, s_t, a_t$.

\begin{lem}\label{lem:conc_ineq_avg_rew}
    Consider an MDP $\cM = (\cS, \cA, p, r)$ that satisfies Assumption~\ref{assum:unif_ergodic} and let $\{\Phi_i : i= 1,2,\ldots,N\}$ be a collection of subsets of $\Phi$.~Let $\phi_t$ denote the policy played at time $t$.~Then, with a probability greater than $1 - \delta$, we have,
    \begin{align*}
        &\frac{1}{N_\tau(\Phi_i)}\abs{\sum_{t=1}^{\tau}{\ind{\phi_t \in \Phi_i}\br{J_\cM(\phi_t) - r(s_t,\phi_t(s_t))}}} \\
        &\leq \frac{C \spn{r}}{1 - \alpha} \br{\sqrt{\frac{1}{N_\tau(\Phi_i)} \log{\br{\frac{(1 + N_\tau(\Phi_i))N}{\delta}}}} + \frac{1 + K_\tau(\Phi_i)}{N_\tau(\Phi_i)}},~\forall \tau \in \{1,2,\ldots,T-1\}, \Phi_i, i = 1, 2, \ldots,N,
    \end{align*}
    where $N_\tau(\Phi_i) = \sum_{t=1}^{\tau}{\ind{\phi_t \in \Phi_i}}$, and $K_\tau(\Phi_i)$ is the number of episodes till time $\tau$ that played policies from $\Phi_i$.
\end{lem}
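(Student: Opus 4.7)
The plan is to exploit the Poisson equation~\eqref{eq:pois}, which for any $\phi\in\Phi$ gives
\nal{
J_\cM(\phi)-r(s,\phi(s)) &= \int_\cS h_\cM(s\up;\phi)\,p(s,\phi(s),ds\up) - h_\cM(s;\phi) \\
&= \bE\sqbr{h_\cM(s_{t+1};\phi)\mid\cF_t} - h_\cM(s_t;\phi),
}
whenever $\phi_t=\phi$ and $s=s_t$.~I would therefore decompose, for each $t$ with $\phi_t\in\Phi_i$,
\nal{
J_\cM(\phi_t)-r(s_t,\phi_t(s_t)) &= \underbrace{\br{h_\cM(s_{t+1};\phi_t)-h_\cM(s_t;\phi_t)}}_{\text{telescoping within an episode}} \\
&\quad + \underbrace{\bE\sqbr{h_\cM(s_{t+1};\phi_t)\mid\cF_t} - h_\cM(s_{t+1};\phi_t)}_{\text{martingale difference}}.
}

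Next, I would collect the telescoping terms episode-by-episode.~Because the policy is constant within any episode (Algorithm~\ref{algo:pzrl}), and $\phi^{(k)}$ appears in both consecutive differences, for each completed episode $k$ in which a policy from $\Phi_i$ was played, the telescoping part collapses to $h_\cM(s_{\tau_{k+1}};\phi^{(k)})-h_\cM(s_{\tau_k};\phi^{(k)})$.~By Lemma~\ref{lem:bdd_rvf_spn}, each such boundary term is bounded in absolute value by $\spn{h_\cM(\cdot;\phi^{(k)})}\leq \frac{C\spn{r}}{2(1-\alpha)}$.~Summing over the at most $K_\tau(\Phi_i)$ episodes (plus one possibly incomplete terminal episode) yields a deterministic bound of $\frac{C\spn{r}}{2(1-\alpha)}\br{1+K_\tau(\Phi_i)}$ on the telescoping contribution.

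For the martingale part, the partial sums $M_\tau^{(i)}:=\sum_{t=1}^\tau\ind{\phi_t\in\Phi_i}\br{\bE[h_\cM(s_{t+1};\phi_t)\mid\cF_t]-h_\cM(s_{t+1};\phi_t)}$ form a martingale w.r.t.~$\{\cF_t\}$ whose increments are bounded by $\frac{C\spn{r}}{1-\alpha}$ in magnitude.~A direct Azuma--Hoeffding bound, however, is stated for a fixed horizon, whereas $N_\tau(\Phi_i)$ is random and the statement must hold simultaneously for all $\tau$ and all $i$.~I would handle this via a standard peeling/doubling trick: apply Azuma--Hoeffding on the stopped sums at $N_\tau(\Phi_i)\in[2^j,2^{j+1})$ with confidence $\delta\,2^{-j-1}/N$, and take a union bound over $j\in\bN$ and over $i\in\{1,\dots,N\}$.~This yields, with probability at least $1-\delta$,
\nal{
\abs{M_\tau^{(i)}} \le \frac{C\spn{r}}{1-\alpha}\sqrt{N_\tau(\Phi_i)\log\br{\tfrac{(1+N_\tau(\Phi_i))N}{\delta}}},\quad\forall\tau,\,i.
}

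Combining the telescoping bound and the martingale bound, dividing by $N_\tau(\Phi_i)$, and absorbing the factor of $1/2$ from the span bound into the leading constant $C\spn{r}/(1-\alpha)$, gives the stated inequality.~The main obstacle is converting the fixed-horizon Azuma bound into one that is uniform in the random count $N_\tau(\Phi_i)$ and in the index $i$; the peeling-plus-union-bound argument is what produces the $\log((1+N_\tau(\Phi_i))N/\delta)$ factor and is where the precise numerical constants must be tracked carefully.
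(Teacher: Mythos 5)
Your proposal is correct and rests on the same foundation as the paper's proof: the Poisson equation~\eqref{eq:pois} converts the Markovian error $J_\cM(\phi_t)-r(s_t,\phi_t(s_t))$ into a martingale-difference part plus terms controlled by the span bound of Lemma~\ref{lem:bdd_rvf_spn} and the episode count. The two arguments differ in how they organize and then concentrate these pieces. You telescope $h_\cM(s_{t+1};\phi_t)-h_\cM(s_t;\phi_t)$ within each episode so that only episode-boundary terms survive, whereas the paper shifts the time index and isolates a ``switching'' sum of the form $\sum_t\int\br{\ind{\phi_{t-1}\in\Phi_i}h_\cM(s;\phi_{t-1})-\ind{\phi_t\in\Phi_i}h_\cM(s;\phi_t)}p(s_{t-1},\phi_{t-1}(s_{t-1}),ds)$, nonzero only at policy changes; both routes yield the $\frac{C\spn{r}}{1-\alpha}\br{1+K_\tau(\Phi_i)}$ contribution. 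For the martingale part, the paper notes that the increments are conditionally $\frac{C\spn{r}}{2(1-\alpha)}$ sub-Gaussian and that $\ind{\phi_t\in\Phi_i}$ is predictable, and then invokes the self-normalized tail inequality (Corollary~\ref{cor:self_norm_vec}), which is anytime-valid and so handles the random count $N_\tau(\Phi_i)$ and the uniformity in $\tau$ in one stroke. You instead propose peeling plus Azuma--Hoeffding with a union bound over dyadic blocks and over $i$; this is more elementary and does work (e.g., allocating confidence $\delta/(n(n+1)N)$ to the event that the subsampled martingale has exactly $n$ terms reproduces the $\log\br{(1+N_\tau(\Phi_i))N/\delta}$ factor), but it needs the extra care you flag: the process subsampled at the random times $\{t:\phi_t\in\Phi_i\}$ must be recognized as a martingale via optional skipping, the bound must hold at every count inside a dyadic block rather than only at its right endpoint, and the increment bound should be taken as $\spn{h_\cM(\cdot;\phi)}\le\frac{C\spn{r}}{2(1-\alpha)}$ (not $\frac{C\spn{r}}{1-\alpha}$) for the stated constant to come out. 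The self-normalized inequality buys exactly this uniformity for free, at the cost of citing a less elementary tool.
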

\begin{proof}
    We begin by converting ``Markovian noise'' to ``martingale noise'' using the Poisson equation~\eqref{eq:pois} \citep{metivier1984applications} as follows.
    \begingroup
        \allowdisplaybreaks
        \begin{align*}
            &\sum_{t=0}^{\tau}{\ind{\phi_t \in \Phi_i}\br{J_\cM(\phi_t) - r(s_t,\phi_t(s_t))}}\\
            &= \sum_{t=0}^{\tau}{\ind{\phi_t \in \Phi_i}\br{\int_{\cS}{h_\cM(s;\phi_t) p(s_t,\phi_t(s_t), ds)} - h_\cM(s_t,\phi_t)}} \\
            &= \sum_{t=1}^{\tau}{\ind{\phi_t \in \Phi_i}\br{\int_{\cS}{h_\cM(s;\phi_t) p(s_{t-1},\phi_{t-1}(s_{t-1}), ds)} - h_\cM(s_t,\phi_t)}} \\
            &\quad + \ind{\phi_0 \in \Phi_i}\br{\int_{\cS}{h_\cM(s;\phi_0) p(s_0,\phi_0(s_0), ds)} - h_\cM(s_0,\phi_0)}\\
            &\quad + \sum_{t=1}^{\tau}{\int_{\cS}{\ind{\phi_t \in \Phi_i} h_\cM(s;\phi_t) p(s_t,\phi_t(s_t), ds)} - \int_{\cS}{\ind{\phi_t \in \Phi_i} h_\cM(s;\phi_t) p(s_{t-1},\phi_{t-1}(s_{t-1}), ds)}} \\
            &= \sum_{t=1}^{\tau}{\ind{\phi_t \in \Phi_i}\br{\int_{\cS}{h_\cM(s;\phi_t) p(s_{t-1},\phi_{t-1}(s_{t-1}), ds)} - h_\cM(s_t,\phi_t)}} \\
            &\quad + \sum_{t=1}^{\tau}{\int_{\cS}{\br{\ind{\phi_{t-1} \in \Phi_i} h_\cM(s;\phi_{t-1}) - \ind{\phi_t \in \Phi_i} h_\cM(s;\phi_t)} p(s_{t-1},\phi_{t-1}(s_{t-1}), ds)}} \\
            &\quad + \int_{\cS}{\ind{\phi_\tau \in \Phi_i} h_\cM(s;\phi_\tau) p(s_\tau,\phi_\tau(s_\tau), ds)} - h_\cM(s;\phi_0).
        \end{align*}
    \endgroup
    This yields,
    \begin{align}
        &\abs{\sum_{t=1}^{\tau}{\ind{\phi_t \in \Phi_i}\br{J_\cM(\phi_t) - r(s_t,\phi_t(s_t))}}} \notag\\
        &\leq \abs{\sum_{t=1}^{\tau}{\ind{\phi_t \in \Phi_i}\br{\int_{\cS}{h_\cM(s;\phi_t) p(s_{t-1},\phi_{t-1}(s_{t-1}), ds)} - h_\cM(s_t,\phi_t)}}} \notag\\
        &\quad + \abs{\sum_{t=1}^{\tau}{\int_{\cS}{\br{\ind{\phi_{t-1} \in \Phi_i} h_\cM(s;\phi_{t-1}) - \ind{\phi_t \in \Phi_i} h_\cM(s;\phi_t)} p(s_{t-1},\phi_{t-1}(s_{t-1}), ds)}}} \notag\\
        &\quad + \ind{\phi_0 \in \Phi_i}\spn{h_\cM(\cdot;\phi_0)} + \ind{\phi_\tau \in \Phi_i}\spn{h_\cM(\cdot;\phi_\tau)}. \label{ineq:abs_dif_1}
    \end{align}
    Consider the first term in the r.h.s. of \eqref{ineq:abs_dif_1}.~Let $\eta_t : = \int_{\cS}{h_\cM(s;\phi_t) p(s_{t-1},\phi_{t-1}(s_{t-1}), ds)} - h_\cM(s_t,\phi_t)$.~We note that $\flbr{\eta_t}_t$ is a martingale difference sequence.~Also, from the bound on the span of $h_\cM(\cdot;\phi)$ derived in Lemma~\ref{lem:bdd_rvf_spn}, we have that $\eta_t \in \sqbr{-\frac{C \spn{r}}{2(1-\alpha)}, \frac{C \spn{r}}{2(1-\alpha)}}$. Hence, by Hoeffding's lemma, $\eta_t$ is conditionally $\frac{C \spn{r}}{2(1-\alpha)}$ sub-Gaussian.~Since $\ind{\phi_t \in \Phi_i}$ is $\cF_{t-1}$ measurable, we can invoke Corollary~\ref{cor:self_norm_vec} and obtain that for each $\delta \in (0,1)$,  the following holds 
    with a probability greater than $1 - \delta$,
    \begin{align*}
        &\abs{\sum_{t=1}^{\tau}{\ind{\phi_t \in \Phi_i}\br{\int_{\cS}{h_\cM(s;\phi_t) p(s_{t-1},\phi_{t-1}(s_{t-1}), ds)} - h_\cM(s_t,\phi_t)}}} \\
        &\leq \frac{C \spn{r}}{2(1-\alpha)} \sqrt{2 \br{1 + N_\tau(\Phi_i)} \log{\br{\frac{1 + N_\tau(\Phi_i)}{\delta}}}} \\
        &\leq \frac{C \spn{r}}{1-\alpha} \sqrt{N_\tau(\Phi_i) \log{\br{\frac{1 + N_\tau(\Phi_i)}{\delta}}}}, \mbox{ where } \tau \in \{1,2,\ldots\}.
    \end{align*}
    Now, consider the second term in the r.h.s. of \eqref{ineq:abs_dif_1}. The $t$-th element of the sum could be non-zero if $\ind{\phi_{t-1} \in \Phi_i} \neq \ind{\phi_t \in \Phi_i}$. Since both the algorithms play policies in an episodic manner, $\sum_{t=1}^{\tau}{\abs{\ind{\phi_{t-1} \in \Phi_i} - \ind{\phi_t \in \Phi_i}}} \leq 2 K_\tau(\Phi_i)$. So, the second term in \eqref{ineq:abs_dif_1} can be upper-bounded by $\frac{C \spn{r}}{1 - \alpha} K_\tau(\Phi_i)$ using Lemma~\ref{lem:bdd_rvf_spn}. The third term in the r.h.s. of \eqref{ineq:abs_dif_1} is also bounded similarly by $\frac{C \spn{r}}{1 - \alpha}$.

    Summing the bounds obtained on individual terms in the r.h.s. of \eqref{ineq:abs_dif_1}, and taking a union bound over $\Phi_i, i = 1,2, \ldots, N$, we have that for each $\delta \in (0,1)$, the following holds with a probability greater than or equal to $1 - \delta$,
    \begin{align}
        \abs{\sum_{t=1}^{\tau}{\ind{\phi_t \in \Phi_i}\br{J_\cM(\phi_t) - r(s_t,\phi_t(s_t))}}} &\leq \frac{C \spn{r}}{1 - \alpha} \sqrt{N_\tau(\Phi_i) \log{\br{\frac{(1 + N_\tau(\Phi_i))N}{\delta}}}} \notag\\
        &\quad+ \frac{C \spn{r}}{1 - \alpha} (1 + K_\tau(\Phi_i)), \mbox{ where } \tau \in \{1,2,\ldots\}, i= 1, 2, \ldots, N 
    \end{align}
    The desired bound is then obtained by dividing both the sides by $N_\tau(\Phi_i)$.
\end{proof}

\noindent\textbf{Concentration inequality for the model estimate.}~Consider a policy $\phi$ and the transition kernel induced by it, $p_\phi$.~Let $\tilde{S}$ be a set of representative points of a set of disjoint cells.~Recall the discretization of $p_\phi$, $\wp_{\cS \to \tilde{S}, p_\phi}$.~Denote its continuous extension by $\bar{\wp}_{\cS \to \tilde{S}, p_\phi}$, i.e.,
\begin{align*}
    \bar{\wp}_{\cS \to \tilde{S}, p_\phi}(s,B) := \sum_{s\up \in \tilde{S}}{\frac{\lambda(B \cap q\inv(s\up))}{\lambda(q\inv(s\up))} \wp_{\cS \to \tilde{S}, p_\phi}(s,s\up)},
\end{align*}
for every $s \in \cS, B \in \cB_\cS$.~Recall the confidence ball $\cC_{\phi,t}$~\eqref{def:confball} associated with the policy $\phi$ and the estimate $\hat{p}_t$. Consider the following event, 
\begin{align}\label{def:G_1}
    \cG\uc{\mbox{Model}} := \cap_{t=0}^{T-1}{\flbr{\norm{\wp_{S_{\phi,t} \to \bar{S}_{\phi,t}, \hat{p}_{\phi,t}}(s,\cdot) - \wp_{\cS \to \bar{S}_{\phi,t}, p_\phi}(s\up, \cdot)}_1 \leq \eta_{\phi,t}(\zeta) \mbox{ for every } s \in S_{\phi,t}, s\up \in q\inv(s),~\forall \phi \in \Phi^{act.}_t}}.
\end{align}
We show that $\cG\uc{\mbox{Model}}$ holds with a high probability.~We use this result in the analysis of \pzrlmb.
\begin{lem}\label{lem:conc_ineq}
    $\bP\br{\cG\uc{\mbox{Model}}} \geq 1 - \frac{\delta}{3}$, where $\cG\uc{\mbox{Model}}$ is as in~\eqref{def:G_1}.
\end{lem}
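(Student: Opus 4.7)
The plan is to bound the $\ell_1$ distance inside the event $\cG^{\mbox{Model}}$ by inserting two intermediate kernels via triangle inequality, producing three error terms to be handled by three separate arguments. Define $\bar p_{\phi,t}$ to be the deterministic counterpart of $\hat p_{\phi,t}$ in which the empirical coarse-scale masses $\hat p^{(d)}_{\phi,t}(s, s'')$ are replaced by the true values $p_\phi(s, q\inv(s''))$ for $s'' \in \tilde S_{\phi,t}(s)$, while preserving the same piecewise-uniform continuous extension~\eqref{def:p_hat}. A triangle inequality through $\wp_{\cS \to \bar S_{\phi,t}, \bar p_{\phi,t}}(s, \cdot)$ and $\wp_{\cS \to \bar S_{\phi,t}, p_\phi}(s, \cdot)$ then decomposes the target into: (I) a stochastic error $\wp_{S_{\phi,t} \to \bar S_{\phi,t}, \hat p_{\phi,t}}(s, \cdot) - \wp_{\cS \to \bar S_{\phi,t}, \bar p_{\phi,t}}(s, \cdot)$, (II) a density-smoothness bias $\wp_{\cS \to \bar S_{\phi,t}, \bar p_{\phi,t}}(s, \cdot) - \wp_{\cS \to \bar S_{\phi,t}, p_\phi}(s, \cdot)$, and (III) a within-cell Lipschitz error $\wp_{\cS \to \bar S_{\phi,t}, p_\phi}(s, \cdot) - \wp_{\cS \to \bar S_{\phi,t}, p_\phi}(s', \cdot)$.

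Terms (II) and (III) are deterministic. For (III), since $s' \in q\inv_{\phi,t}(s)$, sub-additivity of $\rho$ and the $L_\phi$-Lipschitzness of $\phi$ give $\rho((s,\phi(s)),(s',\phi(s'))) \leq (1+L_\phi)\,\diamc{q\inv_{\phi,t}(s)}$, so Assumption~\ref{assum:lip}(ii) yields $\norm{p_\phi(s,\cdot) - p_\phi(s',\cdot)}_{TV} \leq L_p(1+L_\phi)\,\diamc{q\inv_{\phi,t}(s)}$; the $\ell_1$ norm of any discretization of a signed measure is dominated by its TV norm. For (II), both discretizations carry the same total mass on each coarse cell $q\inv(s''),\, s'' \in \tilde S_{\phi,t}(s)$ and differ only in how that mass is allocated among finer sub-cells; Assumption~\ref{assum:bdd_der} bounds the pointwise variation of $f_{\phi,s}$ across each coarse cell by $C_p\,\diamc{q\inv(s'')}$, so integrating and summing yields a total $\ell_1$ error of at most $C_p\,\diamc{q\inv_{\phi,t}(s)}$, absorbing the constant $\lambda(\cS)$ (finite since $\cS$ is bounded) into $C_p$. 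Together, (II)+(III) accounts for the deterministic part $(3(1+L_\phi)L_p + C_p)\,\diamc{q\inv_{\phi,t}(s)}$ of $\eta_{\phi,t}$.

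Term (I) is the genuinely random contribution. Because the two discretizations share the Lebesgue-proportional within-cell allocation rule, summing over finer sub-cells within each coarse cell collapses (I) to the coarse-scale deviation $\norm{\hat p^{(d)}_{\phi,t}(s,\cdot) - \wp_{\cS \to \tilde S_{\phi,t}(s), p_\phi}(s,\cdot)}_1$, which lives on $k_s = \cO(\diamc{q\inv_{\phi,t}(s)}^{-d_\cS})$ bins. The indicators appearing in~\eqref{def:p_hat_disc}, minus their conditional expectations $p_\phi(s_i, q\inv(s''))$, form a bounded martingale difference sequence adapted to $\cF_t$, so a coordinate-wise Azuma/Freedman bound together with a union bound over the $k_s$ bins (Bretagnolle--Huber--Carol style) gives, with probability $\geq 1 - \delta'$, a deviation of order $\sqrt{k_s \log(k_s/\delta')/N}$. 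Invoking the activation rule (Definition~\ref{def:activationrule}), which forces $N \geq c_d^{(b)} \log(T/\delta)/\diamc{q\inv_{\phi,t}(s)}^{d_\cS+2}$ whenever the cell is active, algebraic manipulation turns this into the $3\br{c_d^{(b)} \log(T/\delta)/N}^{1/(d_\cS+2)}$ form appearing in $\eta_{\phi,t}$, provided $c_d^{(b)}$ is chosen sufficiently large.

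A final union bound closes the argument: every cell in $S_{\phi,t}$ has been visited at least once before becoming active, so the number of distinct (policy, cell) pairs that ever appear during the first $T$ steps is at most $T$, the visit count $N$ ranges over $\{1,\ldots,T\}$, and $t \leq T$; the resulting $\cO(\log T)$ cost is absorbed into $c_d^{(b)}$, producing $\bP(\cG^{\mbox{Model}}) \geq 1 - \delta/3$. The main obstacle I expect is not any single term but the calibration between the three scales: the adaptive sampling under a changing policy, the coarse scale $\tilde S_{\phi,t}(s)$ on which the empirical frequencies actually live, and the finer scale $\bar S_{\phi,t}$ at which the claim is stated. Making the stochastic Bretagnolle--Huber--Carol rate at the coarse scale match precisely the $1/(d_\cS+2)$ exponent in $\eta_{\phi,t}$ relies crucially on the coupling between $N$ and $\diamc{q\inv_{\phi,t}(s)}$ enforced by the activation rule, and any slackness there would degrade the final regret rate.
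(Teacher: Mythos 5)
Your proposal is correct and follows essentially the same route as the paper's proof: a triangle-inequality split of the $\ell_1$ error into a stochastic multinomial-deviation term and deterministic discretization/Lipschitz bias terms (the paper folds your term (III) into the centering of the martingale differences and handles the varying ancestor-cell diameters via Lemma~\ref{lem:avg_diam}, and cites an external lemma for your term (II)), with the activation rule's visit-count lower bound converting the $\sqrt{|\tilde{S}_{\phi,t}(s)|/N}$ rate into the $\left(c^{(b)}_d \log(T/\delta)/N\right)^{1/(d_\cS+2)}$ form, followed by a union bound over cells, policies, visit counts, and time. One caveat on phrasing: to obtain the $\ell_1$ deviation $\sqrt{k_s\log(\cdot/\delta')/N}$ one must union over the $2^{k_s}$ subsets of bins (as in Bretagnolle--Huber--Carol, and as the paper does), not coordinate-wise over the $k_s$ bins, which would cost an extra $\sqrt{k_s}$ factor and break the calibration with the activation rule; since you invoke the correct inequality by name, this reads as a slip of wording rather than a gap.
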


\begin{proof}
    Fix a policy $\phi$, $t\in\{0,1,\ldots, T-1\}$, and consider a cell $\zeta$ that is active at time $t$,~and let $s\up \in \zeta$.~Denote the level of cell $\zeta$ by $\ell$ and the point $q(\zeta)$ by $s$. We want to get a high probability upper bound on $\norm{\wp_{S_{\phi,t} \to \bar{S}_{\phi,t}, \hat{p}_{\phi,t}}(s,\cdot) - \wp_{\cS \to \bar{S}_{\phi,t}, p_\phi}(s\up, \cdot)}_1$, where $s \in S_{\phi,t}$ and $s\up \in q\inv(s)$. We have that
    \begin{align}
        &\norm{\wp_{S_{\phi,t} \to \bar{S}_{\phi,t}, \hat{p}_{\phi,t}}(s,\cdot) - \wp_{\cS \to \bar{S}_{\phi,t}, p_\phi}(s\up, \cdot)}_1 \notag\\
        = & \norm{\hat{p}_{\phi,t}(s,\cdot) - \bar{\wp}_{\cS \to \bar{S}_{\phi,t}, p_\phi}(s\up,\cdot)}_{TV} \notag\\
        \leq & \norm{\hat{p}_{\phi,t}(s,\cdot) - \bar{\wp}_{\cS \to S\uc{\ell}, p_\phi}(s\up,\cdot)}_{TV} + \norm{\bar{\wp}_{\cS \to S\uc{\ell}, p_\phi}(s\up,\cdot) - \bar{\wp}_{\cS \to \bar{S}_{\phi,t}, p_\phi}(s\up,\cdot)}_{TV} \notag\\
        \leq & \norm{\hat{p}\uc{d}_{\phi,t}(s,\cdot) - \wp_{\cS \to S\uc{\ell}, p_\phi}(s\up,\cdot)}_{1} + \norm{\bar{\wp}_{\cS \to S\uc{\ell}, p_\phi}(s\up,\cdot) - \bar{\wp}_{\cS \to \bar{S}_{\phi,t}, p_\phi}(s\up,\cdot)}_{TV}. \label{ineq:cb_decomp}
    \end{align}
    From Lemma~I.2 of \citet{kar2024provably}, we have that $\norm{\bar{\wp}_{\cS \to \bar{S}_{\phi,t}, p_\phi}(s\up,\cdot) - \bar{\wp}_{\cS \to S\uc{\ell}, p_\phi}(s\up,\cdot)}_{TV} \leq \cO(\diamc{\zeta})$.~Now let us focus on the first term.~Note that bounding if $\zeta$ is not visited by $\phi$, the bound in $\cC_{\phi,t}$ holds trivially.~Note that both $\wp_{\cS \to S\uc{\ell}, p_\phi}(s\up,\cdot)$ and $\hat{p}\uc{d}_{\phi,t}(s,\cdot)$ have the same support $\tilde{S}_{\phi,t}(s)$. By Assumption~\ref{assum:stn_dist}, i.e., the stationary distribution has bounded support, by Assumption~\ref{assum:unif_ergodic}, and by the Borel-Cantelli Lemma, we have that the total number of time steps when the corresponding CMP is outside the support of the stationary distribution is at most finite. Due to this and the cell activation rule, one can find that $\abs{\tilde{S}_{\phi,t}(s)} \leq \cO\br{\log{(N_t(\phi))} + \diamc{\zeta}^{-d_\cS}}$ w.p. $1$.
    
    Let $\tilde{S}_{\phi,t}^+(s)$ denote the set of points from $\tilde{S}_{\phi,t}(s)$ such that for each $s_+ \in {\tilde{S}_{\phi,t}^+}(s)$, we have $\hat{p}\uc{d}_{\phi,t}(s,s_+) - \wp_{\cS \to S\uc{\ell}, p_\phi}(s\up,s_+) > 0$.~We can write the following:
    \begin{align}
        \bP\br{\norm{\hat{p}\uc{d}_{\phi,t}(s,\cdot) - \wp_{\cS \to S\uc{\ell}, p_\phi}(s\up,\cdot)}_1 \geq \iota} &= \bP\br{\max_{S\up \subset \tilde{S}_{\phi,t}^+(s)}{\sum_{s_+ \in S\up}{\hat{p}\uc{d}_{\phi,t}(s,s_+) - \wp_{\cS \to S\uc{\ell}, p_\phi}(s\up,s_+)}} \geq \frac{\iota}{2}} \notag\\
        &= \bP\br{\cup_{S\up \subset \tilde{S}_{\phi,t}^+}{\flbr{\sum_{s_+ \in S\up}{\hat{p}\uc{d}_{\phi,t}(s,s_+) - \wp_{\cS \to S\uc{\ell}, p_\phi}(s\up,s_+)} \geq \frac{\iota}{2}}}}.\label{cineq:union}
    \end{align}
    Note that if $S\up \subset \tilde{S}_{\phi,t}^+$, then $\tilde{S}_{\phi,t} \setminus S\up \not\subset \tilde{S}_{\phi,t}^+$. Hence the number of subsets of $\tilde{S}_{\phi,t}^+$ is at most $2^{|\tilde{S}_{\phi,t}|-1}$.~If $\bP\br{\sum_{s_+ \in S\up}{\hat{p}\uc{d}_{\phi,t}(s,s_+) - \wp_{\cS \to S\uc{\ell}, p_\phi}(s\up,s_+)} \geq \frac{\iota}{2}} \leq b_\iota,~\forall S\up \subset \tilde{S}_{\phi,t}^+$, then by an application of union bound on ~\eqref{cineq:union}, we obtain that the following must hold,
    \begin{align}
        \bP\br{\norm{\hat{p}\uc{d}_{\phi,t}(s,\cdot) - \wp_{\cS \to S\uc{\ell}, p_\phi}(s\up,\cdot)}_1 \geq \iota} \leq 2^{|\tilde{S}_{\phi,t}|-1} b_\iota.\label{ineq:l1byunion}
    \end{align}
    Consider a fixed $\xi \subseteq \cS$.~Define the following random processes,
    \begin{align}
        v_i(z) &:= \ind{s_i,\in \zeta_i, \phi_i = \phi},\\
        v_i(z,\xi) &:= \ind{(s_i, s_{i+1}) \in \zeta_i \times \xi, \phi_i = \phi},\\
        w_i(z,\xi) &:= v_i(z,\xi) - p(s_i,\phi(s_i),\xi) v_i(z),
    \end{align}
    where $i = 0, 1, \ldots, T-1$.~Let $S\up \subset \tilde{S}_{\phi,t}^+$ and $\xi = \cup_{s \in S\up}{q\inv_{\phi,t}(s)}$. Also, denote $N_t\br{\zeta}$ to be the number of visits to $\zeta$ till time $t$ while playing policy $\phi$. Similarly, denote $N_t\br{\zeta, \xi}$ to be the number of transitions from $\zeta$ to $\xi$ till time $t$ while playing policy $\phi$. Then we have,
    \begin{align}
        \sum_{s_+ \in S\up}{\hat{p}\uc{d}_{\phi,t}(s,s_+) - \wp_{\cS \to S\uc{\ell}, p_\phi}(s\up,s_+)} &= \frac{N_t\br{\zeta, \xi}}{N_t\br{\zeta}} - p(s\up,\phi(s\up),\xi) \notag\\
        &= \frac{N_t\br{\zeta, \xi} - p(s\up,\phi(s\up),\xi) N_t\br{\zeta}}{N_t\br{\zeta}} \notag\\
        &\leq \frac{1}{N_t\br{\zeta}}\br{\sum_{i = 0}^{t - 1}{w_i(z,\xi)}} + \frac{(1 + L_\phi) L_p}{2 N_t\br{\zeta}} \sum_{i=0}^{N_t(\zeta)}{\diamc{\zeta_{t_i}}}\notag\\
        &\leq \frac{1}{N_t\br{\zeta}}\br{\sum_{i = 0}^{t - 1}{w_i(z,\xi)}} + 1.5 (1 + L_\phi) L_p~ \diamc{\zeta}, \label{ineq:determ}
    \end{align}
    where the last step follows from Lemma~\ref{lem:avg_diam}.~Note that $\flbr{w_i(z,\zeta)}_{i \in [T-1]}$ is martingale difference sequence w.r.t. $\flbr{\cF_i}_{i \in [T-1]}$.~Moreover, $\abs{w_i(z,\zeta)} \leq 1$. Hence from Lemma~\ref{lem:ah_ineq} we have,
    \begin{align*}
        \bP\br{\flbr{\frac{\sum_{i=0}^{t-1}{w_i(z,\xi)}}{N_t\br{\zeta}} \geq \sqrt{\frac{2}{N_t(\zeta)} \log{\br{\frac{3}{\delta}}}}, N_t(\zeta) = N}} \leq \frac{\delta}{3},
    \end{align*}
    which when combined with~\eqref{ineq:determ} yields,
    \begin{align*}
        \bP\br{\flbr{\sum_{s_+ \in S\up}{\hat{p}\uc{d}_{\phi,t}(s,s_+) - \wp_{\cS \to S\uc{\ell}, p_\phi}(s\up,s_+)} \geq \sqrt{\frac{2}{N_t(\zeta)} \log{\br{\frac{3}{\delta}}}} + 1.5 (1 + L_\phi) L_p~ \diamc{\zeta}, N_t(\zeta) = N}}  \leq \frac{\delta}{3}.
    \end{align*}
    Upon using~\eqref{ineq:l1byunion} in the above, and taking a union bound over all possible values of $N$, we obtain,
    \begin{align*}
        \bP\br{\flbr{\norm{\hat{p}\uc{d}_{\phi,t}(s,\cdot) - \wp_{\cS \to S\uc{\ell}, p_\phi}(s\up,\cdot)}_1 \geq \sqrt{\frac{2 |\tilde{S}_{\phi,t}|}{N_t(\zeta)} \log{\br{\frac{3T}{\delta}}}} + 3 (1 + L_\phi) L_p~ \diamc{\zeta}, N_t(\zeta) = N}} \leq \frac{\delta}{3}.
    \end{align*}
    Note that since on a set of probability $1$, $\abs{\tilde{S}_{\phi,t}} = \ctO\br{\diamc{\zeta}^{-d_\cS}}$, we can replace $|\tilde{S}_{\phi,t}|$ by its upper bound, i.e.,
    \begin{align}
        \bP\br{\norm{\hat{p}\uc{d}_{\phi,t}(s,\cdot) - \wp_{\cS \to S\uc{\ell}, p_\phi}(s\up,\cdot)}_1  \geq \diamc{\zeta}^{-\frac{d_\cS}{2}} \sqrt{\frac{const\cdot \log{\br{\frac{3 T}{\delta}}}}{N_t(\zeta)}} + 3 (1 + L_\phi) L_p~ \diamc{\zeta}} \leq \frac{\delta}{3}.
    \end{align}
    Let $\cN_1 := 2 d_\cS^\frac{d_\cS}{2} \br{\frac{T}{c\uc{b}_d \log{\br{T/\delta}}}}^\frac{d}{d_\cS + 2}$. $\cN_1$ is an upper bound on the total number of cells \pzrlmb~can activate on any sample path.~Also note that there could be at most $T$ active policies.~Upon taking union bound over all the cells that could possibly be activated in all possible sample paths at some $t$ and over all active policies and using the fact that $N_t(\zeta) \geq N_{\min}(\zeta)$, we obtain that there exists a constant $c\uc{b}_d$ such that with a probability at least $1 - \frac{\delta}{3}$, 
    \begin{align}
        \norm{\hat{p}\uc{d}_{\phi,t}(s,\cdot) - \wp_{\cS \to S\uc{\ell}, p_\phi}(s\up,\cdot)}_1  \geq \br{\frac{c\uc{b}_d \log{\br{\frac{3 T}{\delta}}}}{N_t(\zeta)}}^\frac{1}{d_\cS + 2} + 3 (1 + L_\phi) L_p~ \diamc{\zeta},
    \end{align}
    for all $s \in S_{\phi,t}$, $s\up \in q\inv_{\phi,t}(s)$, $\phi \in \Phi^{act.}_t$, $t \in \{0,1,\ldots,T-1\}$.~Now, invoking the above in \eqref{ineq:cb_decomp}, we get what we intend to prove.
\end{proof}
\section{Analysis of \pzrlmf}
\label{app:analysis_mf}
In this section, we derive an upper bound on the total number of policies that can be activated by \pzrlmf, and also derive an upper bound on the number of plays of a policy in terms of its sub-optimality w.r.t. the policy class $\Phi$.~The proof of our main result, the regret bound~(Theorem~\ref{thm:reg_ub_mf}) crucially relies upon this result.~Corollary~\ref{cor:con_ineq_mf} specializes Lemma~\ref{lem:conc_ineq_avg_rew} for it to be used in the analysis of \pzrlmf.~In Lemma~\ref{lem:lb_phistr_diam} we derive a lower bound on $\diamf{t}{\phi}$ of an active policy $\phi$, in the event that $B_{t,\phi}$ contains an optimal policy.~Building upon these results, we prove the main result of this section in Lemma~\ref{lem:ub_nplay}.

Let $i\uc{f}:= \ceil{\frac{\log_2{\br{T}}}{d^\Phi_z + 2}}$. Let $\bar{\Phi}_\gm$ be the set consisting of the centers of balls that yield the smallest $\gm/2(2+L_J)$-covering of $\Phi_\gm$ for $\gm = 2^{-i}$, $i = 1, 2, \ldots, i\uc{f}$.~It follows from the definition of the zooming dimension~\eqref{def:zoomingdim} after performing few algebraic manipulations that the cardinality of the set $\cup_{i=1}^{i\uc{f}}{\bar{\Phi}\uc{f}_{2^{-i}}}$ is at most $c_{z_1} 2^{d^\Phi_z+1} T^{\frac{d^\Phi_z}{d^\Phi_z + 2}}$.~Let us define the following covering of $\Phi$,
\begin{align}
    \Phi\uc{f}_{\mbox{cover}} := \{B_\phi(2^{-i}/2(2+L_J)) \cap \Phi_{2^{-i}} : \phi \in \bar{\Phi}\uc{f}_{2^{-i}}, i = 1, 2, \ldots, i\uc{f}\},\label{def:phif_cover}
\end{align}
and the following set,\footnote{where the superscript denotes that this set is used for the analysis of model-free algorithm.}
\begin{align}
    \cG\uc{\mbox{MF}} &:= \Bigg\{\frac{1}{N_\tau(\Phi\up)}\abs{\sum_{t=1}^{\tau}{\ind{\phi_t \in \Phi\up} J_\cM(\phi_t)} -\sum_{t=1}^{\tau}{\ind{\phi_t \in \Phi\up} r(s_t,\phi_t(s_t))}} \leq \diamf{\tau}{\phi}, \notag\\
    &\qquad~\forall \Phi\up \in \Phi\uc{f}_{\mbox{cover}}, \forall \tau \in \{1,2,\ldots,T-1\}\Bigg\}, \label{def:G_mf}
\end{align}
where $\diamf{t}{\phi} = \frac{C}{1 - \alpha} \br{\sqrt{\frac{c\uc{f}_d \log{\br{\frac{T}{\delta}}}}{N_t(\phi)}} + \frac{1 + K_t(\phi)}{N_t(\phi)}}$ and $c\uc{f}_d$ be a constant such that,
\begin{align}
    c\uc{f}_d \geq \frac{\log{\br{\frac{c_z}{\delta} 2^{d^\Phi_z+2} T^{\frac{d^\Phi_z + 3}{d^\Phi_z + 2}}}}}{\log{\br{\frac{T}{\delta}}}}. \label{def:cdf}
\end{align}
The next result is a corollary from Lemma~\ref{lem:conc_ineq_avg_rew}, and shows that $\bP\br{\cG\uc{\mbox{MF}}}$ holds with a high probability.

\begin{cor}\label{cor:con_ineq_mf}
    $\bP\br{\cG\uc{\mbox{MF}}} \geq 1 - \frac{\delta}{2}.$
\end{cor}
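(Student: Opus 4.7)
The plan is to apply Lemma~\ref{lem:conc_ineq_avg_rew} directly to the finite cover $\Phi\uc{f}_{\mbox{cover}}$, with the confidence parameter replaced by $\delta/2$. The main task reduces to verifying that the uniform-convergence bound furnished by that lemma is dominated by $\diamf{\tau}{\phi}$, which is pure bookkeeping of logarithmic and constant factors using the choice of $c\uc{f}_d$ in~\eqref{def:cdf}.

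First I would bound the cardinality of the cover. By the definition of the zooming dimension~\eqref{def:zoomingdim}, for each $i \in \{1,\ldots,i\uc{f}\}$ one has $|\bar{\Phi}\uc{f}_{2^{-i}}| \le c_{z_1} (c_z/(2(2+L_J)))^{d^\Phi_z} 2^{i d^\Phi_z}$. Summing the resulting geometric series over $i$ and plugging in $i\uc{f} = \lceil \log_2(T)/(d^\Phi_z+2)\rceil$ yields
\[
N := |\Phi\uc{f}_{\mbox{cover}}| \le c_{z_1} 2^{d^\Phi_z+1} T^{d^\Phi_z/(d^\Phi_z+2)},
\]
as already claimed in the text preceding~\eqref{def:phif_cover}. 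Next, apply Lemma~\ref{lem:conc_ineq_avg_rew} to the collection $\{\Phi\up : \Phi\up \in \Phi\uc{f}_{\mbox{cover}}\}$ with $\delta \mapsto \delta/2$ and observe that $\spn{r} \le 1$ (since $r\in[0,1]$). This produces, on an event of probability at least $1-\delta/2$, the bound
\[
\frac{1}{N_\tau(\Phi\up)}\abs{\sum_{t=1}^{\tau}\ind{\phi_t \in \Phi\up}(J_\cM(\phi_t)-r(s_t,\phi_t(s_t)))} \le \frac{C}{1-\alpha}\br{\sqrt{\frac{\log\!\br{\tfrac{2(1+N_\tau(\Phi\up))\,N}{\delta}}}{N_\tau(\Phi\up)}} + \frac{1+K_\tau(\Phi\up)}{N_\tau(\Phi\up)}},
\]
uniformly over $\Phi\up \in \Phi\uc{f}_{\mbox{cover}}$ and $\tau \in \{1,\ldots,T-1\}$.

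The final step is to check that the argument of the square root is dominated by $c\uc{f}_d \log(T/\delta)$. Since $N_\tau(\Phi\up)\le T$, combining with the bound on $N$ gives
\[
\frac{2(1+N_\tau(\Phi\up))\,N}{\delta} \le \frac{c_z}{\delta}\, 2^{d^\Phi_z+2}\, T^{(d^\Phi_z+3)/(d^\Phi_z+2)},
\]
after absorbing the numerical constant and $c_{z_1}$ into $c_z$ (which is allowed since $c_z$ appears only as a lower bound on $c\uc{f}_d$). Taking logarithms and invoking~\eqref{def:cdf} bounds the log factor by $c\uc{f}_d \log(T/\delta)$, so the right-hand side above is at most $\diamf{\tau}{\phi}$ when $N_\tau(\phi),K_\tau(\phi)$ are read as $N_\tau(\Phi\up),K_\tau(\Phi\up)$ respectively. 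This proves $\bP(\cG\uc{\mbox{MF}})\ge 1-\delta/2$. There is no conceptual obstacle; the only care needed is the constant accounting, in particular ensuring that $c_z$ is taken large enough to absorb the factor $c_{z_1}\cdot 4$ arising from the crude bounds $1+N_\tau(\Phi\up)\le 2T$ and the cover cardinality estimate.
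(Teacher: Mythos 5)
Your proposal is correct and follows essentially the same route as the paper: bound $|\Phi\uc{f}_{\mbox{cover}}|$ via the zooming dimension, invoke Lemma~\ref{lem:conc_ineq_avg_rew} with confidence parameter $\delta/2$ and $\spn{r}\le 1$, and absorb the resulting logarithmic factor into $c\uc{f}_d$ through~\eqref{def:cdf}. The only difference is that you make the constant accounting explicit (e.g.\ the crude bounds $1+N_\tau(\Phi\up)\le 2T$ and the cover-cardinality estimate), which the paper's proof leaves implicit.
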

\begin{proof}
    First, we note that the cardinality of $\Phi\uc{f}_{\mbox{cover}}$ is at most $c_{z_1} 2^{d^\Phi_z+1} T^{\frac{d^\Phi_z}{d^\Phi_z + 2}}$.~We get from Lemma~\ref{lem:conc_ineq_avg_rew}, that the following holds with a probability greater than $1 - \frac{\delta}{2}$,
    \begin{align}
        &\frac{1}{N_\tau(\Phi\up)}\abs{\sum_{t=1}^{\tau}{\ind{\phi_t \in \Phi\up} J_\cM(\phi_t)} -\sum_{t=1}^{\tau}{\ind{\phi_t \in \Phi\up} r(s_t,\phi_t(s_t))}} \notag\\
        &\leq \frac{C}{1 - \alpha} \br{\sqrt{\frac{c\uc{f}_d}{N_\tau(\Phi\up)} \log{\br{\frac{T}{\delta}}}} + \frac{1 + K_\tau(\Phi\up)}{N_\tau(\Phi\up)}} = \diamf{\tau}{\phi}, \forall \tau \in \{1,2,\ldots,T-1\}, \forall \Phi\up \in \Phi\uc{f}_{\mbox{cover}}, \label{ineq:l11_1}
    \end{align}
    where $c\uc{f}_d$ satisfies~\eqref{def:cdf}.~This concludes the proof.
\end{proof}

\begin{lem}\label{lem:lb_phistr_diam}
    Let $\phi$ denote an active policy at time $t$ such that the optimal policy w.r.t. the policy class $\Phi$ belongs to $B_{t,\phi}$. Then, $\Delta_\Phi(\phi) \leq L_J \cdot \diamf{t}{\phi}$, where $L_J$ is as in Theorem~\ref{thm:lip_avg_reward}.
\end{lem}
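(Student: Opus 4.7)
The proof will be a direct application of the Lipschitz property of the average reward function established in Theorem~\ref{thm:lip_avg_reward}. The plan is the following: since $\phi$ is active at time $t$ and the optimal policy $\phi^\star \in \arg\max_{\phi' \in \Phi} J_\cM(\phi')$ lies in the confidence ball $B_{\phi,t}$, the defining condition of $B_{\phi,t}$ in~\eqref{def:pol_ball} immediately gives $\rho_\Phi(\phi^\star,\phi) \leq \diamf{t}{\phi}$. This is the only property of the ball that we need.

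Next I would invoke Theorem~\ref{thm:lip_avg_reward}, which shows that $J_\cM(\cdot)$ is $L_J$-Lipschitz with respect to the metric $\rho_\Phi$ used to define the policy balls (either $\rho_{\Phi,\infty}$ or the weighted metric $\rho_{\Phi,\nu}$, depending on which regime of Theorem~\ref{thm:lip_avg_reward} applies; in either case the Lipschitz constant is the $L_J$ used in~\eqref{def:index_mf}). Applied to the pair $(\phi^\star,\phi)$ this yields
\begin{align*}
    J_\cM(\phi^\star) - J_\cM(\phi) \;\leq\; |J_\cM(\phi^\star) - J_\cM(\phi)| \;\leq\; L_J\, \rho_\Phi(\phi^\star,\phi).
\end{align*}

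Finally I would chain the two inequalities: by definition of the suboptimality gap, $\Delta_\Phi(\phi) = J\ust_{\cM,\Phi} - J_\cM(\phi) = J_\cM(\phi^\star) - J_\cM(\phi)$, so combining with $\rho_\Phi(\phi^\star,\phi) \leq \diamf{t}{\phi}$ produces $\Delta_\Phi(\phi) \leq L_J\,\diamf{t}{\phi}$, which is exactly the stated bound.

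There is essentially no obstacle here; the statement is a one-line consequence of Lipschitz continuity combined with the membership condition for the confidence ball. The only point that deserves a brief remark is the implicit assumption that $\Phi$ contains an optimal policy so that $J\ust_{\cM,\Phi}$ is attained by some $\phi^\star$; if it is only an $\eps$-optimal member of $\Phi$ that lies in $B_{\phi,t}$, the argument still goes through at the cost of an additive $\eps$ that can be sent to zero, so the conclusion is unaffected.
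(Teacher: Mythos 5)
Your proof is correct and uses exactly the same ingredients as the paper's: membership of the optimal policy in $B_{\phi,t}$ gives $\rho_\Phi(\phi^\star,\phi)\leq \diamf{t}{\phi}$, and Theorem~\ref{thm:lip_avg_reward} converts that into a bound on $J_\cM(\phi^\star)-J_\cM(\phi)=\Delta_\Phi(\phi)$. The only cosmetic difference is that the paper wraps this one-line chain in a proof by contradiction, whereas you state it directly; your remark about attainment of $J\ust_{\cM,\Phi}$ is a fair observation that applies equally to the paper's version.
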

\begin{proof}
    Let us assume that $\Delta_\Phi(\phi) > L_J \cdot \diamf{t}{\phi}$. By definition the optimal policy is $\diamf{t}{\phi}$-close to $\phi$. Hence, by Theorem~\ref{thm:lip_avg_reward}, $J\ust_{\cM,\Phi} \leq J_{\cM}(\phi) + L_J\cdot \diamf{t}{\phi} < J_{\cM}(\phi) +  \Delta_\Phi(\phi)$, which is not true by definition of $\Delta_\Phi(\phi)$.~Thus, $L_J \cdot \diamf{t}{\phi}$ must be greater than $\Delta_\Phi(\phi)$.
\end{proof}

\begin{lem}\label{lem:ub_nplay}
    On the set $\cG\uc{\mbox{MF}}$, \pzrlmf~has the following properties. 
    \begin{enumerate}
        \item It cannot activate more than one policy from the sets $\Phi\uc{f}_{\mbox{cover}}$.\label{prop:1}
        \item It does not play a policy from $\Phi_{2^{-i}}$ for more than $c\uc{f}_d \br{\frac{2C(2 + L_J)}{1 - \alpha}}^2 \log{\br{\frac{T}{\delta}}} ~2^{2i}$ time steps, where $i = 1, 2, \ldots, i\uc{f}$.\label{prop:2}
    \end{enumerate}
\end{lem}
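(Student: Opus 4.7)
My plan is to prove Properties 1 and 2 together via a single key inequality: on $\cG\uc{\mbox{MF}}$, whenever \pzrlmf~selects $\phi \in \Phi_{2^{-i}}$ at the start of an episode $k$ (time $\tau_k$), one has $\diamf{\tau_k}{\phi} > 2^{-i}/(2+L_J)$. Property~\ref{prop:2} will then follow by inverting~\eqref{def:diamf}, and Property~\ref{prop:1} by a contradiction argument using the same inequality together with the covering invariance.

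To prove the key inequality I would carry out a two-sided optimism argument. On the upper side, I apply the concentration bound~\eqref{def:G_mf} to the cover element $\Phi\up \in \Phi\uc{f}_{\mbox{cover}}$ that contains $\phi$. Because $\mbox{diam}(\Phi\up) \leq 2^{-i}/(2+L_J)$, the Lipschitz property of $J_\cM$ from Theorem~\ref{thm:lip_avg_reward} lets me replace the weighted average of $J_\cM(\phi_t)$ over plays in $\Phi\up$ by $J_\cM(\phi)$ up to a slack absorbed into $\diamf{\tau_k}{\phi}$, yielding $\idxf_{\tau_k}(\phi) \leq J_\cM(\phi) + (2+L_J)\diamf{\tau_k}{\phi}$. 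On the lower side, the covering invariance produces an active $\phi_0$ with $\phi\ust \in B_{\phi_0,\tau_k}$; Lemma~\ref{lem:lb_phistr_diam} gives $L_J\,\diamf{\tau_k}{\phi_0} \geq \Delta_\Phi(\phi_0)$, and combined with the lower-tail half of $\cG\uc{\mbox{MF}}$ this produces $\idxf_{\tau_k}(\phi_0) \geq J\ust_{\cM,\Phi}$. Since \pzrlmf~chose $\phi$ over $\phi_0$, chaining $\idxf_{\tau_k}(\phi) \geq \idxf_{\tau_k}(\phi_0) \geq J\ust_{\cM,\Phi}$ and using $\Delta_\Phi(\phi) > 2^{-i}$ completes the inequality.

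Property~\ref{prop:2} then falls out by solving the closed form~\eqref{def:diamf}: the constraint $\diamf{\tau_k}{\phi} > 2^{-i}/(2+L_J)$ forces $N_{\tau_k}(\phi) < c\uc{f}_d\,(C(2+L_J)/(1-\alpha))^2\log(T/\delta)\,2^{2i}$, with the $K_{\tau_k}(\phi)$-contribution absorbable because the doubling schedule $H_k = 1 \vee N_{\tau_k}(\phi)$ makes $K_{\tau_k}(\phi) = O(\log N_{\tau_k}(\phi))$. This same doubling schedule ensures $N_t(\phi)$ at most doubles across any $\phi$-episode, yielding the stated uniform bound (the factor of $4$ coming from the doubling and from absorbing the $K$-term). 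For Property~\ref{prop:1}, suppose for contradiction that two policies $\phi_1,\phi_2 \in \Phi\up$ are both activated, with $\phi_2$ activated at time $t_2$; the activation rule forces $\rho_\Phi(\phi_1,\phi_2) > \diamf{t_2}{\phi_1}$, while $\rho_\Phi(\phi_1,\phi_2) \leq \mbox{diam}(\Phi\up) \leq 2^{-i}/(2+L_J)$ bounds it from above, so $\diamf{t_2}{\phi_1} < 2^{-i}/(2+L_J)$. The key inequality together with the doubling schedule prevents $\diamf{\cdot}{\phi_1}$ from falling below a threshold of this order (since any further shrinkage would require $\phi_1$ to be re-chosen at an episode where the key inequality must hold), consistent with the covering radius $2^{-i}/[2(2+L_J)]$ in~\eqref{def:phif_cover}; this delivers the contradiction.

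The hardest part is the lower-side optimism in the first step: it needs the active ball containing $\phi\ust$, namely $\phi_0$, to lie in some element of $\Phi\uc{f}_{\mbox{cover}}$ so that $\cG\uc{\mbox{MF}}$ provides empirical-reward control for it. When $\Delta_\Phi(\phi_0)$ falls below the threshold $2^{-i\uc{f}} = T^{-1/(d^\Phi_z+2)}$ used to define the cover, $\phi_0$ lies outside $\cup_{i\leq i\uc{f}}\Phi_{2^{-i}}$ and the argument must be adapted --- either by augmenting the good event with a dummy cover element near $\phi\ust$, or by observing that in this regime the algorithm already plays a near-optimal policy and the corresponding regret is absorbed into the $\eps T$ budget of the regret decomposition in the sketch of Theorem~\ref{thm:reg_ub_mf}. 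Threading this case split cleanly while keeping all multiplicative constants consistent with~\eqref{def:phif_cover} is the most delicate piece of bookkeeping.
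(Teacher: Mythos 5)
Your proposal follows essentially the same route as the paper's proof: the same two-sided use of $\cG\uc{\mbox{MF}}$ to get the optimism lower bound $\idxf_\tau(\phi\ust_\tau) \geq J\ust_{\cM,\Phi}$ and the upper bound $\idxf_\tau(\phi) \leq J_\cM(\phi) + (2+L_J)\diamf{\tau}{\phi}$, the same conclusion that a policy stops being selected once $\diamf{\tau}{\phi}$ falls below $\Delta_\Phi(\phi)/(2+L_J)$, the same inversion of \eqref{def:diamf} for Property~\ref{prop:2}, and the same activation-rule/packing argument for Property~\ref{prop:1}. The two bookkeeping points you flag at the end — the within-episode doubling of $N_t(\phi)$ and the fact that the active policy whose ball contains $\phi\ust$ may lie in $\Phi_{\leq 2^{-i\uc{f}}}$ and hence outside every element of $\Phi\uc{f}_{\mbox{cover}}$, so that $\cG\uc{\mbox{MF}}$ does not directly control its empirical reward — are genuine subtleties that the paper's own proof also leaves implicit rather than resolving, so identifying them is to your credit rather than a defect of your approach.
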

\begin{proof}
    First, we will prove \ref{prop:1}.~Let us assume that at time $\tau$, each of the sets in $\Phi\uc{f}_{\mbox{cover}}$ contains at most one active policy.~Hence, on the set $\cG\uc{\mbox{MF}}$, for every policy $\phi \in A_\tau$ the following holds,
    \begin{align}
        J_\cM(\phi) &\leq \frac{1}{N_\tau(\phi)}\sum_{t=1}^{\tau}{\ind{\phi_t = \phi} r(s_t,\phi_t(s_t))} + \diamf{\tau}{\phi}, \mbox{ and}\label{ineq:l11_2}\\
        J_\cM(\phi) &\geq \frac{1}{N_\tau(\phi)}\sum_{t=1}^{\tau}{\ind{\phi_t = \phi} r(s_t,\phi_t(s_t))} - \diamf{\tau}{\phi},\label{ineq:l11_3}
    \end{align}
    where $\diamf{\tau}{\phi}$ is as defined in~\eqref{def:diamf}.~Let $\phi\up$ be a policy from $B_\tau(\phi)$. Using Theorem~\ref{thm:lip_avg_reward} in conjunction with \eqref{ineq:l11_2}, we have that for all such $\phi, \tau, \phi\up$ on $\cG\uc{\mbox{MF}}$,
    \begin{align}
        J_\cM(\phi\up) &\leq \frac{1}{N_\tau(\phi)}\sum_{t=1}^{\tau}{\ind{\phi_t = \phi} r(s_t,\phi_t(s_t))} + (1+L_J) \diamf{\tau}{\phi} = \idxf_{\tau}(\phi). \label{ineq:l11_4}
    \end{align}
    Upon using \eqref{ineq:l11_3} in the definition of index~(Definition~\ref{def:index_mf}), on $\cG\uc{\mbox{MF}}$ we get the following upper bound of $\idxf_\tau(\phi)$,
    \begin{align}
        \idxf_\tau(\phi) \leq J_\cM(\phi) + (2 + L_J) \diamf{\tau}{\phi},~\forall \phi \in A_\tau. \label{ineq:index_ub}
    \end{align}
    Also, by the covering invariance property and by \eqref{ineq:l11_4}, there is an active policy $\phi\ust_\tau$ such that on $\cG\uc{\mbox{MF}}$, 
    \begin{align}
        \idxf_\tau(\phi\ust_\tau) \geq J\ust_{\cM,\Phi}.\label{ineq:index_lb}
    \end{align}
    Note that in order to have two active policies from some $\Phi\up \in \Phi\uc{f}_{\mbox{cover}}$, we require that the diameter of at least one active policy must be less than $2^{-i}/(2+L_J)$, where $\Phi\up \subset \Phi_{2^{-i}}$.~Next, we show that this is not possible.

    Let $\phi \in \Phi_{2^{-i}}$ be an active policy and let $\diamf{t}{\phi} < 2^{-i}/(2+L_J)$. We can safely assume that $B_{t,\phi}$ does not contain any optimal policy since from Lemma~\ref{lem:lb_phistr_diam} we have that $\diamf{t}{\phi} \geq 2^{-i}/L_J$ if $B_{t,\phi}$ contains an optimal policy.~For every $\tau$ when the diameter of every active policy belonging to $\Phi_{2^{-i}}$ is larger than $2^{-i}/(2+L_J)$, \eqref{ineq:index_ub} and \eqref{ineq:index_lb} hold on the set $\cG\uc{\mbox{MF}}$.~We observe that if $J_\cM(\phi) + (2 + L_J) \diamf{\tau}{\phi} \leq J\ust_{\cM,\Phi}$, then $\idxf_\tau(\phi) \leq \idxf_\tau(\phi\ust_\tau)$. Consequently, the policy $\phi$ will never be chosen by~\pzrlmf; after performing some algebraic calculations, we conclude that the radius of $B_t(\phi)$ will never drop below $\Delta_\Phi(\phi)/(2+L_J)$. Since initially at $t=0$ there is only a single active policy, each set in $\Phi\uc{f}_{\mbox{cover}}$ contains at most one active policy for every $\tau \in \{0,1,\ldots, T-1\}$. This concludes the proof of \ref{prop:1}.

    Now we will prove the second statement.~For the diameter of a policy $\phi \in \Phi_{2^{-i}}$ to go below $\Delta_\Phi(\phi)/(2 + L_J)$, it enough to play it $N$ times where $N$ is the smallest natural number that satisfies
    \begin{align*}
        \frac{C}{1 - \alpha} \br{\sqrt{\frac{c\uc{f}_d}{N} \log{\br{\frac{T}{\delta}}}} + \frac{1 + \log_2{(N)}}{N}} \leq 2^{-i}/(2 + L_J),
    \end{align*}
    or,
    \begin{align*}
         N \geq c\uc{f}_d \br{\frac{2C(2 + L_J)}{1 - \alpha}}^2 \log{\br{\frac{T}{\delta}}} ~2^{2i}.
    \end{align*}
    This proves the second claim.
\end{proof}
\section{Analysis of \pzrlmb}
\label{app:analysis_mb}
In this section, we derive some results that are essential for the regret analysis of \pzrlmb~which is carried out in Appendix~\ref{app:regret}.~Let $i\uc{b}:= \ceil{\frac{\log_2{\br{T}}}{2 d_\cS + d^\Phi_z + 3}}$. Let $\bar{\Phi}\uc{b}_\gm$ be the set consisting of the centers of balls that yield the smallest $\frac{\gm(C_{ub}+L_J)}{2}$-covering of $\Phi_\gm$, where the radius $\gm$ is taken to be equal to $2^{-i}$, and $i \in  \{ 1, 2, \ldots, i\uc{b}\}$.~It follows from the definition of the zooming dimension~\eqref{def:zoomingdim} after performing few algebraic manipulations, that the cardinality of the set $\cup_{i=1}^{i\uc{b}}{\bar{\Phi}\uc{b}_{2^{-i}}}$ is at most $c_{z_1} 2^{d^\Phi_z+1} T^{\frac{d^\Phi_z}{2 d_\cS + d^\Phi_z + 3}}$.~Let us define the following covering of $\Phi$,
\begin{align}
    \Phi\uc{b}_{\mbox{cover}} := \{B_\phi(2^{-i}/2(C_{ub}+L_J)) \cap \Phi_{2^{-i}} : \phi \in \bar{\Phi}\uc{b}_{2^{-i}}, i = 1, 2, \ldots, i\uc{b}\}.\label{def:phib_cover}
\end{align}
In the first result of this section, we derive a threshold for a policy as a function of its sub-optimality so that if the diameter of the policy goes below that threshold, it will not be played by \pzrlmb~anymore.~We use this result to show that \pzrlmb~will activate at most one policy from each set in $\Phi\uc{b}_{\mbox{cover}}$. 

\begin{lem}\label{lem:min_diam}
    Consider a policy $\phi$ for which we have $\diamb{t}{\phi} \leq \Delta_\Phi(\phi)/ (C_{ub} + L_J)$, where $C_{ub}$ is as defined in \eqref{def:C_ub}, and $L_J$ is as defined in \eqref{def:LJ}. Then, on the set $\cG\uc{\mbox{Model}}$, $\phi$ is not played by \pzrlmb~during times $s\ge t$.~Moreover, on the set $\cG\uc{\mbox{Model}}$, \pzrlmb~cannot activate more than one policy from the sets in $\Phi\uc{b}_{\mbox{cover}}$.
\end{lem}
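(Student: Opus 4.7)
My plan is to establish the two claims sequentially, closely mirroring the strategy used for the analogous Lemma~\ref{lem:ub_nplay} in the model-free case.

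\textbf{First claim.} The key observation is that $\diamb{s}{\phi}$ is non-increasing in $s$: cell refinements only shrink the cells $q_{\phi,s}\inv(\cdot)$ appearing in the integrand of~\eqref{def:diamb}, so the hypothesis transfers from time $t$ to every $s \geq t$, yielding $\diamb{s}{\phi} \leq \Delta_\Phi(\phi)/(C_{ub}+L_J)$. Plugging this into Lemma~\ref{lem:ub_opt_index} gives
\[
\idxb_s(\phi) \leq J_\cM(\phi) + (C_{ub}+L_J)\,\diamb{s}{\phi} \leq J_\cM(\phi) + \Delta_\Phi(\phi) = J\ust_{\cM,\Phi}.
\]
By the covering-invariance property there is always an active policy $\tilde{\phi}_s \in \Phi^{act.}_s$ whose ball $B_{\tilde{\phi}_s,s}$ contains some optimal $\phi^{\circ}$, and Lemma~\ref{lem:optimism} gives $\idxb_s(\tilde{\phi}_s) \geq J_\cM(\phi^{\circ}) = J\ust_{\cM,\Phi} \geq \idxb_s(\phi)$. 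To ensure $\phi \neq \tilde{\phi}_s$, I would use Theorem~\ref{thm:lip_avg_reward}: if $\phi^{\circ} \in B_{\phi,s}$, we would have $\Delta_\Phi(\phi) \leq L_J \diamb{s}{\phi} \leq L_J \Delta_\Phi(\phi)/(C_{ub}+L_J) < \Delta_\Phi(\phi)$, a contradiction (with the case $\Delta_\Phi(\phi)=0$ being vacuous). Therefore $\phi$ is never selected at any time $s \geq t$.

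\textbf{Second claim.} I would proceed by induction on $t$ with the inductive hypothesis that each set $\Phi\up \in \Phi\uc{b}_{\mbox{cover}}$ contains at most one active policy. The base case $t=0$ is trivial. For the inductive step, consider $\Phi\up = B_\phi(2^{-i}/2(C_{ub}+L_J)) \cap \Phi_{2^{-i}}$, whose diameter is at most $2^{-i}/(C_{ub}+L_J)$. Under the inductive hypothesis, every active $\phi \in \Phi_{2^{-i}}$ is the unique active element of its set in $\Phi\uc{b}_{\mbox{cover}}$, so the arguments of Lemmas~\ref{lem:optimism} and~\ref{lem:ub_opt_index} apply, and the reasoning from the first claim shows that such a $\phi$ ceases to be played once $\diamb{s}{\phi} \leq \Delta_\Phi(\phi)/(C_{ub}+L_J)$. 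Since the diameter decreases only when $\phi$ is played, it cannot drop below $\Delta_\Phi(\phi)/(C_{ub}+L_J) > 2^{-i}/(C_{ub}+L_J)$ (the strict inequality follows from $\phi \in \Phi_{2^{-i}}$). Hence the ball $B_{\phi,t+1}$ has radius at least $\diamc{\Phi\up}$, and therefore covers all of $\Phi\up$. Consequently, at time $t+1$ no uncovered policy from $\Phi\up$ remains to trigger an activation, closing the induction.

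\textbf{Main obstacle.} The most delicate point is that, unlike the continuous shrinkage of $\diamf{\cdot}{\phi}$, a single cell refinement in \pzrlmb~can produce a discrete jump in $\diamb{\cdot}{\phi}$, so that in principle one final play could drive the diameter below the threshold $2^{-i}/(C_{ub}+L_J)$ before the index is re-evaluated. I would handle this using the activation rule~(Definition~\ref{def:activationrule}), which relates a child cell's diameter to its parent's by a factor of $2$, together with the factor of $2$ built into the definition of $\Phi\uc{b}_{\mbox{cover}}$; this margin absorbs the worst-case overshoot per play and ensures that the ball still covers $\Phi\up$ at the moment of the next index check.
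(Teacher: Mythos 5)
Your proposal is correct and takes essentially the same route as the paper's proof: optimism (Lemma~\ref{lem:optimism}) combined with covering invariance gives $\max_{\phi' \in \Phi^{act.}_t}{\idxb_t(\phi')} \geq J\ust_{\cM,\Phi}$, Lemma~\ref{lem:ub_opt_index} caps $\idxb_t(\phi)$ at $J_\cM(\phi) + (C_{ub}+L_J)\,\diamb{t}{\phi} \leq J\ust_{\cM,\Phi}$ under the stated diameter condition so that $\phi$ is never selected again, and the second claim follows because the diameter of a policy that is no longer played is frozen above $2^{-i}/(C_{ub}+L_J)$, so its ball keeps covering its set in $\Phi\uc{b}_{\mbox{cover}}$ and blocks any further activation there. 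The discrete-overshoot concern you flag at the end is a real subtlety, but the paper's own proof elides it entirely, so your treatment is at least as careful as theirs.
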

\begin{proof}
    From Lemma~\ref{lem:optimism} we have that on the set $\cG\uc{\mbox{Model}}$,
    \begin{align*}
        \idxb_t(\phi\up) \geq J\ust_{\cM,\Phi}, \mbox{ where } t \in \{0,1,\ldots,T-1\}, \mbox{ and }\phi \in \Phi^{act.}_t, \phi\up \in B_{t,\phi}.
    \end{align*}
    Using the covering invariance property of \pzrlmb~in the above, we can write,
    \begin{align}
        \max_{\phi \in \Phi^{act.}_t}{\idxb_t(\phi)} \geq J\ust_{\cM,\Phi},~\forall t \in \{0,1,\ldots,T-1\}.
    \end{align}
    Hence, we conclude that if the index of an active policy $\phi$ decreases below $J\ust_{\cM,\Phi}$, then $\phi$ will never be played by \pzrlmb.~From the upper bound on the index of a policy that was obtained in Lemma~\ref{lem:ub_opt_index}, we have the following: an active policy $\phi$ will never be played from time $t$ onwards, if $\diamb{t}{\phi} \leq \Delta_\Phi(\phi)/ (C_{ub} + L_J)$. This concludes the first part of the proof.

    Note that the diameter of a policy does not decrease if it is not played. Let $\phi \in \Phi_{2^{-i}}$ for some $i \in \{1, 2, \ldots, i\uc{b}\}$. Then, the diameter of $\phi$ would not decrease below $\frac{2^{-i}}{C_{ub} + L_J}$, and for this reason there will be no active policy that is $\frac{2^{-i}}{C_{ub} + L_J}$ close to $\phi$.~We note that in case the algorithm activates more than one policy from a set in $\Phi\uc{b}_{\mbox{cover}}$, this condition is violated. This proves the second claim.
\end{proof}

Now, for a policy $\phi$ that is activated by \pzrlmb, we derive a bound $N_\phi$ on the number of plays of $\phi$. We first derive a sufficient condition under which $\diamb{t}{\phi} \leq \Delta_\Phi(\phi)/ (C_{ub} + L_J)$ holds. 

Let us consider the partition of the state-action space where every cell is of level $\ell_\phi = \ceil{\log_2{\br{\frac{C_{ub} + L_J}{\Delta_\Phi(\phi)}}}}$, and let $\cP\uc{\phi}$ be the cells in this partition such that for every $\zeta \in \cP\uc{\phi}$, we have $(q(\pi_\cS(\zeta)), \phi(q(\pi_\cS(\zeta)))) \in \zeta$.~Note that the cardinality of $\cP\uc{\phi}$ is $2^{\ell_\phi d_\cS}$.~Consider those cells in $\cP\uc{\phi}$ to which the measure $\mu\uc{\infty}_{\phi,p}$ assigns a mass of at least $\br{\frac{\Delta_\Phi(\phi)}{2(C_{ub} + L_J)}}^{d_\cS+1}$.~We note that if all of these cells are deactivated by the algorithm, then we have,
\begin{align}
    &\int_{\cS}{\diamc{\zeta^\phi_s} \mu\uc{\infty}_{\phi,p}(s) ds} \notag\\
    &\leq \sum_{\substack{\zeta  \in \cP\uc{\phi}, \\\mu\uc{\infty}_{\phi,p}(\pi_\cS(\zeta)) \geq \br{\frac{\Delta_\Phi(\phi)}{2(C_{ub} + d_\cS)}}^{d_\cS+1}}}{\frac{\diamc{\zeta}}{2} \cdot \mu\uc{\infty}_{\phi,p}(\pi_\cS(\zeta))} + \sum_{\substack{\zeta  \in \cP\uc{\phi}, \\\mu\uc{\infty}_{\phi,p}(\pi_\cS(\zeta)) < \br{\frac{\Delta_\Phi(\phi)}{2(C_{ub} + d_\cS)}}^{d_\cS+1}}}{1 \cdot \mu\uc{\infty}_{\phi,p}(\pi_\cS(\zeta))} \notag\\
    &< \frac{\Delta_\Phi(\phi)}{2(C_{ub} + L_J)} + \frac{\Delta_\Phi(\phi)}{2(C_{ub} + L_J)} \notag\\
    &= \frac{\Delta_\Phi(\phi)}{C_{ub} + L_J} \label{cond:diam_cell}
\end{align}
where in order to obtain the second inequality, we use the fact that there are $2^{\ell_\phi d_\cS}$ cells in $\cP\uc{\phi}$.

Next, we derive a high probability lower bound on the number of visits to a cell, in the event that a policy that passes through this cell has been played sufficiently many times. Define,
\begin{align}
    \cG\uc{\mbox{Visit}} := &\Bigg\{\sum_{t=1}^{\tau}{\ind{\phi_t \in \Phi\up, (s_t,\phi_t(s_t)) \in \zeta_{L+}}} \geq \sum_{t=1}^{\tau}{\ind{\phi_t \in \Phi\up} \mu\uc{\infty}_{\phi_t,p}(\zeta)} - \frac{C}{1 - \alpha} \sqrt{N_t(\Phi\up)\log{\br{\frac{3T\abs{\Phi\uc{b}_{\mbox{cover}}} \abs{\cup_{i=0}^{i\uc{b}}{\cP\uc{i}}}}{\delta}}}} , \notag \\
    &\quad + \frac{C}{1 - \alpha} \br{K_\tau(\Phi\up) + 1}\mbox{ for every } t \in \{1, 2, \ldots, T-1\}, \Phi\up \in \Phi\uc{b}_{\mbox{cover}}, \zeta \in \cup_{i=0}^{i\uc{b}}{\cP\uc{i}} \Bigg\}. \label{def:Gvisit}
\end{align}

\begin{cor}\label{cor:bdd_visit}
    $\bP\br{\cG\uc{\mbox{Visit}}} \geq 1 - \frac{\delta}{3}$.
\end{cor}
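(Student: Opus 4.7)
The plan is to mirror the argument used for Lemma~\ref{lem:conc_ineq_avg_rew}, replacing the reward $r(s,\phi(s))$ by the indicator $f^\phi_\zeta(s) := \ind{(s,\phi(s)) \in \zeta_{L+}}$ and the long-run average $J_\cM(\phi)$ by $\mu\uc{\infty}_{\phi,p}(\zeta)$. Since $f^\phi_\zeta \in [0,1]$ and Assumption~\ref{assum:unif_ergodic} holds, the argument of Lemma~\ref{lem:bdd_rvf_spn} produces, for each pair $(\phi,\zeta)$, a solution $h^{\phi,\zeta}$ to the Poisson equation
\begin{align*}
\mu\uc{\infty}_{\phi,p}(\zeta) + h^{\phi,\zeta}(s) = f^\phi_\zeta(s) + \int h^{\phi,\zeta}(s\up)\, p(s,\phi(s),ds\up),
\end{align*}
with $\spn{h^{\phi,\zeta}} \le C/(2(1-\alpha))$.

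Fix $(\Phi\up,\zeta)$ and $\tau$. Using $h^{\phi_t,\zeta}$ in exactly the Markovian-to-martingale telescoping identity employed in Lemma~\ref{lem:conc_ineq_avg_rew}, I would rewrite
\begin{align*}
\sum_{t=1}^{\tau}{\ind{\phi_t \in \Phi\up}\br{\mu\uc{\infty}_{\phi_t,p}(\zeta) - f^{\phi_t}_\zeta(s_t)}} = M_\tau + E_\tau,
\end{align*}
where
\begin{align*}
M_\tau := \sum_{t=1}^{\tau}{\ind{\phi_t \in \Phi\up}\br{\int h^{\phi_t,\zeta}(s)\, p(s_{t-1},\phi_{t-1}(s_{t-1}),ds) - h^{\phi_t,\zeta}(s_t)}}
\end{align*}
is an $(\cF_t)$-martingale (using that $\phi_t$ is $\cF_{t-1}$-measurable because the policy is fixed at the start of each episode), and $E_\tau$ collects the three boundary/policy-switch remainders. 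By the span bound above, each martingale increment is conditionally bounded in magnitude by $C/(2(1-\alpha))$, hence conditionally $C/(2(1-\alpha))$-sub-Gaussian, so Corollary~\ref{cor:self_norm_vec} gives $|M_\tau| \le \frac{C}{1-\alpha}\sqrt{N_\tau(\Phi\up)\,\log((1+N_\tau(\Phi\up))/\delta\up)}$ uniformly in $\tau$ with probability at least $1-\delta\up$. Since the algorithm plays in episodes, the indicators $\ind{\phi_t \in \Phi\up}$ and the summands in $h^{\phi_{t-1},\zeta}-h^{\phi_t,\zeta}$ toggle at most $2K_\tau(\Phi\up)$ times, so the span bound gives $|E_\tau| \le \frac{C}{1-\alpha}(K_\tau(\Phi\up)+1)$.

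Finally, I would set $\delta\up := \delta / \br{3\, T\, |\Phi\uc{b}_{\mbox{cover}}|\, |\cup_{i=0}^{i\uc{b}}\cP\uc{i}|}$ and take a union bound over $(\tau,\Phi\up,\zeta) \in \{1,\dots,T-1\}\times \Phi\uc{b}_{\mbox{cover}}\times \cup_{i=0}^{i\uc{b}}\cP\uc{i}$; the resulting logarithmic factor is exactly the one appearing under the square root in \eqref{def:Gvisit}, and combining the two error contributions yields $\bP(\cG\uc{\mbox{Visit}}) \ge 1-\delta/3$. The main subtlety, inherited from Lemma~\ref{lem:conc_ineq_avg_rew}, is that $h^{\phi,\zeta}$ depends on the currently played policy: the telescoping has to pair $h^{\phi_t,\zeta}(s_t)$ with the kernel $p(s_{t-1},\phi_{t-1}(\cdot),\cdot)$ of the \emph{previous} step so that $M_\tau$ is a martingale, and the resulting $h^{\phi_{t-1},\zeta}-h^{\phi_t,\zeta}$ mismatch is what produces the $K_\tau(\Phi\up)$ switching penalty in $E_\tau$.
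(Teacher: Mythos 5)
Your proposal is correct and follows essentially the same route as the paper: the paper simply applies Lemma~\ref{lem:conc_ineq_avg_rew} as a black box to the auxiliary MDP $\cM_\zeta = (\cS,\cA,p,\ind{\zeta})$, using $J_{\cM_\zeta}(\phi) = \mu\uc{\infty}_{\phi,p}(\zeta)$, and then union-bounds over cells, whereas you unroll that lemma's Poisson-equation/martingale argument for the indicator reward explicitly. The substance (span bound, telescoping, self-normalized concentration, switching penalty, union bound over $(\Phi\up,\zeta)$) is identical.
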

\begin{proof}
    Fix a cell $\zeta \in \cup_{i=0}^{i\uc{b}}{\cP\uc{i}}$, and consider the MDP $\cM_\zeta := (\cS, \cA, p, \ind{\zeta})$ in which the agent receives a reward of $1$ unit at time $t$ in the event $(s_t,a_t) \in {\zeta}_{L+}$, and receives no reward otherwise.~Here, the subscript $L+$ denotes the $L$-neighborhood of the corresponding cell.~From Lemma~\ref{lem:conc_ineq_avg_rew}, we obtain that with a probability at least $1 - \frac{\delta}{3}$, the following holds,
    \begin{align*}
        \abs{\sum_{t=1}^{\tau}{\ind{\phi_t \in \Phi\up}\br{J_{\cM_\zeta}(\phi_t) - \ind{(s_t,\phi_t(s_t)) \in \zeta_{L+}}}}} \leq \frac{C}{1 - \alpha} \br{\sqrt{\sum_{t=1}^{\tau}{\ind{\phi_t \in \Phi\up}}\log{\br{\frac{3T\abs{\Phi\uc{b}_{\mbox{cover}}}}{\delta}}}} + K_\tau(\Phi\up) + 1},\\
        ~\forall \tau \in \{1,2,\ldots,T-1\}, \Phi\up \in \Phi\uc{b}_{\mbox{cover}},
    \end{align*}
    where $K_\tau(\Phi\up)$ is the number of episodes that started before time $t$ and played policy from $\Phi\up$, $\delta \in (0,1)$.~Noting that $\sum_{t=1}^{\tau}{\ind{\phi_t \in \Phi\up}} = N_t(\Phi)$, and $J_{\cM_\zeta}(\phi_t) = \mu\uc{\infty}_{\phi,p}(\xi)$, we rewrite the above result as follows: with probability at least $1 - \frac{\delta}{3}$,
    \begin{align*}
        \abs{\sum_{t=1}^{\tau}{\ind{\phi_t \in \Phi\up} \mu\uc{\infty}_{\phi_t,p}(\pi_\cS(\zeta))} - \sum_{t=1}^{\tau}{\ind{\phi_t \in \Phi\up, (s_t,\phi_t(s_t)) \in \zeta_{L+}}}} \leq \frac{C}{1 - \alpha} \br{\sqrt{N_t(\Phi\up)\log{\br{\frac{3T\abs{\Phi\uc{b}_{\mbox{cover}}}}{\delta}}}} + K_\tau(\Phi\up) + 1},\\
        ~\forall \tau \in \{1,2,\ldots,T-1\}, \Phi\up \in \Phi\uc{b}_{\mbox{cover}}.
    \end{align*}
    where $\delta \in (0,3)$.~Lastly, taking a union bound over all $\zeta \in \cup_{i=0}^{i\uc{b}}{\cP\uc{i}}$, we get that with probability at least $1 - \frac{\delta}{3}$,
    \begin{align}
        \abs{\sum_{t=1}^{\tau}{\ind{\phi_t \in \Phi\up} \mu\uc{\infty}_{\phi_t,p}(\pi_\cS(\zeta))} - \sum_{t=1}^{\tau}{\ind{\phi_t \in \Phi\up, (s_t,\phi_t(s_t)) \in \zeta_{L+}}}} \leq \frac{C}{1 - \alpha} \sqrt{N_t(\Phi\up)\log{\br{\frac{3T \abs{\Phi\uc{b}_{\mbox{cover}}} \abs{\cup_{i=0}^{i\uc{b}}{\cP\uc{i}}}}{\delta}}}} \notag\\
        + \frac{C}{1 - \alpha} \br{K_\tau(\Phi\up) + 1}, ~\forall \tau \in \{1,2,\ldots,T-1\}, \Phi\up \in \Phi\uc{b}_{\mbox{cover}}, \zeta \in \cup_{i=0}^{i\uc{b}}{\cP\uc{i}}. \label{ineq:visit}
    \end{align}
    This concludes the proof.
\end{proof}

Next, using the result from Lemma~\ref{lem:min_diam}, we show the an useful property of the set $\cG\uc{\mbox{Visit}} \cap \cG\uc{\mbox{Model}}$.
\begin{cor}\label{cor:pro_VcapM}
    Consider a sample path from the set $\cG\uc{\mbox{Visit}} \cap \cG\uc{\mbox{Model}}$. Then, the following holds for all times $t = 1, 2, \ldots, T-1$: If the policy $\phi_t$ satisfies $\Delta_\Phi(\phi_t) > 2^{-i\uc{b}}$, then we have,
    \begin{align*}
        \sum_{t=0}^{\tau}{\ind{(s_t,\phi_t(s_t)) \in \zeta_{L+}}} \geq N_\tau(\phi_t) \mu\uc{\infty}_{\phi,p}(\pi_\cS(\zeta)) - \frac{C}{1 - \alpha} \br{\sqrt{N_\tau(\phi_t)  c\uc{1}_v \log{\frac{T}{\delta}}} + \log_2(N_\tau(\phi_t)) + 1},
    \end{align*}
    for every cell $\zeta \in \cP\uc{\phi_t}$, where the constant $c\uc{1}_v$ is defined as follows,
    \begin{align}
        c\uc{1}_v  := \frac{\log{\br{\frac{3c_{z_1}}{\delta} 2^{d^\Phi_z + 2} T^{1+\frac{d + d^\Phi_z}{2 d_\cS + d^\Phi_z + 3}}}} }{\log{\br{\frac{T}{\delta}}}}. \label{def:cv1}
    \end{align}
\end{cor}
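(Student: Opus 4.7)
The plan is to upgrade the group-level concentration bound encoded by $\cG\uc{\mbox{Visit}}$ into an individual-policy statement by exploiting the uniqueness guarantee of Lemma~\ref{lem:min_diam}. Fix a time $t$ and set $\phi := \phi_t$; by hypothesis $\Delta_\Phi(\phi) > 2^{-i\uc{b}}$. Since $\Phi_\gm = \{\phi\up : \Delta_\Phi(\phi\up) \in (\gm, 2\gm]\}$ and suboptimality is bounded by $1$, there exists $i \in \{1, \ldots, i\uc{b}\}$ with $\phi \in \Phi_{2^{-i}}$, and by the construction of $\Phi\uc{b}_{\mbox{cover}}$ in~\eqref{def:phib_cover} there is a (unique) element $\Phi\up \in \Phi\uc{b}_{\mbox{cover}}$ with $\phi \in \Phi\up$.

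Next I translate plays of $\Phi\up$ into plays of $\phi$. On $\cG\uc{\mbox{Model}}$, Lemma~\ref{lem:min_diam} guarantees that \pzrlmb~never activates two policies from the same element of $\Phi\uc{b}_{\mbox{cover}}$. Because $\phi$ was activated (it is the time-$t$ play), it is the unique member of $\Phi\up$ ever played by the algorithm, so $\ind{\phi_s \in \Phi\up} = \ind{\phi_s = \phi}$ for every $s \le \tau$. This yields $N_\tau(\Phi\up) = N_\tau(\phi)$ and $K_\tau(\Phi\up) = K_\tau(\phi)$. The episode-length rule $H_k = 1 \vee N_t(\phi_k)$ in Algorithm~\ref{algo:pzrl} forces the number of plays of $\phi$ to at least double across successive episodes in which $\phi$ is selected, giving $K_\tau(\phi) \le \log_2(N_\tau(\phi)) + 1$.

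With these identifications in hand I instantiate the bound~\eqref{def:Gvisit} at the pair $(\Phi\up, \zeta)$ for any $\zeta \in \cP\uc{\phi}$. The drift $\sum_{s=1}^{\tau} \ind{\phi_s \in \Phi\up}\, \mu\uc{\infty}_{\phi_s,p}(\pi_\cS(\zeta))$ collapses to $N_\tau(\phi)\, \mu\uc{\infty}_{\phi,p}(\pi_\cS(\zeta))$, the visit count on the left becomes $\sum_{s} \ind{\phi_s = \phi,\,(s_s,\phi(s_s)) \in \zeta_{L+}}$, the concentration term $\sqrt{N_\tau(\Phi\up)\log(\cdot)}$ becomes $\sqrt{N_\tau(\phi)\log(\cdot)}$, and the episode-switching offset $K_\tau(\Phi\up) + 1$ is replaced by $\log_2(N_\tau(\phi)) + 1$. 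This is precisely the claimed inequality, modulo identifying the logarithmic constant.

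The one place that demands care, and which I expect to be the main bookkeeping obstacle, is matching the log argument to $c\uc{1}_v$ as defined in~\eqref{def:cv1}. For this I substitute the two combinatorial estimates: the zooming-dimension definition together with the choice of $i\uc{b} = \ceil{\log_2(T)/(2d_\cS + d^\Phi_z + 3)}$ yields $\abs{\Phi\uc{b}_{\mbox{cover}}} \le c_{z_1} 2^{d^\Phi_z+1}\, T^{d^\Phi_z/(2 d_\cS + d^\Phi_z + 3)}$, and the level-$\ell$ count $\abs{\cP\uc{\ell}} \le 2^{\ell d}$ summed geometrically over $\ell \le \ell_{i\uc{b}} \asymp i\uc{b}$ gives $\abs{\cup_{i=0}^{i\uc{b}} \cP\uc{i}} = \cO(T^{d/(2 d_\cS + d^\Phi_z + 3)})$. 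Multiplying by the prefactor $T$ already present in $\cG\uc{\mbox{Visit}}$ produces an exponent of $T$ equal to $1 + (d + d^\Phi_z)/(2 d_\cS + d^\Phi_z + 3)$ inside the logarithm, so after absorbing constants the argument is proportional to the one in the definition of $c\uc{1}_v$, and dividing by $\log(T/\delta)$ reproduces the stated constant exactly.
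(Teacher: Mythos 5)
Your proposal is correct and follows essentially the same route as the paper: instantiate the $\cG\uc{\mbox{Visit}}$ bound at the covering element of $\Phi\uc{b}_{\mbox{cover}}$ containing $\phi_t$, use Lemma~\ref{lem:min_diam} to conclude that $\phi_t$ is the only policy from that element ever played (so the group-level counts collapse to $N_\tau(\phi_t)$ and $K_\tau(\phi_t)$), bound $K_\tau(\phi_t)$ by $\log_2(N_\tau(\phi_t))$ via the episode-doubling rule, and assemble the constant $c\uc{1}_v$ from the bounds on $\abs{\Phi\uc{b}_{\mbox{cover}}}$ and $\abs{\cup_{i=0}^{i\uc{b}}\cP\uc{i}}$. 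Your write-up is in fact more explicit than the paper's about why the indicator $\ind{\phi_s \in \Phi\up}$ reduces to $\ind{\phi_s = \phi}$, which is the step the paper leaves implicit.
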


\begin{proof}
    We discussed the following bound previously in this section
    \begin{align}
        \abs{\Phi\uc{b}_{\mbox{cover}}} \leq c_{z_1} 2^{d^\Phi_z+1} T^{\frac{d^\Phi_z}{2 d_\cS + d^\Phi_z + 3}}. \label{ub:phi_cov}
    \end{align}
    Note that
    \begin{align}
        \abs{\cup_{i=0}^{i\uc{b}}{\cP\uc{i}}} \leq \sum_{i=0}^{i\uc{b}}{2^{i d}} \leq 2 T^{\frac{d}{2 d_\cS + d^\Phi_z + 3}}.\label{ub:cell_set}
    \end{align}
    From \eqref{ub:phi_cov} and \eqref{ub:cell_set}, we have that
    \begin{align*}
        \frac{3T \abs{\Phi\uc{b}_{\mbox{cover}}} \abs{\cup_{i=0}^{i\uc{b}}{\cP\uc{i}}}}{\delta} = \frac{3c_{z_1}}{\delta} 2^{d^\Phi_z + 2} T^{1+\frac{d + d^\Phi_z}{2 d_\cS + d^\Phi_z + 3}}.
    \end{align*}
    From Lemma~\ref{lem:min_diam} we have that \pzrlmb~activates at most one policy from each of the sets $\Phi\up \in \Phi\uc{b}_{\mbox{cover}}$.~We use these observations above in conjunction with \eqref{ineq:visit} in order to conclude that on the set $\cG\uc{\mbox{Visit}} \cap \cG\uc{\mbox{Model}}$, we have,
    \begin{align*}
        \sum_{t=0}^{\tau}{\ind{(s_t,\phi_t(s_t)) \in \zeta_{L+}}} \geq N_\tau(\phi_t) \mu\uc{\infty}_{\phi,p}(\pi_\cS(\zeta)) - \frac{C}{1 - \alpha} \br{\sqrt{N_\tau(\phi_t)  c\uc{1}_v \log{\frac{T}{\delta}}} + \log_2(N_\tau(\phi_t)) + 1},\\
        \forall t \in \{1,2,\ldots, T-1\},
    \end{align*}
    where while deriving the above, we have used the fact that $K_\tau(\phi) \leq \log_2(N_\tau(\phi))$ for any $\phi$ while playing \pzrlmb. This concludes the proof.
\end{proof}

\begin{lem}\label{lem:bd_num_play}
    On the set $\cG\uc{\mbox{Visit}} \cap \cG\uc{\mbox{Model}}$, \pzrlmb~does not play any policy $\phi \in \Phi_{2^{-i}}$ for more than
    \begin{align*}
        \br{\br{\br{\frac{2C}{1 - \alpha}}^2 c\uc{1}_v + 3 c\uc{b}_d} \log{\br{\frac{T}{\delta}}} \br{2(C_{ub} + L_J)}^{2d_\cS+3}} \cdot 2^{i(2d_\cS+3)}
    \end{align*}
    time steps, where $i = 1, 2, \ldots, i\uc{b}$.
\end{lem}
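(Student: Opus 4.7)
The plan is to identify the smallest number of plays $N_\tau(\phi)$ that provably triggers the stopping criterion from Lemma~\ref{lem:min_diam}: on $\cG\uc{\mbox{Model}}$, any $\phi \in \Phi_{2^{-i}}$ is no longer played once $\diamb{t}{\phi} \le \Delta_\Phi(\phi)/(C_{ub}+L_J)$. The pre-lemma calculation~\eqref{cond:diam_cell} already shows that a sufficient condition for this inequality is that every level-$\ell_\phi$ cell $\zeta \in \cP\uc{\phi}$ with stationary mass $\mu := \mu^{(\infty)}_{\phi,p}(\pi_\cS(\zeta)) \ge (\Delta_\Phi(\phi)/(2(C_{ub}+L_J)))^{d_\cS+1}$ is deactivated, where $\ell_\phi := \lceil \log_2((C_{ub}+L_J)/\Delta_\Phi(\phi)) \rceil$. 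Call such cells \emph{heavy}. The question thus reduces to: how many plays of $\phi$ are needed so that every heavy cell is deactivated?

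By Definition~\ref{def:activationrule}, deactivation occurs once the inherited counter $N_{\phi,t}(\zeta)$ reaches $N_{\max}(\zeta)$. Because each visit to a descendant of $\zeta$ while $\phi$ is played charges the counter of whichever ancestor of $\zeta$ is currently active, the empirical sum $\sum_{t=0}^{\tau}\ind{\phi_t=\phi,\,(s_t,\phi_t(s_t))\in \zeta_{L+}}$ lower-bounds $N_{\phi,\tau}(\zeta)$ up to boundary discrepancies that can be absorbed into constants. Corollary~\ref{cor:pro_VcapM} then yields on $\cG\uc{\mbox{Visit}}\cap\cG\uc{\mbox{Model}}$
\begin{align*}
N_{\phi,\tau}(\zeta) \;\ge\; N_\tau(\phi)\,\mu \;-\; \frac{C}{1-\alpha}\Big(\sqrt{N_\tau(\phi)\,c\uc{1}_v\log(T/\delta)} + \log_2 N_\tau(\phi) + 1\Big).
\end{align*}

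I will then find the smallest $N_\tau(\phi)$ for which the right-hand side exceeds $N_{\max}(\zeta)$ uniformly over heavy $\zeta$. Using $\Delta_\Phi(\phi) \ge 2^{-i}$ together with the definitions of $\ell_\phi$ and heaviness gives $\mu \ge (2(C_{ub}+L_J))^{-(d_\cS+1)}\, 2^{-i(d_\cS+1)}$ and $\diamc{\zeta} \gtrsim 2^{-i}/(C_{ub}+L_J)$, and hence
\begin{align*}
\frac{N_{\max}(\zeta)}{\mu} \;\lesssim\; c\uc{b}_d\,\log(T/\delta)\,(2(C_{ub}+L_J))^{2d_\cS+3}\,2^{i(2d_\cS+3)}.
\end{align*}
Splitting the deactivation condition into (i) $N_\tau(\phi)\,\mu \ge 2N_{\max}(\zeta)$ and (ii) the fluctuation term bounded by $N_\tau(\phi)\mu/2$, part (i) contributes the $c\uc{b}_d$ summand in the claimed bound, while part (ii)—a quadratic inequality in $\sqrt{N_\tau(\phi)}$ whose solution scales as $(2C/(1-\alpha))^2 c\uc{1}_v\log(T/\delta)/\mu^2$—contributes the $(2C/(1-\alpha))^2 c\uc{1}_v$ summand on the same $2^{i(2d_\cS+3)}$ scale. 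Adding the two (and absorbing the $\log_2 N_\tau(\phi)+1$ slack from Corollary~\ref{cor:pro_VcapM} into these leading terms) gives the claim.

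The main obstacle I expect is the bookkeeping between $N_{\phi,t}(\zeta)$, which counts visits registered whenever $\zeta$ or one of its ancestors is the active cell, and the empirical sum in Corollary~\ref{cor:pro_VcapM}, which counts visits to the slightly enlarged neighborhood $\zeta_{L+}$. Making this comparison quantitative requires tracing the partition-tree dynamics across all episodes during which $\phi$ is in force and verifying that mis-attribution of a visit to a wrong ancestor does not damage the deactivation argument beyond the constants already present. Everything else is routine arithmetic using Assumption~\ref{assum:stn_dist} (which guarantees that heavy cells exist and $\mu$ cannot collapse for every level-$\ell_\phi$ cell simultaneously), the definitions of $N_{\max}$ and $\cP\uc{\phi}$, and the heavy-cell threshold.
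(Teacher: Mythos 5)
Your proposal follows essentially the same route as the paper: reduce to the stopping criterion of Lemma~\ref{lem:min_diam} via the heavy-cell computation in~\eqref{cond:diam_cell}, lower-bound the visit counts with Corollary~\ref{cor:pro_VcapM}, and solve for the smallest $N_\tau(\phi)$ that pushes the count past $N_{\max}(\zeta)$. The only difference is cosmetic: the paper handles the resulting quadratic inequality in $\sqrt{N_\tau(\phi)}$ via Lemma~\ref{lem:fx}, which reproduces the stated constants $\br{\frac{2C}{1-\alpha}}^2 c\uc{1}_v + 3c\uc{b}_d$ exactly, whereas your half-and-half split would need a small adjustment to match them.
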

\begin{proof}
The following is seen to hold for $x\in\bN$,
\nal{
2 \sqrt{x~ c\uc{1}_v\log{\br{\frac{T}{\delta}}}} \geq \sqrt{x~ c\uc{1}_v\log{\br{\frac{T}{\delta}}}} + \log_2(x) + 1.
}
Upon using the above in conjunction with Corollary~\ref{cor:pro_VcapM}, we obtain that the following holds on the set $\cG\uc{\mbox{Visit}} \cap \cG\uc{\mbox{Model}}$: for a policy $\phi$ that is active at time $\tau$, where $\phi \in \Phi_{2^{-i}}$, $i \in \{1, 2, \ldots, i\uc{b}\}$, and for any cell $\zeta$ such that $(q(\pi_\cS(\zeta)), \phi(q(\pi_\cS(\zeta)))) \in \zeta$, we have,
    \begin{align}
        \sum_{t=0}^{\tau}{\ind{(s_t,\phi_t(s_t)) \in \zeta_{L+}}} \geq N_\tau(\phi) \mu\uc{\infty}_{\phi,p}(\pi_\cS(\zeta)) - \frac{2 C}{1 - \alpha} \sqrt{N_\tau(\phi) c\uc{1}_v\log{\br{\frac{T}{\delta}}}}.\label{ub:num_plays}
    \end{align}
    It is easily verified that in case the r.h.s. of \eqref{ub:num_plays} exceeds $\frac{c\uc{b}_d 2^{d_\cS+2} \log{\br{\frac{T}{\delta}}}}{\diamc{\zeta}^{d_\cS+2}}$, then we have $\int_{\cS}{\diamc{\zeta^\phi_s} \mu\uc{\infty}_{\phi,p}(s) ds} < \frac{\Delta_\Phi(\phi)}{C_{ub} + L_J}$, or equivalently the following holds,
    \begin{align}
        N_\tau(\phi) \cdot \mu\uc{\infty}_{\phi,p}(\pi_\cS(\zeta)) - \frac{2C}{1 - \alpha} \sqrt{c\uc{1}_v\log{\br{\frac{T}{\delta}}}} \cdot \sqrt{N_\tau(\phi)} - c\uc{b}_d \log{\br{\frac{T}{\delta}}}\br{\frac{\Delta_\Phi(\phi)}{2(C_{ub} + L_J)}}^{-(d_\cS+2)} \geq 0. \label{ineq:Num_visit}
    \end{align}
    Upon invoking Lemma~\ref{lem:fx}, we obtain that \eqref{ineq:Num_visit} is satisfied when
    \begin{align*}
        N_\tau(\phi) \geq \br{\frac{\frac{2C}{1 - \alpha} \sqrt{c\uc{1}_v\log{\br{\frac{T}{\delta}}}}}{\mu\uc{\infty}_{\phi,p}(\pi_\cS(\zeta))}}^2 + \frac{3 c\uc{b}_d \log{\br{\frac{T}{\delta}}}\br{\frac{\Delta_\Phi(\phi)}{2(C_{ub} + L_J)}}^{-(d_\cS+2)}}{\mu\uc{\infty}_{\phi,p}(\pi_\cS(\zeta))}.
    \end{align*}
    Since $\phi \in \Phi_{2^{-i}}$, we have $\Delta_\Phi(\phi) \geq 2^{-i}$.~Recall from Lemma~\ref{lem:min_diam} that it suffices to visit cells with stationary measure $\br{\frac{\Delta_\Phi(\phi)}{2(C_{ub} + L_J)}}^{d_\cS+1}$ under policy $\phi$ in order to shrink the diameter of $\phi$.~Now, replacing the lower bound of $\mu\uc{\infty}_{\phi,p}(\pi_\cS(\zeta))$ and $\Delta_\Phi(\phi)$ in the above and simplifying it further, we get that \eqref{ineq:Num_visit} is satisfied for all
    \begin{align*}
        N \geq \br{\br{\br{\frac{2C}{1 - \alpha}}^2 c\uc{1}_v + 3 c\uc{b}_d} \log{\br{\frac{T}{\delta}}} \br{2(C_{ub} + L_J)}^{2d_\cS+3}} \cdot 2^{i(2d_\cS+3)}.
    \end{align*}
    This concludes the proof.
\end{proof}

\section{Regret Analysis}\label{app:regret}
\textbf{Regret decomposition:} 
Recall the regret decomposition of regret~\eqref{def:regret},
\begin{align}
    \cR_\Phi(T;\psi) &= \sum_{t=0}^{T-1}{J\ust_{\cM,\Phi} - r(s_t,a_t)} \notag\\
    &= \underbrace{\sum_{t=0}^{T-1}{\br{J\ust_{\cM,\Phi} - J_\cM(\phi_t)}}}_{(a)} + \underbrace{\sum_{t=0}^{T-1}{\br{J_\cM(\phi_t) - r(s_t,\phi_t(s_t))}}}_{(b)}.\label{eq:decompregret}
\end{align}

The term (a) captures the regret arising due to the gap between the optimal value of the average reward and the average reward of the policies $\{\phi_k\}_{k=1}^{K}$ that are actually played in different episodes, while (b) captures the sub-optimality arising since the distribution of the induced Markov chain does not reach the stationary distribution in finite time.~(a) and (b) are bounded separately.~We firstly bound the term (b); this bound is the same for both the algorithms~\pzrlmf~and~\pzrlmf. The bounds on the term (a) vary for \pzrlmf~and \pzrlmb,~and hence are derived separately.
\begin{prop}\label{prop:fluc_ub}
    Consider,
    \begin{align}
        \sum_{t=1}^{T-1}{J_\cM(\phi_t) - r(s_t,\phi_t(s_t))} \leq \frac{C}{1-\alpha} \sqrt{\frac{T}{2} \log{\br{\frac{1}{\delta}}}} + \frac{C}{2(1 - \alpha)} (1 + K(T)), \label{bdd:fluctuation}
    \end{align}
    where $K(T)$ denotes the total number of episodes until time $T$. Denote 
\al{
\cG\uc{\mbox{Fluc.}}_{\delta} := \left\{ \omega:~\eqref{bdd:fluctuation} \mbox{ holds } \right\}.
}
For both the algorithms \pzrlmf~and~\pzrlmb~we have,
\al{
\bP\br{\cG\uc{\mbox{Fluc.}}_{\delta}} \geq 1 - \delta,~\delta \in (0,1). \label{bdd:fluc}
}
\end{prop}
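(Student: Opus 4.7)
My plan is to reuse the Poisson-equation/martingale decomposition that underlies Lemma~\ref{lem:conc_ineq_avg_rew}, specialized to the trivial indicator $\ind{\phi_t \in \Phi} \equiv 1$, and then invoke a one-sided Azuma--Hoeffding inequality. First, I will rewrite each summand using the Poisson equation~\eqref{eq:pois}:
\begin{align*}
    J_\cM(\phi_t) - r(s_t,\phi_t(s_t)) = \int_\cS h_\cM(s;\phi_t)\, p(s_t,\phi_t(s_t),ds) - h_\cM(s_t;\phi_t).
\end{align*}
Since $r$ takes values in $[0,1]$, Lemma~\ref{lem:bdd_rvf_spn} gives $\spn{h_\cM(\cdot;\phi)} \le C/(2(1-\alpha))$ for every $\phi \in \Phi$, and I may normalize so that $h_\cM(\cdot;\phi) \in [0, C/(2(1-\alpha))]$.

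Next, I define
\begin{align*}
    X_{t+1} := \int_\cS h_\cM(s;\phi_t)\, p(s_t,\phi_t(s_t),ds) - h_\cM(s_{t+1};\phi_t),
\end{align*}
which is $\cF_{t+1}$-measurable with $\bE[X_{t+1}\mid \cF_t] = 0$; here $\phi_t$ is $\cF_t$-measurable because both~\pzrlmf~and~\pzrlmb~only revise the active policy at episode boundaries, so $\phi_t$ is the policy chosen at the start of the ongoing episode. The bound $|X_{t+1}|\le C/(2(1-\alpha))$ follows from the span bound above. I then decompose
\begin{align*}
    \sum_{t=1}^{T-1}\left[J_\cM(\phi_t) - r(s_t,\phi_t(s_t))\right] = \sum_{t=1}^{T-1} X_{t+1} + \sum_{t=1}^{T-1}\left[h_\cM(s_{t+1};\phi_t) - h_\cM(s_t;\phi_t)\right].
\end{align*}
The second sum I re-index so that within a single episode the terms telescope cleanly; what survives is $h_\cM(s_T;\phi_{T-1}) - h_\cM(s_1;\phi_1)$ plus one residual $h_\cM(s_t;\phi_{t-1}) - h_\cM(s_t;\phi_t)$ at each episode transition. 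Each residual is bounded by $\spn{h_\cM} \le C/(2(1-\alpha))$, and the number of such boundary terms (including the two endpoint terms) is at most $1 + K(T)$, giving a total deterministic contribution of at most $\frac{C}{2(1-\alpha)}(1+K(T))$.

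Finally, I apply a one-sided Azuma--Hoeffding bound to the martingale $\{X_{t+1}\}$ with $|X_{t+1}|\le C/(2(1-\alpha))$: with probability at least $1-\delta$,
\begin{align*}
    \sum_{t=1}^{T-1} X_{t+1} \le \frac{C}{1-\alpha}\sqrt{\frac{T}{2}\log(1/\delta)}.
\end{align*}
Adding the two pieces yields~\eqref{bdd:fluctuation}. The argument is identical for \pzrlmf~and \pzrlmb~because it uses only the episodic play structure, not the particular form of the index; the sole algorithm-dependent quantity is $K(T)$, which is bounded separately in the earlier lemmata. The only delicate point is the bookkeeping at episode boundaries---the telescope fails precisely when $\phi_{t-1}\neq\phi_t$, which happens at most $K(T)-1$ times, and I will use $\cF_t$-measurability of $\phi_t$ (guaranteed by the episodic scheme) to ensure the martingale property of $\{X_{t+1}\}$. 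No further structure of the algorithms is needed.
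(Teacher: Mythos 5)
Your proposal is correct and follows essentially the same route as the paper's proof: both convert the Markovian noise to a martingale difference via the Poisson equation, bound the martingale part with Azuma--Hoeffding using the span bound $\spn{h_\cM(\cdot;\phi)} \le C/(2(1-\alpha))$ from Lemma~\ref{lem:bdd_rvf_spn}, and charge one span per episode switch for the non-telescoping residuals. The only difference is cosmetic re-indexing --- you center the martingale difference as $X_{t+1}$ with the policy index held fixed and the state index shifted, whereas the paper shifts the policy index in the predictable term --- and both yield the same bound.
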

\begin{proof}
    We begin by converting the Markovian noise to a martingale difference sequence, using a similar technique that was used in the proof of Lemma~\ref{lem:conc_ineq_avg_rew}, i.e.,
    \begingroup
        \allowdisplaybreaks
        \begin{align}
            &\sum_{t=0}^{T-1}{J_\cM(\phi_t) - r(s_t,\phi_t(s_t))} \notag\\
            &= \sum_{t=0}^{T-1}{\int_{\cS}{h_\cM(s;\phi_t) p(s_t,\phi_t(s_t), ds)} - h_\cM(s_t,\phi_t)} \notag\\
            &= \sum_{t=1}^{T-1}{\int_{\cS}{h_\cM(s;\phi_t) p(s_{t-1},\phi_{t-1}(s_{t-1}), ds)} - h_\cM(s_t,\phi_t)} \notag\\
            &\quad + \sum_{t=1}^{T-1}{\int_{\cS}{ h_\cM(s;\phi_t) p(s_t,\phi_t(s_t), ds)} - \int_{\cS}{ h_\cM(s;\phi_t) p(s_{t-1},\phi_{t-1}(s_{t-1}), ds)}} \notag\\
            &\quad + \int_{\cS}{h_\cM(s;\phi_0) p(s_0,\phi_0(s_0), ds)} - h_\cM(s_0,\phi_0) \notag\\
            &= \sum_{t=1}^{T-1}{\int_{\cS}{h_\cM(s;\phi_t) p(s_{t-1},\phi_{t-1}(s_{t-1}), ds)} - h_\cM(s_t,\phi_t)} \notag\\
            &\quad + \sum_{t=1}^{T-1}{\int_{\cS}{\br{h_\cM(s;\phi_t) -  h_\cM(s;\phi_{t-1})} p(s_{t-1},\phi_{t-1}(s_{t-1}), ds)}} \notag\\
            &\quad + \int_{\cS}{h_\cM(s;\phi_{T-1}) p(s_{T-1},\phi_{T-1}(s_{T-1}), ds)} - h_\cM(s_0,\phi_0). \label{ineq:abs_dif_2}
        \end{align}
    \endgroup    
    Now consider the first term in the r.h.s. of \eqref{ineq:abs_dif_2}.~Denote $\eta_t = \int_{\cS}{h_\cM(s;\phi_t) p(s_{t-1},\phi_{t-1}(s_{t-1}), ds)} - h_\cM(s_t,\phi_t)$.~Noting that for both \pzrlmf~as well as~\pzrlmb, $\phi_t$ is $\cF_{t-1}$-measurable, we obtain the following:
    \begin{align*}
        \bE\sqbr{\eta_t \mid \cF_{t-1}} &= \bE\sqbr{\int_{\cS}{h_\cM(s;\phi_t) p(s_{t-1},\phi_{t-1}(s_{t-1}), ds)} - h_\cM(s_t,\phi_t) \mid \cF_{t-1}} \\
        &= \int_{\cS}{h_\cM(s;\phi_t) p(s_{t-1},\phi_{t-1}(s_{t-1}), ds)} - \int_{\cS}{h_\cM(s;\phi_t) p(s_{t-1},\phi_{t-1}(s_{t-1}), ds)}\\
        &=0.
    \end{align*}
    Hence, $\flbr{\eta_t}$ is a martingale difference sequence.~Also, from the bound on the span of $h_\cM(\cdot;\phi)$ that was derived in Lemma~\ref{lem:bdd_rvf_spn}, we have that $\eta_t \in \sqbr{-\frac{C}{2(1-\alpha)}, \frac{C}{2(1-\alpha)}}$.~An application of Azuma-Hoeffding inequality~(Lemma~\ref{lem:ah_ineq}), yields the following: for each $\delta \in (0,1)$, with probability at least $1 - \delta$ we have,
    \begin{align*}
        \sum_{t=1}^{T-1}{\int_{\cS}{h_\cM(s;\phi_t) p(s_{t-1},\phi_{t-1}(s_{t-1}), ds)} - h_\cM(s_t,\phi_t)} \leq \frac{C}{1-\alpha} \sqrt{\frac{T}{2} \log{\br{\frac{1}{\delta}}}}.
    \end{align*}
    Now, consider the second term in the r.h.s. of \eqref{ineq:abs_dif_2}. The $t$-th element in this summation can assume a non-zero value only when a new episode starts at time $t$.~So, the second term can be upper-bounded by $\frac{C}{2(1 - \alpha)} K(T)$ using Lemma~\ref{lem:bdd_rvf_spn}, where $K(T)$ denotes the number of episodes that have been started until time $T$ by the learning algorithm.~Again by using Lemma~\ref{lem:bdd_rvf_spn}, the third term can be bounded by $\frac{C}{2(1 - \alpha)}$.

    Putting all the individual bounds together, we have that for any $\delta \in (0,1)$ with probability at least $1 - \delta$,
    \begin{align}
        \sum_{t=1}^{T-1}{J_\cM(\phi_t) - r(s_t,\phi_t(s_t))} \leq \frac{C}{1-\alpha} \sqrt{\frac{T}{2} \log{\br{\frac{1}{\delta}}}} + \frac{C}{2(1 - \alpha)} (1 + K(T)).\label{ub:b}
    \end{align}
    This concludes the proof.
\end{proof}
In the next two consecutive lemmata, we derive upper-bounds on $K(T)$ for \pzrlmf~and \pzrlmb, respectively.

\begin{lem}\label{lem:bd_num_epi_mf}
    On the set $\cG\uc{\mbox{MF}}$, the number of total episodes played by \pzrlmf~is bounded above as
    \begin{align*}
        K(T) \leq \br{\br{2(c_{z_1} + c_{z_2}) + 2^{-c\uc{f}_k}}  \cdot \br{2i\uc{f} + \ceil{c\uc{f}_k}} + 2^{-c\uc{f}_k}} T^{\frac{d^\Phi_z}{d^\Phi_z+2}},
    \end{align*}
    where,
    \begin{align}
        c\uc{f}_k := 2\log_2{\br{\frac{2C(2 + L_J)}{1 - \alpha} \sqrt{c\uc{f}_d\log{\br{\frac{T}{\delta}}}}}},\label{def:cfk}
    \end{align}
    and $i\uc{f} = \ceil{\frac{\log{\br{\frac{T}{\delta}}}}{d_z^\phi + 2}}$.
\end{lem}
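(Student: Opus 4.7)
The plan is to exploit the doubling schedule $H_k = 1 \vee N_{\tau_k}(\phi_k)$ inside Algorithm~\ref{algo:pzrl}: since successive episodes of any fixed policy at least double in length, after $K_T(\phi)$ episodes we must have $N_T(\phi) \geq 2^{K_T(\phi)-1}$, hence $K_T(\phi) \leq 1 + \log_2\br{1 \vee N_T(\phi)}$. Summing, $K(T) \leq \sum_{\phi \in \Phi^{act.}_T}\br{1 + \log_2\br{1 \vee N_T(\phi)}}$, so it suffices to bound the activated-policy count $|\Phi^{act.}_T|$ and $\log_2 N_T(\phi)$ for each activated $\phi$.

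I would partition $\Phi^{act.}_T$ by suboptimality band: $\Phi^{act.}_T \cap \Phi_{2^{-i}}$ for $i = 1, \ldots, i\uc{f}$, and the residual $\Phi^{act.}_T \cap \Phi_{\leq 2^{-i\uc{f}}}$. On $\cG\uc{\mbox{MF}}$, Lemma~\ref{lem:ub_nplay}\ref{prop:1} combined with the first clause of~\eqref{def:zoomingdim} at $\gm = 2^{-i}$ (using $c_z \geq 2(2+L_J)$ so that the scale $\gm/c_z$ is finer than the cover radius defining $\bar{\Phi}\uc{f}_\gm$) yields $|\Phi^{act.}_T \cap \Phi_{2^{-i}}| \leq c_{z_1} 2^{i d^\Phi_z}$. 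Lemma~\ref{lem:ub_nplay}\ref{prop:2} gives $N_T(\phi) \leq 2^{c\uc{f}_k + 2i}$ for $\phi$ in band $i$, so $1 + \log_2 N_T(\phi) \leq c\uc{f}_k + 2i + 1$, yielding a band-$i$ contribution of at most $c_{z_1} 2^{i d^\Phi_z}\br{c\uc{f}_k + 2i + 1}$.

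For the residual $\Phi_{\leq 2^{-i\uc{f}}}$, I would invoke the \emph{second} clause of~\eqref{def:zoomingdim} at $\gm = 2^{-i\uc{f}}$, giving $|\Phi^{act.}_T \cap \Phi_{\leq 2^{-i\uc{f}}}| \leq c_{z_2} 2^{i\uc{f} d^\Phi_z}$ via an argument parallel to Lemma~\ref{lem:ub_nplay}\ref{prop:1}: the covering invariance plus the activation rule force any two activated centers inside $\Phi_{\leq 2^{-i\uc{f}}}$ to be separated by at least the scale $2^{-i\uc{f}}/c_z$. Here $N_T(\phi)$ can be as large as $T$, but applying Jensen's inequality to the concave function $\log_2$ with the constraint $\sum_\phi N_T(\phi) \leq T$ gives $\sum_\phi \log_2 N_T(\phi) \leq M \log_2(T/M)$, where $M := |\Phi^{act.}_T \cap \Phi_{\leq 2^{-i\uc{f}}}|$; since $\log_2(T/M) \leq \log_2 T - i\uc{f} d^\Phi_z \leq 2 i\uc{f}$ by the definition of $i\uc{f}$, the residual contributes at most $c_{z_2} 2^{i\uc{f} d^\Phi_z}(2 i\uc{f} + 1)$ episodes.

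Combining, $K(T) \leq \sum_{i=1}^{i\uc{f}} c_{z_1} 2^{i d^\Phi_z}(c\uc{f}_k + 2i + 1) + c_{z_2} 2^{i\uc{f} d^\Phi_z}(2 i\uc{f} + 1)$. Bounding the geometric sum by $2 \cdot 2^{i\uc{f} d^\Phi_z}$ up to constants absorbed into $c_{z_1}, c_{z_2}$, and using $2^{i\uc{f} d^\Phi_z} \leq 2^{d^\Phi_z} T^{d^\Phi_z/(d^\Phi_z+2)}$ by the definition of $i\uc{f}$, yields the claimed inequality; the small $2^{-c\uc{f}_k}$ corrections in the statement absorb ``$+1$''-type rounding from $K_T(\phi) \leq 1 + \log_2 N_T(\phi)$ and from the ceiling $\ceil{c\uc{f}_k}$. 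The main obstacle will be the residual step: Lemma~\ref{lem:ub_nplay}\ref{prop:1} is stated only for bands $\Phi_{2^{-i}}$ with $i \leq i\uc{f}$, so an independent covering-invariance argument at the coarsest scale $2^{-i\uc{f}}$ is needed to justify the $c_{z_2}$-bound on $\Phi_{\leq 2^{-i\uc{f}}}$ --- precisely the reason~\eqref{def:zoomingdim} imposes two separate covering conditions, one on $\Phi_\gm$ and one on $\Phi_{\leq \gm}$.
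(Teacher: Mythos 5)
Your overall architecture --- the doubling schedule giving $K_T(\phi)\leq 1+\log_2\br{1\vee N_T(\phi)}$, the decomposition into bands $\Phi_{2^{-i}}$ for $i\leq i\uc{f}$ handled by Lemma~\ref{lem:ub_nplay}, plus a residual set $\Phi_{\leq 2^{-i\uc{f}}}$ --- is the same as the paper's, and the band-$i$ accounting is correct. The gap is in the residual step, exactly the step you flagged as ``the main obstacle'' and then resolved incorrectly. You claim that covering invariance plus the activation rule force any two activated policies inside $\Phi_{\leq 2^{-i\uc{f}}}$ to be separated by at least $2^{-i\uc{f}}/c_z$, which would bound their number by a packing number of order $c_{z_2}2^{i\uc{f} d^\Phi_z}$. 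This is false: when a new policy is activated at time $t$, it is only guaranteed to be at distance greater than $\diamf{t}{\phi}$ from each currently active $\phi$, and for near-optimal policies there is no useful lower bound on $\diamf{t}{\phi}$. Unlike policies in the bands $\Phi_{2^{-i}}$, whose diameters the index comparison in the proof of Lemma~\ref{lem:ub_nplay} keeps above $\Delta_\Phi(\phi)/(2+L_J)\geq 2^{-i}/(2+L_J)$, a near-optimal policy keeps winning the index comparison, keeps being played, and its ball shrinks as $N_t(\phi)^{-1/2}$ below any fixed scale; new policies can then be activated arbitrarily close to it, so the activated count in $\Phi_{\leq 2^{-i\uc{f}}}$ can exceed any fixed packing number.

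The paper closes this with a counting argument: each activation beyond the first $\cN^{\mbox{pack}}_{2^{-i\uc{f}}/(2+L_J)}\br{\Phi_{\leq 2^{-i\uc{f}}}}$ requires some already-active policy's diameter to have dropped below $2^{-i\uc{f}}/(2+L_J)$, which requires that policy to have been played at least $2^{2i\uc{f}+c\uc{f}_k}$ steps; since the total time is $T$, there are at most $T\cdot 2^{-(2i\uc{f}+c\uc{f}_k)}$ such extra activations (and, by the same token, at most that many episodes longer than $2^{2i\uc{f}+c\uc{f}_k-1}$ steps). This is precisely the origin of the $2^{-c\uc{f}_k}$ terms in the stated constant --- they are not, as you suggest, rounding slack from $K_T(\phi)\leq 1+\log_2 N_T(\phi)$ or from the ceiling on $c\uc{f}_k$. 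Your Jensen step for $\sum_\phi \log_2 N_T(\phi)$ is a reasonable alternative to the paper's bookkeeping of long episodes, but it must be applied with the corrected activated-policy count $\cN^{\mbox{pack}}_{2^{-i\uc{f}}/(2+L_J)}\br{\Phi_{\leq 2^{-i\uc{f}}}} + T\cdot 2^{-(2i\uc{f}+c\uc{f}_k)}$, not with $c_{z_2}2^{i\uc{f} d^\Phi_z}$ alone.
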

\begin{proof}
    Firstly, we note that in order for \pzrlmf~to play a policy for $N$ steps, it must this policy in at least $\ceil{\log_2(N)}$ episodes.~Combining this observation with Lemma~\ref{lem:ub_nplay}, we conclude that on the set $\cG\uc{\mbox{MF}}$, \pzrlmf~can play a policy from the set $\Phi_{2^{-i}}$ in at most $2i + \ceil{c\uc{f}_k}$ episodes.~Also, from Lemma~\ref{lem:ub_nplay}, we know that on the set $\cG\uc{\mbox{MF}}$, for any $i \in \bN$ the number of active policies from $\Phi_{2^{-i}}$ is at most $c_{z_1} 2^{i d^\Phi_z}$.~Combining these two properties of \pzrlmf, we conclude that on the set $\cG\uc{\mbox{MF}}$, there are at most $c_{z_1}\sum_{i=1}^{i\uc{f}}{\br{2i + \ceil{c\uc{f}_k}} 2^{i d^\Phi_z}}$ episodes where policies from $\cup_{i=1}^{i\uc{f}}{\Phi_{2^{-i}}}$ will be played.
    
    Now, let us bound the number of episodes in which policies from the set $\Phi_{\leq 2^{-i\uc{f}}}$ are played.~Assume that $\cN^{\mbox{pack}}_{2^{-i\uc{f}}/(2 + L_J)}\br{\Phi_{\leq 2^{-i\uc{f}}}}$ policies have already been activated from the set $\Phi_{\leq 2^{-i\uc{f}}}$.~From the definition of packing number~(Definition~\ref{def:sizeofset}), for a policy from the set $\Phi_{\leq 2^{-i\uc{f}}}$ to get activated thereafter, the diameter of at least one policy from the set of active policies in $\Phi_{\leq 2^{-i\uc{f}}}$ needs to drop below $2^{-i\uc{f}}/(2 + L_J)$.~Note that a policy needs to be played for at least $2^{2i\uc{f} + c\uc{f}_k}$ time steps in order that its diameter drops below $2^{-i\uc{f}}/(2 + L_J)$.~Hence, the number of policies whose diameter can shrink below $2^{-i\uc{f}}/(2 + L_J)$, is bounded above by $T \cdot 2^{-(2i\uc{f} + c\uc{f}_k)}$. Thus, there can be at most 
    \begin{align*}
        \cN^{\mbox{pack}}_{2^{-i\uc{f}}/(2 + L_J)}\br{\Phi_{\leq 2^{-i\uc{f}}}} + T \cdot 2^{-(2i\uc{f} + c\uc{f}_k)}
    \end{align*}
    active policies in the set $\Phi_{\leq 2^{-i\uc{f}}}$.~Also, there can be at most a total of $T \cdot 2^{-(2i\uc{f} + c\uc{f}_k)}$ episodes that are longer than $2^{2i\uc{f} + c\uc{f}_k - 1}$ steps.~Each of the other active policies from $\Phi_{\leq 2^{-i\uc{f}}}$ are played in at most $2 i\uc{f} + \ceil{c\uc{f}_k}$ episodes.~Hence, there can be at most
    \begin{align*}
        \br{\cN^{\mbox{pack}}_{2^{-i\uc{f}}/(2 + L_J)}\br{\Phi_{\leq 2^{-i\uc{f}}}} + T \cdot 2^{-(2i\uc{f} + c\uc{f}_k)}} \times \br{2 i\uc{f} + \ceil{c\uc{f}_k}} + T \cdot 2^{-(2i\uc{f} + c\uc{f}_k)}
    \end{align*}
    episodes in which policies from $\Phi_{\leq 2^{-i\uc{f}}}$ are played.~Note that 
    \begin{align*}
        \cN^{\mbox{pack}}_{2^{-i\uc{f}}/(2 + L_J)}\br{\Phi_{\leq 2^{-i\uc{f}}}} &\leq \cN_{2^{-i\uc{f}}/2(2 + L_J)}\br{\Phi_{\leq 2^{-i\uc{f}}}}\\
        &\leq c_{z_2} 2^{i\uc{f} d^\Phi_z} \\
        &\leq 2c_{z_2} T^{\frac{d^\Phi_z}{d^\Phi_z+2}}.
    \end{align*}
    Also, note that~$T \cdot 2^{-2i\uc{f}} \leq T^\frac{d_z^\phi}{d_z^\phi + 2}$.
    
    Upon summing up the number of episodes corresponding to policies in $\cup_{i=1}^{i\uc{f}}{\Phi_{2^{-i}}}$ and $\Phi_{\leq 2^{-i\uc{f}}}$, and using the last two observations, we get the following bound on the number of episodes on the set $\cG\uc{\mbox{MF}}$:
    \begin{align*}
        K(T) &\leq c_{z_1}\sum_{i=1}^{i\uc{f}}{\br{2i + \ceil{c\uc{f}_k}} 2^{i d^\Phi_z}} + \br{\cN^{\mbox{pack}}_{2^{-i\uc{f}}/(2 + L_J)}\br{\Phi_{\leq 2^{-i\uc{f}}}} + T \cdot 2^{-(2i\uc{f} + c\uc{f}_k)}} \times \br{2 i\uc{f} + \ceil{c\uc{f}_k}}\\
        &\quad + T \cdot 2^{-(2i\uc{f} + c\uc{f}_k)} \\
        &\leq \br{\br{2(c_{z_1} + c_{z_2}) + 2^{-c\uc{f}_k}}  \cdot \br{2i\uc{f} + \ceil{c\uc{f}_k}} + 2^{-c\uc{f}_k}} T^{\frac{d^\Phi_z}{d^\Phi_z+2}}.
    \end{align*}
    This concludes the proof.
\end{proof}

\begin{lem}\label{lem:bd_num_epi_mb}
    On the set $\cG\uc{\mbox{Model}} \cap \cG\uc{\mbox{Visit}}$, the total number of episodes for~\pzrlmb~is bounded above as follows,
    \begin{align*}
        K(T) \leq \br{\br{2(c_{z_1} + c_{z_2}) + 2^{-c\uc{b}_k}}  \cdot \br{(2d_\cS + 3)i\uc{b} + \ceil{c\uc{b}_k}} + 2^{-c\uc{b}_k}} T^{\frac{d^\Phi_z}{2 d_\cS + d^\Phi_z + 3}},
    \end{align*}
    where,
    \begin{align}
        c\uc{b}_k :=  \log_{2}\br{\br{\br{\frac{2C}{1 - \alpha}}^2 c\uc{1}_v + 3 c\uc{b}_d} \log{\br{\frac{T}{\delta}}} \br{2(C_{ub} + L_J)}^{2d_\cS+3}}, \label{def:cbk}
    \end{align}
    and $i\uc{b} = \ceil{\frac{\log{\br{\frac{T}{\delta}}}}{2 d_\cS + d_z^\phi + 3}}$.
\end{lem}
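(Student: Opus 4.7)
I would mimic the argument used for \pzrlmf~in Lemma~\ref{lem:bd_num_epi_mf}, but substituting the model-based plays-per-policy bound from Lemma~\ref{lem:bd_num_play} and the model-based covering collection $\Phi\uc{b}_{\mbox{cover}}$ in place of their model-free counterparts. The two structural facts I will use repeatedly are: (i) since~\pzrlmb~doubles the episode length each time a policy is replayed, any policy played for $N$ time steps contributes at most $\ceil{\log_2(N)}+1$ episodes; and (ii) by Lemma~\ref{lem:min_diam}, on $\cG\uc{\mbox{Model}}$ the algorithm activates at most one policy from each element of $\Phi\uc{b}_{\mbox{cover}}$, so by the definition of $d_z^\Phi$~\eqref{def:zoomingdim} the number of active policies in $\Phi_{2^{-i}}$ is at most $c_{z_1} 2^{i d_z^\Phi}$ for every $i$.

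For the high-suboptimality strata I would proceed as follows. By Lemma~\ref{lem:bd_num_play}, on $\cG\uc{\mbox{Model}} \cap \cG\uc{\mbox{Visit}}$ no policy in $\Phi_{2^{-i}}$ is ever played for more than $2^{i(2d_\cS+3)+c\uc{b}_k}$ steps, so such a policy is played in at most $(2d_\cS+3)i + \ceil{c\uc{b}_k}$ episodes. Combining with (ii), the total number of episodes spent on $\bigcup_{i=1}^{i\uc{b}}\Phi_{2^{-i}}$ is bounded by
\[
c_{z_1}\sum_{i=1}^{i\uc{b}}\br{(2d_\cS+3)i + \ceil{c\uc{b}_k}}\,2^{i d_z^\Phi}.
\]
A geometric-series estimate (the last term dominates) gives a bound of order $\br{(2d_\cS+3)i\uc{b} + \ceil{c\uc{b}_k}}\,2^{i\uc{b} d_z^\Phi}$, and since $i\uc{b} = \ceil{\log_2(T)/(2d_\cS + d_z^\Phi + 3)}$, we have $2^{i\uc{b} d_z^\Phi} \leq 2 T^{d_z^\Phi/(2d_\cS + d_z^\Phi + 3)}$.

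For the residual stratum $\Phi_{\leq 2^{-i\uc{b}}}$ I repeat the packing argument from the~\pzrlmf~proof, now using the model-based packing scale $2^{-i\uc{b}}/(C_{ub}+L_J)$ and the plays-per-policy threshold $2^{i\uc{b}(2d_\cS+3)+c\uc{b}_k}$ from Lemma~\ref{lem:bd_num_play}. The number of policies whose diameter can shrink below that scale is at most $T\cdot 2^{-(i\uc{b}(2d_\cS+3)+c\uc{b}_k)}$, so the total count of active policies in $\Phi_{\leq 2^{-i\uc{b}}}$ is bounded by
\[
\cN^{\mbox{pack}}_{2^{-i\uc{b}}/(C_{ub}+L_J)}\br{\Phi_{\leq 2^{-i\uc{b}}}} + T\cdot 2^{-(i\uc{b}(2d_\cS+3)+c\uc{b}_k)} \leq 2c_{z_2}T^{d_z^\Phi/(2d_\cS+d_z^\Phi+3)} + T\cdot 2^{-(i\uc{b}(2d_\cS+3)+c\uc{b}_k)}.
\]
Each such policy, if played at most $2^{i\uc{b}(2d_\cS+3)+c\uc{b}_k-1}$ steps, contributes $(2d_\cS+3)i\uc{b}+\ceil{c\uc{b}_k}$ episodes; the at most $T\cdot 2^{-(i\uc{b}(2d_\cS+3)+c\uc{b}_k)}$ exceptional longer episodes are counted separately. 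Summing the two contributions and bounding $T\cdot 2^{-i\uc{b}(2d_\cS+3)}\leq T^{d_z^\Phi/(2d_\cS+d_z^\Phi+3)}$ yields the claimed constant multiple of $T^{d_z^\Phi/(2d_\cS+d_z^\Phi+3)}$.

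The only mildly subtle step is the clean collapse of the geometric sum in stratum $\Phi_{2^{-i}}$ and the matching of exponents with the residual stratum; everything else is bookkeeping that parallels Lemma~\ref{lem:bd_num_epi_mf} line for line, with $d+2$ replaced by $2d_\cS+d_z^\Phi+3$, $c\uc{f}_k$ by $c\uc{b}_k$, and the multiplier $2i+\ceil{c\uc{f}_k}$ by $(2d_\cS+3)i+\ceil{c\uc{b}_k}$. I do not anticipate any genuine obstacle here; the hardest part has already been absorbed into Lemmas~\ref{lem:min_diam} and~\ref{lem:bd_num_play}.
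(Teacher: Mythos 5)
Your proposal is correct and follows essentially the same route as the paper: the paper's own proof simply invokes Lemma~\ref{lem:bd_num_play} to get the $(2d_\cS+3)i + c\uc{b}_k$ episodes-per-policy bound and then declares the rest "similar to the proof of Lemma~\ref{lem:bd_num_epi_mf}," which is exactly the stratum-by-stratum counting plus residual packing argument you spell out. The substitutions you identify ($c\uc{f}_k \to c\uc{b}_k$, exponent $2 \to 2d_\cS+3$, cover $\Phi\uc{f}_{\mbox{cover}} \to \Phi\uc{b}_{\mbox{cover}}$) are precisely the ones the paper intends.
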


\begin{proof}
    By Lemma~\ref{lem:bd_num_play} and by the rule of setting the episode duration, on the set $\cG\uc{\mbox{Visit}} \cap \cG\uc{\mbox{Model}}$, \pzrlmb~plays any policy from $\Phi_{2^{-i}}$ at most in 
    \begin{align*}
        \log_2{\br{\br{\br{\frac{2C}{1 - \alpha}}^2 c\uc{1}_v + 3 c\uc{b}_d} \log{\br{\frac{T}{\delta}}} \br{2(C_{ub} + L_J)}^{2d_\cS+3}} \cdot 2^{i(2d_\cS+3)}} = (2d_\cS+3)i + c\uc{b}_k
    \end{align*}
    episodes.~The remaining proof is similar to the proof of Lemma~\ref{lem:bd_num_epi_mf}.
\end{proof}

\subsection{Proof of Theorem~\ref{thm:reg_ub_mf}}
\begin{proof}
    We derive upper bounds on the terms (a) and (b) of \eqref{eq:decompregret} separately. 
    
    Firstly we bound (a).~From Lemma~\ref{lem:ub_nplay}, we have that on the set $\cG\uc{\mbox{MF}}$,
    
    (i) There will be at most $c_{z_1} 2^{i d^\Phi_z}$ active policies from $\Phi_{2^{-i}}$ for all $i = 1, 2, \ldots, i\uc{f}$, where $i\uc{f} = \ceil{\frac{\log_2{(T)}}{d^\Phi_z + 2}}$. 
    
    (ii) The active policies from $\Phi_{2^{-i}}$ will not be played more than $2^{c\uc{f}_k} ~2^{2i}$ times for all $i = 1, 2, \ldots, i\uc{f}$, where $c\uc{f}_k$ is as defined in~\eqref{def:cfk}.
    
    Hence, on the set $\cG\uc{\mbox{MF}}$, by summing the upper bounds on the cumulative sub-optimalities arising from playing policies from $\Phi_{2^{-i}}$ for $i=1, 2, \ldots, i\uc{f}$, and considering the worst possible sub-optimality from amongst the remaining the policies, we obtain an upper bound on the term (a) as follows:
    \begin{align}
        \sum_{t=0}^{T-1}{J\ust_{\cM,\Phi} - J_\cM(\phi_t)} &\leq c_{z_1} 2^{c\uc{f}_k} \sum_{i=1}^{i\uc{f}}{2^{i(d^\Phi_z+1)}} + 2^{-i\uc{f}} T \notag\\
        &\leq 2c_{z_1} 2^{c\uc{f}_k} 2^{i\uc{f}(d^\Phi_z+1)} + 2^{-i\uc{f}} T \notag\\
        &\leq (4 c_{z_1} 2^{c\uc{f}_k} + 1)T^{\frac{d^\Phi_z+1}{d^\Phi_z+2}}.\label{bd:a_mf}
    \end{align}
    Next, we bound the term (b).~From Lemma~\ref{prop:fluc_ub}, we have the following on the set $\cG\uc{\mbox{Fluc.}}_{\frac{\delta}{2}}$,
    \begin{align*}
        \sum_{t=1}^{T-1}{J_\cM(\phi_t) - r(s_t,\phi_t(s_t))} \leq \frac{C}{1-\alpha} \sqrt{\frac{T}{2} \log{\br{\frac{2}{\delta}}}} + \frac{C}{2(1 - \alpha)} (1 + K(T)).
    \end{align*}
    Upon substituting the upper bound on $K(T)$ obtained in \eqref{lem:bd_num_epi_mf} in the above, we have that the following holds on $\cG\uc{\mbox{Fluc.}}_{\frac{\delta}{2}}$,
    \begin{align}
        \sum_{t=1}^{T-1}{J_\cM(\phi_t) - r(s_t,\phi_t(s_t))} &\leq \frac{C}{1-\alpha} \sqrt{\frac{T}{2} \log{\br{\frac{2}{\delta}}}} \notag\\
        &\quad + \frac{C}{2(1 - \alpha)} \br{1 + \br{\br{2(c_{z_1} + c_{z_2}) + 1}  \cdot \br{2i\uc{f} + \ceil{c\uc{f}_k}} + 1} T^{\frac{d^\Phi_z}{d^\Phi_z+2}}}. \label{bd:b_mf}
    \end{align}
    Upon summing the upper bounds of term (a) and term (b) from \eqref{bd:a_mf} and \eqref{bd:b_mf}, respectively, we obtain that on the set $\cG\uc{\mbox{MF}} \cap \cG\uc{\mbox{Fluc.}}_{\frac{\delta}{2}}$ the following holds:
    \begin{align*}
        \cR_\Phi(T;\psi) &\leq (4 c_{z_1} 2^{c\uc{f}_k} + 1)T^{\frac{d^\Phi_z+1}{d^\Phi_z+2}} + \frac{C}{1-\alpha} \sqrt{\frac{T}{2} \log{\br{\frac{2}{\delta}}}}\\
        &\quad + \frac{C}{2(1 - \alpha)} \br{1 + \br{\br{2(c_{z_1} + c_{z_2}) + 1}  \cdot \br{2i\uc{f} + \ceil{c\uc{f}_k}} + 1} T^{\frac{d^\Phi_z}{d^\Phi_z+2}}}.
    \end{align*}
    From Corollary~\ref{cor:con_ineq_mf} and \eqref{bdd:fluc}, upon using a union bound, we obtain $\bP\br{\cG\uc{\mbox{MF}} \cap \cG\uc{\mbox{Fluc.}}_{\frac{\delta}{2}}} \geq 1- \delta$.~This concludes the proof.
\end{proof}

\subsection{Proof of Theorem~\ref{thm:reg_ub_mb}}
\begin{proof}
    We begin by bounding the term (a) in~\eqref{eq:decompregret}.~From Lemma~\ref{lem:min_diam}, we have that on the set $\cG\uc{\mbox{Model}} \cap \cG\uc{\mbox{Visit}}$, there will be at most $c_{z_1} 2^{i d^\Phi_z}$ active policies from $\Phi_{2^{-i}}$ for all $i = 1, 2, \ldots, i\uc{b}$, where $i\uc{b} = \ceil{\frac{\log_2{(T)}}{2 d_\cS + d^\Phi_z + 3}}$.~From Lemma~\ref{lem:bd_num_play}, we have that on the set $\cG\uc{\mbox{Model}} \cap \cG\uc{\mbox{Visit}}$, the active policies from $\Phi_{2^{-i}}$ will not be played more than $2^{c\uc{b}_k} ~2^{(2d_\cS + 3)i}$ times for all $i = 1, 2, \ldots, i\uc{b}$, where $c\uc{b}_k$ is defined in~\eqref{def:cbk}.~Upon summing the upper bounds of the cumulative sub-optimalities arising from playing policies from $\Phi_{2^{-i}}$ for $i=1, 2, \ldots, i\uc{b}$ and considering the worst sub-optimality from the rest of the policies, we obtain the upper bound on the term (a) on the set $\cG\uc{\mbox{Model}} \cap \cG\uc{\mbox{Visit}}$:    
    \begin{align}
        \sum_{t=0}^{T-1}{J\ust_{\cM,\Phi} - J_\cM(\phi_t)} &\leq c_{z_1} 2^{c\uc{b}_k} \sum_{i=1}^{i\uc{b}}{2^{i(2 d_\cS + d^\Phi_z+ 2)}} + 2^{-i\uc{b}} T \notag\\
        &\leq 2c_{z_1} 2^{c\uc{b}_k} 2^{i\uc{b}(2 d_\cS + d^\Phi_z+ 2)} + 2^{-i\uc{b}} T \notag\\
        &\leq (4 c_{z_1} 2^{c\uc{b}_k} + 1)T^{\frac{2 d_\cS + d^\Phi_z+ 2}{2 d_\cS + d^\Phi_z+ 3}}.\label{bd:a_mb}
    \end{align}
    Next, we will bound the term (b) in~\eqref{eq:decompregret}.~From Lemma~\ref{prop:fluc_ub}, we have that on the set $\cG\uc{\mbox{Fluc.}}_{\frac{\delta}{3}}$, the following holds,
    \begin{align*}
        \sum_{t=1}^{T-1}{J_\cM(\phi_t) - r(s_t,\phi_t(s_t))} \leq \frac{C}{1-\alpha} \sqrt{\frac{T}{2} \log{\br{\frac{3}{\delta}}}} + \frac{C}{2(1 - \alpha)} (1 + K(T)).
    \end{align*}
    Upon substituting the upper bound on $K(T)$ from Lemma~\ref{lem:bd_num_epi_mb} in the above, we obtain,
    \begin{align}
        \sum_{t=1}^{T-1}{J_\cM(\phi_t) - r(s_t,\phi_t(s_t))} &\leq \frac{C}{1-\alpha} \sqrt{\frac{T}{2} \log{\br{\frac{3}{\delta}}}} \notag\\
        + \frac{C}{2(1 - \alpha)} & \br{1 + \br{\br{2(c_{z_1} + c_{z_2}) + 1}  \cdot \br{(2d_\cS + 3)i\uc{b} + \ceil{c\uc{b}_k}} + 1} T^{\frac{d^\Phi_z}{2 d_\cS + d^\Phi_z + 3}}}, \label{bd:b_mb}
    \end{align}
    on the set $\cG\uc{\mbox{Fluc.}}_{\frac{\delta}{3}}$.~Summing the upper bounds on the terms (a) and (b) from \eqref{bd:a_mb} and \eqref{bd:b_mb}, respectively, we obtain the on the set $\cG\uc{\mbox{Model}} \cap \cG\uc{\mbox{Visit}} \cap \cG\uc{\mbox{Fluc.}}_{\frac{\delta}{3}}$, the following holds,
    \begin{align*}
        \cR_\Phi(T;\psi) &\leq (4 c_{z_1} 2^{c\uc{b}_k} + 1)T^{\frac{2 d_\cS + d^\Phi_z+ 2}{2 d_\cS + d^\Phi_z+ 3}} + \frac{C}{1-\alpha} \sqrt{\frac{T}{2} \log{\br{\frac{3}{\delta}}}} \notag\\
        &\quad + \frac{C}{2(1 - \alpha)} \br{1 + \br{\br{2(c_{z_1} + c_{z_2}) + 1}  \cdot \br{(2d_\cS + 3)i\uc{b} + \ceil{c\uc{b}_k}} + 1} T^{\frac{d^\Phi_z}{2 d_\cS + d^\Phi_z + 3}}}.
    \end{align*}
    From Lemma~\ref{lem:conc_ineq}, Corollary~\ref{cor:bdd_visit} and \eqref{bdd:fluc}, upon using a union bound, we obtain $\cG\uc{\mbox{Model}} \cap \cG\uc{\mbox{Visit}} \cap \cG\uc{\mbox{Fluc.}}_{\frac{\delta}{3}} \geq 1- \delta$.~This concludes the proof.
\end{proof}

Next, we prove Corollary~\ref{cor:param_pol} and Corollary~\ref{cor:sqrt_reg}.~For the convenience of the reader, we restate these results.

\begin{cor}[Finitely parameterized policies]
    We now consider a set $\Phi$ that consists of policies that have been parameterized by finitely many parameters from the set $W \subset \bR^{d_W}$, and for $w \in W$ let $\phi(\cdot; w): \cS \to \cA$ be the policy parameterized by $w$.~Assume that the policies satisfy
    \begin{align}
        L_W \rho_\Phi(\phi(\cdot; w),\phi(\cdot; w\up)) \ge \norm{w - w\up}_{2}, \forall w, w\up \in W,\label{inv_lip}
    \end{align}
     where $\nu$ is a measure that satisfies Assumption~\ref{assum:stn_dist}. We have $\deff \leq d_W + 2$ for \pzrlmf~and $\deff \leq 2 d_\cS + d_W + 3$ for~\pzrlmb.
\end{cor}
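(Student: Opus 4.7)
The corollary reduces to establishing $d_z^\Phi \leq d_W$: once this inequality is in hand, the two $\deff$-bounds follow immediately from Theorems~\ref{thm:reg_ub_mf} and~\ref{thm:reg_ub_mb} applied verbatim. So by the definition \eqref{def:zoomingdim}, I would aim to produce problem-dependent constants such that $\cN_{\gamma/c_z}(\Phi_\gamma) \leq c_{z_1}\gamma^{-d_W}$ and $\cN_{\gamma/c_z}(\Phi_{\leq\gamma}) \leq c_{z_2}\gamma^{-d_W}$ for every $\gamma>0$, with $c_z$ as defined below \eqref{def:zoomingdim}.

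The plan is to compare the $\rho_\Phi$-geometry of $\Phi$ to the Euclidean geometry of $W \subset \bR^{d_W}$ through the parameterization $w \mapsto \phi(\cdot;w)$. Writing $W_\gamma := \{w \in W : \phi(\cdot;w) \in \Phi_\gamma\}$ and $W_{\leq\gamma}$ analogously, I would first construct an $\eps$-cover of $W_\gamma$ in $\|\cdot\|_2$ and push it forward to a $\rho_\Phi$-cover of $\Phi_\gamma$ at a comparable scale. Since $W$ is a bounded subset of $\bR^{d_W}$, a standard volumetric estimate gives $\cN_\eps(W_\gamma) \leq c_0 \eps^{-d_W}$ for a constant $c_0$ depending on $\operatorname{diam}(W)$. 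Choosing $\eps$ of order $\gamma$ (divided by the forward Lipschitz constant and $c_z$) then yields the desired $\cO(\gamma^{-d_W})$ cardinality, and the computation is verbatim for $\Phi_{\leq\gamma}$, so both covering inequalities drop out together. The prefactors $c_0$, $c_z$, $L_W$, and the Lipschitz constant of the forward map are all absorbed into the problem-dependent constants $c_{z_1}, c_{z_2}$.

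The delicate point is the direction of the stated hypothesis: $L_W\,\rho_\Phi(\phi(\cdot;w),\phi(\cdot;w')) \geq \|w-w'\|_2$ controls the inverse map (close in $\rho_\Phi$ forces close parameters), whereas the push-forward step of the above plan requires a forward Lipschitz bound (close parameters force close in $\rho_\Phi$). I expect this to be bridged by continuity of $w \mapsto \phi(s;w)$, uniform in $s$ (when $\rho_\Phi = \rho_{\Phi,\infty}$) or in $\nu$-average (when $\rho_\Phi = \rho_{\Phi,\nu}$), which is a mild regularity consistent with all the parameterized families discussed in Section~\ref{sec:sim}. This supplies a forward constant $\tilde L_W$ so that an $\eps$-cover of $W_\gamma$ in $\|\cdot\|_2$ induces a $(\tilde L_W\eps)$-cover of $\Phi_\gamma$ in $\rho_\Phi$; setting $\eps = \gamma/(c_z \tilde L_W)$ then closes the count. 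With this regularity dispensed with, the remaining steps are routine volumetric arithmetic, and no new probabilistic tooling is needed beyond what is already built in Theorems~\ref{thm:reg_ub_mf} and~\ref{thm:reg_ub_mb}.
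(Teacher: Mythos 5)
Your overall route is the paper's: reduce the corollary to $d_z^\Phi \le d_W$, transfer covering numbers of $\Phi_\gm$ and $\Phi_{\le\gm}$ to covering numbers of the corresponding parameter sets $W_\gm, W_{\le\gm}$, and finish with the volumetric bound $\cN_\eps(W') \le c_0\,\eps^{-d_W}$ for bounded $W' \subset \bR^{d_W}$ (boundedness of $W$ does follow from \eqref{inv_lip} together with boundedness of $\Phi$). The only substantive divergence is exactly the point you flagged. The paper's proof uses \eqref{inv_lip} alone: it observes that the parameters of all policies in a $\rho_\Phi$-ball $B_\gm(\phi(\cdot;w))$ lie in a Euclidean ball around $w$, and from this containment asserts $\cN_\gm(\Phi_\gm)\le\cN_{\gm/L_W}(W_\gm)$; no forward Lipschitz constant is ever introduced.

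You are right to be uneasy about that step. The containment obtained from \eqref{inv_lip} says that a $\rho_\Phi$-cover of $\Phi_\gm$ pulls back to a Euclidean cover of $W_\gm$, which yields the comparison in the opposite direction, $\cN(W_\gm)\le\cN(\Phi_\gm)$; to push a Euclidean cover of $W_\gm$ forward to a $\rho_\Phi$-cover of $\Phi_\gm$ one needs precisely the forward bound $\rho_\Phi(\phi(\cdot;w),\phi(\cdot;w'))\le \tilde L_W\|w-w'\|_2$ that you introduce. This is not a cosmetic regularity: \eqref{inv_lip} by itself allows the parameterization to ``snowflake'' $W$ --- for instance $\rho_\Phi(\phi(\cdot;w),\phi(\cdot;w'))=\|w-w'\|_2^{1/2}$ on a parameter set of diameter at most one satisfies \eqref{inv_lip} with $L_W=1$, yet the image has covering numbers of order $\eps^{-2d_W}$, so the asserted covering inequality fails. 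Your plan is therefore the correct repair, but the forward constant $\tilde L_W$ should be stated as an explicit additional hypothesis rather than extracted from ``continuity'' (continuity alone gives no quantitative cover). With it in place, the rest of your argument --- and the paper's --- goes through verbatim for both $\Phi_\gm$ and $\Phi_{\le\gm}$, and the two $\deff$ bounds for \pzrlmf~and \pzrlmb~follow from Theorems~\ref{thm:reg_ub_mf} and~\ref{thm:reg_ub_mb} as you say.
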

\begin{proof}
    It suffices to prove that $d^\Phi_z \leq d_w$. Let us fix a $w \in W$, and consider the $\gm$-radius ball $B_\gm(\phi(\cdot;w))$ in $(\Phi,\rho_{\Phi,\nu})$. From~\eqref{inv_lip}, we note that all the parameters corresponding to policies in $B_\gm(\phi(\cdot;w))$ are in $B_{\gm/L_W}(w)$.~Hence, the $\gm$-covering number of $\Phi_\gm$, $\cN_\gm(\Phi_\gm) \leq \cN_{\gm/L_W}(W_\gm)$ where $W_\gm$ is the set of parameters corresponding to policies in $\Phi_\gm$. Similarly, $\cN_\gm(\Phi_{\leq\gm}) \leq \cN_{\gm/L_W}(W_{\leq\gm})$ where $W_{\leq\gm}$ is the set of parameters corresponding to policies in $\Phi_{\leq\gm}$. The proof trivially follows from here.
\end{proof}

\begin{cor}
    Let the policy space $(\Phi, \norm{\cdot}_{\rho_\cA,\nu})$ be embedded in a finite-dimensional compact space with a doubling constant~(Definition~\ref{def:sizeofset}) given by $\Lambda$.~Let $\cN_\gamma(\Phi)$, which is the $\gamma$-covering number of the sublevel set $\{\phi \in \Phi: \Delta_\Phi(\phi) \leq \gamma\}$ of $\Delta_\Phi:\phi \to [0,1]$, be bounded by a constant.~Then the regret of \pzrlmf~w.r.t. the policy class scales as $\ctO(\sqrt{T})$ on a high probability set.
\end{cor}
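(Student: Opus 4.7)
The plan is to show that the two hypotheses together force the zooming dimension $d_z^\Phi$ to be $0$, after which the claim reduces immediately to Theorem~\ref{thm:reg_ub_mf} with $\deff = 0 + 2 = 2$.

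First I would translate the $\gamma$-scale covering bound into one at the finer scale $\gamma / c_z$. By hypothesis there is a constant $c > 0$ such that the sublevel set $\Phi_{\le \gamma}$ admits an $\gamma$-cover of cardinality at most $c$, uniformly in $\gamma$. Because $(\Phi, \rho_\Phi)$ is embedded in a finite-dimensional compact space with doubling constant $\Lambda$, each ball of radius $\gamma$ can be covered by at most $\Lambda^{\lceil \log_2 c_z \rceil}$ balls of radius $\gamma / c_z$ by iterating the halving inequality $\lceil \log_2 c_z \rceil$ times. Composing these two covers gives
\[
\cN_{\gamma/c_z}(\Phi_{\le \gamma}) \le c\, \Lambda^{\lceil \log_2 c_z \rceil} =: c_{z_2}.
\]
Since $\Phi_\gamma \subseteq \Phi_{\le 2\gamma}$, the same reasoning applied at scale $2\gamma$ (and a further doubling step to pass from $2\gamma/c_z$ to $\gamma/c_z$) yields
\[
\cN_{\gamma/c_z}(\Phi_\gamma) \le \cN_{\gamma/c_z}(\Phi_{\le 2\gamma}) \le c\, \Lambda^{\lceil \log_2 (2c_z) \rceil} =: c_{z_1}.
\]

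Next I would verify that these $\gamma$-independent bounds imply $d_z^\Phi = 0$. Indeed, for every $d' > 0$ and every $\gamma \in (0,1]$ one has $c_{z_1} \le c_{z_1}\, \gamma^{-d'}$ and $c_{z_2} \le c_{z_2}\, \gamma^{-d'}$ (for $\gamma > 1$ the sets $\Phi_\gamma$ and $\Phi_{\le \gamma}$ are trivially covered since $\Delta_\Phi \le 1$). Hence both conditions in definition~\eqref{def:zoomingdim} hold for arbitrarily small $d' > 0$, so $d_z^\Phi = \inf\{d' > 0 : \ldots\} = 0$.

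Finally, applying Theorem~\ref{thm:reg_ub_mf} with $d_z^\Phi = 0$, the regret of \pzrlmf~on the high-probability event $\cG\uc{\mbox{MF}} \cap \cG\uc{\mbox{Fluc.}}_{\delta/2}$ is bounded by $\ctO(T^{1 - 1/\deff}) = \ctO(T^{1/2})$, as claimed. The only subtle point is accounting: one must ensure that the extra doubling factor $\Lambda^{\lceil \log_2 c_z \rceil}$ truly depends only on the problem constants $\Lambda, c_z$ and not on $\gamma$ or $T$. This is clear from the construction since $c_z$ is defined through $L_J$ and $C_{ub}$, both of which are problem-dependent; no step in the argument introduces a $\gamma$-dependence, so the bound absorbs cleanly into $c_{z_1}, c_{z_2}$ and into the logarithmic factors already present in Theorem~\ref{thm:reg_ub_mf}.
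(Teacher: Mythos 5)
Your proposal is correct and follows essentially the same route as the paper: use the doubling constant to pass from the assumed constant-size $\gamma$-cover of the sublevel sets to $\gamma/c_z$-covers of $\Phi_{\le\gamma}$ and $\Phi_\gamma$ (the latter via $\Phi_\gamma\subseteq\Phi_{\le 2\gamma}$ and one extra halving step), conclude $d_z^\Phi=0$, and invoke Theorem~\ref{thm:reg_ub_mf} with $\deff=2$. The constants you obtain, $c\,\Lambda^{\lceil\log_2 c_z\rceil}$ and $c\,\Lambda^{\lceil\log_2(2c_z)\rceil}$, match the paper's $\Lambda^{\lceil\log_2(c_z)\rceil}N$ and $\Lambda^{\lceil\log_2(c_z)\rceil+1}N$.
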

\begin{proof}
    It suffices to show that in the given setup, we have $d^\Phi_z = 0$.~Firstly we note that $\cN_{\frac{\gm}{c_z}}\br{\Phi_{\leq\gm}} \leq \Lambda^{\ceil{\log_2(c_z)}} N$, which is a constant.~Note that $\cN_{\frac{2\gm}{c_z}}\br{\Phi_\gm} \leq \Lambda^{\ceil{\log_2(c_z)}} N$.~Hence,
    \begin{align*}
        \cN_{\frac{\gm}{c_z}}\br{\Phi_\gm} \leq \Lambda^{\ceil{\log_2(c_z)}+1} N.
    \end{align*}
    It trivially follows from here that $d^\Phi_z = 0$, with $c_{z_1} = c_{z_2}/\Lambda = \Lambda^{\ceil{\log_2(c_z)}} N$.~This concludes the proof.
\end{proof}
\section{Auxiliary Results}\label{app:aux_res}

In this section, we derive some useful properties of the algorithm that are used in the proof of regret upper bound.~The first lemma shows that for any active cell $\zeta$ at time $t$, the quantity $\frac{1}{N_t(\zeta)}{\sum_{i=1}^{N_t(\zeta)}{\diamc{\zeta_{t_i}}}}$ is bounded above by $3~\diamc{\zeta}$.~We use this in concentration inequality for the transition kernel estimate.

\begin{lem}\label{lem:avg_diam}
    For all $t \in [T-1]$ and $\zeta \in \cP_t$, let $t_i$ denote the time instance when $\zeta$ or any of its ancestor was visited by \pzrlmb~for the $i$-th time. Then
    \begin{align*}
        \frac{1}{N_t(\zeta)} \sum_{i=1}^{N_t(\zeta)}{\diamc{\zeta_{t_i}}} \leq 3~ \diamc{\zeta}.
    \end{align*}
\end{lem}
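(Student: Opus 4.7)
The plan is to decompose $\sum_{i=1}^{N_t(\zeta)}\diamc{\zeta_{t_i}}$ according to which ancestor of $\zeta$ was the active containing cell at each visit. Let $\ell$ be the level of $\zeta$, and for $0 \le m \le \ell$ let $\xi_m$ be the ancestor of $\zeta$ at level $m$, so that the dyadic construction in Definition~\ref{def:cell} gives $\diamc{\xi_m} = 2^{\ell - m}\diamc{\zeta}$. Writing $v_m$ for the number of visits to $\xi_m$ while $\xi_m$ was the active containing cell of $\zeta$, the definition of $\zeta_{t_i}$ yields
\begin{align*}
\sum_{i=1}^{N_t(\zeta)}\diamc{\zeta_{t_i}} = \sum_{m=0}^{\ell} v_m \diamc{\xi_m}, \qquad N_t(\zeta) = \sum_{m=0}^{\ell} v_m,
\end{align*}
so the problem reduces to bounding a finite weighted average of diameters.

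Next I would use the activation rule (Definition~\ref{def:activationrule}) to pin down each $v_m$. Because $\zeta$ is currently active, every strict ancestor $\xi_m$ with $m<\ell$ has been deactivated, so its counter must have risen exactly from $N_{\min}(\xi_m)$ to $N_{\max}(\xi_m)$ during its active period; hence $v_m = N_{\max}(\xi_m) - N_{\min}(\xi_m)$ (with $N_{\min}(\xi_0)=0$). The crucial algebraic observation is that the dyadic halving forces $N_{\max}(\xi_{m-1}) = N_{\min}(\xi_m)$, which telescopes to $\sum_{m=0}^{\ell-1} v_m = N_{\min}(\zeta)$, matching the fact that $\zeta$'s counter equals $N_{\min}(\zeta)$ at the instant $\zeta$ becomes active; consequently $N_t(\zeta) = N_{\min}(\zeta) + v_\ell$.

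Setting $\mu := N_{\min}(\zeta)$ and $\beta := 2^{-(d_\cS + 1)}$, the explicit values give $v_m\diamc{\xi_m} = (2^{d_\cS+2}-1)\mu\diamc{\zeta}\beta^{\ell - m}$ for $1 \le m \le \ell-1$ and $v_0\diamc{\xi_0} = 2^{d_\cS+2}\mu\diamc{\zeta}\beta^\ell$. Using the identities $(2^{d_\cS+2}-1)\beta = 2-\beta$ and $2^{d_\cS+2}\beta = 2$, the geometric series collapses to
\begin{align*}
\sum_{m=0}^{\ell-1} v_m\diamc{\xi_m} = \frac{2-\beta-\beta^\ell}{1-\beta}\,\mu\,\diamc{\zeta} \le \frac{2-\beta}{1-\beta}\,\mu\,\diamc{\zeta}.
\end{align*}
Adding the level-$\ell$ term $v_\ell\diamc{\zeta}$ and dividing by $N_t(\zeta) = v_\ell + \mu$ gives an average bounded by $\tfrac{2-\beta}{1-\beta}\diamc{\zeta}$; since $d_\cS \ge 1$ forces $\beta \le 1/4$ and $\beta \mapsto (2-\beta)/(1-\beta)$ is increasing, the prefactor is at most $7/3 < 3$. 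The $\ell=0$ case is immediate because $\zeta_{t_i}\equiv\zeta$.

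The main obstacle is the bookkeeping rather than a deep idea: one must correctly identify $v_m$ from the activation rule, verify that the dyadic diameters force $N_{\max}(\xi_{m-1})=N_{\min}(\xi_m)$ so that the series truly collapses, and handle the level-$\ell$ term $v_\ell$ whose value can range from $0$ (just after activation) up to $N_{\max}(\zeta)-N_{\min}(\zeta)$. A naive induction on $\ell$ is not tight enough because the worst case $v_\ell = 0$ needs the exact geometric-series constant $(2-\beta)/(1-\beta)$; the inductive bound would blow up as the constants compound.
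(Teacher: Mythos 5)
Your proof is correct and follows essentially the same route as the paper's: split the visits according to which ancestor of $\zeta$ was the active containing cell, use the activation thresholds (and the telescoping identity $N_{\max}(\xi_{m-1})=N_{\min}(\xi_m)$) to count the visits to each ancestor, and collapse the resulting geometric series of diameters before averaging in the level-$\ell$ term. If anything, your bookkeeping is more faithful to Definition~\ref{def:activationrule} than the paper's own computation, which carries constants of the form $\tilde{c}_1 2^{2\ell}$ rather than the $2^{\ell(d_\cS+2)}$ scaling implied by \eqref{Nmin}; your sharper prefactor $(2-\beta)/(1-\beta)\le 7/3$ still yields the stated bound of $3$.
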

\begin{proof}
    By the activation rule~\eqref{def:activationrule}, a cell $\zeta\up$ can be played at most $N_{\max}(\zeta\up) - N_{\min}(\zeta\up) = \tilde{c}_1 2^{2\ell(\zeta\up)} + \frac{\tilde{c}_1}{3} \ind{\zeta\up = \cS \times \cA}$ times while being active, where $\tilde{c}_1 = 3 c_1 \sqrt{d_\cS}^{-2} \log{\br{\frac{T}{\eps \delta}}}~ \eps^{-d_\cS}$. We can write,
    \begin{align*}
        \frac{1}{N_t(\zeta)} \sum_{i=1}^{N_t(\zeta)}{\diamc{\zeta_{t_i}}} &= \frac{1}{N_t(\zeta)} \sum_{i=1}^{N_{\min}(\zeta)}{\diamc{\zeta_{t_i}}} + \frac{1}{N_t(\zeta)} \sum_{i=N_{\min}(\zeta)+1}^{N_t(\zeta)}{\diamc{\zeta_{t_i}}} \\
        &= \frac{\tilde{c}_1 \sqrt{d_\cS}}{3 N_t(\zeta)} + \frac{\tilde{c}_1 \sqrt{d_\cS}}{N_t(\zeta)} \sum_{\ell = 0}^{\ell(\zeta) - 1}{2^\ell} +  \frac{N_t(\zeta) - N_{\min}(\zeta) - 1}{N_t(\zeta)} \diamc{\zeta} \\
        &< \frac{\tilde{c}_1 \sqrt{d_\cS}}{N_t(\zeta)} 2^{\ell(\zeta)} + \frac{N_t(\zeta) - N_{\min}(\zeta) - 1}{N_t(\zeta)} \diamc{\zeta} \\
        &= \frac{3 N_{\min}(\zeta)}{N_t(\zeta)} \diamc{\zeta} + \frac{N_t(\zeta) - N_{\min}(\zeta) - 1}{N_t(\zeta)} \diamc{\zeta} \\
        &=\frac{(N_t(\zeta) + 2 N_{\min}(\zeta) - 1)~ \diamc{\zeta}}{N_t(\zeta)} \\
        &\leq 3~ \diamc{\zeta},
    \end{align*}
    where the last step is due to the fact that $N_{\min}(\zeta) \leq N_t(\zeta)$.
\end{proof}
\section{Useful Results}
\label{app:use_res}
\subsection{Concentration Inequalities}
\begin{lem}[Azuma-Hoeffding inequality]\label{lem:ah_ineq}
    Let $X_1, X_2, \ldots$ be a martingale difference sequence with $|X_i| \leq c,~\forall i$. Then for all $\eps > 0$ and $n \in \bN$,
    \begin{align}
        \bP\br{\sum_{i = 1}^{n}{X_i} \geq \eps } \leq e^{-\frac{\eps^2}{2nc^2}}.
    \end{align}
\end{lem}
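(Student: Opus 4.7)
The plan is to follow the standard Chernoff-bound argument for martingales. First I would fix $\lambda > 0$ and apply Markov's inequality to the exponential of the partial sum $S_n := \sum_{i=1}^n X_i$, giving
\al{
\bP\br{S_n \geq \eps} \leq e^{-\lambda \eps}~\bE\sqbr{e^{\lambda S_n}}.
}
The main task then reduces to controlling the moment generating function $\bE\sqbr{e^{\lambda S_n}}$.

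Next, letting $\cF_i$ denote the natural filtration generated by $X_1, \ldots, X_i$, I would condition on $\cF_{n-1}$ and use the tower property to write $\bE\sqbr{e^{\lambda S_n}} = \bE\sqbr{e^{\lambda S_{n-1}}~\bE\sqbr{e^{\lambda X_n} \mid \cF_{n-1}}}$. The key per-coordinate estimate is Hoeffding's lemma: for a zero-mean random variable $Y$ bounded in $[-c,c]$, one has $\bE\sqbr{e^{\lambda Y}} \leq e^{\lambda^2 c^2 / 2}$. Since $\{X_i\}$ is a martingale difference sequence, $\bE\sqbr{X_n \mid \cF_{n-1}} = 0$, and the boundedness $|X_n| \leq c$ lets us apply Hoeffding's lemma conditionally, yielding $\bE\sqbr{e^{\lambda X_n} \mid \cF_{n-1}} \leq e^{\lambda^2 c^2 / 2}$ almost surely. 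Iterating this bound through all $n$ coordinates gives $\bE\sqbr{e^{\lambda S_n}} \leq e^{n \lambda^2 c^2 / 2}$.

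Finally, I would combine this with the Markov bound to obtain $\bP(S_n \geq \eps) \leq \exp\br{n\lambda^2 c^2/2 - \lambda \eps}$, and optimize over $\lambda > 0$. The optimal choice $\lambda = \eps/(n c^2)$ yields the claimed bound $\exp(-\eps^2/(2nc^2))$. The only genuinely non-trivial step is Hoeffding's lemma itself, whose proof uses convexity of $x \mapsto e^{\lambda x}$ on $[-c,c]$ together with a Taylor-type bound on the cumulant generating function of the worst-case two-point distribution; I would either invoke it as a known fact or reproduce its short proof. All remaining steps are mechanical.
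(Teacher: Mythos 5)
Your argument is the standard Chernoff--Hoeffding proof of the Azuma--Hoeffding inequality and is correct: the conditional application of Hoeffding's lemma is justified precisely because $\bE[X_n \mid \cF_{n-1}] = 0$ and $|X_n|\le c$, and the optimization $\lambda = \eps/(nc^2)$ gives the stated exponent. The paper states this lemma as a known result without proof, so there is nothing to compare against; your write-up would serve as a complete proof.
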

The following inequality is Proposition $A.6.6$ of \cite{van1996weak}.
\begin{lem}[Bretagnolle-Huber-Carol inequality]\label{lem:bhl_ineq}
    If the random vector $\br{X_1, X_2, \ldots, X_n}$ is multinomially distributed with parameters $N$ and $\br{p_1, p_2, \ldots, p_n}$, then for $\eps > 0$
    \begin{align}
        \bP\br{\sum_{i=1}^{n}{\abs{X_i - N p_i}} \geq 2\sqrt{N} \eps} \leq 2^n e^{-2\eps^2}.
    \end{align}
    Alternatively, for $\delta > 0$
    \begin{align}
        \bP\br{\sum_{i=1}^{n}{\abs{\frac{X_i}{N} - p_i}} < \sqrt{\frac{2n}{N} \log{\br{\frac{2}{\delta^\frac{1}{n}}}}}} \geq 1 - \delta.
    \end{align}
\end{lem}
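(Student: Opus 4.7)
The plan is to prove the Bretagnolle--Huber--Carol inequality via a combinatorial reduction to Hoeffding's bound, and then derive the second form as an algebraic consequence. First, I would exploit the key identity that, since $\sum_i (X_i - Np_i) = 0$, the positive and negative parts of the deviation vector have equal $\ell_1$ mass. Concretely, letting $S^+ := \{i : X_i \geq Np_i\}$, one has
\begin{align*}
\sum_{i=1}^n |X_i - Np_i| \;=\; 2 \sum_{i \in S^+}(X_i - Np_i) \;=\; 2 \max_{S \subseteq [n]} \sum_{i \in S}(X_i - Np_i).
\end{align*}
Thus the event $\{\sum_i |X_i - Np_i| \geq 2\sqrt{N}\eps\}$ is equivalent to $\{\max_{S \subseteq [n]} \sum_{i\in S}(X_i - Np_i) \geq \sqrt{N}\eps\}$, which is the union of only $2^n$ sub-events indexed by subsets $S$.

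Next, for each fixed $S$, I would recognize $\sum_{i \in S} X_i$ as the number of trials (out of $N$) falling into the aggregate category $S$, hence a $\mathrm{Binomial}(N, p(S))$ variable with $p(S) := \sum_{i \in S} p_i$. Writing this as a sum of $N$ i.i.d.\ Bernoulli$(p(S))$ indicators $Y^S_k \in [0,1]$, Hoeffding's one-sided inequality (Lemma~\ref{lem:ah_ineq} with bounded increments $c = 1/2$ after centering, or equivalently the standard Hoeffding bound for bounded sums) gives
\begin{align*}
\bP\!\left(\sum_{i \in S}(X_i - Np_i) \geq \sqrt{N}\eps\right) \leq e^{-2\eps^2}.
\end{align*}
A union bound over the $2^n$ subsets then yields the first claim:
\begin{align*}
\bP\!\left(\sum_{i=1}^n |X_i - Np_i| \geq 2\sqrt{N}\eps\right) \leq 2^n e^{-2\eps^2}.
\end{align*}

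Finally, the second statement is just a re-parameterization: setting $2^n e^{-2\eps^2} = \delta$ gives $\eps = \sqrt{(n \log 2 - \log \delta)/2} = \sqrt{(n/2)\log(2/\delta^{1/n})}$; substituting this $\eps$ in $2\sqrt{N}\eps$ and dividing both sides of the deviation event by $N$ produces the stated inequality. The only mildly subtle point is the $\ell_1$-to-max-over-subsets reduction in the first paragraph, which is where the factor of $2$ and the $2^n$ come from; everything else is a textbook application of Hoeffding plus a union bound, so I do not anticipate any real obstacle in carrying the plan out.
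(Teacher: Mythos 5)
Your proof is correct, and it is the standard textbook argument for the Bretagnolle--Huber--Carol inequality. The paper itself does not prove this lemma at all; it simply cites it as Proposition~A.6.6 of \citet{van1996weak}, so there is no in-paper proof to compare against. Your reduction $\sum_{i}|X_i - Np_i| = 2\max_{S\subseteq[n]}\sum_{i\in S}(X_i - Np_i)$ (valid because the deviations sum to zero), the observation that $\sum_{i\in S}X_i \sim \mathrm{Binomial}(N,\sum_{i\in S}p_i)$, the one-sided Hoeffding bound $e^{-2\eps^2}$ for each of the $2^n$ subsets, and the union bound are exactly how the result is proved in the literature; the algebra converting $2^n e^{-2\eps^2}=\delta$ into the second display also checks out.

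One small technical caveat: invoking the paper's Azuma--Hoeffding statement (Lemma~\ref{lem:ah_ineq}) with ``$c=1/2$ after centering'' is not literally justified, since a centered Bernoulli$(p)$ increment satisfies only $|Y_k-p|\le\max(p,1-p)$, which can approach $1$; plugging $c=1$ into that lemma would give the weaker exponent $e^{-\eps^2/2}$. What you actually need is the sharper form of Hoeffding for variables with \emph{range} $1$ (equivalently, Hoeffding's lemma giving sub-Gaussian parameter $(b-a)/2=1/2$), which yields the required $e^{-2\eps^2}$. Your parenthetical appeal to ``the standard Hoeffding bound for bounded sums'' covers this, so the gap is cosmetic rather than substantive, but if you wrote the proof out you should cite that form rather than the $|X_i|\le c$ version stated in the paper.
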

The following is essentially Theorem~1 of~\cite{abbasi2011improved}.
\begin{thm}[Self-Normalized Tail Inequality for Vector-Valued Martingales] \label{thm:self_norm}
    Let $\{\cF_t\}_{t=0}^{\infty}$ be a filtration. Let $\{\eta_t\}_{t=1}^{\infty}$ be a real-valued stochastic process such that $\eta_t$ is $\cF_t$ measurable and $\eta_t$ is conditionally $R$ sub-Gaussian for some $R>0$, i.e., 
    \begin{align*}
        \bE\left[ \exp(\lambda \eta_t) | \cF_{t-1}  \right] \le \exp\left( \lambda^2 R^2 \slash 2  \right), \forall \lambda\in \bR.
    \end{align*}
    Let $\{X_t\}_{t=1}^{\infty}$ be an $\bR^{d}$ valued stochastic process such that $X_t$ is $\cF_{t-1}$ measurable. Assume that $V$ is a $d\times d$ positive definite matrix. For $t\ge 0$ define
    \begin{align*}
        \bar{V}_t := V + \sum_{s=1}^{t} X_s X^\top_s,
    \end{align*}
    and
    \begin{align*}
        S_t := \sum_{s=1}^{t} \eta_s X_s.
    \end{align*}
    Then, for any $\delta>0$, with a probability at least $1-\delta$, for all $t\ge 0$,
    \begin{align*}
        \|S_t\|^{2}_{\bar{V}^{-1}_t} \le 2 R^{2} \log{\br{\frac{\det(\bar{V}_t)^{1\slash 2} \det(V)^{-1\slash 2}}{\delta}}}.
    \end{align*}
\end{thm}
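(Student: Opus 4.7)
My plan is to use the ``method of mixtures'' (Laplace transform approach), which is the standard route to self-normalized inequalities for vector-valued martingales. The key idea is to build a nonnegative supermartingale whose value encodes exactly the quantity $\|S_t\|^2_{\bar{V}_t^{-1}}$, and then apply Ville's maximal inequality.

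First, for each fixed $\lambda \in \bR^d$, I would define
\[
M_t^\lambda := \exp\!\left(\lambda^\top S_t - \tfrac{R^2}{2}\,\lambda^\top \Big(\textstyle\sum_{s=1}^t X_s X_s^\top\Big) \lambda\right),
\]
with $M_0^\lambda = 1$. Using the conditional $R$-sub-Gaussianity of $\eta_t$ and the $\cF_{t-1}$-measurability of $X_t$, we get
\[
\bE\!\left[\exp(\lambda^\top X_t \eta_t)\,\big|\,\cF_{t-1}\right] \le \exp\!\left(\tfrac{R^2}{2}(\lambda^\top X_t)^2\right),
\]
which shows $\bE[M_t^\lambda\mid\cF_{t-1}]\le M_{t-1}^\lambda$, so $\{M_t^\lambda\}$ is a nonnegative supermartingale.

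Next, I would mix over $\lambda$ with a Gaussian prior $\lambda\sim\cN(0,R^{-2}V^{-1})$, setting $\bar{M}_t := \int_{\bR^d} M_t^\lambda\,d\mu(\lambda)$. By Fubini, $\bar{M}_t$ is itself a nonnegative supermartingale with $\bE[\bar{M}_0]=1$. The central computation is to evaluate this Gaussian integral in closed form by completing the square in the quadratic form $\lambda^\top \bar{V}_t\lambda - 2\lambda^\top S_t$; this yields
\[
\bar{M}_t = \left(\frac{\det(V)}{\det(\bar{V}_t)}\right)^{1/2}\exp\!\left(\frac{1}{2R^2}\,\|S_t\|^2_{\bar{V}_t^{-1}}\right).
\]
Finally, applying Ville's inequality, $\bP\!\left(\sup_{t\ge 0}\bar{M}_t \ge 1/\delta\right)\le \delta$, and taking logarithms of the event $\{\bar{M}_t < 1/\delta\}$ gives the claimed bound.

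The main obstacle is the closed-form Gaussian mixture computation: one must carefully track the determinant factors and quadratic completions to land exactly on $\det(\bar{V}_t)^{1/2}\det(V)^{-1/2}$ and $\|S_t\|^2_{\bar{V}_t^{-1}}$. A secondary subtlety is justifying the supermartingale property uniformly in time to apply Ville's inequality rather than a Markov bound at a fixed $t$; this is where the ``for all $t\ge 0$'' strengthening (beyond a union bound) comes from. If one wanted the bound only at a deterministic $t$, Markov suffices, but the uniform-in-$t$ statement genuinely requires the maximal inequality for nonnegative supermartingales.
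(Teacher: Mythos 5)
The paper does not prove this result itself; it imports it verbatim as Theorem~1 of \citet{abbasi2011improved}, and your method-of-mixtures outline (the per-$\lambda$ exponential supermartingale, the Gaussian mixture with covariance $R^{-2}V^{-1}$, and the maximal inequality for nonnegative supermartingales) is precisely the argument given in that reference. Your proposal is correct and takes essentially the same route as the source the paper relies on, including the correct identification of where the uniform-in-$t$ guarantee comes from.
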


\begin{cor}[Self-Normalized Tail Inequality for Martingales] \label{cor:self_norm_vec}
	Let $\{\cF_i\}_{i=0}^{\infty}$ be a filtration. Let $\{\eta_i\}_{i=1}^{\infty}$ be a $\{\cF_i\}_{i=0}^{\infty}$ measurable stochastic process and $\eta_t$ is conditionally $R$ sub-Gaussian for some $R > 0$. Let $\{ X_i \}_{i=1}^{\infty}$ be a $\{0,1\}$-valued $\cF_{i-1}$ measurable stochastic process.
	
	Then, for any $\delta>0$, with a probability at least $1-\delta$, for all $k \geq 0$,
	\begin{align*}
		\left|\sum_{i=1}^{k}{\eta_i X_i}\right| \leq R \sqrt{2 \left(1 + \sum_{i=1}^{k}{X_i}\right) \log{\br{\frac{1 + \sum_{i=1}^{k}{X_i}}{\delta}}}} .
	\end{align*}
\end{cor}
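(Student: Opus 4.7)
The plan is to invoke Theorem~\ref{thm:self_norm} in the scalar case, with dimension $d=1$ and $V=1$. Since $X_s\in\{0,1\}$, we have $X_s X_s^\top = X_s^2 = X_s$, so the theorem's $\bar{V}_t$ specializes to $1 + \sum_{s=1}^{t}X_s$ and its $S_t$ coincides with our $\sum_{s=1}^{t}\eta_s X_s$. The weighted norm $\|S_t\|^2_{\bar{V}_t^{-1}}$ collapses to the scalar ratio $S_t^2/\bar{V}_t$, while the determinants satisfy $\det(\bar{V}_t) = \bar{V}_t$ and $\det(V) = 1$. The sub-Gaussianity and measurability assumptions of the vector-valued theorem are exactly those imposed in the corollary, so no additional verification is required before invoking it.

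Substituting these identifications into the conclusion of Theorem~\ref{thm:self_norm} yields, with probability at least $1-\delta$ and simultaneously for all $t\ge 0$,
\begin{align*}
    \frac{S_t^2}{\bar{V}_t} \;\le\; 2 R^{2} \log\br{\frac{\bar{V}_t^{1/2}}{\delta}} \;=\; R^{2} \log\br{\frac{\bar{V}_t}{\delta^2}}.
\end{align*}
To reach the exact form of the stated corollary, I would rewrite $\log(\bar{V}_t/\delta^{2}) = \log(\bar{V}_t/\delta) + \log(1/\delta)$ and then use $\bar{V}_t \ge 1$ (which holds because $V=1$ and $X_s\ge 0$) to bound $\log(1/\delta) \le \log(\bar{V}_t/\delta)$, giving $\log(\bar{V}_t/\delta^{2}) \le 2 \log(\bar{V}_t/\delta)$. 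Multiplying by $\bar{V}_t$ and taking square roots then produces
\begin{align*}
    |S_t| \;\le\; R \sqrt{2\,\bar{V}_t\, \log\br{\bar{V}_t/\delta}},
\end{align*}
which is precisely the claimed inequality after identifying $t=k$ and $\bar{V}_t = 1+\sum_{i=1}^{k} X_i$.

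There is no real obstacle in this derivation: the entire argument is a direct specialization of the vector-valued self-normalized inequality to a scalar indicator-valued regressor, relying only on the idempotence $X_s^2 = X_s$ and a one-line logarithmic manipulation. The only subtlety worth flagging is that logarithmic slack, which converts the $\delta^2$ emerging from the determinantal denominator into a plain $\delta$ at the cost of the factor $2$ inside the square root; this is the sole reason the corollary carries a $\sqrt{2}$ prefactor rather than a bare $1$.
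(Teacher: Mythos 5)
Your proposal is correct and follows essentially the same route as the paper, which simply sets $V=1$ in Theorem~\ref{thm:self_norm} and notes $\bar V_t = 1+\sum_{s\le t}X_s$ (using $X_s^2=X_s$); your extra logarithmic manipulation converting $\log(\bar V_t^{1/2}/\delta)$ into $\log(\bar V_t/\delta)$ via $\bar V_t\ge 1$ is a valid filling-in of the detail the paper leaves implicit.
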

\begin{proof}
	Upon taking $V = 1$, we have that $\bar{V}_t = 1 + \sum_{s = 1}^{t}{X_s}$. The claim follows from Theorem~\ref{thm:self_norm}.
\end{proof}

\subsection{Other Useful Results}

\begin{lem}\label{lem:fx}
    Consider the following function $f(x)$ such that $a_0, a_1, a_2 > 0$,
    \begin{align*}
        f(x) = a_0 x - a_1 \sqrt{x} - a_2.
    \end{align*}
    Then for all $x \geq \br{\frac{a_1}{a_0}}^2 + 3\frac{a_2}{a_0}$, $f(x) \geq 0$.
\end{lem}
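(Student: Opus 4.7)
The plan is to reduce this elementary inequality to a single application of AM--GM. First, I would note that the substitution $y = \sqrt{x}$ recasts $f$ as an upward-opening parabola $g(y) = a_0 y^2 - a_1 y - a_2$ with a unique positive root $y^\star = (a_1 + \sqrt{a_1^2 + 4 a_0 a_2})/(2 a_0)$, so the claim is equivalent to $\sqrt{x} \ge y^\star$. Rather than computing $y^\star$ explicitly and checking the square-root inequality, I would split the ``cross'' term $a_1\sqrt{x}$ by AM--GM to isolate a clean linear-in-$x$ lower bound on $f$.

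Concretely, my next step is to apply $2\sqrt{uv} \le u + v$ with $u = a_0 x$ and $v = a_1^2/a_0$. This gives
\begin{align*}
a_1 \sqrt{x} \;=\; \sqrt{(a_0 x)\cdot (a_1^2/a_0)} \;\le\; \tfrac{1}{2}\!\left(a_0 x + \tfrac{a_1^2}{a_0}\right),
\end{align*}
and substituting this into the definition of $f$ yields
\begin{align*}
f(x) \;\ge\; \tfrac{a_0 x}{2} - \tfrac{a_1^2}{2 a_0} - a_2.
\end{align*}
The right-hand side is a linear function of $x$ that becomes non-negative as soon as $x \ge (a_1/a_0)^2 + 2 a_2/a_0$. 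Since the hypothesis $x \ge (a_1/a_0)^2 + 3 a_2/a_0$ is strictly stronger than this, the conclusion $f(x) \ge 0$ follows directly.

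I do not anticipate any real obstacle: the statement is a purely elementary fact and the hypothesized threshold carries a slack factor of $3$ where $2$ would already suffice, so there is room to spare in the argument. The only ``decision'' in the proof is choosing the weights in AM--GM, and the symmetric weighting above is exactly what matches the constant in the stated threshold; this is presumably why the authors use $3 a_2/a_0$ rather than a sharper constant -- it keeps the constant simple and leaves margin for the application in Lemma~\ref{lem:bd_num_play}.
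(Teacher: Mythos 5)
Your proof is correct, and it takes a slightly different route from the paper's. The paper substitutes $y=\sqrt{x}$, solves the quadratic $a_0y^2-a_1y-a_2=0$ explicitly, and then asserts that the square of the positive root is dominated by $\left(\frac{a_1}{a_0}\right)^2+\frac{3a_2}{a_0}$; that last comparison is left to the reader (and, as printed, the inequality sign in the paper even points the wrong way), so the burden of the argument sits in an unexpanded algebraic check. You instead bound the cross term $a_1\sqrt{x}$ by AM--GM with the weighting $u=a_0x$, $v=a_1^2/a_0$, which immediately yields the linear lower bound $f(x)\ge \frac{a_0x}{2}-\frac{a_1^2}{2a_0}-a_2$ and hence nonnegativity already for $x\ge\left(\frac{a_1}{a_0}\right)^2+\frac{2a_2}{a_0}$. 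This avoids the quadratic formula entirely, makes every step explicit, and shows the stated constant $3$ has slack (a $2$ would do), which is consistent with how the lemma is consumed in Lemma~\ref{lem:bd_num_play}. Both arguments are elementary and valid; yours is the more self-contained of the two.
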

\begin{proof}
    Taking $y = x^2$ and applying quadratic formula, we have that
    \begin{align*}
        a_0 y^2 - a_1 y - a_2 \geq 0,
    \end{align*}
    if $y \geq \frac{a_1 + \sqrt{a_1^2 + 4a_0 a_2}}{2 a_0}$. Therefore, $f(x) \geq 0$ if $x \geq \br{\frac{a_1 + \sqrt{a_1^2 + 4a_0 a_2}}{2 a_0}}^2 > \frac{a_1^2}{a_0^2} + \frac{3 a_2}{a_0}$. This concludes the proof.
\end{proof}

\begin{lem}\label{lem:bdd_dotdifLv}
    Let $\mu_1$ and $\mu_2$ be two probability measures on $Z$ and let $v$ be an $\bR$-valued bounded function on $Z$. Then, the following holds.
    \begin{align*}
        \abs{\int_{Z}{v(z) (\mu_1 - \mu_2)(dz)}} \leq \frac{1}{2}\norm{\mu_1 - \mu_2}_{TV} \spn{v}.&
    \end{align*}
\end{lem}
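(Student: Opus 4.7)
The plan is to exploit the fact that $\mu_1 - \mu_2$ is a signed measure of total mass zero, since $\mu_1(Z) = \mu_2(Z) = 1$. This zero-mass property means that the functional $v \mapsto \int_Z v \, d(\mu_1 - \mu_2)$ is insensitive to constant shifts of the integrand: for any $c \in \bR$, $\int_Z c \, d(\mu_1 - \mu_2) = 0$, and hence $\int_Z v \, d(\mu_1 - \mu_2) = \int_Z (v - c) \, d(\mu_1 - \mu_2)$.

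The key step is then to choose $c$ so as to minimize $\|v - c\|_\infty$. Taking $c = (\sup_z v(z) + \inf_z v(z))/2$ centers the range of $v$ about zero and yields $\|v - c\|_\infty = \tfrac{1}{2}\spn{v}$. After this centering, the inequality reduces to the standard $L^\infty$–total-variation duality bound: for any signed measure $\nu$ and bounded measurable $f$, $\abs{\int_Z f \, d\nu} \leq \|f\|_\infty \cdot \|\nu\|_{TV}$. Applying this with $f = v - c$ and $\nu = \mu_1 - \mu_2$ yields exactly $\tfrac{1}{2} \spn{v} \cdot \|\mu_1 - \mu_2\|_{TV}$, as required.

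There is no substantive obstacle here; the proof is essentially a one-line application of the mean-value trick. The only care required is to confirm that the paper's partition-sum definition of $\|\cdot\|_{TV}$ coincides with the classical $|\nu|(Z)$, where $|\nu| = \nu^+ + \nu^-$ comes from the Jordan decomposition. This follows by observing that the supremum over partitions is attained on a Hahn partition $Z = P \sqcup N$ separating the positive and negative parts, so that $\sum_i |\nu(B_i)| \leq \nu^+(Z) + \nu^-(Z)$ with equality on the Hahn partition. Once this identification is in hand, the duality bound reduces to the elementary triangle inequality $\abs{\int f \, d\nu} \leq \int |f| \, d|\nu| \leq \|f\|_\infty \cdot |\nu|(Z)$, completing the argument.
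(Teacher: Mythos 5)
Your proof is correct. It takes a slightly different route from the paper's: you exploit the zero total mass of $\mu_1-\mu_2$ by subtracting the midrange constant $c=(\sup v+\inf v)/2$, which shrinks $\norm{v-c}_\infty$ to $\tfrac12\spn{v}$, and then invoke the generic duality $\abs{\int f\,d\nu}\le \norm{f}_\infty\norm{\nu}_{TV}$. The paper instead works directly with the Hahn decomposition $Z=Z_+\sqcup Z_-$ of $\lambda=\mu_1-\mu_2$: it uses $\lambda(Z_+)+\lambda(Z_-)=0$ together with $\lambda(Z_+)-\lambda(Z_-)=\norm{\mu_1-\mu_2}_{TV}$ to get $\lambda(Z_+)=\tfrac12\norm{\mu_1-\mu_2}_{TV}$, and then bounds $v$ by $\sup v$ on $Z_+$ and $\inf v$ on $Z_-$ to extract the span. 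The two arguments are equivalent in substance --- both hinge on $(\mu_1-\mu_2)(Z)=0$, and your black-boxed duality bound is itself proved via the same Jordan/Hahn decomposition, as you note. What your version buys is modularity (the centering trick plus a standard inequality, applicable verbatim to any zero-mass signed measure); what the paper's version buys is self-containment, since it never needs to separately justify the $L^\infty$--TV duality or the identification of the partition-sum definition of $\norm{\cdot}_{TV}$ with $|\nu|(Z)$ beyond the single identity $\lambda(Z_+)-\lambda(Z_-)=\norm{\mu_1-\mu_2}_{TV}$. Your closing remark reconciling the paper's partition-sum definition of the total variation norm with $\nu^+(Z)+\nu^-(Z)$ is exactly the right point of care and is handled correctly.
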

\begin{proof}
    Denote $\lm(\cdot) := \mu_1(\cdot) - \mu_2(\cdot)$. Now let $Z_+,Z_- \subset Z$ be such that $\lm(B) \geq 0$ for every $B \subseteq Z_+$ and $\lm(B) < 0$ for every $B \subseteq Z_-$. We have that
    \begin{align}
        \lm(Z) = \lm(Z_+) + \lm(Z_-) = 0.
    \end{align}
    Also,
    \begin{align}
        \lm(Z_+) - \lm(Z_-) = \norm{\mu_1 - \mu_2}_{TV}.
    \end{align}
    Combining the above two, we get that 
    \begin{align}
        \lm(Z_+) = \frac{1}{2}\norm{\mu_1 - \mu_2}_{TV}.
    \end{align}
    Now,
    \begin{align*}
        \abs{\int_Z{v(z) \lm(dz)}} &= \abs{\int_{Z_+}{v(z) \lm(dz)} + \int_{Z_-}{v(z) \lm(dz)}}\\
        &\leq \abs{\lm(Z_+) \sup_{z \in Z}{v(z)} + \lm(Z_-) \inf_{z \in Z}{v(z)}} \\
        &= \big|\lm(Z_+) \sup_{z \in Z}{v(z)} - \lm(Z_+) \inf_{z \in Z}{v(z)} \\
        &\quad + \lm(Z_+) \inf_{z \in Z}{v(z)} + \lm(Z_-) \inf_{z \in Z}{v(z)}\big| \\
        &= \lm(Z_+) \br{\sup_{z \in Z}{v(z)} - \inf_{z \in Z}{v(z)}} \\
        &= \frac{1}{2}\norm{\mu_1 - \mu_2}_{TV} \spn{v}.
    \end{align*}
    Hence, we have proven the lemma.
\end{proof}

\begin{lem}\label{lem:diff_kern_comp}
    Let $\te_1$ and $\te_2$ be two transition probability kernels of two Markov chains with common state space $\cS$. Let $\max_{s \in \cS}{\norm{\te_1(s,\cdot) - \te_2(s,\cdot)}_{TV}} \leq c$. Then,
    \begin{align*}
        \norm{\te\uc{m}_1(s,\cdot) - \te\uc{m}_2(s,\cdot)}_{TV} \leq m\cdot c,~\forall m \in \bN.
    \end{align*}
    where $\te\uc{m}_i$ is the $m$-step transition kernel of the Markov chain with one-step transition kernel $\te_i$ for $i = 1, 2$.
\end{lem}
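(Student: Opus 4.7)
The plan is to proceed by induction on $m$. The base case $m=1$ is the hypothesis of the lemma. For the inductive step, assume the bound $\norm{\te\uc{m-1}_1(s,\cdot) - \te\uc{m-1}_2(s,\cdot)}_{TV} \leq (m-1)c$ holds for every $s \in \cS$, and use the Chapman--Kolmogorov identity to write
\begin{align*}
    \te\uc{m}_1(s,\cdot) - \te\uc{m}_2(s,\cdot) = \int_\cS \te\uc{m-1}_1(s\up,\cdot)\, \te_1(s,ds\up) - \int_\cS \te\uc{m-1}_2(s\up,\cdot)\, \te_2(s,ds\up).
\end{align*}
I would then add and subtract the hybrid term $\int_\cS \te\uc{m-1}_1(s\up,\cdot)\, \te_2(s,ds\up)$ to split the difference into the ``one-step'' piece
$\int_\cS \te\uc{m-1}_1(s\up,\cdot)\, \br{\te_1 - \te_2}(s,ds\up)$
and the ``$(m-1)$-step'' piece
$\int_\cS \br{\te\uc{m-1}_1(s\up,\cdot) - \te\uc{m-1}_2(s\up,\cdot)}\, \te_2(s,ds\up)$.

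For the first piece, for any measurable $A \subseteq \cS$ the function $s\up \mapsto \te\uc{m-1}_1(s\up,A)$ is bounded in $[0,1]$, so Lemma~\ref{lem:bdd_dotdifLv} applied with $\mu_1 = \te_1(s,\cdot)$, $\mu_2 = \te_2(s,\cdot)$ gives $\abs{\int \te\uc{m-1}_1(s\up,A)\, (\te_1 - \te_2)(s,ds\up)} \leq \tfrac{1}{2} c$. Taking the supremum over $A$ and using that the TV norm (in the paper's convention) equals twice this supremum, the first piece contributes at most $c$. For the second piece, I would bound it pointwise by the triangle inequality in the integrand and invoke the induction hypothesis: $\abs{\te\uc{m-1}_1(s\up,A) - \te\uc{m-1}_2(s\up,A)} \leq \tfrac{1}{2}(m-1)c$ for every $s\up, A$, so integrating against the probability measure $\te_2(s,\cdot)$ preserves this bound, and doubling yields $(m-1)c$.

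Adding the two contributions yields $\norm{\te\uc{m}_1(s,\cdot) - \te\uc{m}_2(s,\cdot)}_{TV} \leq c + (m-1)c = mc$, completing the induction. The only subtlety is being consistent with the TV-norm normalization used in the paper (where $\norm{\mu_1 - \mu_2}_{TV} = 2\sup_A (\mu_1 - \mu_2)(A)$ for probability measures), which affects the factor-of-$\tfrac{1}{2}$ bookkeeping when invoking Lemma~\ref{lem:bdd_dotdifLv}; no other nontrivial ingredient is required, and I do not foresee any real obstacle beyond this bookkeeping.
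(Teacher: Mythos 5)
Your proof is correct and follows essentially the same route as the paper: induction on $m$, a Chapman--Kolmogorov decomposition with an inserted hybrid term, Lemma~\ref{lem:bdd_dotdifLv} for the piece involving the one-step difference, and the induction hypothesis for the other piece. The only (immaterial) difference is that you peel off the first step of the $m$-step path while the paper peels off the last, so the roles of the two pieces are swapped; the bookkeeping with the total-variation normalization works out identically.
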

\begin{proof}
    We shall prove this using mathematical induction. The base case is given. Let us assume that,
    \begin{align*}
        \norm{\te\uc{i}_1(s,\cdot) - \te\uc{i}_2(s,\cdot)}_{TV} \leq i\cdot c,~\forall i = 1, 2, \ldots, m-1.
    \end{align*}
    See that
    \begin{align*}
        \norm{\te\uc{m}_1(s,\cdot) - \te\uc{m}_2(s,\cdot)}_{TV} &= \left\lVert \int_{\cS}{\te\uc{m-1}_1(s,s\up) \te_1(s\up,\cdot) ds\up} - \int_{\cS}{\te\uc{m-1}_2(s,s\up) \te_1(s\up,\cdot) ds\up}\right.\\
        &\qquad \left. + \int_{\cS}{\te\uc{m-1}_2(s,s\up) \te_1(s\up,\cdot) ds\up} - \int_{\cS}{\te\uc{m-1}_2(s,s\up) \te_2(s\up,\cdot) ds\up}\right\rVert_{TV} \\
        &\leq 2\sup_{A\in \cB_\cS}{\int_{\cS}{\br{\te\uc{m-1}_1(s,s\up) - \te\uc{m-1}_2(s,s\up)}\te_1(s\up,A) ds\up}} \\
        &\quad + 2\sup_{A\in \cB_\cS}{\int_{\cS}{\te\uc{m-1}_2(s,s\up) \br{\te_1(s\up,A) - \te_2(s\up,A)} ds\up}} \\
        &\leq \norm{\te\uc{m-1}_1(s,\cdot) - \te\uc{m-1}_2(s,\cdot)}_{TV} \sup_{A\in \cB_\cS}{\spn{\te_1(\cdot,A)}} \\
        &\quad + \int_{\cS}{\te\uc{m-1}_2(s,s\up) \norm{\te_1(s\up,\cdot) - \te_2(s\up,\cdot)}_{TV} ds\up} \\
        &\leq \norm{\te\uc{m-1}_1(s,\cdot) - \te\uc{m-1}_2(s,\cdot)}_{TV} + \max_{s\up \in \cS}{\norm{\te_1(s\up,\cdot) - \te_2(s\up,\cdot)}_{TV}},
    \end{align*}
    where the first inequality follows from triangle inequality and from the definition of total variation distance, the second inequality follows from Lemma~\ref{lem:bdd_dotdifLv} and by taking the supremum inside integration.~This concludes the proof of the lemma.
\end{proof}
\section{Additional Details of Simulation}

\begin{algorithm}[ht]
    \caption{Policy UCB with Uniform Discretization}
    \label{algo:pucb}
    \begin{algorithmic}
        \STATE {\bfseries Input} Horizon $T$, Constant $C_L$, confidence parameter $\delta$, discretization parameter $\eps$ and policy class $\Phi$
        \STATE {\bfseries Initialize} $h=0$, $k=0$, $\Phi^{act.} = \eps$-net of $\Phi$.
        \FOR{$t= 0$ to $T-1$}
            \IF{$h \geq H_k$}
                \STATE $k \leftarrow k+1$, $h \leftarrow 0$
                \STATE For every $\phi \in \Phi^{act.}$ compute $\text{Index}_t(\phi) = \frac{1}{N_t(\phi)}\sum_{i=0}^{t-1}{\ind{\phi_t = \phi} r(s_t,\phi_t(s_t))} + \diamf{t}{\phi}$.
                \STATE Choose $\phi_k \in \arg\max_{\phi \in \Phi^{act.}_t}{\text{Index}_t(\phi)}$.
                \STATE $H_k = 1 \vee N_t(\phi_k)$
            \ENDIF
            \STATE $h \leftarrow h+1$
            \STATE Play $a_t = \phi_k(s_t)$, observe $s_{t+1}$ and receive $r(s_t, a_t)$.
        \ENDFOR
	\end{algorithmic}
\end{algorithm}

\end{document}